\documentclass[twoside,11pt]{article}

\usepackage{jmlr2e}
\usepackage{graphicx}
\usepackage{epsfig,epstopdf,subfigure}
\usepackage{MnSymbol,bm}
\usepackage{tabularx,arydshln}
\usepackage{amsmath,amsfonts}
\usepackage{cite}
\usepackage{url}
\usepackage{comment}
\usepackage{appendix}
\usepackage{algorithm}
\usepackage{algpseudocode}
\usepackage{bigints}
\usepackage{xcolor}

\usepackage{thm-restate}

\newtheorem{assumption}[theorem]{Assumption}

\firstpageno{1}

\begin{document}
\title{Sparsity Induced Identifiability in  Matrix Tri-Factorisation }

\author{\name Tingting  Mu \email tingting.mu@manchester.ac.uk\\
       \addr Department of Computer Science\\
       University of Manchester\\
       Manchester M13 9PL, UK 
        }

\editor{ }

\maketitle

\begin{abstract}

Matrix factorisation is a fundamental tool for exploiting low-dimensional structure in high-dimensional data, with applications  such as data compression, denoising, structure discovery, interpretable representation learning, and dimensionality reduction. 
Compared to conventional two-factor models, matrix tri-factorisation provides greater modelling flexibility, while sparsity constraints often improve both interpretability and recovery performance. 
Although the role of sparsity has been extensively studied for two-factor matrix factorisation, rigorous theoretical guarantees for general real-valued matrix tri-factorisation remain largely unexplored. 
To address this gap, we establish, to the best of our knowledge, the first rigorous theoretical study for sparsity-induced identifiability in general real-valued matrix tri-factorisation. 
Our analysis is enabled by a novel decomposition strategy that transforms the original problem into two coupled auxiliary factorisation problems, while preserving the structural information necessary to the recovery of the original factor matrices from the observations.
Building upon this decomposition, we derive   recovery guarantees and structural consistency results that  characterise how coefficient sparsity influences the sufficient recovery conditions, convergence behaviour, spectral approximation error,  high-probability bounds, and structure preservation.
Comprehensive Monte Carlo experiments validate the proposed theory and demonstrate close agreement between the theoretical results and empirical observations.

\end{abstract}

\section{Introduction}

In machine learning,  data are  commonly represented as a matrix  $\textbf{X}\in \mathbb{R}^{n\times m}$, where the rows correspond to  $n$  objects described in an $m$-dimensional space, or  more generally, where the entries encode relationships between $n$ row objects and $m$ column objects.
In the real world, however, the key information contained in data is often governed by a small number of degrees of freedom, implying an underlying low-dimensional structure.
A classical  approach to uncover such   structure  is matrix  factorisation (also referred to as matrix decomposition), which seeks a low-rank approximation  to the data matrix \citep{Ste00, Kol09, Ste18,Hae20}.
As an effective means of exploiting low-dimensional structure, factorisation-based methods have become fundamental tools for data  compression, denoising, structure discovery, interpretable representation learning and dimensionality reduction  \citep{Bok15, Ste18, Cic15, Hua24}.
Their success has led to  broad applications  in signal and image processing \citep{Hae20}, computational biology \citep{Ste18, Hui23, Liu25},  chemistry \citep{Flo22,Gu24},   drug discovery \citep{Li25}, and, more recently, neural network compression and pruning \citep{Boz25}. 

The simplest form of matrix factorisation, known as the two-factor model,   approximates the data matrix as   $\textbf{X} \approx \textbf{A}\textbf{B}^T$, where the   factor matrices   $\textbf{A}\in \mathbb{R}^{n\times k}$ and $\textbf{B}\in \mathbb{R}^{m\times k}$  provide  latent representations of the row and column objects, respectively.
A more expressive alternative is the tri-factorisation  model   $\textbf{X} \approx \textbf{A}\textbf{S}\textbf{B}^T$, which introduces an additional latent relation matrix  $\textbf{S}\in \mathbb{R}^{k\times k}$ to explicitly model  the interactions between the latent representations. 
Depending on the application,  structural constraints, such as sparsity, non-negativity  and orthogonality, can be imposed on the  factor matrices to improve interpretability, identifiability, predictive performance, and the implicit disentanglement of latent information  (see more details in Section \ref{intr_mf}).

In this work, we focus on sparsity.  Among these constraints,  sparsity has consistently been shown to provide substantial gains in interpretability, recovery accuracy, predictive performance, and computational efficiency across a wide range of matrix factorisation models \citep{Zha15,Jos16,Ber24,Zul23}.
From a theoretical perspective, the role of sparsity in two-factor matrix factorisation has been  studied extensively, particularly in relation to solution uniqueness and model identifiability \citep{Tro04,Can_Tao06,Gri10,Aga14,Aga17,Coh19}.
For instance, these studies investigate how sparsity influences the extent to which the factor matrices $\mathbf{A}$ and $\mathbf{B}$ (or their underlying salient structures) can be recovered from the observed matrix $\mathbf{X}$. 
In contrast, research on   matrix tri-factorisation  under sparsity constraints has focused primarily on algorithmic development and empirical validation \citep{Kim12,Bil16,Zul23}, with comparatively little attention devoted to its theoretical foundations.
Only recently have theoretical identifiability results emerged for specialised matrix tri-factorisation models, such as those under the Boolean and non-negative settings \citep{Kol25,Saha25}, highlighting the importance of structural constraints in establishing identifiability.
However, for general real-valued matrix tri-factorisation, theoretical guarantees for solution uniqueness, model identifiability, and algorithmic recovery remain largely unexplored.

To address this fundamental theoretical gap, we   provide, to the best of our knowledge, the first rigorous analysis of sparsity-induced identifiability for general real-valued matrix tri-factorisation.
Compared with the classical two-factor model $\textbf{X} \approx \textbf{A}\textbf{B}^T$,  introducing the latent relation matrix $\textbf{S}$ in the tri-factorisation $\textbf{X} \approx \textbf{A}\textbf{S}\textbf{B}^T$ creates additional indeterminacies that make existing identifiability analyses inapplicable.
To overcome this challenge, we develop a   decomposition strategy that transforms the original tri-factorisation into two coupled auxiliary two-factor matrix factorisation problems.
Building upon this decomposition and an accompanying algorithmic framework, we establish a new recovery theory that characterises how coefficient sparsity governs solution uniqueness and (partial) model identifiability in general real-valued matrix tri-factorisation.
Our main contributions are summarised as follows:
\begin{itemize}
    \item A novel decomposition strategy that transforms the   tri-factorisation into two  auxiliary two-factor  factorisation problems, enabling a tractable theoretical analysis while preserving the structural information required for recovering the original factor matrices.
    \item  Rigorous theoretical recovery guarantees underpinning sparsity-induced identifiability for the auxiliary  problems,  obtained through complementary analyses of an alternating optimisation procedure (Theorem  \ref{main_res}) and a spectral dictionary approximation  (Theorem \ref{main_res2}).
    Our analysis  characterises how coefficient sparsity affects the sufficient recovery conditions, convergence behaviour, spectral approximation error, and high-probability recovery guarantees.
    \item A rigorous proof of structural consistency between the original and auxiliary tri-factorisations (Theorem \ref{main_res3}), showing that the auxiliary factorisation faithfully preserves the support patterns of $\textbf{A}$ and $\textbf{B}$, while introducing only bounded changes to the row norms   and  pairwise row angles  of   $\textbf{A}$ and $\textbf{B}$, as well as  to  the conditioning of $\textbf{S}$. 
    This provides a rigorous justification for analysing the auxiliary factorisation as a structurally consistent surrogate for the original tri-factorisation.
    \item A collection of  structural properties of the matrices arising from the original and auxiliary problems, providing the essential ingredients for proving the recovery guarantees.
    \item Comprehensive empirical  validation of all principal theoretical bounds and recovery results through Monte Carlo experiments,   demonstrating close agreement between the theoretical results and  empirical observations.
\end{itemize}
 
\section{Background and Related Work}

\subsection{Matrix Factorisation}
\label{intr_mf}

Many real-world datasets can be naturally represented as matrices or higher-order tensors. 
Examples include high-dimensional biological data, where entries represent measurements such as   expression counts, methylation levels, and protein concentrations across samples \citep{Ste18}, and user-item interaction matrices in e-commerce recommender systems \citep{Bok15}. 
More generally, knowledge graphs can be represented as third-order tensors comprising binary matrix slices, each encoding a particular relation between entities \citep{Nic11}. 
Factorisation-based methods are well suited to analysing such structured data. In the remainder of this section, we review representative matrix factorisation models and the constraints commonly imposed on their factor matrices.

\subsubsection{Two-factor  Factorisation Models} 

The two-factor matrix factorisation  $\textbf{X} \approx \textbf{A}\textbf{B}^T$ is one of the most widely used formulations for analysing high-dimensional data. 
The columns of $\textbf{B}$  define $k$ latent factors  that reside in the same feature space as the row objects of $\textbf{X}$.
These latent factors capture the dominant structural patterns in the data.
Each row object  of $\textbf{X}$  is subsequently represented as a weighted sum of these latent factors, with the corresponding weights stored in the associated row of $\textbf{A}$,   commonly referred to as  a mixing vector  or a coefficient vector.
Choosing $k\ll \min(n,m) $ explicitly induces a low-dimensional structure.
Many classical methods, such as principal component analysis (PCA), independent component analysis (ICA), and non-negative matrix factorisation (NMF), can be expressed within this framework.

The same factorisation   $\textbf{X} \approx \textbf{A}\textbf{B}^T$ also underlies dictionary learning and sparse coding in  signal and image processing \citep{Aha06,Mai10,Aga14,Men16,Bao16}.
In this setting, the columns of $\textbf{B}$ are interpreted as dictionary atoms.   
Unlike PCA, ICA and NMF, which typically employ a  small number of latent factors, dictionary learning constructs an over-complete dictionary by using a large number of atoms.
Sparse coding complements this formulation by imposing row-wise sparsity on the coefficient matrix $\mathbf{A}$, ensuring that each row object of $\mathbf{X}$ is represented using only a small subset of the available atoms.
From this perspective,  PCA, ICA and NMF may be regarded as learning   under-complete dictionaries, whereas  dictionary learning typically adopts an over-complete one.
Additional low-dimensional structure may be  encouraged by imposing low-rank constraints  on the dictionary matrix $\textbf{B}$ \citep{Par22}, or, in some formulations, on the coefficient matrix $\textbf{A}$ \citep{Wan22}.
More recently, inspired by the success of deep learning, deep matrix factorisation  extends the two-factor model by recursively factorising one or both factor matrices, thereby enabling hierarchical representation learning with increased modelling capacity \citep{DEH21,Hua24}.
Overall, the two-factor model provides the fundamental building block for many matrix factorisation methods, which differ primarily in the structural assumptions imposed on their factor matrices.

\subsubsection{Tri-factorisation Models}  
While two-factor models employ a single latent space shared by the row and column objects, many applications benefit from learning separate latent representations together with explicit modelling of their interactions.
To achieve this, matrix tri-factorisation $\mathbf{X}\approx\mathbf{A}\mathbf{S}\mathbf{B}^T$ extends the two-factor model by introducing distinct latent spaces for the row and column objects of $\mathbf{X}$, together with an interaction matrix between them \citep{Cop17}.
Specifically, the columns of $\mathbf{A}$ and $\mathbf{B}$ define latent factors for representing the row and column objects, respectively.
Each entry of the interaction matrix $\mathbf{S}$ characterises the  strength of interaction between a pair of latent factors.
Low-dimensional structure   can be achieved by either selecting  a small number of latent factors, or  imposing an explicit low-rank constraint on   $\textbf{S}$.
For directed relational data, the asymmetry of $\mathbf{X}$ can be captured through an asymmetric   $\mathbf{S}$, whereas symmetric $\mathbf{X}$ naturally corresponds to a symmetric $\mathbf{S}$.
For multi-relational data represented as a third-order tensor   $\left\{\textbf{X}_i\in \mathbb{R}^{n\times n}\right\}_{i=1}^r$ between entities, a joint tri-factorisation  can be  written as  $\textbf{X}_i \approx \textbf{A}\textbf{S}_i\textbf{A}^T $, where a common    entity representation $\mathbf{A}\in\mathbb{R}^{n\times k}$ is shared across all relation slices, while each relation is associated with its own latent relation matrix $\mathbf{S}_i\in\mathbb{R}^{k\times k}$.
This   formulation underlies the RESCAL algorithm for multi-relational learning \citep{Nic11}. 

\subsubsection{Constraints on Factor Matrices} 

Matrix factorisation methods frequently impose structural constraints  on the factor matrices,  such as sparsity, non-negativity, orthogonality, stochasticity,   low-rankness, and their combinations.
Such constraints have demonstrated considerable effectiveness in improving both representation quality and interpretability.  
For instance, in the  two-factor model $\textbf{X} \approx \textbf{A}\textbf{B}^T$,  sparse coding enforces row-wise sparsity on the coefficient matrix $\textbf{A}$, whereas NMF constrains both $\textbf{A}$ and $\textbf{B}$ to be element-wise non-negative.  
Sparse NMF further combines these requirements by imposing simultaneous non-negativity and sparsity on one or both factor matrices \citep{Hoy04}. 
Another example arises from the formulation  of  k-means clustering based on matrix factorisation,  where $\textbf{A}$ is constrained  either  to  have  its rows lying on the probability simplex, or to  satisfy non-negativity together with orthogonality \citep{Aro13}.
The more recent work on triple component matrix factorisation extends the two-factor model to recover both common and unique latent factors from noisy observations, with orthogonality constraints enforcing independence between the common and unique factors \citep{Shi04}.
In matrix tri-factorisation,  non-negativity has been imposed on $\textbf{S}$  together with  stochastic constraints on the rows of $\textbf{A}$  to improve interpretability \citep{Sif13}.
Alternatively,   the two-way DEDICOM has imposed column-wise orthogonality   on $\textbf{A}$   \citep{Bad06}.
Working with the more general model $\textbf{X} \approx \textbf{A}\textbf{S}\textbf{B}^T$, the scalable non-negative matrix tri-factorisation (NMTF)  imposes element-wise non-negativity  on all the three factor matrices  \citep{Cop17}.
Recently, \citet{Hou25} has introduced Frobenius-norm constraints on the left and right factor matrices and a non-negative diagonal constraint on the latent relation matrix to improve low-rank approximation accuracy.

Overall, the choice of constraints  is largely application dependent, and should be consistent with the intended downstream use of the learned latent representations.
Sparsity, orthogonality, and non-negativity are among the most commonly employed constraints, as they  encourage the latent factors to capture distinct and less redundant information.
Although such constraints do not explicitly enforce  statistical independence or group-theoretic decomposition typically associated with modern notions of disentangled representations, they may be viewed as weak  or indirect forms of disentanglement because they promote factor separation   \citep{Emi18,HLI24}. 
These constraints also frequently improve  the interpretability of the learned latent factors.  
The development of matrix factorisation algorithms  under such constraints draws primarily on techniques from linear algebra, constrained optimisation, and  Bayesian probabilistic modelling \citep{Mni07,Nak11}.

\subsection{Sparsity in Factor Matrices }
\label{intr_fp}
In matrix factorisation, sparsity refers to the presence of many zero entries in one or more factor matrices. 
It is one of the most widely adopted structural constraints because it not only improves practical performance and interpretability, but also plays a fundamental role in model identifiability and recovery guarantees.
Over the past decades, considerable progress has been made in both empirical studies and theoretical analyses of matrix factorisation under sparsity constraint.
The former demonstrates the practical benefits of sparsity across diverse applications, whereas the latter explains why sparsity gives rise to desirable   theoretical properties.

\subsubsection{Empirical Development} 

For the two-factor model $\textbf{X} \approx \textbf{A}\textbf{B}^T$, sparsity   is typically imposed on the coefficient matrix $\textbf{A}$.
Its practical advantages  have been demonstrated across a broad range of applications, including  signal and image processing, biological data analysis, and recommendation systems  \citep{Pey09, Ela10, Zha15, Ste18, Dai18}.
Representative approaches include sparse PCA \citep{Cad95,Jour10,Ber24}, sparse NMF \citep{Hoy04,Hei06}, and sparse coding  \citep{Fie94, Mai10,Aga16}.
Collectively, empirical studies have reported improvements in interpretability, recovery accuracy, predictive performance,  computational efficiency, and model compression \citep{Zha15,Jos16,Ber24, Boz25}.

Research on sparsity in matrix tri-factorisation $\textbf{X} \approx \textbf{A}\textbf{S}\textbf{B}^T$ has predominantly focused on developing practical algorithms and evaluating their empirical performance.
For instance, sparse NMTF (SNMTF) introduces sparsity together with non-negativity into all the factor matrices to improve the  identification of patient subgroups for therapeutic strategy development  in cancer genomics  \citep{Kim12}.  
Subsequent work has refined  SNMTF   through  improved problem formulations  and more efficient optimisation algorithms, such as the projected variant of the fast iterative shrinkage-thresholding algorithm (FISTA), yielding enhanced generalisation ability and factorisation accuracy  \citep{Bil16}.
More recently,  sparse RESCAL   imposes sparsity   on  the shared entity   representation   $\textbf{A}$ to improve  interpretability  in   knowledge graph analysis \citep{Zul23}. 

\subsubsection{Theoretical Development} 

Beyond its empirical success, sparsity is also of central theoretical importance in matrix factorisation.
For the two-factor model $\mathbf{X}\approx\mathbf{A}\mathbf{B}^T$, strong connections between coefficient sparsity and solution uniqueness (or model identifiability) have been rigorously established under suitable conditions.
These theoretical developments are supported by extensive research spanning compressed sensing, dictionary identification, identifiability and uniqueness analysis, sample complexity theory, and convergence analysis for dictionary learning algorithms \citep{Tro04,Can06,Can_Tao06,Cand07,Can08, Gri10, Vai11,  Gri15, Gri_Jen15,Aga14, Aga16, Aga17, Coh19}. 
In contrast to the well-established theories for two-factor models, analogous sparsity-based results for matrix tri-factorisation remain limited.
Recently, identifiability results have been established for Boolean matrix tri-factorisation \citep{Kol25}.
Another recent study extends the identifiability theory of NMF to nonnegative Tucker decomposition (nTD), from which identifiability results for NMTF follow by treating NMTF as the order-2 case of nTD \citep{Saha25}.
These works establish identifiability under specialised structural assumptions, in which sparsity is one contributing ingredient.
%
%
To the best of our knowledge, a significant theoretical gap remains in understanding how sparsity governs identifiability and recovery in general real-valued matrix tri-factorisation.

\section{Problem Description}

\subsection{Notations}

Given a vector $\bm x$, we use $x_i$ to denote its $i$-th entry. 
Unless otherwise specified, all vectors are assumed to be column vectors.
Given a matrix $\textbf{X}$, we use $\textbf{X}^{(i)}$, $\textbf{X}_i$ or $(\textbf{X})_i$,  and $x_{ij}$ or $(X)_{ij}$  to denote its $i$-th row, $i$-th column, and $ij$-th entry, respectively, unless otherwise specified.  
Accordingly, for a matrix product $\textbf{X}_1\textbf{X}_2\cdots\textbf{X}_n$, its $i$-th row, $i$-th column, and $ij$-th entry  are denoted by $(\textbf{X}_1\textbf{X}_2\cdots\textbf{X}_n)^{(i)}$, $(\textbf{X}_1\textbf{X}_2\cdots\textbf{X}_n)_i$,  and $(\textbf{X}_1\textbf{X}_2\cdots\textbf{X}_n)_{ij}$, respectively. 
Given a square matrix $\textbf{X}$, we use $\textbf{X}^{(i)}_{\setminus i}$ and  $(\textbf{X})^{(i)}_{\setminus i}$ interchangeably to denote the vector consisting of all the off-diagonal entries in the $i$-th row of $\textbf{X}$.
The index set of the nonzero entries  of a vector (or matrix) is denoted by $supp(\cdot)$.
The cardinality of a set $I$ is denoted  by $|I|$.
Given an index set $I$, we use $\textbf{X}^{I}$ (and $\textbf{X}_{I}$) to denote the sub-matrix formed by  selecting rows (and columns) of $\textbf{X}$   indexed by $I$. 
Suppose that a  set $I$ is partitioned into two disjoint subsets $I_1$ and $I_2$, i.e., $I_1 \cap I_2 =\emptyset $.
Then $I_2$ is the relative  complement of  $I_1$ in  $I$, denoted by $I_2=I \setminus I_1$.
The index set obtained by excluding the $i$-th row  (or column)  in a matrix   is dentoed by $I = \setminus i$.
For a vector, we use $\|\cdot\|_1$, $\|\cdot\|_2$,  and $\|\cdot\|_{\infty}$ to denote its  $l_1$-norm,  $l_2$-norm, and maximum norm, respectively.
Given two column vectors $\textbf{x}, \textbf{y} \in \mathbb{R}^d$, their cosine similarity  is defined by $\textmd{cos}(\textbf{x}, \textbf{y} ) = \frac{\textbf{x}^T\textbf{y}}{\|\textbf{x}\|_2\|\textbf{y}\|_2}$.
For a matrix, we use $\|\cdot\|_2$ to denote  its spectral norm (i.e., its largest singular value),   $\|\cdot \|_{\infty}$ to denote its maximum absolute entry,  and $\|\cdot\|_F$ the Frobenius norm.
We use $\circ$ to denote the Hadamard product.
Given a matrix $\textbf{X}$, its singular values  are denoted by $\sigma(\textbf{X})$, with subscripts used   to distinguish particular singular values, e.g., $\sigma_{\max}(\textbf{X})$  and $\sigma_{\min}(\textbf{X})$. 
For a matrix $\textbf{X}$,   its condition number is denoted   by $\kappa(\textbf{X})=\frac{\sigma_{\max}(\textbf{X})}{\sigma_{\min}(\textbf{X})}$, and its Moore-Penrose pseudo-inverse by $ \textbf{X}^{\dagger}$.
For a vector input,   $\text{diag}(\cdot)$ constructs a diagonal matrix whose diagonal entries are given by  this vector, while, for a matrix input,   $\text{diag}(\cdot)$ extracts the diagonal entries and returns these as a column vector. 
The $n\times n$ identity matrix   is denoted by $\textbf{I}_n$. 
The $n$-dimensional column vector with unit entries is denoted by $\textbf{1}_n$. 
We use $[n]$ to denote the index set $\{1,2,  \ldots n\}$. 
The ceiling function $\lceil \cdot \rceil $ rounds its argument up to the nearest integer, while the floor function  $\lfloor \cdot \rfloor $ rounds its argument down   to the nearest integer. 
%

\subsection{Matrix Tri-Factorisation and Coefficient Sparsity}

Throughout this work, we interpret a data matrix as a pairwise relation matrix  encoding associations between the row and column objects, and denote it by $\textbf{R}=[r_{ij}]\in \mathbb{R}^{n\times m}$.
The objective is to recover the three factor matrices $\textbf{A}=[\alpha_{ij}]\in \mathbb{R}^{n\times k}$, $\textbf{B}=[\beta_{ij}]\in \mathbb{R}^{m\times k}$, and $\textbf{S}=[s_{ij}]\in \mathbb{R}^{k\times k}$  such that $\textbf{R} \approx \textbf{A}\textbf{S}\textbf{B}^T$.
Under this model, each  entry $r_{ij}$ is  approximated by the bilinear form  $r_{ij} \approx   \sum_{t,h=1}^k \alpha_{it}\beta_{jh} s_{th}$.
Without additional assumptions,  matrix tri-factorisation  is generally non-identifiable, since multiple sets of factor matrices can produce the same data matrix.
Meaningful and identifiable solutions require appropriate structural constraints on  the  factor matrices. 
Classical examples include the orthogonality constraints,  underlying the singular value decomposition (SVD).
In this paper, we investigate  how row-wise sparsity imposed on  the  coefficient matrices  $\textbf{A}$ and $\textbf{B}$ influences the  recovery   of the three factor matrices $\textbf{A}$, $\textbf{B}$ and $\textbf{S}$ from the observed matrix $\textbf{R}$.

\paragraph{Embedding  Interpretation:} 

The tri-factorisation model $\textbf{R} \approx \textbf{A}\textbf{S}\textbf{B}^T$ admits a natural latent embedding interpretation that reveals the geometric structure underlying the observed relations.
To expose this structure, we consider the truncated  SVD  of the latent relation matrix $\textbf{S}$  retaining the largest  $d$ singular values, i.e., $\textbf{S}\approx \textbf{U}_s\bm\Sigma_s\textbf{V}_s^T =\left(\textbf{U}_s\bm\Sigma_s^{\frac{1}{2}}\right)\left(\textbf{V}_s\bm\Sigma_s^{\frac{1}{2}}\right)^T$, where $\textbf{U}_s, \textbf{V}_s \in \mathbb{R}^{k\times d}$ have orthogonal columns,  and $\bm\Sigma_s \in\mathbb{R}^{d\times d} $ is diagonal with nonnegative singular values as its diagonal entries.
Defining  $\textbf{U}=\textbf{U}_s\bm\Sigma_s^{\frac{1}{2}} $ and $\textbf{V}= \textbf{V}_s\bm\Sigma_s^{\frac{1}{2}} $  and letting  $\{\bm u_t\}_{t=1}^k $ and $ \{\bm v_h\}_{h=1}^k$ denote their respective rows  in $\mathbb{R}^{  d}$, each entry of $\textbf{S}$ is approximated as $s_{th} \approx \bm u_t ^T\bm v_h$.
Thus, the latent relation matrix itself admits an embedding representation, in which every latent interaction is approximated by the inner product of two low-dimensional latent embedding vectors.
Substituting this representation into the tri-factorisation yields
\begin{equation} 
 \label{eq:sim}
r_{ij} \approx     \sum_{t,h=1}^k \alpha_{it}\beta_{jh} \bm u_t^T  \bm v_h = \left(\sum_{t=1}^k \alpha_{it}\bm u_t\right)^T \left(\sum_{h=1}^k \beta_{jh}\bm v_h\right) = \bm x_i^T   \textbf{y}_j,
\end{equation}
where  $\bm x_i =\sum_{t=1}^k \alpha_{it}\bm u_t $ and $ \bm y_j  = \sum_{h=1}^k \beta_{jh}\bm v_h$.
Eq. (\ref{eq:sim}) shows that matrix tri-factorisation is equivalent to approximating each observed relation by the inner product of two embeddings $\bm x_i$ and $\bm y_j$.
These   embeddings $\{\bm x_i\}_{i=1}^n$ and  $\{\bm y_j\}_{j=1}^m$ therefore provide low-dimensional representations of the row and column objects, respectively, while the vectors   $\{\bm u_t\}_{t=1}^k$ and $\{\bm v_h\}_{h=1}^k$ constitute latent basis vectors  from which the object embeddings are constructed through the coefficient matrices   $\mathbf{A}$ and $\mathbf{B}$.

This embedding formulation also admits a natural interpretation from the perspective of dictionary learning and sparse coding.
The  latent basis vectors $\{\bm u_{t} \}_{t=1}^{k}$ and $\{\bm v_{h} \}_{h=1}^{k}$  can be viewed as the atoms of two dictionaries for representing the row and column objects, respectively.
The rows of $\textbf{A}$ and $\textbf{B}$ provide the corresponding coding coefficients that linearly combine these atoms to construct the object embeddings.
The  latent relation entries $\{s_{th} \}_{t,h=1}^{k}$  characterise  the interactions between pairs of dictionary atoms, each approximated by the  inner product  $s_{th}\approx \bm u_t ^T\bm v_h$.
When the rows of $\mathbf{A}$ and $\mathbf{B}$ are sparse, each object embedding is constructed only from a small subset of dictionary atoms determined by its coding coefficients, thereby highlighting the principal latent components responsible for the observed relations.
This naturally yields compact and interpretable representations of both the row and column objects.
Such sparse coding perspective provides the conceptual basis for our theoretical analysis developed in the subsequent sections.

\section{A Decomposition  Strategy for Sparsity Analysis }
\label{sec:strategy}

A standard approach for studying model identifiability and solution uniqueness is to investigate whether the exact data-generating model can be recovered from the observed data (up to the expected ambiguity), or, more generally, whether certain invariant properties of the underlying model can be reliably recovered.
For matrix tri-factorisation, it is well known that the model is non-identifiable in the absence of additional structural constraints. 
The problem therefore becomes one of identifying what properties of the factor matrices  are   recoverable, and  establishing the conditions under which such recovery is possible.
In particular, we focus on sparsity-induced identifiability in matrix tri-factorisation from a recovery-theoretic perspective.
Our objective is to determine how sparsity constraints restrict the set of admissible  solutions, and to what extent they enable the recovery of structurally salient properties of the true factor matrices.
The resulting analysis formalises the role of sparsity in inducing partial identifiability and  reducing intrinsic  ambiguity of the tri-factorisation solution space.
To enable this analysis, we introduce a decomposition strategy  to derive   auxiliary problems that admit an efficient algorithmic solution.

\subsection{Relation Generative Model}
\label{sec:gen}

To facilitate the theoretical analysis, we adopt a probabilistic generative model for relation matrices.
Specifically, the relation matrix is generated according to $\textbf{R} = \textbf{A}\textbf{S}\textbf{B}^T$, where the sparse coefficient matrices $\mathbf{A}$ and $\mathbf{B}$, together with the latent relation matrix $\mathbf{S}$, are generated under the assumptions below.   
The model incorporates sparsity, boundedness, and incoherence conditions that are widely used in  theoretical analyses of problems such as sparse coding and matrix completion.
For the coefficient matrices $\mathbf{A}$ and $\mathbf{B}$, we assume that the support of each row is sampled uniformly at random (Assumption  \ref{SC}), while the corresponding nonzero coefficients are independently drawn from zero-mean distributions with bounded support and fixed variance (Assumption \ref{NZC}).
For the latent relation matrix  $\textbf{S}$,  we assume that   its rows and columns  have bounded $l_2$-norms (Assumption  \ref{LS}), and satisfy an incoherence condition (Assumption \ref{LI}).
\begin{assumption}[Sparse Support Generation] \label{SC}
For each  row of  $\textbf{A}$  and $\textbf{B}$, assume that its support set  $supp\left(\textbf{A}^{(i)}\right)$   and  $supp\left(\textbf{B}^{(i)}\right)$  is sampled uniformly at random from all subsets of  $[k]$ that have   cardinality $s_A$ and $s_B$, respectively.  
Define indicator variables  $\chi_{ij}^{(A)} = 1$ if $j\in \textmd{supp}\left(\textbf{A}^{(i)}\right)$ and $\chi_{ij}^{(B)} = 1$ if $j\in \textmd{supp}\left(\textbf{B}^{(i)}\right)$, while $\chi_{ij}^{(A)} = 0$ and $\chi_{ij}^{(B)} = 0$ otherwise. 
Assume entries of  $\textbf{A}$ and $\textbf{B}$ are generated by 
\begin{equation}
    a_{ij} = \alpha_{ij}\chi_{ij}^{(A)}, \;\; b_{ij} = \beta_{ij}\chi_{ij}^{(B)},
\end{equation}
where $\alpha_{ij}, \beta_{ij}\in \mathbb{R}$ are generated by following the mechanism defined in Assumption \ref{NZC}.
\end{assumption}
\begin{assumption}[Distribution of Nonzero Coefficients] \label{NZC}
Assume that all nonzero entries of the coefficient matrices $\textbf{A}=[\alpha_{ij}]\in \mathbb{R}^{n\times k}$ and $\textbf{B}=[\beta_{ij}]\in \mathbb{R}^{m\times k}$ are independently drawn   from  distributions with zero mean, fixed variance, and bounded  support:
\begin{align}
E\left [\alpha_{ij}\right]= 0, \; E\left [\alpha_{ij}^2\right ] = \sigma_A^2, \;  M^{(A)}_{\min} \leq |\alpha_{ij}| \leq M^{(A)}_{\max};\\
E\left [\beta_{ij}\right]= 0, \; E\left [\beta_{ij}^2\right ] =  \sigma_B^2, \; M^{(B)}_{\min} \leq   | \beta_{ij}| \leq M^{(B)}_{\max}.
\end{align}
Assume the random variables $ \alpha_{ij}$ and   $\beta_{ij}$ are independent of the support selection mechanism defined in Assumption \ref{SC}.
\end{assumption}
\begin{assumption}[Bounded Latent Relations]\label{LS}
Assume that there exist positive constants $0<l_s \leq u_s  < +\infty$ such that  $\forall i\in [k]$, it has
\begin{equation}
l_s\leq \|\textbf{S}_i\|_2  \leq u_s, \;\; l_s\leq  \|\textbf{S}^{(i)}\|_2 \leq u_s.
\end{equation}
\end{assumption}
\begin{assumption}[Incoherent Latent Relations]\label{LI}
Assume that the latent relation matrix $\textbf{S} $ has incoherent rows and columns. Specifically,   $\forall i \neq j$, it has
\begin{equation}
\left| \cos\left(\bm S^{(i)}, \bm S^{(j)} \right) \right| <   \frac{\mu_s}{\sqrt{d}},  \; \left| \cos\left(\bm S_{i}, \bm S_{j} \right ) \right| <  \frac{\mu_s}{\sqrt{d}}, 
\end{equation}
where $\mu_s>0$  is a coherence constant  and  $d = \textmd{rank}(\textbf{S})$.
\end{assumption}

\paragraph{Remarks:}  
Assumption~\ref{SC} specifies the row-wise sparsity levels $s_A$ and $s_B$, helping isolate the effect of sparsity in the analysis.
In practice, norm-based regularisation is commonly employed to control the scale of the solutions, and to avoid degenerate (or ill-conditioned) solutions.
Likewise, the bounded amplitudes of the nonzero coefficients in Assumption~\ref{NZC}, together with the bounded row and column $l_2$ norms of $\mathbf{S}$ in Assumption~\ref{LS}, serve as non-degeneracy conditions that prevent pathological scaling behaviour.
The zero-mean assumption  on the nonzero coefficients in Assumption~\ref{NZC} is  standard in probabilistic models of sparse representations and is widely adopted in the dictionary learning and sparse coding literature \citep{Aga16,Aga17,Wu18,Sul22}.
Besides simplifying moment-based recovery analyses, the zero-mean assumption  provides a natural modeling abstraction in which sparse coefficients are treated as unbiased fluctuations around zero.
Assumption~\ref{LI} requires the pairwise cosine similarities between distinct rows and between distinct columns of $\textbf{S}$ to be uniformly bounded. 
Equivalently,  the corresponding row-wise and column-wise cosine similarity matrices are approximately diagonal.
This assumption is closely related to the mutual coherence conditions commonly employed in sparse representation, dictionary learning, and low-rank matrix recovery \citep{Tro04,Can09,Bar13}.
It prevents strong alignment among the rows and among the columns of $\textbf{S}$, thereby promoting sufficiently diverse latent components and facilitating the theoretical recovery analysis.

\subsection{Auxiliary  Problem Construction}
\label{sec:decomposition}

\subsubsection{SVD-induced Decomposition}
\label{sec:aux_svd}

We begin by  considering the compact SVD of the observed relation matrix, i.e., $\textbf{R} = \textbf{U}_R\bm\Sigma_R \textbf{V}_R^T $, where  $\textbf{U}_R \in \mathbb{R}^{n\times d}$ and $\textbf{V}_R \in \mathbb{R}^{m\times d}$ are the left and right singular vector matrices, respectively,  $\bm\Sigma_R \in \mathbb{R}^{d\times d}$  is   the diagonal matrix of positive singular values, and  $d= \textmd{rank}(\textbf{R})$. 
We construct two matrices $\textbf{Y}_U\in \mathbb{R}^{n\times d}$ and $ \textbf{Y}_V \in \mathbb{R}^{m\times d} $  by 
\begin{align}
\label{Y_uv1}
\textbf{Y}_U = & \frac{1}{\sqrt{nm}} \textbf{R}\textbf{V}_R = \frac{1}{\sqrt{nm}}\textbf{U}_R\bm\Sigma_R, \\ 
\label{Y_uv2}
\textbf{Y}_V = & \frac{1}{\sqrt{nm}} \textbf{R}^T\textbf{U}_R = \frac{1}{\sqrt{nm}}\textbf{V}_R\bm\Sigma_R.
\end{align}
The scaling factor $\frac{1}{\sqrt{nm}}$ is introduced so that the effect of sparsity is not confounded by the problem dimensions.
Rather than analysing the original tri-factorisation of $\textbf{R}$, we   study  the  two-factor factorisations of the new matrices $\textbf{Y}_U$ and $\textbf{Y}_V$, separately. 
Below, we  establish the relationship between the  new two-factor models and the original tri-factorisation  model.

Define two matrices $\textbf{F}_A, \textbf{F}_B \in \mathbb{R}^{k\times d}$  as 
\begin{equation}
\label{F_ab}
    \textbf{F}_A  = \textbf{S} ^T\textbf{A} ^T\textbf{U}_R, \; \;  \textbf{F}_B  = \textbf{S} \textbf{B} ^T \textbf{V}_R,
\end{equation}
and two diagonal matrices $\textbf{L}_A, \textbf{L}_B \in \mathbb{R}^{k\times k}$  as
\begin{align}
\label{matrix_La}
\textbf{L}_A = & \textmd{diag}\left( \left\|\textbf{F}_A^{(1)} \right\|_2^{-1}, \left\|\textbf{F}_A^{(2)} \right\|_2^{-1}, \dots, \left\|\textbf{F}_A^{(k)} \right\|_2^{-1}\right),\\
\label{matrix_Lb}
\textbf{L}_B =& \textmd{diag}\left( \left\|\textbf{F}_B^{(1)} \right\|_2^{-1}, \left\|\textbf{F}_B^{(2)} \right\|_2^{-1}, \dots, \left\|\textbf{F}_B^{(k)} \right\|_2^{-1}\right).
\end{align}
Construct two  dictionary matrices $\textbf{D}_A^*, \textbf{D}_B^*  \in \mathbb{R}^{k\times d}$ with each row having unit $l_2$-norm,  by   
\begin{equation}
\label{D_ab}
    \textbf{D}_A^* = \textbf{L}_A \textbf{F}_A,\; \; \textbf{D}_B^* = \textbf{L}_B \textbf{F}_B, 
\end{equation}
and two new coefficient matrices $\textbf{X}_A^* \in \mathbb{R}^{n\times k}$ and $\textbf{X}_B^* \in \mathbb{R}^{m\times k}$ by 
\begin{equation}
\label{X_ab}
\textbf{X}_A^* = \frac{1}{\sqrt{nm}}\textbf{A} \textbf{L}_B^{-1},\;  \;\textbf{X}_B^* = \frac{1}{\sqrt{nm}}\textbf{B} \textbf{L}_A^{-1}.
\end{equation}
%
%
Since $\textbf{A}$, $  \textbf{B}$, and $\textbf{S}$  are the ground-truth factor matrices  used to generate the observed matrix $\textbf{R}$, it has $\textbf{A} \textbf{S} \textbf{B} ^T = \textbf{R} = \textbf{U}_R\bm\Sigma_R\textbf{V}_R^T$, 
which, together with Eqs. (\ref{Y_uv1}), (\ref{Y_uv2}), (\ref{D_ab}) and (\ref{X_ab}),   verifies the following
\begin{equation}
    \label{two_F1}
    \textbf{Y}_U = \textbf{X}_A^* \textbf{D}_B^* ,    \;\; \textbf{Y}_V =  \textbf{X}_B^* \textbf{D}_A^*.
\end{equation}
Eq. (\ref{two_F1}) corresponds to the standard dictionary learning representations of  $\textbf{Y}_U$ and  $\textbf{Y}_V$.
The  dictionary matrices $\textbf{D}_A^*$ and $ \textbf{D}_B^*$ together with the coefficient matrices $\textbf{X}_A^*$ and $ \textbf{X}_B^*$  constitute the ground-truth factor matrices to recover.
This motivates the following two auxiliary two-factor factorisation problems:
\begin{equation}
\label{two_F}
    \textbf{Y}_U \approx \textbf{X}_A\textbf{D}_B,    \;\; \textbf{Y}_V \approx  \textbf{X}_B\textbf{D}_A,
\end{equation}
which can be analysed within established theoretical and algorithmic frameworks for dictionary learning and sparse coding.

\subsubsection{Auxiliary Generative Model}
\label{sec:gen_aus}
We formally define the auxiliary generative model below based on the connection between the tri-factorisation of $\textbf{R}$ and the two-factor factorisations of $\textbf{Y}_U$ and $\textbf{Y}_V$.
\begin{definition}[Auxiliary Generative Model]
\label{gen_aux}
Suppose that the relation matrix $\textbf{R} $ is generated according to $\textbf{R} = \textbf{A}\textbf{S}\textbf{B}^T$. 
Let $\textbf{R} = \textbf{U}_R\bm\Sigma_R \textbf{V}_R^T $ be the compact SVD of $\textbf{R}$.
Construct the observation matrices $\textbf{Y}_U $ and $ \textbf{Y}_V $ according to  Eqs. (\ref{Y_uv1}) and (\ref{Y_uv2}).
The  auxiliary generative model consists of the following two generating processes:
\begin{itemize}
    \item $\textbf{Y}_U = \textbf{X}_A\textbf{D}_B$,  where  $\textbf{X}_A = \frac{1}{\sqrt{nm}}\textbf{A}\textbf{L}_B^{-1}$, $\textbf{D}_B = \textbf{L}_B \textbf{F}_B$,   $\textbf{F}_B  = \textbf{S}\textbf{B}^T \textbf{V}_R$,    Eq. (\ref{matrix_Lb}) for $\textbf{L}_B$.
    \item $\textbf{Y}_V =  \textbf{X}_B\textbf{D}_A$,  where  $\textbf{X}_B  = \frac{1}{\sqrt{nm}}\textbf{B} \textbf{L}_A^{-1}$, $\textbf{D}_A = \textbf{L}_A \textbf{F}_A$,     $\textbf{F}_A  = \textbf{S}^T\textbf{A}^T\textbf{U}_R $,     Eq. (\ref{matrix_La}) for $\textbf{L}_A $.
\end{itemize}
\end{definition} 

\noindent
Instead of directly analysing the original tri-factorisation problem, we study the recovery of the auxiliary dictionaries ($\textbf{D}_B$, $\textbf{D}_A$) and  coefficients ($\textbf{X}_A$, $\textbf{X}_B$) from the auxiliary observations ($\textbf{Y}_U$, $\textbf{Y}_V$).
Here we omit the superscript $^{*}$ for indicating the ground truth to simplify notations.
Under this formulation, the factorisation is subject to  standard sign and permutation ambiguities.
Specifically, simultaneously flipping the sign of a dictionary atom  and the sign of its corresponding coefficient vector   does not alter the observation matrix. 
Likewise, simultaneously  permuting  the atoms and their associated coefficient vectors leave the observation matrix unchanged.
Therefore, we establish recovery conditions  up to sign and permutation.

\subsubsection{Auxiliary Tri-factorisation}  
\label{sec:AuxTri}

Combining the two auxiliary factorisations in Eq. (\ref{two_F}) yields the induced tri-factorisation $ \textbf{R} \approx \textbf{X}_A \tilde{\textbf{S}}\textbf{X}_B^T$, where the auxiliary latent relation matrix satisfies $\tilde{\textbf{S}}  =  nm\textbf{L}_B \textbf{S}\textbf{L}_A$. 
Specifically, it has
\begin{equation}
\label{eq:factorS}
     \textbf{R}  =  \textbf{A}  \textbf{S} \textbf{B}^T   =   \left( \frac{1}{\sqrt{nm}} \textbf{A}\textbf{L}_B^{-1} \right)(  nm\textbf{L}_B \textbf{S}\textbf{L}_A  ) \left( \frac{1}{\sqrt{nm}}\textbf{B}\textbf{L}_A^{-1}\right)^T =  \textbf{X}_A (\underbrace{  nm\textbf{L}_B \textbf{S}\textbf{L}_A}_{\tilde{\textbf{S}}})\textbf{X}_B^T. 
\end{equation}
Furthermore, Eqs. (\ref{Y_uv1}) and (\ref{Y_uv2}) enable  the auxiliary latent relation matrix to be computed from either the auxiliary coefficient matrices  or the auxiliary dictionary matrices as follows
\begin{equation}
\label{eq:scaled_S_aux}
    \tilde{\textbf{S}}  =      \textbf{X}_A^{\dagger} \textbf{R}\left(\textbf{X}_B^T\right)^{\dagger}  =  nm\textbf{X}_A^{\dagger} \textbf{Y}_U \bm\Sigma_\mathbb{R}^{-1}     \textbf{Y}_V^T \left(\textbf{X}_B^T\right)^{\dagger} = nm\textbf{D}_B\bm\Sigma_\mathbb{R}^{-1}\textbf{D}_A^T. 
\end{equation}
The induced tri-factorisation $ \textbf{R} = \textbf{X}_A \tilde{\textbf{S}}\textbf{X}_B^T$ is closely related to the original  tri-factorisation  $\textbf{R} =  \textbf{A}  \textbf{S} \textbf{B}^T$ as shown below.

First, the auxiliary and original coefficient matrices share identical sparsity structures.
As seen in Eq. (\ref{eq:factorS}), the auxiliary coefficient matrices are obtained from the original coefficient matrices through invertible diagonal scalings:
 $\textbf{X}_A = \frac{1}{\sqrt{nm}}\textbf{A}\textbf{L}_B^{-1}$ and $\textbf{X}_B = \frac{1}{\sqrt{nm}}\textbf{B}\textbf{L}_A^{-1}$.
Since diagonal scaling preserves the support of a vector, the auxiliary coefficient matrices $\mathbf{X}_A$ and $\mathbf{X}_B$ have exactly the same support patterns as the original $\mathbf{A}$ and $\mathbf{B}$. 
Furthermore,  Lemma \ref{RowLengthF} establishes that the diagonal entries of  $\textbf{L}_A$, $\textbf{L}_B$, and their inverses are uniformly bounded. 
Consequently, the transformation from  $\textbf{A}$, $\textbf{B}$, and $ \textbf{S} $ to  $\textbf{X}_A$, $\textbf{X}_B$, and $\tilde{\textbf{S}} $ amounts only to bounded diagonal rescalings.
As a result, the norms and pairwise angles of the coefficient vectors, as well as the conditioning of the latent relation matrix, undergo only bounded distortions, as established  in Theorem \ref{main_res3}. 
Hence, the induced tri-factorisation preserves the essential structural, geometric, and spectral properties of the original tri-factorisation. 
Therefore,  recovering the auxiliary factor matrices yields an equivalent representation of the underlying latent structure.
Together with the exact preservation of the sparsity structure, this establishes the mathematical justification of the auxiliary formulation adopted in the subsequent recovery analysis.

\section{ Tri-factorisation Algorithmic Framework}
\label{sec:algorithm}

Our theoretical analysis builds upon an algorithmic framework for solving the auxiliary tri-factorisation problem.
To facilitate the presentation of the used algorithms, we first recall several essential concepts from the existing literature.

\begin{definition}[$\rho$-Correlation Graph]
\label{def:rho_corr_graph} 
Let $Y=\{\bm y_i\}_{i=1}^k$ be a collection of row vectors and let  $\rho>0$ be a threshold.
The  $\rho$-correlation  graph associated with $Y$, denoted by $G_{\rho}(Y) =(V, E)$, is   an  undirected graph with the vertex set $V = \{v_1, \dots, v_k\}$, where, for $i \neq j$,  $(v_i, v_j) \in E$  if and only if $|\bm y_i \bm y_j^T| > \rho$.
\end{definition}

\begin{definition}[Shared $\rho$-Neighbour Set]
\label{def:share_neighbour}
Let $Y = \{\bm y_i\}_{i=1}^k $ and $\rho > 0$. 
For $i \neq j$, the shared $\rho$-neighbour set of $(\bm y_i, \bm y_j)$ is defined as
\begin{equation}
N_{\rho}(\bm y_i, \bm y_j, Y) = \left\{ \bm y_t\left|  t\in[k],  t \notin \{i,j\}, \left|\bm y_i\bm y_t^T\right| > \rho,  \left|\bm y_j\bm y_t^T\right|> \rho   \right.\right\}.
\end{equation}
\end{definition}

\begin{definition}[Smallest $\rho$-Neighbour Size]
\label{def:smallest_neighbour}
Let $Y = \{\bm y_i\}_{i=1}^k $ and $\rho > 0$.
Let $G_\rho(Y) = (V,E)$ be the $\rho$-correlation graph defined in Definition \ref{def:rho_corr_graph}, and  let  $N_\rho(\bm y_i, \bm y_j, Y)$ be the shared $\rho$-neighbour set of  $(\bm y_i, \bm y_j)$ defined in Definition \ref{def:share_neighbour}.
The smallest $\rho$-neighbour size of $Y$ is   defined as
\begin{equation}
    \bar{N}_{\rho}(Y) = \min_{ (v_i,v_j)\in E } |N_{\rho}(\bm y_i, \bm y_j, Y)|.
\end{equation}
\end{definition}

\subsection{Factorisation Algorithms}

Our tri-factorisation framework is presented in Algorithm  \ref{alg:rl}, which forms the basis of the subsequent identifiability analysis.
It independently solves the two auxiliary two-factor matrix factorisation problems and combines their results to estimate  $\textbf{X}_A$,  $\textbf{X}_B$ and $\tilde{\textbf{S}}$. 
Each auxiliary problem is solved using a classical  alternating minimisation algorithm for dictionary learning and sparse coding \citep{Aga16}, detailed in Algorithm \ref{alg:dl}.
We  further analyse a spectral method for obtaining a coarse estimate of the auxiliary dictionary matrices  \citep{Aga17},  described in Algorithms \ref{alg:ini} and \ref{alg:unique}.
This estimate can  be used to initialise Algorithm  \ref{alg:dl} at Step \ref{step:dic_ini}.

\subsubsection{Relation Tri-factorisation}

\begin{algorithm}[t]
	\caption{ Auxiliary Tri-factorisation}
	\label{alg:rl}
	\begin{algorithmic}[1]
\State \textbf{Input}: Observed relation matrix $\textbf{R} \in \mathbb{R}^{n \times m} $;    atom separation parameter $\xi$; hyperparameter sets $\bm h_U$ and $\bm h_V$.

\State \textbf{Output}: Matrix estimates $\hat{\textbf{X}}_A$, $\hat{\textbf{X}}_B$, $\hat{\textbf{S}}$.

\State  Compute the compact SVD  $\textbf{R} = \textbf{U}_R\bm\Sigma_R \textbf{V}_R^T $, construct $\textbf{Y}_U$ and $\textbf{Y}_V$ by Definition \ref{gen_aux}.

\State  Apply Algorithm \ref{alg:dl} to   $\textbf{Y}_U$ with hyper-parameters     $\xi$ and  $\bm h_U$, to obtain $\hat{\textbf{X}}_B$ and $\hat{\textbf{D}}_A$. \label{alg3_apply}

\State  Apply Algorithm \ref{alg:dl} to  $\textbf{Y}_V$ with hyper-parameters     $\xi$ and  $\bm h_V$, to obtain $\hat{\textbf{X}}_A$ and $\hat{\textbf{D}}_B$ \label{alg3_apply2}

\State Compute $\hat{\textbf{S}} =  \hat{\textbf{X}}_A^{\dagger} \textbf{R}\left(\hat{\textbf{X}}_B^T\right)^{\dagger} $.
\label{eq:scaled_S_est}

\State Return  $\hat{\textbf{X}}_A$, $\hat{\textbf{X}}_B$, $\hat{\textbf{S}}$.
	\end{algorithmic}
\end{algorithm}

Algorithm \ref{alg:rl} takes the observed relation matrix $\textbf{R}$ as input and constructs two auxiliary problems according to Definition~\ref{gen_aux}.
The two problems are solved in Steps \ref{alg3_apply} and \ref{alg3_apply2}, respectively, by applying Algorithm \ref{alg:dl} to the auxiliary observation matrices $\textbf{Y}_U$ and $\textbf{Y}_V$, producing estimates of the corresponding auxiliary coefficient and dictionary matrices, denoted by $\hat{\textbf{X}}_A$, $\hat{\textbf{X}}_B$, $\hat{\textbf{D}}_A$, and $\hat{\textbf{D}}_B$.
 Algorithm \ref{alg:dl}  requires several hyper-parameters, including the iteration number $T$, Lasso accuracy parameters $\left\{\epsilon_t\right\}_{t=1}^{T}$ and sparsity control parameters $\left\{\rho^{(s)}_t\right\}_{t=1}^{T}$.
 When Algorithm \ref{alg:ini}  is used to initialise the dictionary estimation process, additional hyper-parameters are required, including the correlation threshold $\rho$  and unique intersection threshold $\rho_p$.
 We denote the complete set of hyper-parameters for Algorithm \ref{alg:rl} by $\bm h $, and distinguish the   hyperparameter settings used for factorising $\textbf{Y}_U$ and $\textbf{Y}_V$  by $\bm h_U$ and $\bm h_V$, respectively.
The estimate of the auxiliary latent relation matrix, denoted by $\hat{\textbf{S}}$, is then computed according to   Eq. (\ref{eq:scaled_S_aux}).
Finally, the algorithm returns the estimated tri-factorisation  $ \textbf{R} = \hat{\textbf{X}}_A\hat{\textbf{S}} \hat{\textbf{X}}_B^T$.

\begin{algorithm}[t]
	\caption{ An Alternating Minimisation Algorithm  for Solving $\textbf{Y} \approx \textbf{X}\textbf{D}$ \citep{Aga16}}
	\label{alg:dl}
	\begin{algorithmic}[1]
\State \textbf{Input}: Observed data $\textbf{Y} \in \mathbb{R}^{n \times d} $; iteration number $T$; Lasso accuracy parameters $\left\{\epsilon_t\right\}_{t=1}^{T}$; sparsity control parameters $\left\{\rho^{(s)}_t\right\}_{t=1}^{T}$; correlation threshold $\rho$; atom separation parameter $\xi$; unique intersection threshold $\rho_p$.

\State \textbf{Output}: Coefficient matrix estimate $\hat{\textbf{X}}$; dictionary matrix estimate $\hat{\textbf{D}}$.

\State Initialise the  dictionary estimate $\hat{\textbf{D}}_0$ either randomly or   by  
$\hat{\textbf{D}}_0 = \textmd{Dic}(\textbf{Y}, \rho, \xi, \rho_p)$ using Algorithm \ref{alg:ini}.
\label{step:dic_ini}

\For{iterations $t=1,2,\ldots, T$}
\For{sample index $i=1,2,\ldots, n$}
 
\State Solve the following constrained optimisation problem:
\begin{equation}
\nonumber
 \bm x^* = \arg \min_{ \left\|\textbf{Y}^{(i)} - \bm x \hat{\textbf{D}}_{t-1} \right\|_2 \leq \epsilon_{t} }\;  \|\bm x\|_1.
\end{equation}
\label{alg2:step:Lasso}

 \State Obtain a sparse vector $\hat{\bm x} \in \mathbb{R}^{k}$ by thresholding the optimised coefficients: 
 \label{step:sparse_thresholding} 
 $$\hat{x}_j= \left\{\begin{array}{ll}
         x^*_j,  &    \textmd{if }   |x^*_j| \geq \rho^{(s)}_t, \\
         0, &   \textmd{otherwise}.
        \end{array}
        \right.
  $$ 
   
\State Update the $i$-th row of the coefficient estimate by $\hat{\textbf{X}}_{t}^{(i)} = \hat{\bm x}$.
 \EndFor
 
 \State Obtain the dictionary estimate by $ \hat{\textbf{D}}_t = \hat{\textbf{X}}_{t}^{\dagger}\textbf{Y}$.
 \label{step:dic_comp}

 \State Normalise each row of the dictionary estimate to unit length by  $\hat{\textbf{D}}_{t}^{(i)} \leftarrow \frac{\hat{\textbf{D}}_t^{(i)}}{\left\|\hat{\textbf{D}}_t^{(i)}\right\|_2}$.

\EndFor
 
\State Return  $\hat{\textbf{X}}_T$ and $\hat{\textbf{D}}_{T}$.
	\end{algorithmic}
\end{algorithm}

\subsubsection{Dictionary Learning and Sparse Coding by Alternating Optimisation}  
Algorithm \ref{alg:dl} alternates between estimating the coefficient matrix and updating the dictionary matrix from the observed data matrix.
Starting from an initial  guess  of the dictionary,  it computes the coefficient vector for each observation by solving  a constrained Lasso problem.
The resulting coefficient matrix is then used to update the dictionary matrix  by computing  its pseudo-inverse.
The fitting error and sparsity of the Lasso solution at each iteration $t$ are controlled by  the Lasso accuracy parameter $\epsilon_t$ and the sparsity  control parameters $ \rho^{(s)}_t$, respectively.
This procedure is repeated for   $T$ iterations.

When applied within Algorithm \ref{alg:rl}, Algorithm \ref{alg:dl}    updates   the  auxiliary dictionary matrices  according to   $\hat{\textbf{D}} _{A} =  \hat{\textbf{X}} _{B}^{\dagger}\textbf{Y}_V = \hat{\textbf{X}} _{B}^{\dagger} \textbf{X}_B\textbf{D}_A$ and $\hat{\textbf{D}} _{B} = \hat{\textbf{X}} _{A}^{\dagger}\textbf{Y}_U = \hat{\textbf{X}} _{A}^{\dagger} \textbf{X}_A\textbf{D}_B$.
Step \ref{eq:scaled_S_est} of Algorithm \ref{alg:rl}  estimates the auxiliary  latent relation matrix as $\hat{\textbf{S}} =   \hat{\textbf{X}}_A^{\dagger} \textbf{R}\left(\hat{\textbf{X}}_B^{\dagger} \right)^T$.
These expressions show that the overall performance of the tri-factorisation algorithm  depends critically on the accuracy of the estiamted coefficient matrices, which in turn is largely determined by the  quality of the   Lasso solution obtained in Step \ref{alg2:step:Lasso} of Algorithm \ref{alg:dl}.
The classical compressed sensing result  of \citet{Can08}, stated as Theorem \ref{Candes} in the appendix, characterises  the recovery accuracy of the Lasso estimator given an approximate dictionary $\hat{\textbf{D}}$.
We adapt this result to our setting and present an error bound on the coefficient estimated by Lasso in Lemma \ref{coeff_error} in appendix, which provides the key theoretical foundation for our analysis.
Lemma \ref{coeff_error} also  motivates the choice   $\epsilon_{t} = \sqrt{s}M\epsilon$ for the Lasso accuracy parameter via  Eq. (\ref{eq:Lasso_accuracy})   and the choice $\rho^{(s)}_t = 8.6\sqrt{s}M\epsilon$ for the sparsity control parameter via Eq. (\ref{eq:thresholding_function}).

\begin{algorithm}[t]
	\caption{Spectral Dictionary Approximation $\textmd{Dic}(\textbf{Y}, \rho, \xi, \rho_p)$ \citep{Aga17}}
	\label{alg:ini}
	\begin{algorithmic}[1]
		\State \textbf{Input}: Observed data $\textbf{Y} \in \mathbb{R}^{n \times d}  $;  correlation threshold $\rho$;  atom separation parameter $\xi$;  unique intersection threshold $\rho_p$.
  \State \textbf{Output}: Dictionary estimate  $\hat{\textbf{D}}_0$.
  
  \State Construct the $\rho$-correlation graph   $G_{\rho}\left(\left\{\textbf{Y}^{(i)}\right\}_{i=1}^n \right)$ for the  rows  of $\textbf{Y}$.

  \State Set $t= 0$ and initialise the atom set $D_0^{(0)} = \emptyset$.
  \For{Each pair of  row vectors $\left(\textbf{Y}^{(i)}, \textbf{Y}^{(j)}\right)$ connected  by  an edge}
  
  \State Set $D_0^{(t+1 )}=D_0^{(t)}$.

  \State Identify the  shared $\rho$-neighbour  set $S = N_{\rho}\left(\textbf{Y}^{(i)}, \textbf{Y}^{(j)}, \left\{\textbf{Y}^{(h)}\right\}_{h=1}^n\right)$.   \label{nei_size}

  \State Compute the unique intersection indicator $I = u(S, \rho, \rho_p)$ using Algorithm \ref{alg:unique}.
  
 \If{$I=1$}
    
    \State Compute $\textbf{L} = \sum_{\bm y\in S}\bm y^T\bm y$.\label{step:atom1}

    \State Compute the leading singular vector $\bm d$ of $\textbf{L}$. \label{step:atom2}
    
    \If{$D_0^{(t)} = \emptyset$ or $  \min_{\bm d_0 \in D_0^{(t)}}\min\left( \|\bm d-\bm d_0\|_2, \|\bm d+\bm d_0\|_2 \right)> 2\xi$}
        \State Update the atom set by  $D_0^{(t+1)} =  D_0^{(t)} \cup \{\bm d\}$.  \label{step:atom_duplicate}
   \EndIf
        
 \EndIf
  \State  Update $t \leftarrow t+1$.
 \EndFor
 
 \State Store the atoms in $D_0^{(t)}$ as the rows of  the dictionary estimate $\hat{\textbf{D}}_0$.
 
\State Return $\hat{\textbf{D}}_0$.
	\end{algorithmic}
\end{algorithm}

\begin{algorithm}[t]
	\caption{Unique Intersection Function $u(S, \rho, \rho_p)$ \citep{Aga17}}
	\label{alg:unique}
	\begin{algorithmic}[1]
 
\State \textbf{Input}: A set of vectors $S=\{\bm y_i\}_{i=1}^N$ ($N\geq 2$); correlation threshold $\rho$; unique intersection threshold $\rho_p$.

\State \textbf{Output}: A binary indicator $I\in\{0,1\}$.

\State Construct the $\rho$-correlation graph $G_{\rho}(S)$.
\label{alg4:step_correlation}

\State Count the number of  edges in $G_{\rho}(S)$ and denote it  by  $N_e$.

\State Compute the unique-intersection score $r =\frac{N_e}{\frac{1}{2}N(N-1)}$.
\label{alg3:unique_score}

\State Set the indicator as 
        $I = \left\{\begin{array}{ll}
         1,  &    \textmd{if }   r > \rho_p, \\
         0, &   \textmd{otherwise}.
        \end{array}
        \right.
 $
\State Return $I$.
	\end{algorithmic}
\end{algorithm}

\subsubsection{Spectral Dictionary Approximation} 

Algorithm \ref{alg:ini} provides a spectral approximation of the auxiliary dictionary matrices. 
Instead of solving an optimisation problem, it analyses the spectral structure of the observed data and directly estimates the dictionary atoms by identifying the leading singular vectors.
Specifically, Algorithm \ref{alg:ini}  first identifies the shared $\rho$-neighbour set  defined in Definition \ref{def:share_neighbour}  for each connected pair of data examples, where connectivity is determined by the $\rho$-correlation graph in Definition \ref{def:rho_corr_graph}.
It then applies Algorithm \ref{alg:unique} to identify example pairs that are likely to share exactly one common atom  using the unique intersection threshold $\rho_p$.
Each such pair, referred to as a   unique intersection pair, is subsequently used to estimate the corresponding shared atom through  SVD in Steps \ref{step:atom1} and \ref{step:atom2} of Algorithm \ref{alg:ini}.
Finally, only sufficiently distinct atoms are retained in the dictionary, as determined by the atom separation parameter $\xi$.
Further properties of unique intersection pairs and the corresponding shared atoms are provided in Appendices \ref{sec:existing_ini} and \ref{app:main_proof_alg3}.

\subsection{Recovery Errors}
\label{sec:error_metrics}

The tri-factorisation framework in Algorithm  \ref{alg:rl} returns estimates  of the auxiliary coefficient and latent relation matrices,  including $\hat{\textbf{X}}_A$, $\hat{\textbf{X}}_B$, and $\hat{\textbf{S}}$.
Algorithm  \ref{alg:dl} additionally produces estimates of the auxiliary dictionary matrices as a byproduct, including $\hat{\textbf{D}}_A$ and $\hat{\textbf{D}}_B$.
We establish conditions under which the  estimation errors of these quantities converge to zero.
These errors are measured by comparing the estimates with their corresponding ground truths, including  $\textbf{X}_A$, $\textbf{X}_B$, $\tilde{\textbf{S}}$, $\textbf{D}_A$, and $\textbf{D}_B$.
In this section, we introduce the  error metrics used throughout  our theoretical analysis.
As discussed earlier,  simultaneous sign flips  and  permutations of the dictionary atoms  and their corresponding coefficient vectors leave the generated matrices unchanged, giving rise to the well-known sign and permutation ambiguities.
Our error metrics accommodate these ambiguities following standard practice.
To help explanation, we  reintroduce the superscript $^*$ to denote the ground-truth quantities.

\paragraph{Sign and Permutation Ambiguity:}
Consider  the auxiliary  factorisation problem   $\textbf{Y}  \approx \textbf{X} \textbf{D}$ (without distinguishing between $\textbf{Y}_U$ and $\textbf{Y}_V$ for simplicity), together with the ground-truth factor matrices  $\textbf{X}^* \in \mathbb{R}^{n\times k}$ and $ \textbf{D}^* \in \mathbb{R}^{k\times d}$  and  their corresponding estimates $\hat{\textbf{X}} \in \mathbb{R}^{n\times k}$ and $ \hat{\textbf{D}}\in \mathbb{R}^{k\times d}$.
To account for permutation ambiguity, we permute the estimated atoms  to align with the ground-truth  atoms.
Letting $\bm \Pi  \in \{0,1\}^{k\times k}$ denote the corresponding permutation matrix, the permuted   dictionary estimate $\hat{\textbf{D}}_{\pi}   = \bm\Pi \hat{\textbf{D}}  $ shares  the same atom ordering as the ground truth  $ \textbf{D}^*$.
Accordingly, it has
\begin{equation}
\label{ground_truth_permute}
\textbf{Y} =\hat{\textbf{X}}\hat{\textbf{D}}  =\left(\hat{\textbf{X}}  \bm\Pi^{-1} \right)\left(\bm\Pi\hat{\textbf{D}}\right) = \hat{\textbf{X}}_{\pi}  \hat{\textbf{D}}_{\pi} .  
\end{equation}
where $\hat{\textbf{X}}_{\pi} = \hat{\textbf{X}}  \bm\Pi^{-1}  $ corresponds to  the permuted   coefficient estimate.
To further account for sign ambiguity,   a binary sign variable is introduced for each estimated atom, given as
\begin{equation}
    z_i = \arg\min_{z\in\{-1, +1\}}\left\|z\hat{\textbf{D}}^{(i)}_{\pi}- {\textbf{D}^*}^{(i)}  \right\|_2,
\end{equation}
which determines whether the corresponding atom should be sigh-flipped.
Letting $\textbf{Z} = \textmd{diag}\left([z_1, z_2, \ldots, z_k]\right)$, the permutation   and  sign matrices   then define the aligned estimates 
\begin{equation}
    \hat{\textbf{X}}_{\pi,z}  = \hat{\textbf{X}}_{\pi}  \textbf{Z}^{-1} = \hat{\textbf{X}} \bm\Pi^{-1} \textbf{Z}^{-1}  ,\;\; \hat{\textbf{D}}_{\pi,z}  = \textbf{Z}\hat{\textbf{D}}_{\pi}  = \textbf{Z}\bm\Pi \hat{\textbf{D}},
\end{equation}
satisfying $\textbf{Y} =\hat{\textbf{X}}\hat{\textbf{D}}   =  \hat{\textbf{X}}_{\pi,z}   \hat{\textbf{D}}_{\pi,z} $.
All subsequent error measures are defined by comparing the ground-truth matrices with these aligned  estimates rather than the original estimates.

\paragraph{Auxiliary Dictionary Error:}
We adopt the following error metric to assess each estimated atom:
\begin{equation} 
\label{revised_dic_error}
  \epsilon_D\left(\hat{\textbf{D}}^{(i)}, \textbf{D}^*{}^{(i)} \right) = \left\|\hat{\textbf{D}}_{\pi,z}^{(i)}- {\textbf{D}^*}^{(i)}  \right\|_2  = \min_{z\in\{-1, +1\}}\left\|z\hat{\textbf{D}}_{\pi}^{(i)}- {\textbf{D}^*}^{(i)}  \right\|_2.
\end{equation}
Our analysis considers both the initial estimate of the auxiliary dictionary matrices  $\hat{\textbf{D}}_{A_{0}}$ and  $\hat{\textbf{D}}_{B_{0}}$ (e.g.,  obtained either by random or from Algorithm \ref{alg:ini}) and the dictionary  estimates $\hat{\textbf{D}}_{A_{t}}$ and $\hat{\textbf{D}}_{B_{t}}$ produced at each iteration of Algorithm \ref{alg:dl}.
Based on Eq. (\ref{revised_dic_error}),  the following dictionary error is analysed:
\begin{equation}
\label{eq:dic_errorAB}
    \epsilon_{A_t}^D = \max_{i=1}^k  \epsilon_D\left(\hat{\textbf{D}}_{A_{t}}^{(i)}, \textbf{D}_{A}^{(i)} \right), \;
    \epsilon_{B_t}^D   = \max_{i=1}^k  \epsilon_D\left(\hat{\textbf{D}}_{B_t}^{(i)}, \textbf{D}_{B}^{(i)} \right). 
\end{equation}
Here $t=0$ corresponds to the  initial  estimate, while $t\in[T]$ denotes the iteration index.
These error measures are invariant under sign and permutation ambiguities.
 
\paragraph{Auxiliary Coefficient Error:}
We  measure the error of the estimated auxiliary coefficient  matrix by
\begin{equation}
\label{revised_coe_error}
  \epsilon_X\left(\hat{\textbf{X}} , \textbf{X}^*\right) = \left\|\hat{\textbf{X}}_{\pi,z}- \textbf{X}^*\right\|_{\infty}. 
\end{equation}
Subsequently, we analyse the following error measures for the coefficient estimates computed at each iteration $t\in[T]$ of Algorithm~\ref{alg:dl}:
\begin{equation}
    \label{eq:error_coefAB}
    \epsilon_{A_t}^X =    \epsilon_X\left(\hat{\textbf{X}}_{A_{t}} , \textbf{X}_A \right),\;\; \epsilon_{B_t}^X   = \epsilon_X\left(\hat{\textbf{X}}_{B_{t}} , \textbf{X}_B \right).
\end{equation}
The final auxiliary coefficients estimated by Algorithm \ref{alg:rl} correspond to the case $t=T$.
\paragraph{Auxiliary Latent Relation Error:} 
Regarding   the auxiliary latent relation matrix, both the ground-truth matrix $\tilde{\textbf{S}}$  defined in Eq. (\ref{eq:scaled_S_aux}) and its estimate computed in   Step \ref{eq:scaled_S_est} of Algorithm \ref{alg:rl} share a common scalar factor of $nm$. 
To remove this dependence on problem size, we define
\begin{equation}
      \epsilon_S = \epsilon_{S}\left(\hat{\textbf{S}},  \tilde{\textbf{S}}\right) = \frac{1}{nm}\left\|\hat{\textbf{S}}_{\pi,z}- \tilde{\textbf{S}} \right\|_{\infty},
\end{equation}
where  $ \hat{\textbf{S}}_{\pi,z}$ denotes the estimate after applying the corresponding permutation and sign corrections to the rows and columns of   $\hat{\textbf{S}}$.
Specifically,  since $ \hat{\textbf{X}}_{A,\pi,z} =\hat{\textbf{X}}_A \bm\Pi_A^{-1} \textbf{Z}_A^{-1}  $ and $ \hat{\textbf{X}}_{B,\pi,z} =\hat{\textbf{X}}_B \bm\Pi_B^{-1} \textbf{Z}_B^{-1}  $ following the earlier convention with the subscript ``$_A$'' and ``$_B$'' introduced to distinguish the left and right cases, it has $ \hat{\textbf{S}}_{\pi,z} = \textbf{Z}_A\bm\Pi_A \hat{\textbf{S}} \bm\Pi_B^{-1}\textbf{Z}_B^{-1}$ so that $\hat{\textbf{X}}_{A,\pi,z}\hat{\textbf{S}}_{\pi,z}\hat{\textbf{X}}_{B,\pi,z}^T =\hat{\textbf{X}}_{A}\hat{\textbf{S}}\hat{\textbf{X}}_{B}^T$.
This transformation to $ \hat{\textbf{S}}_{\pi,z}$ ensures that the permutation and sign corrections used for $\hat{\textbf{S}}$  are  consistent with those  adopted in computing the  coefficient estimation errors $\epsilon_{A_T}^X $ and $ \epsilon_{B_T}^X$.

\section{Properties of Problem Matrices}
\label{sec:matrix_res}

Our strategy for studying sparsity-induced identifiability in matrix tri-factorisation is to establish sufficient conditions under which the auxiliary coefficient and dictionary matrices can be recovered from the observed data matrix, up to permutation and sign flips, using Algorithms \ref{alg:rl}-\ref{alg:ini}, with estimation errors converging to zero.
We then analyse how coefficient sparsity ($s_A$ and $s_B$) influences these recovery conditions and error convergence.
This strategy is justified by the proven structural consistency between the  auxiliary and original  factor matrices.
Our analysis directly depends on the structure of both the original tri-factorisation problem and its auxiliary counterpart, as defined by the relation generative model   in Section \ref{sec:gen} and  the auxiliary generative model  in Definition \ref{gen_aux}.
We therefore  investigate the key properties of the matrices arising from these two generative models, including the original factor matrices ($\textbf{A}$, $\textbf{B}$ and $\textbf{S}$),  auxiliary coefficient matrices ($\textbf{X}_A$ and $\textbf{X}_B$),  auxiliary dictionary matrices ($\textbf{D}_A$ and $\textbf{D}_B$), and the auxiliary observation matrices ($\textbf{Y}_U$ and $\textbf{Y}_V$).
This section presents the resulting  properties that form the foundation of Theorems \ref{main_res}-\ref{main_res3}, with proofs  deferred to Appendix \ref{app_proof_matrix}.
In addition, we provide empirical demonstrations alongside the corresponding lemmas to illustrate the tightness of the derived bounds.
For convenience,  Table \ref{tab:constant-definitions} summarises all the problem-specific constants  used throughout the lemmas and theorems.
These constants are determined by the generating parameters of the generative models  independent of the problem dimensions $n$, $m$, and $k$, and by the probabilities $ 0<\Delta, \Delta_X<1$  required for the theoretical results to hold.

\begin{table}[t]
\centering
\resizebox{0.9\textwidth}{!}{%
\begin{tabular}{ l}
\hline
\hline
   Definition \\
\hline
  $\rho_{AB} = \frac{\min(s_A, s_B)}{\max(s_A, s_B)}$ \\  [0.8em]
 
  $ \mu_D = \frac{2\Delta}{1-\Delta} + \frac{\mu_s}{\sqrt{d}}$\\  [0.8em]
 
  $M_{A}=   M^{(A)}_{\max}   u_s\sigma_B\sqrt{\frac{(1+ \Delta )  s_B }{nk }} ,\;  M_{B} = M^{(B)}_{\max}   u_s\sigma_A\sqrt{\frac{(1+ \Delta )  s_A }{mk }}, \; m_{A} =  M^{(A)}_{\min} l_s\sigma_B\sqrt{\frac{(1- \Delta )  s_B }{nk }}  , \; m_B = M^{(B)}_{\min} l_s\sigma_A\sqrt{\frac{(1- \Delta )  s_A }{mk }}$ \\  [0.8em]
  
    $\gamma  = (1+\Delta_Y)   u_s  \sigma_A \sigma_B    \sqrt{  ( 1 +\Delta)\left(1+ \mu_D(d-1)\right)  }$ \\  [0.8em]

  $\hat{\gamma}  = (1+\Delta_Y)    \sqrt{  ( 1 +\Delta)\left(1+ \mu_D(d-1)\right) }$ \\  [0.8em]

  $\sigma_X =   \frac{u_s\sigma_A\sigma_B\sqrt{(1+\Delta)(1+\Delta_X)} \max(s_A, s_B)}{k}$ \\  [0.8em]

 $\theta_0 = \rho_{AB}^2l_s^2\sigma_A^2\sigma_B^2 (1-\Delta)
\left( \rho_{AB}-\dfrac{(1+\Delta)u_s\Delta_X}{(1-\Delta)l_s}\right)$ \\  [0.8em]

$\theta_1 =   \frac{ u_s\sqrt{ 1+\Delta_X }  }{\rho_{AB}l_s \left(\rho_{AB}-\frac{(1+\Delta)u_s\Delta_X}{(1-\Delta)l_s}\right)^{\frac{1}{2}} } \sqrt{\frac{ 1+ \Delta     }{ 1- \Delta    }}$ \\  [0.8em]

$\theta_2^{(A)} =   \frac{8.5 M^{(A)}_{\max}   u_s   }{\rho_{AB}l_s\sigma_A  \left(\rho_{AB}-\frac{(1+\Delta)u_s\Delta_X}{(1-\Delta)l_s}\right)^{\frac{1}{2}}}\sqrt{\frac{ 1+ \Delta     }{ 1- \Delta    }}, \; \theta_2^{(B)} =   \frac{8.5 M^{(B)}_{\max}   u_s   }{\rho_{AB}l_s\sigma_B  \left(\rho_{AB}-\frac{(1+\Delta)u_s\Delta_X}{(1-\Delta)l_s}\right)^{\frac{1}{2}}}\sqrt{\frac{ 1+ \Delta     }{ 1- \Delta    }}$ \\  [0.8em]
 
   $ \theta_3^{(A)} =  21(u_s  \sigma_A \sigma_B  \sqrt{1+1.1\Delta}+2) M^{(A)}_{\max}   u_s\sigma_B\sqrt{1+ \Delta}, \; \theta_3^{(B)} =  21(u_s  \sigma_A \sigma_B  \sqrt{1+1.1\Delta}+2) M^{(B)}_{\max}   u_s\sigma_A\sqrt{1+ \Delta}$ \\  [0.8em]
   
    $ \theta_4 = 1.3(u_s  \sigma_A \sigma_B  \sqrt{1+1.1\Delta}+2)\left(2 + u_s\sigma_A\sigma_B\sqrt{(1+\Delta)(1+\Delta_X)} \right)$  \\  [0.8em]
    
  $  \theta_5^{(A)} =  17 \left(  u_s\sigma_A\sigma_B\sqrt{(1+\Delta)(1+\Delta_X)} +  2   \right)\sqrt{ 1+\Delta}  M^{(A)}_{\max}   u_s\sigma_B, \; \theta_5^{(B)} =  17 \left(  u_s\sigma_A\sigma_B\sqrt{(1+\Delta)(1+\Delta_X)} +  2   \right)\sqrt{ 1+\Delta}  M^{(B)}_{\max}   u_s\sigma_A$ \\  [0.8em]
    
   $  \theta_6^{(A)} = \frac{2\theta_5^{(A)} }{ \theta_0  },\; \theta_6^{(B)} = \frac{2\theta_5^{(B)}  }{ \theta_0  },\; \theta_7^{(A)} = \frac{2\theta_3^{(A)} }{ \theta_0  }, \; \theta_7^{(B)} = \frac{2\theta_3^{(B)} }{ \theta_0  },\; \theta_8^{(A)} = \frac{4\theta_4\theta_5^{(A)} }{ \theta_0^2  }, \theta_8^{(B)} = \frac{4\theta_4\theta_5^{(B)} }{ \theta_0^2 }$\\  [0.8em]
   
   $\theta_9^{(A)} = \sqrt{1+\mu_D(d-1)}\left(\theta_7^{(A)}+\theta_8^{(A)}\right),\; \theta_9^{(B)} = \sqrt{1+\mu_D(d-1)}\left(\theta_7^{(B)}+\theta_8^{(B)}\right)$  \\
   
   $\theta_{10}^{(A)} =  \frac{M_{\min}^{(A)}l_s}{17.2M_{\max}^{(A)}u_s}\sqrt{\frac{1-\Delta}{1+\Delta}}, \; \theta_{10}^{(B)} =  \frac{M_{\min}^{(B)}l_s}{17.2M_{\max}^{(B)}u_s}\sqrt{\frac{1-\Delta}{1+\Delta}}$ \\  [0.8em]
   
  $\theta_{11}^{(A)} =  8.5M^{(A)}_{\max}u_s\sigma_B\sqrt{1+ \Delta},\; \theta_{11}^{(B)} =   8.5M^{(B)}_{\max}u_s\sigma_A\sqrt{1+ \Delta}$ \\  [0.8em]
  
  $\theta_{12}^{(A)} =\min\left(\frac{1}{4\theta_2^{(A)}}, \frac{1}{16\theta_1\theta_2^{(A)}}, \frac{1}{5\theta_6^{(A)}}\right), \; \theta_{12}^{(B)} = \min\left(\frac{1}{4\theta_2^{(B)}}, \frac{1}{16\theta_1\theta_2^{(B)}}, \frac{1}{5\theta_6^{(B)} }\right) $ \\  [0.8em]

   $\theta_{13}^{(A)} = \frac{0.18}{ \left(\theta_7^{(A)} + \theta_8^{(A) }\right)^2}$,\;  $\theta_{13}^{(B)} = \frac{0.18}{ \left(\theta_7^{(B)} + \theta_8^{(B) }\right)^2}$\\  [0.8em]

    $\theta_{14}^{(A)}=\frac{12M^{(A)}_{\max}u_s^2\sigma_A^2\hat{\gamma}(1+4\hat{\gamma})}{ {M^{(A)}_{\min}}^2l_s^2(1- \Delta )  }  $, $\theta_{14}^{(B)}=\frac{12M^{(B)}_{\max}u_s^2\sigma_B^2\hat{\gamma}(1+4\hat{\gamma})}{ {M^{(B)}_{\min}}^2l_s^2(1- \Delta )  }  $ \\  [0.8em]

    $D_A= \frac{\left(u_s^2 -l_s^2 +\left(u_s^2 + l_s^2\right)\Delta\right) {M_{\max}^{(A)}}^2 }{l_s^2(1-\Delta){M_{\min}^{(A)}}^2}, \; D_B = \frac{\left(u_s^2 -l_s^2 +\left(u_s^2 + l_s^2\right)\Delta\right){M_{\max}^{(B)}}^2 }{l_s^2(1-\Delta){M_{\min}^{(B)}}^2}$ \\  [0.8em]
  
\hline
\hline
\end{tabular}%
}
\caption{Definitions of relevant problem constants.}
\label{tab:constant-definitions}
\end{table}

\subsection{Original Coefficient Matrices: $\textbf{A}$ and $\textbf{B}$}
\label{sec:prop_coef_rel}

We first derive bounds on the singular values of  $\textbf{A}$ and $\textbf{B}$, presented in Lemma \ref{singularAB}.
Compared with the   singular-value result of  \citet{Aga16}, our result is more general.
It relaxes the unit-second-moment assumption and allows both the  tightness of the bounds and the probability with which the results hold to be controlled through a user-specified parameter  $0<\Delta<1$.
\begin{lemma}[Singular Values, $\textbf{A}$ and $\textbf{B}$]    \label{singularAB}
Suppose Assumptions   \ref{SC} and \ref{NZC}   hold. 
For any $0<\Delta<1$, there exist  universal constants $C_A, C_B>0$ so that the following holds
\begin{align}
\label{eq:sigular_value_A}
&\sigma_A \sqrt{\frac{(1- \Delta ) n s_A }{k } } \leq \sigma_{\min}(\textbf{A}) \leq 
\sigma_{\max}(\textbf{A})  \leq     \sigma_A \sqrt{\frac{(1+ \Delta ) n s_A }{k } }, \\
\label{eq:sigular_value_B}
&\sigma_B \sqrt{\frac{(1- \Delta ) m s_B }{k } } \leq \sigma_{\min}(\textbf{B}) \leq 
\sigma_{\max}(\textbf{B})  \leq     \sigma_B \sqrt{\frac{(1+ \Delta ) m s_B }{k } },
\end{align}
with  probabilities at least $1-ke^{-\frac{C_A\Delta^2  \sigma_A^2n}{k {M_{\max}^{(A)}}^2}}$ and $1-ke^{-\frac{C_B\Delta^2  \sigma_B^2m}{k {M_{\max}^{(B)}}^2}}$, respectively.
\end{lemma}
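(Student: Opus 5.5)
We will prove the bound for $\textbf{A}$; the bound for $\textbf{B}$ is identical after the substitutions $n\to m$, $s_A\to s_B$, $\sigma_A\to\sigma_B$, $M^{(A)}_{\max}\to M^{(B)}_{\max}$. The approach is a matrix Chernoff argument on the Gram matrix. We write
\begin{equation*}
\textbf{A}^T\textbf{A}=\sum_{i=1}^n \left(\textbf{A}^{(i)}\right)^T\textbf{A}^{(i)},
\end{equation*}
which is a sum of $n$ independent, positive semidefinite $k\times k$ random matrices. Independence across rows holds because supports and nonzero values are drawn independently for each row (Assumptions \ref{SC} and \ref{NZC}). The squared singular values of $\textbf{A}$ are exactly the eigenvalues of $\textbf{A}^T\textbf{A}$, so it suffices to show that all eigenvalues lie in $[(1-\Delta)\mu,(1+\Delta)\mu]$ with $\mu=\sigma_A^2 n s_A/k$.

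First we compute the mean of each summand. The diagonal entry $(j,j)$ is $\alpha_{ij}^2\chi^{(A)}_{ij}$. Since the support is uniform over subsets of size $s_A$, we have $P(\chi^{(A)}_{ij}=1)=s_A/k$, and by independence of values and supports the expectation is $\sigma_A^2 s_A/k$. The off-diagonal entry $(j,l)$ is $\alpha_{ij}\alpha_{il}\chi^{(A)}_{ij}\chi^{(A)}_{il}$. It has zero mean because the $\alpha$'s are independent with zero mean. Hence $E[\textbf{A}^T\textbf{A}]=\frac{\sigma_A^2 n s_A}{k}\textbf{I}_k$, so $\mu_{\min}=\mu_{\max}=\mu$.

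Next we bound each summand uniformly. We have $\lambda_{\max}\big((\textbf{A}^{(i)})^T\textbf{A}^{(i)}\big)=\|\textbf{A}^{(i)}\|_2^2\le s_A {M^{(A)}_{\max}}^2=:R$, almost surely, since each row has exactly $s_A$ nonzeros bounded by $M^{(A)}_{\max}$.

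We then apply the matrix Chernoff inequality (Tropp). It gives
\begin{equation*}
P\big(\lambda_{\min}\le(1-\Delta)\mu\big)\le k\left[\frac{e^{-\Delta}}{(1-\Delta)^{1-\Delta}}\right]^{\mu/R},
\end{equation*}
and the analogous bound with $1+\Delta$ for $\lambda_{\max}$. For $0<\Delta<1$, both brackets are at most $e^{-c\Delta^2}$ for a universal $c$ (for instance $c=1/3$). Moreover,
\begin{equation*}
\frac{\mu}{R}=\frac{\sigma_A^2 n}{k{M^{(A)}_{\max}}^2},
\end{equation*}
and the factor $s_A$ cancels. A union bound over the two tails gives failure probability at most $2k e^{-c\Delta^2\sigma_A^2 n/(k{M^{(A)}_{\max}}^2)}$. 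Taking square roots of the eigenvalue bounds yields Eq. (\ref{eq:sigular_value_A}).

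The main obstacle is matching the stated form $1-ke^{-\cdots}$, without the factor $2$, using a universal $C_A$. The factor $2$ can be absorbed by shrinking the constant only when the exponent is bounded below, so in general it must be handled differently. We would first try to use the sharper one-sided Chernoff constants and adopt the convention that $C_A$ accommodates this. Alternatively, we would note that when $k e^{-\cdots}\ge 1$ the statement is vacuous, and otherwise choose $C_A=c/2$ with the factor $2$ absorbed into the exponent under a mild condition. Beyond this, the argument is routine.
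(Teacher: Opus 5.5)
Your argument is correct, but it uses a different concentration tool from the paper's proof.

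\textbf{The paper's route.} The paper computes the common second-moment matrix $\bar{\bm\Sigma}_A=\frac{s_A\sigma_A^2}{k}\textbf{I}_k$ via Lemma \ref{cov1} with $\mu=0$. It then applies the Vershynin-type deviation bound of Theorem \ref{SpecRand} to $\left\|\frac1n\textbf{A}^T\textbf{A}-\bar{\bm\Sigma}_A\right\|_2$, using the row-norm bound $\sqrt{s_A}M^{(A)}_{\max}$. Choosing $t=\Delta\sigma_A\sqrt{n/(k{M^{(A)}_{\max}}^2)}$ gives $\max\left(\|\bar{\bm\Sigma}_A\|_2^{1/2}\gamma,\gamma^2\right)=\Delta s_A\sigma_A^2/k$; the linear branch wins because $\Delta<1$. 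Weyl's inequality then yields both sides at once.

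\textbf{Your route.} You use the same two inputs: the isotropic mean and the almost-sure bound $R=s_A{M^{(A)}_{\max}}^2$. You feed them into the matrix Chernoff inequality for sums of independent PSD matrices instead.
\begin{itemize}
\item What your route buys: explicit constants ($c=1/3$), no case analysis of the $\max$, and a transparent reason why $s_A$ drops out of the exponent through $\mu/R$.
\item What the paper's route buys: a single two-sided statement whose failure probability is already quoted in the form $ke^{-Ct^2}$, so it matches the lemma verbatim.
\end{itemize}

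\textbf{The factor $2$.} Your second fix is the right one. The first (sharper one-sided constants) does not remove the union-bound factor. Set $x=\Delta^2\sigma_A^2n/(k{M^{(A)}_{\max}}^2)$ and $C_A=c/2$.
\begin{itemize}
\item If $ke^{-C_Ax}\ge 1$, the claim is vacuous.
\item Otherwise, when $k\ge2$, we get $x>2\ln k/c\ge 2\ln2/c$. Hence $2ke^{-cx}\le ke^{-cx/2}$.
\end{itemize}
So your ``mild condition'' is exactly $k\ge 2$, and it cannot be dropped. Take $k=s_A=n=1$ with $\alpha$ uniform on $\{\pm1,\pm2\}$, so $\sigma_A^2=2.5$ and $M^{(A)}_{\max}=2$, and take $\Delta=1/2$. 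Then the event in Eq. (\ref{eq:sigular_value_A}) requires $|\alpha|\in[\sqrt{1.25},\sqrt{3.75}]$, which has probability zero, while the claimed lower bound is positive for every $C_A>0$.

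The paper avoids this only because Theorem \ref{SpecRand} is quoted with prefactor $k$ rather than $2k$. As quoted, that theorem fails for $k=1$ by the same example. The single-$k$ prefactor therefore hides the same absorption you carried out explicitly. Your proof is as strong as the paper's, and you should simply state the restriction $k\ge2$ (or keep $2k$).
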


\noindent
 Lemma \ref{singularAB} implies that  the smallest singular values of $\textbf{A}$ and $\textbf{B}$  simultaneously exceed their respective lower bounds with probability  
\begin{equation}
    p=\left(1-ke^{-\frac{C_A\Delta^2  \sigma_A^2n}{k  {M_{\max}^{(A)}}^2}}\right) \left(1-ke^{-\frac{C_B\Delta^2  \sigma_B^2m}{k  {M_{\max}^{(B)}}^2}}\right) \geq  1-ke^{-\frac{C_A\Delta^2  \sigma_A^2n}{k  {M_{\max}^{(A)}}^2}} - ke^{-\frac{C_B\Delta^2  \sigma_B^2m}{k  {M_{\max}^{(B)}}^2}}.
\end{equation}
Since both lower bounds are strictly positive,  $\textbf{A}$ and $\textbf{B}$ both have full column rank with at least this probability, yielding the following corollary.
\begin{corollary}[Rank of $\textbf{A}$, $\textbf{B}$ and $\textbf{R}$]    \label{rankAB}
Suppose Assumptions \ref{SC} and \ref{NZC} hold. 
For any $0<\Delta<1$,  with a  probability  at least $1-ke^{-\frac{C_A\Delta^2  \sigma_A^2n}{k  {M_{\max}^{(A)}}^2}} - ke^{-\frac{C_B\Delta^2  \sigma_B^2m}{k  {M_{\max}^{(B)}}^2}}$, $\textbf{A}$ and $\textbf{B}$ have full column rank at the same time, and subsequently  $\textmd{rank}(\textbf{R})  = \textmd{rank}\left(\textbf{A}\textbf{S}\textbf{B}^T\right)=\textmd{rank}\left(\textbf{S}\right)=d $.
\end{corollary}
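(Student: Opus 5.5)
The plan is to combine Lemma \ref{singularAB} with a union bound, and then apply an elementary rank argument for the product $\textbf{A}\textbf{S}\textbf{B}^T$.

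First, the probability statement. Let $E_A$ be the event that the lower bound in Eq. (\ref{eq:sigular_value_A}) holds, and $E_B$ the event that the lower bound in Eq. (\ref{eq:sigular_value_B}) holds. By Lemma \ref{singularAB}, $P(E_A^c)\le ke^{-C_A\Delta^2\sigma_A^2n/(k{M_{\max}^{(A)}}^2)}$, and similarly for $E_B^c$. The union bound then gives $P(E_A\cap E_B)\ge 1-P(E_A^c)-P(E_B^c)$, which is the stated probability. This route needs no independence between $\textbf{A}$ and $\textbf{B}$. Under independence the product form shown before the corollary is also valid, but it is not needed.

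Second, full column rank. On $E_A\cap E_B$ we have $\sigma_{\min}(\textbf{A})\ge \sigma_A\sqrt{(1-\Delta)ns_A/k}>0$, because $\Delta<1$, $\sigma_A>0$ and $s_A\ge1$. Since $\textbf{A}\in\mathbb{R}^{n\times k}$, its $k$-th singular value is positive, so $\textbf{A}$ has rank $k$. The same argument shows $\textbf{B}$ has rank $k$.

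Third, the rank of $\textbf{R}$. Because $\textbf{A}$ has full column rank, it has a left inverse $\textbf{A}^\dagger$ with $\textbf{A}^\dagger\textbf{A}=\textbf{I}_k$. Because $\textbf{B}$ has full column rank, $\textbf{B}^T$ has the right inverse $(\textbf{B}^\dagger)^T$. Therefore $\textbf{S}=\textbf{A}^\dagger\textbf{R}(\textbf{B}^\dagger)^T$, which gives $\textmd{rank}(\textbf{S})\le\textmd{rank}(\textbf{R})$. Conversely, $\textmd{rank}(\textbf{R})\le\textmd{rank}(\textbf{S})$ holds trivially for the product $\textbf{R}=\textbf{A}\textbf{S}\textbf{B}^T$. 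Hence $\textmd{rank}(\textbf{R})=\textmd{rank}(\textbf{S})=d$, consistent with the definition of $d$ in Assumption \ref{LI}. This also shows that the $d$ used in the compact SVD of $\textbf{R}$ in Section \ref{sec:aux_svd} coincides with $\textmd{rank}(\textbf{S})$.

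There is no real obstacle here. The only point needing care is to justify the combined probability with the union bound rather than the product, so that independence of $\textbf{A}$ and $\textbf{B}$ does not have to be assumed.
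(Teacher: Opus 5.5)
Your proposal is correct and follows the paper's argument: Lemma \ref{singularAB} gives strictly positive lower bounds on $\sigma_{\min}(\textbf{A})$ and $\sigma_{\min}(\textbf{B})$, so both matrices have full column rank, and then $\textmd{rank}(\textbf{A}\textbf{S}\textbf{B}^T)=\textmd{rank}(\textbf{S})$. The differences are minor: the paper writes the joint probability as the product $p$, tacitly assuming independence of $\textbf{A}$ and $\textbf{B}$, before lower-bounding it by the sum, whereas your union bound gives the stated bound directly; you also spell out the pseudo-inverse identity $\textbf{S}=\textbf{A}^\dagger\textbf{R}(\textbf{B}^\dagger)^T$, which the paper leaves implicit.
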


\begin{figure}[t]
    \centering
\subfigure[Lemma \ref{singularAB}, $\sigma(\mathbf{A})$]{
    \includegraphics[width=0.45\linewidth]{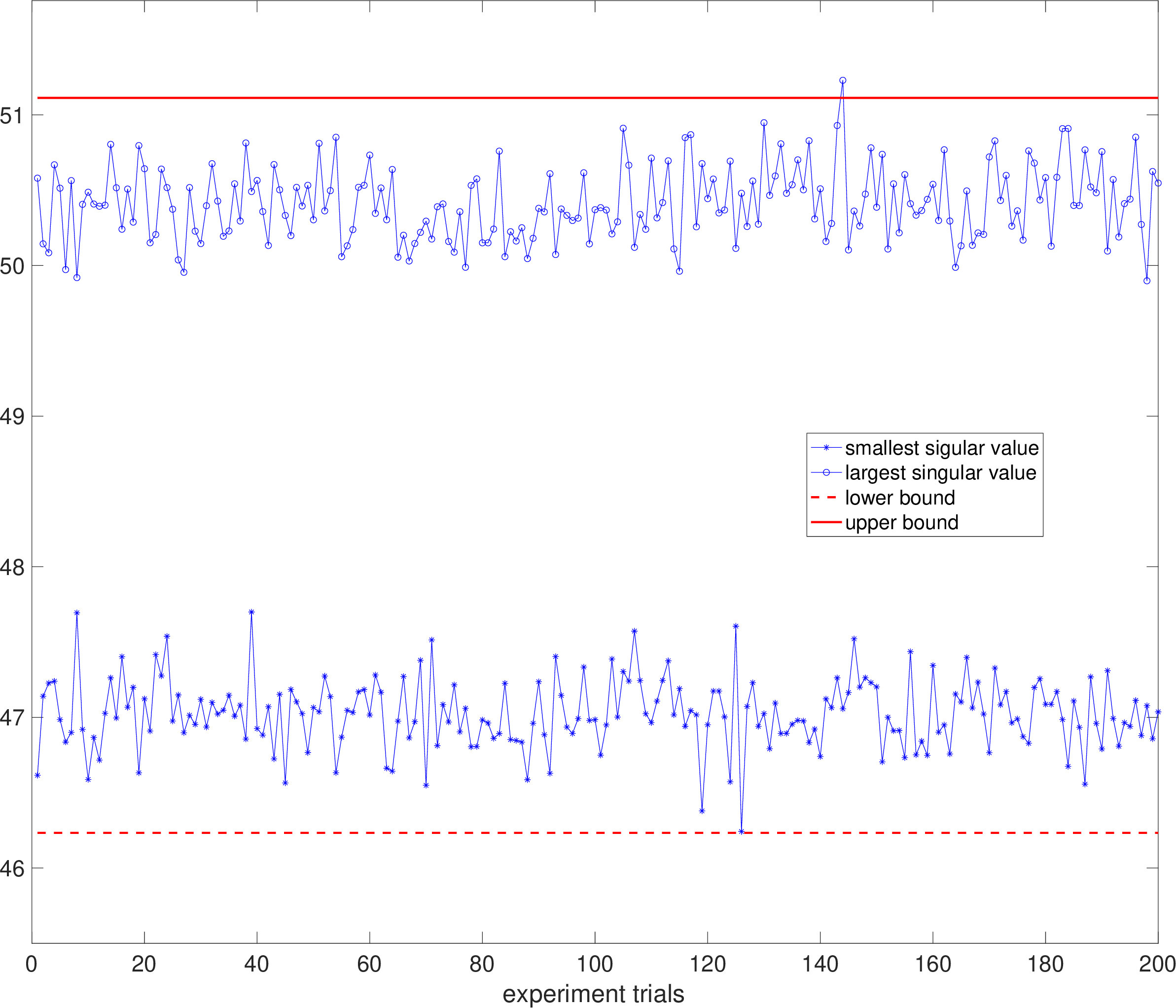}
    \label{fig:lemma9}    
  }
   \hfill
\subfigure[Corollary \ref{RowLengthFhat}, $\left\| \hat{\textbf{F}}_A^{(i)}  \right \|_2 $]{
    \includegraphics[width=0.45\linewidth]{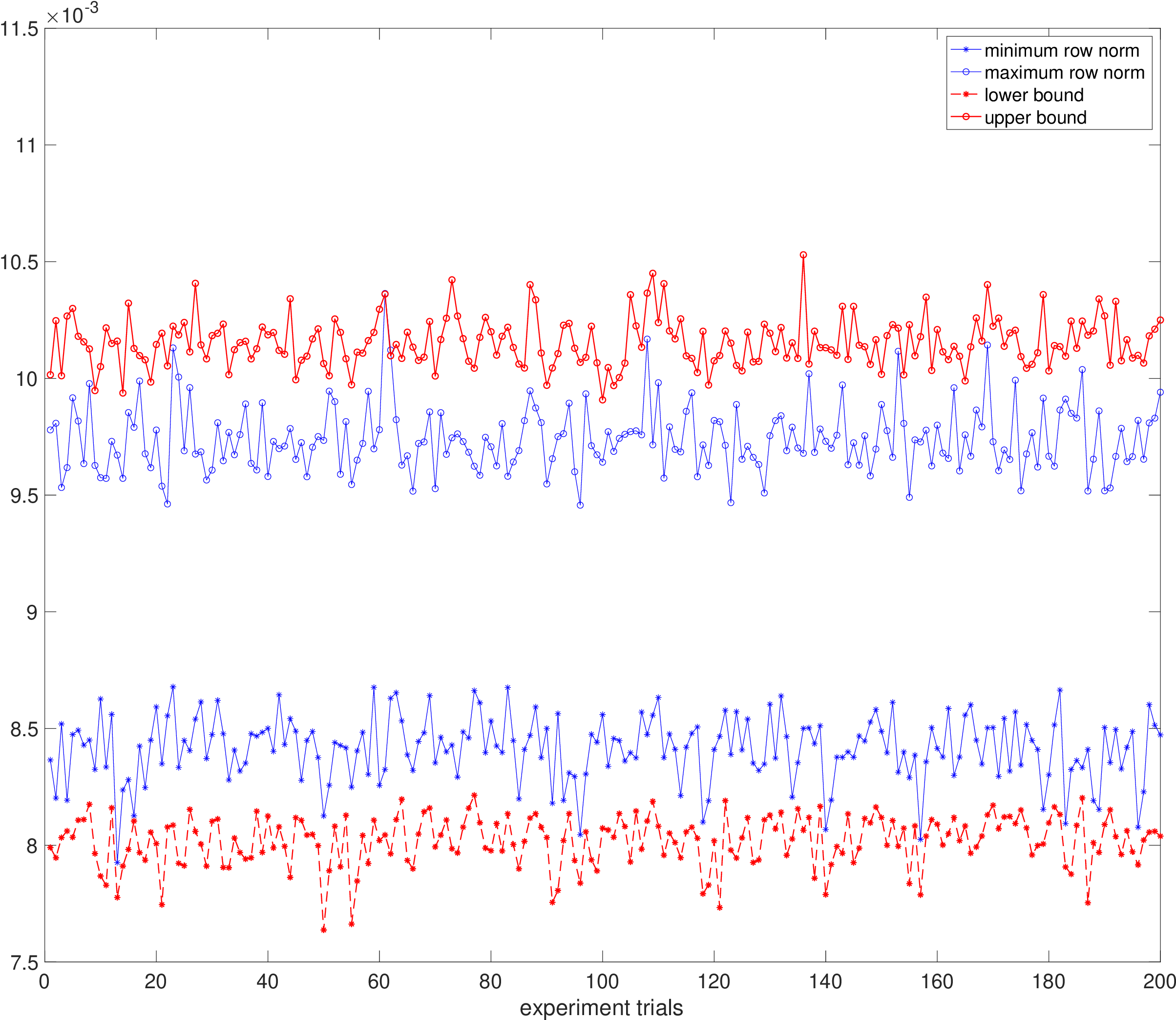}
    \label{fig:corollary12}    
  }
 \caption{(a) Comparison of the smallest and largest singular values of $\textbf{A}$ with  their theoretical bounds in Eq. (\ref{eq:sigular_value_A}) over 200  trials. (b) Comparison of the maximum and minimum values of the row $l_2$-norms of  $\hat{\textbf{F}}_A$ with  their theorectical bounds in Eq. (\ref{eq:l2norm_hatFa}) over 200 trials.}     \label{fig:AF_bound} 
\end{figure}

\paragraph{Empirical Illustration:}
We illustrate the bounds established in Lemma \ref{singularAB} by comparing the theoretical bounds from Eq. (\ref{eq:sigular_value_A}) with the empirical singular values of randomly generated sparse matrices $\mathbf{A}\in \mathbb{R}^{5000 \times 10}$,  where  each row contains at most $s_A=3$ nonzero entries. 
For each row of  $\mathbf{A}$, the positions of the nonzero entries are sampled uniformly at random without replacements.
The nonzero values are drawn independently from the uniform distribution on the interval $[-1.5, -1]\cup [1,1.5]$, yielding ${m_{\max}^{(A)}}=1$ and  ${M_{\max}^{(A)}}=1.5$.  
Consequently,
$\sigma_A^2 = \frac{ \left(M_{\max}^{(A)}\right)^3 - \left(m_{\max}^{(A)}\right)^3 }{3\left({M_{\max}^{(A)}}-{m_{\max}^{(A)}}\right)} = 1.58$, where the expression follows from the second  moment formula for the uniform distribution.
According to Lemma \ref{singularAB}, $\Delta $ controls the trade-off between   bound tightness and the probability with which the bounds hold. 
Increasing $\Delta $ produces looser bounds that hold with higher probability.
We experiment with $\Delta=0.1$,  resulting in the lower bound $\sigma_A \sqrt{\frac{(1- \Delta ) n s_A }{k } } =46.3$ for the smallest singular value  and the upper bound $ \sigma_A \sqrt{\frac{(1+ \Delta ) n s_A }{k } } = 51.1$ for the largest singular value.
We independently generate 200 random sparse matrices and compute  the  singular values of each realisation.
Figure \ref{fig:lemma9} compares the empirical singular values with the corresponding theoretical bounds over all 200   trials.
The lower bound condition $\sigma_{\min}(\textbf{A})\geq 46.3$ is satisfied in all 200 trials, while the upper-bound condition $\sigma_{\max}(\textbf{A})\leq 51.1$ is satisfied in 199 trials, demonstrating effectiveness of the derived bounds  at characterising  singular values in practice.

\subsection{Auxiliary Coefficient Matrices: $\textbf{X}_A $ and $\textbf{X}_B$}
\label{sec:prop_coef_aux}

Recalling Definition \ref{gen_aux},  the auxiliary coefficient matrices $\textbf{X}_A $ and $\textbf{X}_B$ are scaled versions of $\textbf{A}$ and $\textbf{B}$, specifically,  $
{(X_A)}_{ji}  = \textbf{A}_{ji}\left\|\hat{\textbf{F}}_B^{(i)} \right\|_2 $ and $ {(X_B)}_{ji}  =  \textbf{B}_{ji}\left\|\hat{\textbf{F}}_A^{(i)} \right\|_2$ where $\hat{\textbf{F}}_A =\frac{1}{\sqrt{nm}}\textbf{F}_A $ and $ \hat{\textbf{F}}_B= \frac{1}{\sqrt{nm}} \textbf{F}_B $.
Such scaling preserves the sparsity pattern  but  changes the distributions of the  nonzero coefficients,  which in turn affects the spectral properties of $\textbf{X}_A $ and $\textbf{X}_B$.
To facilitate the analysis, we  first bound the row norms of $\textbf{F}_A$ and $\textbf{F}_B$ in Lemma \ref{RowLengthF}, which yields the corresponding bounds for the row norms of $\hat{\textbf{F}}_A $ and $\hat{\textbf{F}}_B$ in Corollary \ref{RowLengthFhat}.

\begin{lemma}[Bounded Row Length, $\textbf{F}_A$ and $\textbf{F}_B$] 
\label{RowLengthF}
Suppose Assumptions   \ref{SC}-\ref{LS}  hold. For any $0<\Delta<1$, the $l_2$-norm of every row of $\textbf{F}_A$ and $\textbf{F}_B$ is bounded by
\begin{align}
& l_s\sigma_A\sqrt{\frac{(1- \Delta ) n s_A }{k }}   \leq   \left\| {\textbf{F}}_A^{(i)}  \right \|_2   \leq    u_s\sigma_A\sqrt{\frac{(1+ \Delta ) n s_A }{k }}, \\
&  
l_s\sigma_B\sqrt{\frac{(1- \Delta ) m s_B }{k }}   \leq   \left\| {\textbf{F}}_B^{(i)}  \right \|_2   \leq    u_s\sigma_B\sqrt{\frac{(1+ \Delta ) m s_B }{k }},
\end{align}
with probabilities  at least $1-ke^{-\frac{C_A\Delta^2  \sigma_A^2n}{k  {M_{\max}^{(A)}}^2}}$ and $1-ke^{-\frac{C_B\Delta^2  \sigma_B^2m}{k  {M_{\max}^{(B)}}^2}}$, respectively.
\end{lemma}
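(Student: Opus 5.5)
The plan is to write each row of $\textbf{F}_A$ as the image of a row of $\textbf{S}^T$ under a linear map, and then use the singular value bounds of Lemma \ref{singularAB} on that map. By Eq. (\ref{F_ab}), $\textbf{F}_A = \textbf{S}^T\textbf{A}^T\textbf{U}_R$, so its $i$-th row is
\begin{equation*}
\textbf{F}_A^{(i)} = \left(\textbf{S}_i\right)^T \textbf{A}^T\textbf{U}_R ,
\end{equation*}
where $\textbf{S}_i$ is the $i$-th column of $\textbf{S}$. Equivalently, $\left(\textbf{F}_A^{(i)}\right)^T = \textbf{U}_R^T\textbf{A}\textbf{S}_i$. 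The goal is to control $\|\textbf{U}_R^T\textbf{A}\textbf{S}_i\|_2$ from above and below by $\sigma_{\max}(\textbf{A})\|\textbf{S}_i\|_2$ and $\sigma_{\min}(\textbf{A})\|\textbf{S}_i\|_2$. Assumption \ref{LS} then gives $l_s \le \|\textbf{S}_i\|_2 \le u_s$, and Eq. (\ref{eq:sigular_value_A}) supplies the singular value bounds.

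\textbf{Upper bound.} Since $\textbf{U}_R$ has orthonormal columns, $\|\textbf{U}_R^T\textbf{A}\textbf{S}_i\|_2 \le \|\textbf{A}\textbf{S}_i\|_2 \le \sigma_{\max}(\textbf{A})\, u_s$. Plugging in the upper estimate of Lemma \ref{singularAB} gives the stated upper bound.

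\textbf{Lower bound.} This is the step that needs care, because $\textbf{U}_R^T$ could a priori annihilate part of $\textbf{A}\textbf{S}_i$. The key observation is that $\textbf{A}\textbf{S}_i$ lies in the column space of $\textbf{R}$:
\begin{equation*}
\textbf{A}\textbf{S}_i \in \textmd{col}(\textbf{A}\textbf{S}) .
\end{equation*}
On the event of Corollary \ref{rankAB}, $\textbf{B}$ has full column rank, so $\textbf{B}^T$ is surjective onto $\mathbb{R}^k$. Hence $\textmd{col}(\textbf{A}\textbf{S}\textbf{B}^T) = \textmd{col}(\textbf{A}\textbf{S}) = \textmd{col}(\textbf{R}) = \textmd{col}(\textbf{U}_R)$. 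It follows that $\textbf{U}_R\textbf{U}_R^T$ acts as the identity on $\textbf{A}\textbf{S}_i$, and therefore
\begin{equation*}
\|\textbf{U}_R^T\textbf{A}\textbf{S}_i\|_2 = \|\textbf{A}\textbf{S}_i\|_2 \ge \sigma_{\min}(\textbf{A})\,\|\textbf{S}_i\|_2 \ge \sigma_{\min}(\textbf{A})\, l_s ,
\end{equation*}
where the middle inequality uses that $\textbf{A}$ has full column rank. The lower estimate of Lemma \ref{singularAB} then completes this direction.

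\textbf{The $\textbf{F}_B$ case.} This is symmetric. Here $\left(\textbf{F}_B^{(i)}\right)^T = \textbf{V}_R^T\textbf{B}\left(\textbf{S}^{(i)}\right)^T$, which involves the rows of $\textbf{S}$. The vector $\textbf{B}\left(\textbf{S}^{(i)}\right)^T$ lies in $\textmd{col}(\textbf{B}\textbf{S}^T) = \textmd{col}(\textbf{R}^T) = \textmd{col}(\textbf{V}_R)$, now using that $\textbf{A}$ has full column rank. The same argument then goes through with $\textbf{B}$, $m$, $s_B$ and $\sigma_B$ in place of $\textbf{A}$, $n$, $s_A$ and $\sigma_A$.

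\textbf{Main obstacle: the probability statement.} The lower bound for $\textbf{F}_A$ uses the full column rank of $\textbf{B}$ (so that $\textmd{col}(\textbf{R}) = \textmd{col}(\textbf{A}\textbf{S})$), which strictly speaking is the joint event of Corollary \ref{rankAB}, not only the event for $\textbf{A}$. To match the per-matrix probabilities claimed, I would first try to avoid the rank of $\textbf{B}$ entirely. If $\textbf{B}$ is rank deficient, $\textbf{U}_R$ may span a strictly smaller subspace than $\textmd{col}(\textbf{A}\textbf{S})$, so the lower bound could genuinely fail. My plan is therefore to state the lower bounds on the joint event. I would then note that the event governing $\textbf{B}$ holds with the exponentially high probability of Lemma \ref{singularAB}, which is presumably absorbed in the paper's convention. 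The upper bounds, by contrast, need only the event for $\textbf{A}$ (respectively $\textbf{B}$).
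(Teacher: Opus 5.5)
Your proof is correct, and it takes a more direct route than the paper's. The paper proves the $\textbf{F}_B$ case through an SVD of $\textbf{B}$. It splits that SVD into a block $\textbf{U}_B^{(R)}$, asserted to lie in $\textmd{col}(\textbf{V}_R)$, and a block orthogonal to $\textbf{V}_R$. It then argues $\textbf{V}_B^{(R)}=\textbf{V}_S\textbf{O}_2$, writes $\|\textbf{F}_B^{(i)}\|_2=\|\textbf{z}_i\bm\Sigma_B^{(R)}\|_2$ with $\|\textbf{z}_i\|_2=\|\textbf{S}^{(i)}\|_2$, and sandwiches this between $\sigma_{\min}(\textbf{B})\|\textbf{S}^{(i)}\|_2$ and $\sigma_{\max}(\textbf{B})\|\textbf{S}^{(i)}\|_2$.

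You bypass that bookkeeping. Since $\textbf{B}(\textbf{S}^{(i)})^T=\textbf{B}\textbf{S}^T\bm{e}_i\in\textmd{col}(\textbf{B}\textbf{S}^T)=\textmd{col}(\textbf{R}^T)=\textmd{col}(\textbf{V}_R)$, the projector $\textbf{V}_R\textbf{V}_R^T$ fixes this vector, so $\|\textbf{F}_B^{(i)}\|_2=\|\textbf{B}(\textbf{S}^{(i)})^T\|_2$ exactly. This buys more than brevity. The paper's claim that $d$ left singular vectors of $\textbf{B}$ lie in $\textmd{col}(\textbf{V}_R)$ need not hold when $\textbf{B}$ has distinct singular values. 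For example, take $\textbf{B}=\textmd{diag}(2,1)$ and a rank-one $\textbf{S}$ for which $\textmd{col}(\textbf{B}\textbf{S}^T)$ is spanned by $(2,1)^T$. Your identity uses only what the conclusion actually requires, and it does not need $d=\textmd{rank}(\textbf{S})$ separately.

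Your caveat on the probability is also correct, and it applies equally to the paper's proof, which itself starts from the joint rank event of Corollary \ref{rankAB}. The lower bound for $\textbf{F}_A$ genuinely needs $\textbf{B}$ to have full column rank, and symmetrically for $\textbf{F}_B$. For instance, a zero column of $\textbf{B}$, which occurs with small but positive probability under Assumption \ref{SC}, can make a row of $\textbf{F}_A$ vanish. So the lower bounds are justified only with probability at least $1-ke^{-\frac{C_A\Delta^2\sigma_A^2n}{k{M_{\max}^{(A)}}^2}}-ke^{-\frac{C_B\Delta^2\sigma_B^2m}{k{M_{\max}^{(B)}}^2}}$. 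Only the upper bounds hold with the single-matrix probabilities stated in the lemma.
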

The following corollary is an immediate consequence of Lemma \ref{RowLengthF} after scaling by $\frac{1}{\sqrt{nm}}$.
 \begin{corollary}[Bounded Row Length, $\hat{\textbf{F}}_A$ and $\hat{\textbf{F}}_B$] \label{RowLengthFhat}
Suppose Assumptions   \ref{SC}-\ref{LS}  hold. 
For any $0<\Delta<1$, the $l_2$-norm of every row of  $\hat{\textbf{F}}_A$ and $\hat{\textbf{F}}_B$ is bounded by
\begin{align}
\label{eq:l2norm_hatFa}
&  l_s\sigma_A\sqrt{\frac{(1- \Delta )  s_A }{mk }}     \leq   \left\| \hat{\textbf{F}}_A^{(i)}  \right \|_2   \leq    u_s\sigma_A\sqrt{\frac{(1+ \Delta )  s_A }{mk }}, \\
\label{eq:l2norm_hatFb}
&  
l_s\sigma_B\sqrt{\frac{(1- \Delta )   s_B }{nk }}     \leq   \left\| \hat{\textbf{F}}_B^{(i)}  \right \|_2   \leq   u_s\sigma_B\sqrt{\frac{(1+ \Delta )   s_B }{nk }},
\end{align}
with the same probabilities as in Lemma \ref{RowLengthF}.
\end{corollary}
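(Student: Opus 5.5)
The statement to prove is Corollary~\ref{RowLengthFhat}. It follows from Lemma~\ref{RowLengthF} by a deterministic rescaling, so the plan is simply to transfer that lemma's bounds through the definitions.

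First, recall the definitions $\hat{\textbf{F}}_A=\frac{1}{\sqrt{nm}}\textbf{F}_A$ and $\hat{\textbf{F}}_B=\frac{1}{\sqrt{nm}}\textbf{F}_B$. By homogeneity of the $l_2$-norm, every row satisfies
\begin{equation*}
\left\|\hat{\textbf{F}}_A^{(i)}\right\|_2=\frac{1}{\sqrt{nm}}\left\|\textbf{F}_A^{(i)}\right\|_2 ,
\qquad
\left\|\hat{\textbf{F}}_B^{(i)}\right\|_2=\frac{1}{\sqrt{nm}}\left\|\textbf{F}_B^{(i)}\right\|_2 .
\end{equation*}

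Second, work on the event of Lemma~\ref{RowLengthF} for $\textbf{F}_A$, which holds with probability at least $1-ke^{-\frac{C_A\Delta^2\sigma_A^2 n}{k{M_{\max}^{(A)}}^2}}$. On this event, multiply the two-sided bound $l_s\sigma_A\sqrt{(1-\Delta)ns_A/k}\le\|\textbf{F}_A^{(i)}\|_2\le u_s\sigma_A\sqrt{(1+\Delta)ns_A/k}$ by the positive factor $1/\sqrt{nm}$. Since $\sqrt{n}/\sqrt{nm}=1/\sqrt{m}$, this gives exactly Eq.~(\ref{eq:l2norm_hatFa}), with the factor $s_A/(mk)$ under the root.

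Third, the same argument on the $\textbf{F}_B$ event gives $\sqrt{m}/\sqrt{nm}=1/\sqrt{n}$, which yields Eq.~(\ref{eq:l2norm_hatFb}). The rescaling is deterministic and acts on the same events, so the probabilities are unchanged and equal those stated in Lemma~\ref{RowLengthF}.

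There is no substantive obstacle here. The only point needing care is the cancellation of dimensions: the bound for $\textbf{F}_A$ carries $n$, which cancels to leave $m$, and the bound for $\textbf{F}_B$ carries $m$, which cancels to leave $n$. These match the stated bounds.
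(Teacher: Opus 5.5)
Your proposal is correct and is the paper's argument: the corollary follows by dividing the two-sided bounds of Lemma~\ref{RowLengthF} by $\sqrt{nm}$ on the same high-probability events, so the probabilities carry over unchanged. Your dimensional cancellation is right, with $\sqrt{n}/\sqrt{nm}=1/\sqrt{m}$ for $\hat{\textbf{F}}_A$ and $\sqrt{m}/\sqrt{nm}=1/\sqrt{n}$ for $\hat{\textbf{F}}_B$.
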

Corollary \ref{RowLengthFhat} immidiately yields bounds on the magnitudes of the entries of $\textbf{X}_A$ and $\textbf{X}_B$, namely $m_{A} \leq |\left(X_A\right)_{ij} | \leq M_{A} $ and $m_{B} \leq |\left(X_B\right)_{ij} | \leq M_{B} $, where
 \begin{align}
 \label{bound_maxA}
|\left(X_A\right)_{ij} | \leq &  \max_{i\in[k]} M^{(A)}_{\max}\left\|\hat{\textbf{F}}_B^{(i)} \right\|_2 \leq     M^{(A)}_{\max}   u_s\sigma_B\sqrt{\frac{(1+ \Delta )  s_B }{nk }} = M_{A} ,  \\
 \label{bound_minA}
|\left(X_A\right)_{ij} | \geq   &  \min_{i\in [k]}  M^{(A)}_{\min}\left\|\hat{\textbf{F}}_B^{(i)} \right\|_2  \geq M^{(A)}_{\min} l_s\sigma_B\sqrt{\frac{(1- \Delta )  s_B }{nk }} = m_{A} ,   \\
 \label{bound_maxB}
  |\left(X_B\right)_{ij} | \leq & \max_{i\in[k]} M^{(B)}_{\max}\left\|\hat{\textbf{F}}_A^{(i)} \right\|_2 \leq     M^{(B)}_{\max}   u_s\sigma_A\sqrt{\frac{(1+ \Delta )  s_A }{mk }} =M_{B},  \\
 \label{bound_minB}
|\left(X_B\right)_{ij} | \geq   &  \min_{i\in [k]}  M^{(B)}_{\min}\left\|\hat{\textbf{F}}_A^{(i)} \right\|_2  \geq M^{(B)}_{\min} l_s\sigma_A\sqrt{\frac{(1- \Delta )  s_A }{mk }} =m_B.   
\end{align}

Next,  we present bounds on the singular values and  column norms of $\textbf{X}_A$ and $\textbf{X}_B$  in  Lemmas \ref{singularXAXB} and \ref{columnXAXB}, respectively.
Although the full proofs are deferred to appendix, we briefly outline the main idea here.
To bound the singular values of $\textbf{X}_A$ and $\textbf{X}_B$, we first analyse    the singular values of their second moment matrices $\bm\Sigma_A$ and $\bm\Sigma_B$, using Lemma \ref{RowLengthF} together with a supporting Lemma  \ref{cov2}  provided in Appendix \ref{app:spectrum_proof}. 
We then transfer these spectra bounds   to the singular values of $\textbf{X}_A $ and $\textbf{X}_B$, through an existing theorem \citep{Ver12}  stated as Theorem \ref{SpecRand} in Appendix \ref{app:existing_theo}.

\begin{lemma}[Singular Values, $\textbf{X}_A$ and $\textbf{X}_B$] \label{singularXAXB}
Suppose Assumptions  \ref{SC}-\ref{LS} hold. 
For any $0<\Delta<1$ and $0<\Delta_X <  \frac{l_s(1-\Delta)\min(s_A, s_B)}{u_s(1+\Delta)\max(s_A, s_B)} $, the largest and smallest singular values of $ \textbf{X}_A$ and $ \textbf{X}_B$ satisfy
\begin{align}
&\sigma _{\max}\left(  \textbf{X}_A\right), \sigma _{\max}\left(  \textbf{X}_B\right) \leq \frac{u_s\sigma_A\sigma_B\sqrt{(1+\Delta)(1+\Delta_X)} \max(s_A, s_B)}{k} , \\
&\sigma_{\min}\left(  \textbf{X}_A\right), \sigma_{\min}\left(  \textbf{X}_B\right)  \geq l_s\sigma_A\sigma_B\sqrt{1-\Delta}\left(\frac{\min(s_A, s_B)}{\max(s_A,s_B)}-\frac{(1+\Delta)u_s\Delta_X}{(1-\Delta)l_s}\right)^{\frac{1}{2}} \frac{ \min(s_A, s_B)}{k }.
\end{align}
These bounds hold for $\textbf{X}_A$ with probability at least   $1-   ke^{-\frac{C_B\Delta^2  \sigma_B^2m}{k  {M_{\max}^{(B)}}^2}} - ke^{-\frac{ \hat{C}_Al_s^2 \sigma_A^2 (1- \Delta ) \Delta_X^2 ns_B   }{u_s^2 {M^{(A)}_{\max}}^2   (1+ \Delta ) ks_A }}$,  and for $\textbf{X}_B$ with probability at least   $1 - ke^{-\frac{C_A\Delta^2  \sigma_A^2n}{k  {M_{\max}^{(A)}}^2}} -ke^{-\frac{ \hat{C}_Bl_s^2 \sigma_B^2 (1- \Delta ) \Delta_X^2 ms_A   }{u_s^2 {M^{(B)}_{\max}}^2   (1+ \Delta ) k s_B}}$, where  $\hat{C}_A, \hat{C}_B>0$ are additional universal constants. 
\end{lemma}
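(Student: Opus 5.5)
The plan is to condition on the event of Lemma~\ref{RowLengthF} for $\textbf{F}_B$ and then treat $\textbf{X}_A$ as a random matrix with independent rows. The same argument, with the roles of $A$ and $B$ swapped, gives the bounds for $\textbf{X}_B$. By Definition~\ref{gen_aux}, $\textbf{X}_A = \textbf{A}\,\textmd{diag}\bigl(\|\hat{\textbf{F}}_B^{(1)}\|_2,\dots,\|\hat{\textbf{F}}_B^{(k)}\|_2\bigr)$. The scaling depends only on $\textbf{S}$, $\textbf{B}$ and $\textbf{V}_R$.

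I expect $\textbf{V}_R$ to be the first obstacle, because it is itself a function of $\textbf{A}$. I would sidestep it by noting that the proof of Lemma~\ref{RowLengthF} only needs the singular-value event of Lemma~\ref{singularAB} for $\textbf{B}$. The row norms of $\textbf{F}_B=\textbf{S}\textbf{B}^T\textbf{V}_R$ are then sandwiched deterministically by $\|\textbf{S}^{(i)}\|_2$ times bounds on $\sigma(\textbf{B})$, uniformly in $\textbf{V}_R$. I would therefore treat the diagonal scaling $\textbf{L}_B^{-1}$ as a fixed matrix whose entries satisfy Corollary~\ref{RowLengthFhat}. The rows $\textbf{X}_A^{(j)}$ are then independent, because the support and value mechanisms of Assumptions~\ref{SC}--\ref{NZC} act row-wise.

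First, I would compute the second-moment matrix $\bm\Sigma_A = E\bigl[{\textbf{X}_A^{(j)}}^T\textbf{X}_A^{(j)}\bigr]$. The entries are zero-mean and independent of the support, and each coordinate lies in the support with probability $s_A/k$. Hence $\bm\Sigma_A = \frac{s_A\sigma_A^2}{k}\textmd{diag}\bigl(\|\hat{\textbf{F}}_B^{(i)}\|_2^2\bigr)$, which is the content I expect Lemma~\ref{cov2} to supply. Applying Corollary~\ref{RowLengthFhat} then places its eigenvalues in
\[
\Bigl[\tfrac{l_s^2\sigma_A^2\sigma_B^2(1-\Delta)s_As_B}{nk^2},\; \tfrac{u_s^2\sigma_A^2\sigma_B^2(1+\Delta)s_As_B}{nk^2}\Bigr].
\]
Multiplying by $n$ gives the target scale $s_As_B/k^2$ for $\sigma^2(\textbf{X}_A)$.

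Second, I would apply Theorem~\ref{SpecRand} (Vershynin's bound for matrices with independent, possibly non-isotropic, bounded rows) to get $\|\frac1n\textbf{X}_A^T\textbf{X}_A-\bm\Sigma_A\|_2\le \Delta_X\|\bm\Sigma_A\|_2$. The relevant row-norm bound is $\|\textbf{X}_A^{(j)}\|_2^2\le s_A M_A^2$, with $M_A$ from Eq.~(\ref{bound_maxA}). The failure probability should then take the form $k\exp(-c\,\Delta_X^2 n\,\|\bm\Sigma_A\|/(s_AM_A^2))$. Substituting the eigenvalue bounds, the ratio of $\lambda_{\min}$-type to row-norm-type quantities yields exactly the exponent $\hat C_A l_s^2\sigma_A^2(1-\Delta)\Delta_X^2 n s_B/(u_s^2{M^{(A)}_{\max}}^2(1+\Delta)ks_A)$. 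A union bound with the Lemma~\ref{RowLengthF} event for $\textbf{B}$ gives the stated probability.

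Third, I would convert to singular values. The upper bound is
\[
\sigma_{\max}^2(\textbf{X}_A)\le n(1+\Delta_X)\lambda_{\max}(\bm\Sigma_A)\le (1+\Delta_X)(1+\Delta)u_s^2\sigma_A^2\sigma_B^2 s_As_B/k^2,
\]
and then $s_As_B\le\max(s_A,s_B)^2$. For the lower bound,
\[
\sigma_{\min}^2(\textbf{X}_A)\ge n\bigl(\lambda_{\min}(\bm\Sigma_A)-\Delta_X\lambda_{\max}(\bm\Sigma_A)\bigr),
\]
and I would factor out $l_s^2\sigma_A^2\sigma_B^2(1-\Delta)\min(s_A,s_B)^2/k^2$. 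Using $s_As_B/\min^2 = \max/\min$ and matching the paper's form, the residual factor should be bounded below by $\rho_{AB}-\frac{(1+\Delta)u_s\Delta_X}{(1-\Delta)l_s}$, perhaps after using $u_s/l_s\ge1$ to absorb a square. The hypothesis on $\Delta_X$ is exactly what makes this factor positive.

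The main obstacle is the bookkeeping in this last step together with the dependence issue above. Getting the precise constant (where $\rho_{AB}$ appears linearly versus squared) may require bounding $\lambda_{\min}(\bm\Sigma_A)$ via $\min(s_A,s_B)^2$ and the deviation via $\max(s_A,s_B)\min(s_A,s_B)$ rather than via $s_As_B$ directly. I would try that bookkeeping first.
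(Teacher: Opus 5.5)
\textbf{Gap: the deviation calibration.} Your overall route is the paper's: Lemma~\ref{cov2} with $\mu=0$ for $\bm\Sigma_A$, Theorem~\ref{SpecRand} with row bound $\sqrt{s_A}M_A$, Weyl-type perturbation, and a union bound with the $\textbf{B}$-event. But the calibration $\|\frac1n\textbf{X}_A^T\textbf{X}_A-\bm\Sigma_A\|_2\le\Delta_X\|\bm\Sigma_A\|_2$ cannot produce the statement.

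Write $\eta_l=l_s\sigma_A\sigma_B\sqrt{(1-\Delta)s_B/(nk)}$ and $\eta_u=u_s\sigma_A\sigma_B\sqrt{(1+\Delta)s_B/(nk)}$. The eigenvalues of $\bm\Sigma_A$ then lie in $[\eta_l^2s_A/k,\,\eta_u^2s_A/k]$, and your lower bound becomes $\sigma_{\min}^2(\textbf{X}_A)\ge\frac{n\eta_l^2s_A}{k}\bigl(1-\Delta_X\eta_u^2/\eta_l^2\bigr)$ with $\eta_u^2/\eta_l^2=\frac{u_s^2(1+\Delta)}{l_s^2(1-\Delta)}$. The condition ratio enters squared, whereas the statement has $\frac{u_s(1+\Delta)}{l_s(1-\Delta)}$.

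\begin{itemize}
\item Using $u_s/l_s\ge1$ to ``absorb the square'' goes the wrong way: it enlarges the subtracted term.
\item The $s_A,s_B$ bookkeeping is not the problem, since $s_As_B/\min(s_A,s_B)^2=1/\rho_{AB}\ge\rho_{AB}$.
\item Concretely, take $s_A=s_B$ and $u_s>l_s$. The hypothesis allows $\Delta_X\in\bigl[\frac{l_s^2(1-\Delta)}{u_s^2(1+\Delta)},\frac{l_s(1-\Delta)}{u_s(1+\Delta)}\bigr)$. There your bound is nonpositive while the stated one is positive.
\item The probability is also not ``exactly'' the stated one. Substituting $\|\bm\Sigma_A\|_2\ge\eta_l^2s_A/k$ and $s_AM_A^2$ into $\Delta_X^2n\|\bm\Sigma_A\|_2/(s_AM_A^2)$ gives $\frac{l_s^2\sigma_A^2(1-\Delta)\Delta_X^2n}{u_s^2{M^{(A)}_{\max}}^2(1+\Delta)k}$. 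This lacks the factor $s_B/s_A$, so for $s_B>s_A$ you fall short.
\end{itemize}

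\textbf{The missing idea.} Tie the Vershynin parameter to the lower scale $\eta_l$, not to $\|\bm\Sigma_A\|_2^{1/2}\sim\eta_u$. The paper proceeds as follows.
\begin{itemize}
\item Write $\sqrt{s_A}M_A=\eta\sqrt{(1+\Delta)s_B}$ with $\eta=M^{(A)}_{\max}u_s\sigma_B\sqrt{s_A/(nk)}$.
\item Bound both terms of $\max(\|\bm\Sigma_A\|_2^{1/2}\gamma,\gamma^2)$ using $\sqrt{s_As_B}\le\max(s_A,s_B)$ and $s_B\le\max(s_A,s_B)$.
\item Take $t_X=\delta\sqrt{n/(\eta^2(1+\Delta)k)}$ with $\delta=\eta_l\Delta_X$. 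Since $\delta\le\eta_u$, the deviation is $\eta_l\eta_u\Delta_X\max(s_A,s_B)/k$.
\end{itemize}
Hence
\[
\lambda_{\min}\Bigl(\tfrac1n\textbf{X}_A^T\textbf{X}_A\Bigr)\ \ge\ \frac{\eta_l^2\max(s_A,s_B)}{k}\Bigl(\frac{s_A}{\max(s_A,s_B)}-\frac{\eta_u}{\eta_l}\Delta_X\Bigr).
\]
Here $\eta_u/\eta_l\le\frac{u_s(1+\Delta)}{l_s(1-\Delta)}$ appears only to the first power, and $s_A/\max(s_A,s_B)\ge\rho_{AB}$. The rest then follows:
\begin{itemize}
\item Multiplying by $n$ and using $s_B\max(s_A,s_B)\ge\min(s_A,s_B)^2$ gives the stated lower bound; its positivity is exactly the hypothesis on $\Delta_X$.
\item The upper bound follows from $\eta_u^2s_A/k+\eta_l\eta_u\Delta_X\max(s_A,s_B)/k\le\eta_u^2(1+\Delta_X)\max(s_A,s_B)/k$.
\item $t_X^2$ equals the stated exponent, including the factor $s_B/s_A$.
\end{itemize}

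\textbf{Smaller point: independence of the rows.} Your justification via bounds ``uniform in $\textbf{V}_R$'' does not work as stated. The lower half of the sandwich uses $\textmd{col}(\textbf{V}_R)\supseteq\textmd{col}(\textbf{B}\textbf{S}^T)$, and uniform bounds on the scalings would not make the rows independent anyway. What does work is this: when $\textbf{A}$ has full column rank, $\|\textbf{F}_B^{(i)}\|_2=\|\textbf{B}{\textbf{S}^{(i)}}^T\|_2$, which depends only on $\textbf{B}$ and $\textbf{S}$. The paper itself simply treats these scalings as constants.
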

\begin{lemma}[Column Norms, $\textbf{X}_A$ and $\textbf{X}_B$] \label{columnXAXB}
Suppose Assumptions  \ref{SC}-\ref{LS}   hold.  
For any $ \delta>0$ and $0<\Delta<1$, the $l_2$-norm of every column of the auxiliary coefficient matrices $\textbf{X}_A$ and $\textbf{X}_B$ is bounded by
\begin{equation}
\label{bound_lenX_AB}
   \frac{l_s^2 \sigma_A^2\sigma_B^2(1- \Delta )  s_As_B }{k^2 } -\delta \leq \left\|({\textbf{X}_A})_j\right\|_2^2 , \left\|({\textbf{X}_B})_j\right\|_2^2  \leq  \frac{u_s^2 \sigma_A^2\sigma_B^2(1+ \Delta )  s_As_B }{k^2 } +\delta.
\end{equation}
These bounds hold for  $\|({\textbf{X}_A})_j\|_2^2$ with probability at least  $p_A$  and for  $\|({\textbf{X}_B})_j\|_2^2$ with probability at least  $p_B$, where
\begin{align}
p_A =\; &  1 - ke^{-\frac{C_B\Delta^2  \sigma_B^2m}{k  {M_{\max}^{(B)}}^2}} - 2e^{- \frac{\frac{1}{2}nk^3\delta^2}{s_A\sigma_A^4 u_s^4\sigma_B^4(1+ \Delta )^2s_B^2 +\frac{1}{3}\left( {M^{(A)}_{\max}}^2k +s_A\sigma_A^2\right)  u_s^2\sigma_B^2(1+ \Delta )  s_B k \delta }},  \\
p_B =\; & 1 - ke^{-\frac{C_A\Delta^2  \sigma_A^2n}{k  {M_{\max}^{(A)}}^2}} - 2e^{- \frac{\frac{1}{2}mk^3\delta^2}{ s_B\sigma_B^4 u_s^4\sigma_A^4(1+ \Delta )^2s_A^2  + \frac{1}{3} \left( {M^{(B)}_{\max}}^2 k +s_B\sigma_B^2\right)  u_s^2\sigma_A^2(1+ \Delta )  s_A k \delta}}.
\end{align}
\end{lemma}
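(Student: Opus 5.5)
We treat the case of $\textbf{X}_A$; the case of $\textbf{X}_B$ follows by swapping the roles of $(\textbf{A},n,s_A)$ and $(\textbf{B},m,s_B)$. By Definition \ref{gen_aux}, the $j$-th column of $\textbf{X}_A$ is $(\textbf{X}_A)_j = \|\hat{\textbf{F}}_B^{(j)}\|_2\,\textbf{A}_j$. Hence
\begin{equation}
\nonumber
\|(\textbf{X}_A)_j\|_2^2=\|\hat{\textbf{F}}_B^{(j)}\|_2^2\sum_{i=1}^n a_{ij}^2 .
\end{equation}
The proof splits into two steps: first control the deterministic-looking scaling factor through Corollary \ref{RowLengthFhat}, then apply a scalar Bernstein inequality to the sum of the independent terms $a_{ij}^2$. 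Finally, combine the two events with a union bound. This accounts for the two failure terms in $p_A$: $ke^{-C_B\Delta^2\sigma_B^2 m/(kM_{\max}^{(B)2})}$ from Corollary \ref{RowLengthFhat}, and the factor $2e^{-\cdots}$ from Bernstein.

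For the scaling step, Corollary \ref{RowLengthFhat} gives, with the stated probability,
\begin{equation}
\nonumber
l_s^2\sigma_B^2(1-\Delta)\frac{s_B}{nk}\le\|\hat{\textbf{F}}_B^{(j)}\|_2^2\le u_s^2\sigma_B^2(1+\Delta)\frac{s_B}{nk}
\end{equation}
for all $j$ simultaneously.

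For the sum, write $a_{ij}^2=\alpha_{ij}^2\chi^{(A)}_{ij}$. These terms are independent across $i$, since rows have independent supports and values by Assumptions \ref{SC} and \ref{NZC}. Because $P(\chi_{ij}^{(A)}=1)=s_A/k$, we have $E[a_{ij}^2]=\sigma_A^2 s_A/k$, so $E\sum_i a_{ij}^2 = n s_A\sigma_A^2/k$. Multiplying by the bounds on the scale gives the central quantities $l_s^2\sigma_A^2\sigma_B^2(1-\Delta)s_As_B/k^2$ and $u_s^2\sigma_A^2\sigma_B^2(1+\Delta)s_As_B/k^2$.

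It remains to show that $c\sum_i a_{ij}^2$ deviates from $c\,E\sum_i a_{ij}^2$ by less than $\delta$, where $c=u_s^2\sigma_B^2(1+\Delta)s_B/(nk)$ is the upper scale. Apply Bernstein's inequality to the centred variables $Z_i=c\,(a_{ij}^2-\sigma_A^2 s_A/k)$. These are bounded by $|Z_i|\le c\,(M_{\max}^{(A)2}+s_A\sigma_A^2/k)$, which matches the term $\frac13(M_{\max}^{(A)2}k+s_A\sigma_A^2)$ in $p_A$ after multiplying through by $k$. For the variance, use $\mathrm{Var}(Z_i)\le c^2E[a_{ij}^4]$ and bound $E[\alpha^4]$ by a quantity of order $\sigma_A^4$. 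Matching the exponent, the paper evidently uses a variance proxy of $c^2 s_A\sigma_A^4/k$; I would justify this from the bounded, fixed-variance assumption, or else accept a constant-level looseness. Summing over $n$ terms and plugging the values into $2\exp(-\tfrac12 n^2\delta^2/(n\,v+\tfrac13 n b\delta))$ should reproduce the exponent in $p_A$ after clearing the factors of $n$ and $k$.

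The main obstacle is that $\hat{\textbf{F}}_B$ depends on $\textbf{B}$ and $\textbf{V}_R$, and $\textbf{V}_R$ depends on $\textbf{A}$ through $\textbf{R}$. The scale is therefore not independent of $\textbf{A}_j$. I would avoid conditioning on it. Instead, bound $\|(\textbf{X}_A)_j\|_2^2$ between the lower scale times $\sum_i a_{ij}^2$ and the upper scale times $\sum_i a_{ij}^2$ on the intersection of the two events. The concentration of $\sum_i a_{ij}^2$ involves only $\textbf{A}$, and the range of the scale holds on the event from Corollary \ref{RowLengthFhat}, so the union bound is legitimate without any independence.

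For the lower bound, use the same Bernstein deviation applied with the smaller constant $c'$ corresponding to $l_s$. The resulting deviation is at most the one obtained with $c$, so the same probability expression suffices.
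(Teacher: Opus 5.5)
Your architecture is the paper's: the range of $\|\hat{\textbf{F}}_B^{(j)}\|_2$ from Corollary~\ref{RowLengthFhat}, Bernstein on the column entries, then a union bound. Your sandwich step is actually cleaner than the paper's. The paper applies Bernstein to $W_i=(X_A)_{ij}^2-E[x_j^2]$ while treating the random factor $\|\hat{\textbf{F}}_B^{(j)}\|_2$ as a constant. You instead concentrate $\sum_i a_{ij}^2$ alone at the deterministic scale $c$, and use that $|c(S-\mu)|\le\delta$ implies $|c'(S-\mu)|\le\delta$ for $c'\le c$. So no independence between the scale and $\textbf{A}$ is needed.

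There is one algebraic slip. With $v$ and $b$ the per-term variance proxy and range of $Z_i$, Bernstein gives $2\exp\bigl(-\tfrac12\delta^2/(nv+\tfrac13 b\delta)\bigr)$. With $v=c^2s_A\sigma_A^4/k$ and $b=c\bigl({M^{(A)}_{\max}}^2+s_A\sigma_A^2/k\bigr)$, this reproduces the exponent of $p_A$ exactly after multiplying through by $nk^3$. Your displayed form carries spurious factors of $n$.

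The genuine gap is the variance step, and it cannot be closed the way you suggest. For every admissible law, $E[\alpha_{ij}^4]\ge(E[\alpha_{ij}^2])^2=\sigma_A^4$ by Jensen (i.e.\ $\mathrm{Var}(\alpha_{ij}^2)\ge0$), with equality only if $|\alpha_{ij}|$ is almost surely constant. So $c^2s_A\sigma_A^4/k$ is a lower bound on $c^2E[a_{ij}^4]$, not an upper bound, and nothing in Assumptions~\ref{SC}--\ref{NZC} reverses this. The paper's proof makes exactly the same move: it justifies $E[M_i^4]\le E[M_i^2]E[M_i^2]$ by ``H\"older's inequality'', which is the reverse of what holds.

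The discrepancy is also not a universal constant. A symmetric law with $\alpha_{ij}^2\in\{{M^{(A)}_{\min}}^2,{M^{(A)}_{\max}}^2\}$ can make $E[\alpha_{ij}^4]/\sigma_A^4$ as large as $\bigl({M^{(A)}_{\min}}^2+{M^{(A)}_{\max}}^2\bigr)^2/\bigl(4{M^{(A)}_{\min}}^2{M^{(A)}_{\max}}^2\bigr)$. For such a law the stated $p_A$ is genuinely violated, taking $\delta$ of the order of the standard deviation of $c\sum_i a_{ij}^2$, with $\Delta$ small and $m\gg n$. In that regime the $(1\pm\Delta)$ slack and the first failure term are negligible, and the central limit theorem shows that a Bernstein exponent built on an underestimated variance cannot hold.

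What the assumptions do give is $E[\alpha_{ij}^4]\le{M^{(A)}_{\max}}^2\sigma_A^2$. With it, your argument proves the lemma with $s_A\sigma_A^4$ replaced by $s_A{M^{(A)}_{\max}}^2\sigma_A^2$ in the first term of the denominator of the exponent in $p_A$. Symmetrically, $s_B\sigma_B^4$ is replaced by $s_B{M^{(B)}_{\max}}^2\sigma_B^2$ in $p_B$. That weaker probability is what both your proof and the paper's actually establish.
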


\begin{figure}[t]
    \centering
\subfigure[Lemma \ref{singularXAXB}, $\sigma(\textbf{X}_A)$]{
    \includegraphics[width=0.45\linewidth]{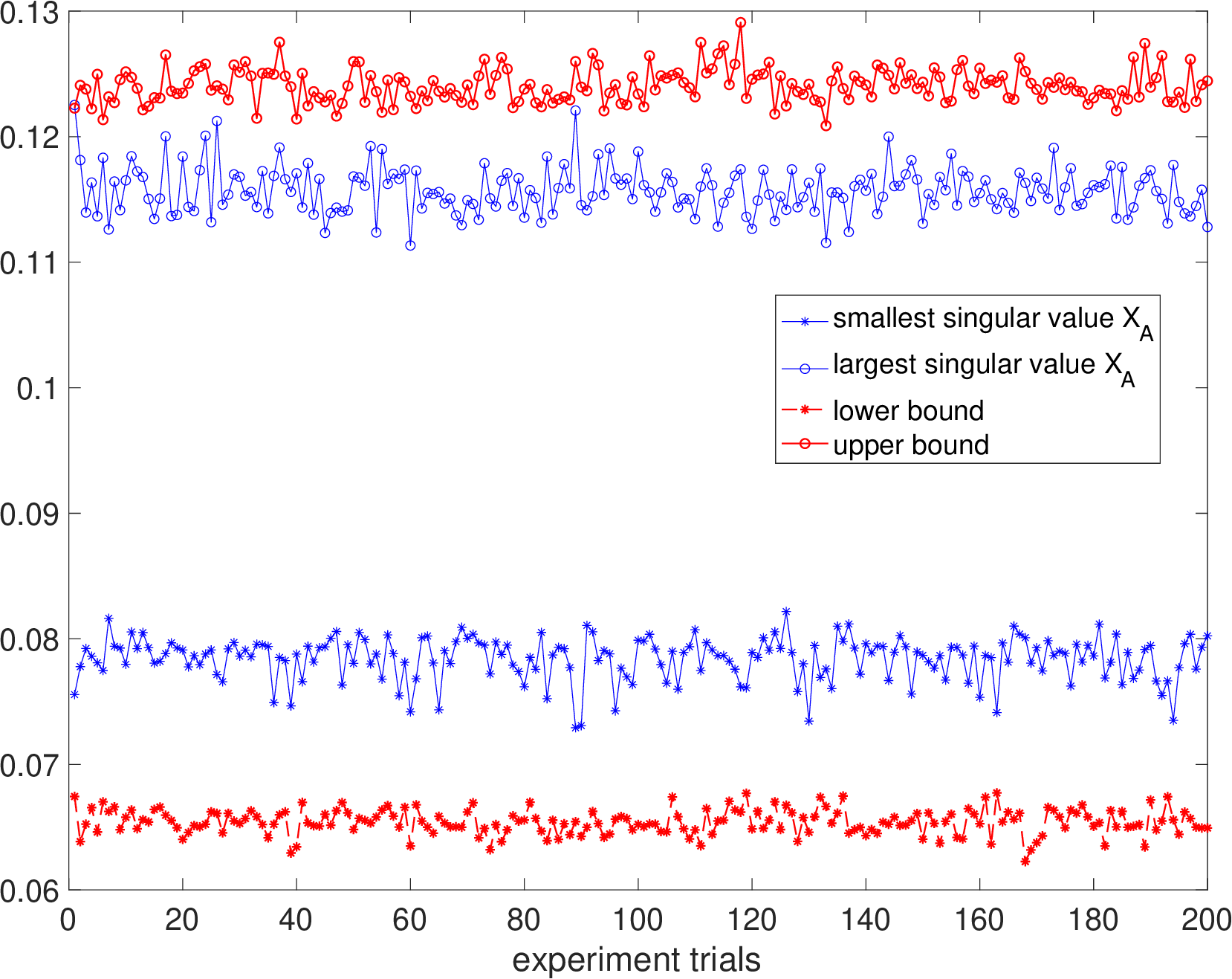}
    \label{fig:lemma13A}    
  }
  \hfill
\subfigure[Lemma \ref{singularXAXB}, $\sigma(\textbf{X}_B)$]{
    \includegraphics[width=0.45\linewidth]{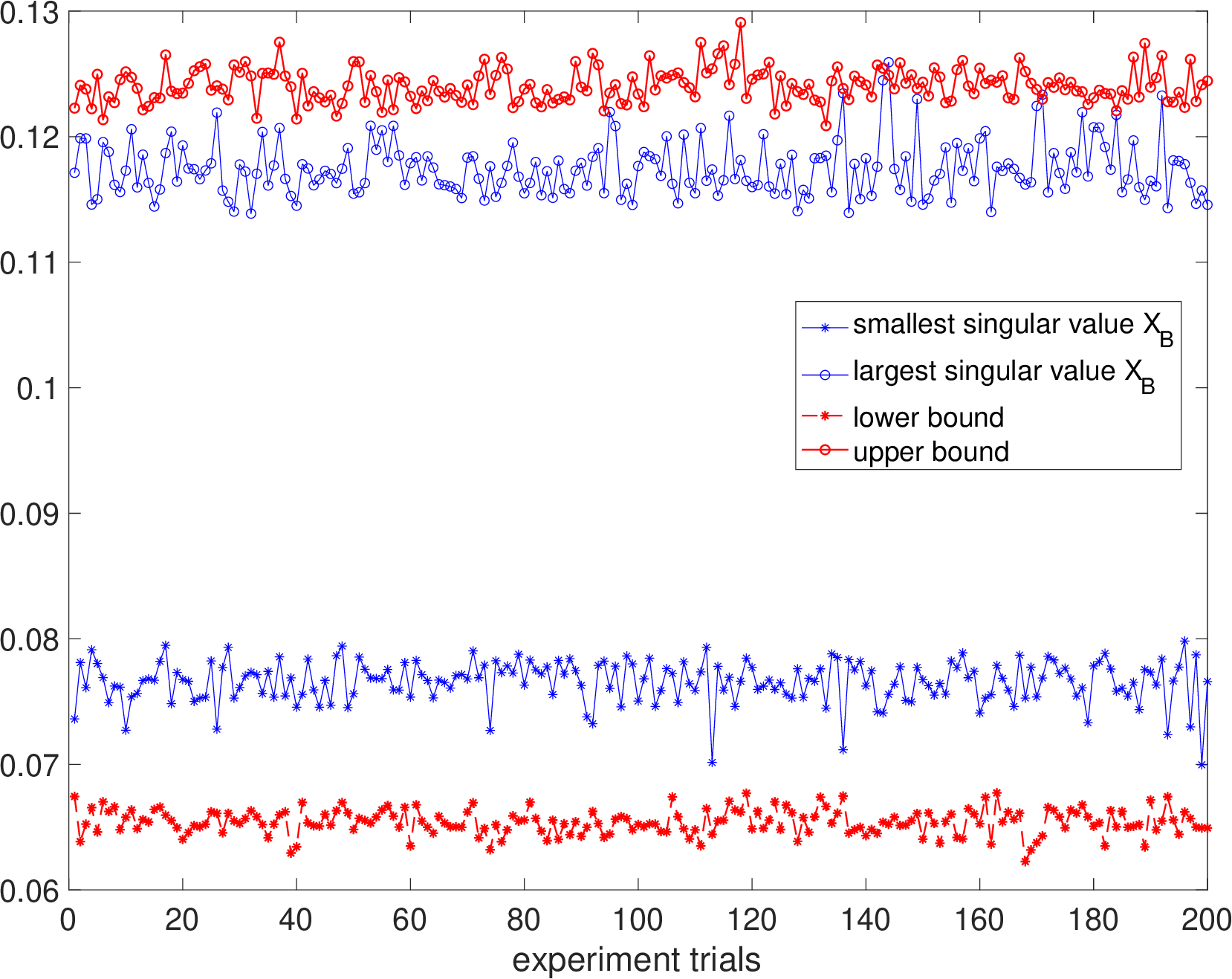}
    \label{fig:lemma13B}    
  }

\subfigure[Lemma \ref{columnXAXB},  $\|({\textbf{X}_A})_j\|_2^2$ ]{
    \includegraphics[width=0.45\linewidth]{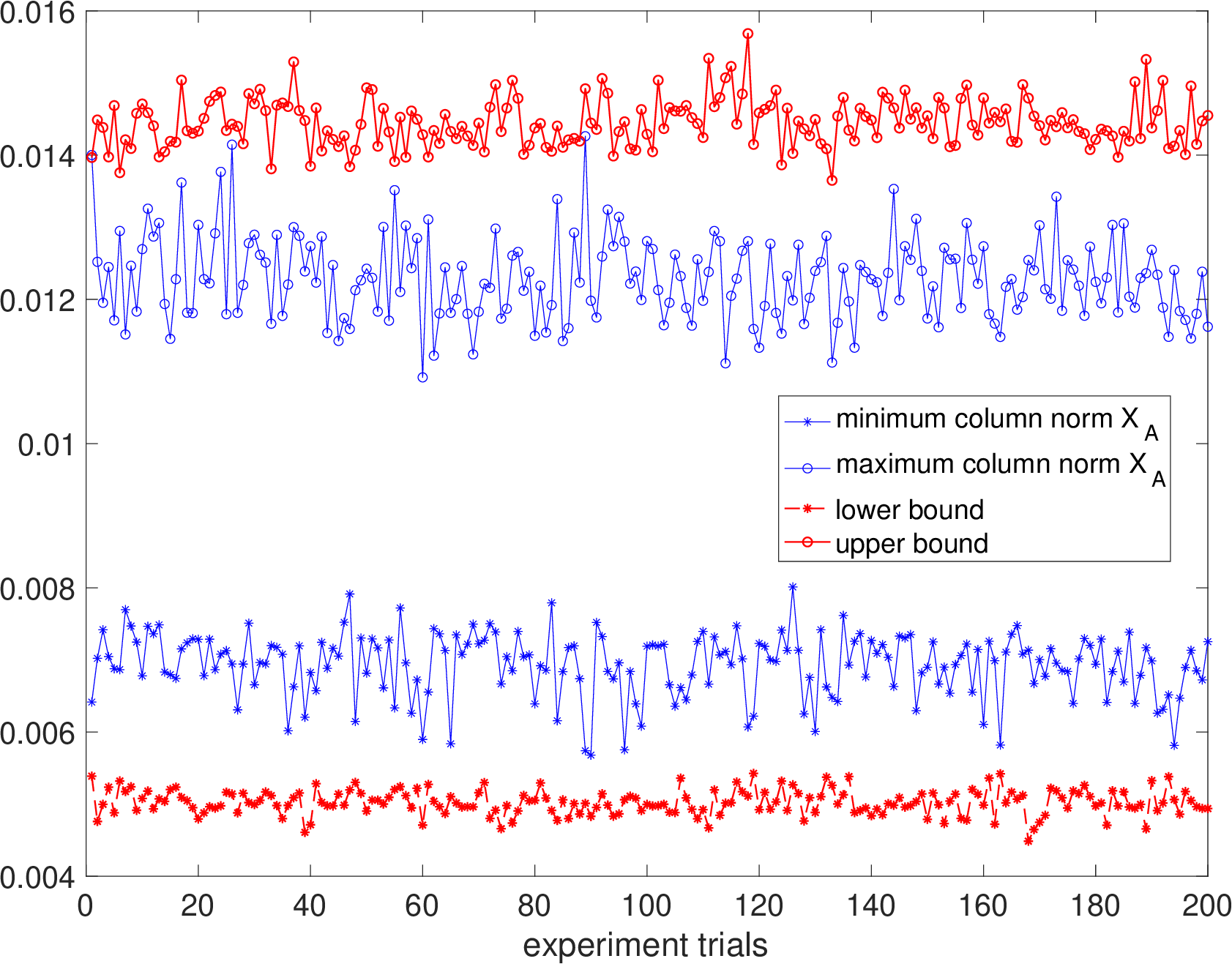}
    \label{fig:lemma14A}    
  }
\hfill
\subfigure[Lemma \ref{columnXAXB},  $\|({\textbf{X}_B})_j\|_2^2$]{
    \includegraphics[width=0.45\linewidth]{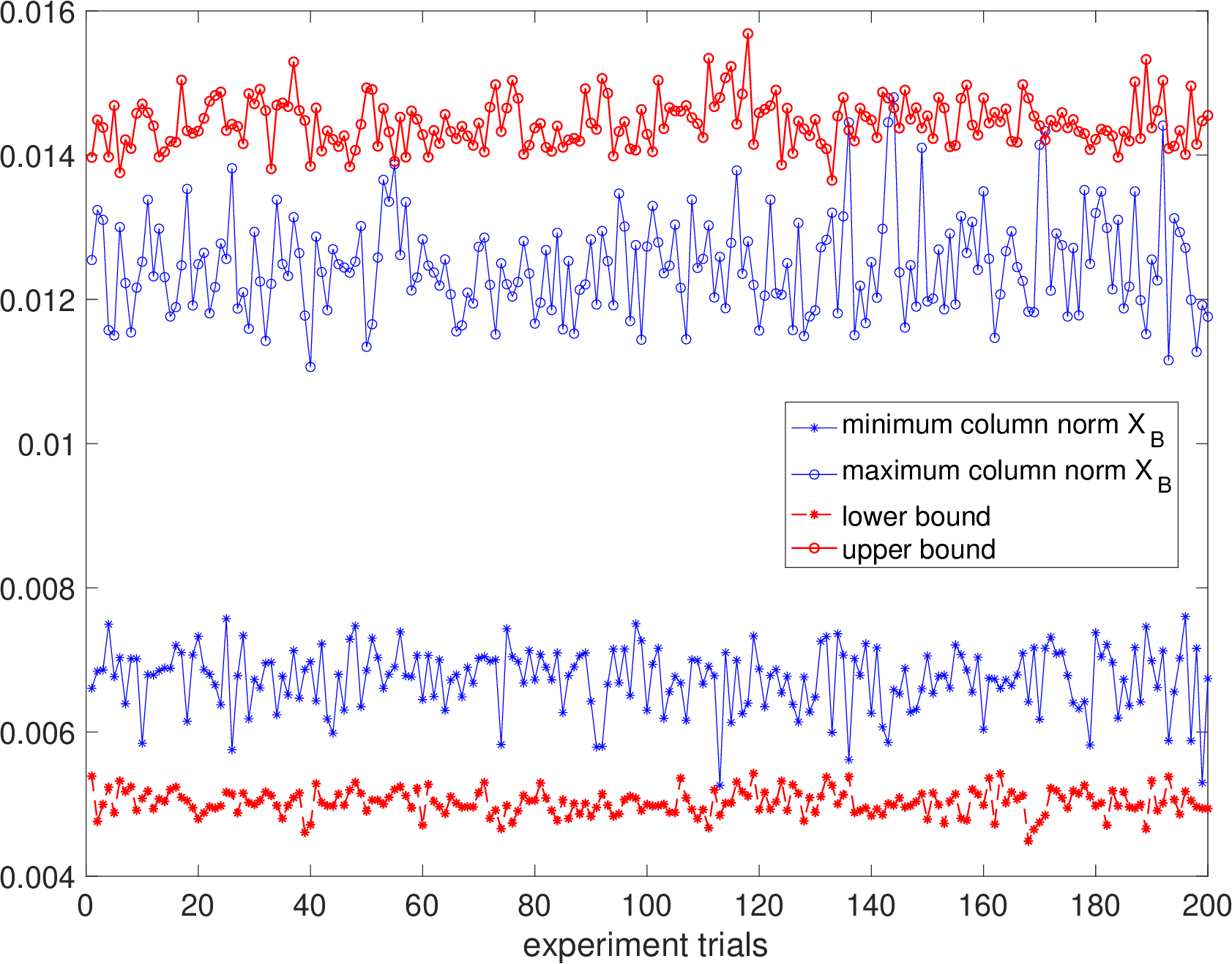}
    \label{fig:lemma14B}    
  }
 \caption{Comparison of the smallest/largest singular values  and the minimum/maximum column $l_2$-norms   with  their theoretical bounds over 200 trials for  $\textbf{X}_A$ and $\textbf{X}_B$.}    \label{fig:aux_coe_bound} 
\end{figure}

\paragraph{Empirical Illustration:}
We illustrate the bounds  derived in Corollary \ref{RowLengthFhat},  Lemma    \ref{singularXAXB}, and Lemma \ref{columnXAXB}.
We generate  two independent sparse coefficient matrices $\mathbf{A}\in \mathbb{R}^{1500 \times 50}$ and $\mathbf{B}\in \mathbb{R}^{1200 \times 50}$,  with each row containing at most $s_A=s_B=3$ nonzero entries. 
The  coefficients  of $\mathbf{A}$ and  $\mathbf{B}$ are generated using the same procedure as in the empirical illustration in Section \ref{sec:prop_coef_rel}, which   yields   ${m_{\max}^{(A)}}={m_{\max}^{(B)}}=1$, ${M_{\max}^{(A)}}= {M_{\max}^{(B)}}=1.5$, and $\sigma_A^2 = \sigma_B^2  = 1.58$.
The latent relation matrix $\textbf{S} \in \mathbb{R}^{50 \times 50}$ is generated as $\textbf{S} =  \textbf{O}  + 0.2\textbf{L}$, where $\textbf{O}\in \mathbb{R}^{50 \times 50}$  is a random orthogonal matrix, and $\textbf{L}\in \mathbb{R}^{50 \times 50}$ is a random matrix whose entries are independently sampled from the standard normal distribution and   normalised to have unit  $l_2$-norm for  its rows. 
The resulting $\textbf{S}$ satisfies  $\textbf{S}^T\textbf{S}$ and  $\textbf{S}\textbf{S}^T$ being approximately diagonal.

A total of  200 realisations of the triple $(\mathbf{A},\mathbf{B},\mathbf{S})$ are independently generated.
For each realisation, we compute  $\hat{\textbf{F}}_A$ and $\textbf{X}_A$, and estimate $l_s$ and $u_s$ by  the minimum and maximum   $l_2$-norms of the rows and columns of $\textbf{S}$.
To verify Corollary \ref{RowLengthFhat},  we compare the minimum and maximum  of the row norms $\left\{ \left\| \hat{\textbf{F}}_A^{(i)}  \right \|_2\right\}_{i=1}^{50}$ with the theoretical bounds in Eq. (\ref{eq:l2norm_hatFa}) computed using $\Delta = 0.1$.
As shown in Figure \ref{fig:corollary12}, both   bounds are satisfied in all 200 trials.
To verify  Lemma   \ref{singularXAXB}, we compare the smallest and largest singular values with their theoretical bounds computed using  $\Delta = 0.2$, in Figures \ref{fig:lemma13A}  and \ref{fig:lemma13B} for   $\textbf{X}_A$ and $\textbf{X}_B$, respectively.
Across the 200 trials, the lower bound on the smallest singular value is satisfied in every trial for both $\mathbf{X}_A$ and $\mathbf{X}_B$, while the upper bound on the largest singular value is satisfied in 199 and 198 trials for $\textbf{X}_A$ and   $\textbf{X}_B$, respectively.
To verify  Lemma \ref{columnXAXB},  we compare the minimum and maximum   of the column norms  with their theoretical bounds computed using $\Delta = 0.25$ in   Figures \ref{fig:lemma14A}  and \ref{fig:lemma14B}, for  $\textbf{X}_A$ and $\textbf{X}_B$, respectively.
The lower and upper bounds are satisfied in 200 and 199 trials for $\mathbf{X}_A$, and in 200 and 197 trials for $\mathbf{X}_B$, respectively.
As discussed earlier, $\Delta $  controls the bound tightness. 
In this illustration, we choose values of $\Delta$ that provide reasonably tight bounds while maintaining a high empirical probability of validity.

\subsection{Auxiliary Dictionary Matrices: $\textbf{D}_A$ and $\textbf{D}_B$}
\label{sec:prop_dic}

The rows of the two auxiliary dictionary matrices $\textbf{D}_A$ and $\textbf{D}_B$ have unit $l_2$-norm by definition. They satisfy the following properties.

\begin{lemma}[Mutual Incoherence, $\textbf{D}_A$ and $\textbf{D}_B$] \label{MC_DaDb}
 Suppose Assumptions  \ref{SC} , \ref{NZC}, and \ref{LI} hold.
For any $0<\Delta<1$,    the following holds
\begin{equation}
\label{Coherence_DADB}
    \left |   \left \langle \textbf{D}_A^{(i)}, \textbf{D}_A^{(j)}\right \rangle  \right |\leq \mu_D, \; \left |   \left \langle \textbf{D}_B^{(i)}, \textbf{D}_B^{(j)}\right \rangle  \right | \leq \mu_D.
\end{equation} 
with probabilities at least $1-ke^{-\frac{C_A\Delta^2  \sigma_A^2n}{k {M_{\max}^{(A)}}^2}}$  and $1-ke^{-\frac{C_B\Delta^2  \sigma_B^2m}{k {M_{\max}^{(B)}}^2}}$, respectively.

\end{lemma}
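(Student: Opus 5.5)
The plan is to compute the Gram entries of $\textbf{F}_A$ explicitly, reduce them to quadratic forms in $\textbf{A}^T\textbf{A}$, and then control these with the spectral bounds of Lemma \ref{singularAB}. The argument for $\textbf{D}_B$ is symmetric, with $\textbf{B}$, $\textbf{V}_R$ and the rows of $\textbf{S}$ in place of $\textbf{A}$, $\textbf{U}_R$ and the columns of $\textbf{S}$. So I only describe the case of $\textbf{D}_A$.

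\emph{Step 1 (removing $\textbf{U}_R$).} By Eq. (\ref{F_ab}), the $i$-th row of $\textbf{F}_A$ is $\textbf{S}_i^T\textbf{A}^T\textbf{U}_R$. Hence
\begin{equation}
\nonumber
\left\langle \textbf{F}_A^{(i)}, \textbf{F}_A^{(j)}\right\rangle = \textbf{S}_i^T\textbf{A}^T\textbf{U}_R\textbf{U}_R^T\textbf{A}\textbf{S}_j .
\end{equation}
Here $\textbf{U}_R\textbf{U}_R^T$ is the orthogonal projector onto $\textmd{range}(\textbf{R})=\textmd{range}(\textbf{A}\textbf{S}\textbf{B}^T)$. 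Whenever $\textbf{B}$ has full column rank, this range equals $\textmd{range}(\textbf{A}\textbf{S})$, which contains $\textbf{A}\textbf{S}_j$. On that event the projector acts as the identity on $\textbf{A}\textbf{S}_j$, and the inner product becomes $\textbf{S}_i^T\textbf{A}^T\textbf{A}\textbf{S}_j$; in particular $\|\textbf{F}_A^{(i)}\|_2^2=\textbf{S}_i^T\textbf{A}^T\textbf{A}\textbf{S}_i$. I expect this step to be the main obstacle, namely the probabilistic bookkeeping. The full-rank property of $\textbf{B}$ comes from the event of Lemma \ref{singularAB} and Corollary \ref{rankAB} for $\textbf{B}$, and the stated probability only records the $\textbf{A}$-event. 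I would either treat rank$(\textbf{R})=d$ as part of the standing setting of Definition \ref{gen_aux}, as the paper does when it writes the compact SVD, or note that the $\textbf{B}$-event should be intersected in. I would also check whether $\textmd{range}(\textbf{U}_R)\subseteq\textmd{range}(\textbf{A}\textbf{S})$ alone suffices. It does not by itself, so I would adopt the first option.

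\emph{Step 2 (spectral perturbation).} Let $c=\sigma_A^2 n s_A/k$ and write $\textbf{A}^T\textbf{A}=c\textbf{I}_k+\textbf{E}$. By Lemma \ref{singularAB}, with the stated probability all eigenvalues of $\textbf{A}^T\textbf{A}$ lie in $[(1-\Delta)c,(1+\Delta)c]$, so $\|\textbf{E}\|_2\le \Delta c$. This gives two bounds:
\begin{equation}
\nonumber
\left|\textbf{S}_i^T\textbf{A}^T\textbf{A}\textbf{S}_j\right| \le c\left|\textbf{S}_i^T\textbf{S}_j\right| + \Delta c\|\textbf{S}_i\|_2\|\textbf{S}_j\|_2, \qquad \|\textbf{F}_A^{(i)}\|_2^2 \ge (1-\Delta)c\|\textbf{S}_i\|_2^2 .
\end{equation}

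\emph{Step 3 (normalising).} Since $\textbf{D}_A^{(i)}=\textbf{F}_A^{(i)}/\|\textbf{F}_A^{(i)}\|_2$, dividing and applying Assumption \ref{LI} to the columns of $\textbf{S}$ gives
\begin{equation}
\nonumber
\left|\left\langle \textbf{D}_A^{(i)},\textbf{D}_A^{(j)}\right\rangle\right| \le \frac{|\cos(\textbf{S}_i,\textbf{S}_j)|+\Delta}{1-\Delta} \le \frac{\mu_s/\sqrt{d}+\Delta}{1-\Delta}.
\end{equation}
Finally I compare this with $\mu_D$. The difference $\mu_D-\frac{\mu_s/\sqrt d+\Delta}{1-\Delta}$ equals $\frac{\Delta}{1-\Delta}\left(1-\mu_s/\sqrt d\right)$, which is nonnegative whenever $\mu_s/\sqrt d\le 1$. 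If instead $\mu_s/\sqrt d>1$, then $\mu_D>1$ and the claim holds trivially by Cauchy–Schwarz, since the rows have unit norm. In either case the bound $\mu_D$ follows on the event of Lemma \ref{singularAB} for $\textbf{A}$, which carries the stated probability. Assumption \ref{LS} is not needed here beyond guaranteeing $\textbf{S}_i\neq 0$, so that the normalisation is well defined.
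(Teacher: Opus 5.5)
Your proof is correct, and it takes a different and cleaner route than the paper's.

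\textbf{The paper's route.} For $\textbf{D}_B$, the paper reuses the construction from the proof of Lemma \ref{RowLengthF}. It asserts that $d$ left singular vectors of $\textbf{B}$ span $\textmd{span}(\textbf{V}_R)$, and that the matching right singular vectors span the row space of $\textbf{S}$. This lets it write $\textbf{F}_B^{(i)}$, up to an orthogonal factor, as $\textbf{z}_i\bm\Sigma_B^{(R)}$ with $\langle\textbf{z}_i,\textbf{z}_j\rangle=\langle\textbf{S}^{(i)},\textbf{S}^{(j)}\rangle$. It then bounds $\sum_t\sigma_t^2z_{it}z_{jt}$ by splitting the indices according to the sign of $z_{it}z_{jt}$. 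This gives $\left(\sigma_{\max}^2(\textbf{B})/\sigma_{\min}^2(\textbf{B})-1\right)+|\cos(\textbf{S}^{(i)},\textbf{S}^{(j)})|$, which is where the additive form $\mu_D=\frac{2\Delta}{1-\Delta}+\frac{\mu_s}{\sqrt d}$ comes from.

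\textbf{Your route.} You use the projector identity $\textbf{U}_R\textbf{U}_R^T\textbf{A}\textbf{S}=\textbf{A}\textbf{S}$ to get the Gram matrix $\textbf{S}^T\textbf{A}^T\textbf{A}\textbf{S}$ of $\textbf{F}_A$ exactly. You then perturb $\textbf{A}^T\textbf{A}$ around $c\textbf{I}_k$.

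\textbf{What each buys.} Your route needs only $\textmd{range}(\textbf{R})=\textmd{range}(\textbf{A}\textbf{S})$. That follows either from full column rank of $\textbf{B}$, or by a dimension count from $\textmd{rank}(\textbf{R})=\textmd{rank}(\textbf{S})=d$. The paper's subspace-alignment claims are only clearly justified when $d=k$; for $d<k$, a generic $d$-dimensional subspace of $\textmd{range}(\textbf{B})$ contains no singular vector of $\textbf{B}$. Your route also gives the slightly sharper bound $\frac{\mu_s/\sqrt d+\Delta}{1-\Delta}$, and you compare it with $\mu_D$ correctly, including the trivial regime $\mu_s/\sqrt d>1$. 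The paper's version mainly buys the presentation of $\mu_D$ as a coefficient-conditioning term plus an incoherence term, which its subsequent remark relies on.

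\textbf{Probability bookkeeping.} Your first option matches what the paper does. It takes the rank-$d$ property as given through the proof of Lemma \ref{RowLengthF} and reports only the probability of the event from Lemma \ref{singularAB}.
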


\begin{lemma}[Bounded Spectral Norm, $\textbf{D}_A$ and $\textbf{D}_B$] \label{BS_DaDb}
 Suppose Assumptions \ref{SC} , \ref{NZC}, and \ref{LI}  hold.
For any $0<\Delta<1$,   the spectral norm of the auxiliary dictionaries is bounded by
\begin{equation} 
\label{SpectralNorm_DAB}
  \| \textbf{D}_A\|_2 \leq  \sqrt{ 1+ \mu_D(d-1)},\;  \| \textbf{D}_B\|_2  \leq  \sqrt{ 1+ \mu_D(d-1)},
 \end{equation}
with probabilities at least $1-ke^{-\frac{C_A\Delta^2  \sigma_A^2n}{k {M_{\max}^{(A)}}^2}}$   and $1-ke^{-\frac{C_B\Delta^2  \sigma_B^2m}{k {M_{\max}^{(B)}}^2}}$, respectively.
\end{lemma}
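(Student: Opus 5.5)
The plan is to reduce the spectral-norm bound to the mutual incoherence bound of Lemma \ref{MC_DaDb}, working on the event on which that lemma holds. That event comes from Lemma \ref{RowLengthF}: with probability at least $1-ke^{-\frac{C_A\Delta^2\sigma_A^2n}{k{M_{\max}^{(A)}}^2}}$, every row of $\textbf{F}_A$ has norm bounded away from zero. Hence $\textbf{L}_A$ is well defined and every row of $\textbf{D}_A=\textbf{L}_A\textbf{F}_A$ has unit length. On the same event, $|\langle\textbf{D}_A^{(i)},\textbf{D}_A^{(j)}\rangle|\le\mu_D$ for all $i\neq j$. No new probabilistic estimate is needed, which is why the probability in the statement coincides with that of Lemma \ref{MC_DaDb}. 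The case of $\textbf{D}_B$ is symmetric: replace $\textbf{A}$, $\textbf{U}_R$, $\textbf{S}^T$ by $\textbf{B}$, $\textbf{V}_R$, $\textbf{S}$.

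The deterministic step uses $\|\textbf{D}_A\|_2^2=\lambda_{\max}(\textbf{G})$ for the Gram matrix $\textbf{G}=\textbf{D}_A\textbf{D}_A^T\in\mathbb{R}^{k\times k}$. Its diagonal entries equal $1$ and its off-diagonal entries are bounded by $\mu_D$. A bound on $\lambda_{\max}(\textbf{G})$ then follows from Gershgorin's circle theorem: every eigenvalue lies in a disc centred at $1$ with radius equal to an off-diagonal absolute row sum.

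The main obstacle is that the naive Gershgorin bound counts $k-1$ off-diagonal terms per row and gives $1+\mu_D(k-1)$, whereas the statement has $d-1$ with $d=\textmd{rank}(\textbf{S})\le k$. To obtain $d$ I would exploit that $\textbf{D}_A$ has only $d$ columns, via $\textbf{D}_A=\textbf{L}_A\textbf{S}^T\textbf{A}^T\textbf{U}_R$ and $\textbf{U}_R\in\mathbb{R}^{n\times d}$, so $\textbf{G}$ has rank at most $d$. My first attempt would be to pass to the $d\times d$ matrix $\textbf{D}_A^T\textbf{D}_A$, which has the same nonzero spectrum as $\textbf{G}$, and to bound it by a Gershgorin-type argument in a basis in which the incoherence of Lemma \ref{MC_DaDb} controls the entries. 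Failing that, I would select $d$ linearly independent rows of $\textbf{D}_A$, apply Gershgorin to their $d\times d$ Gram matrix, and control the contribution of the remaining rows through the incoherence of the rows and columns of $\textbf{S}$ in Assumption \ref{LI}.

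If neither reduction closes cleanly, I would record the $k$-dependent version $\sqrt{1+\mu_D(k-1)}$ as what Gershgorin guarantees directly. I would then verify exactly where the rank-$d$ structure is needed to sharpen $k-1$ to $d-1$, since that step is the only non-routine part of the argument. The final bound follows by taking square roots.
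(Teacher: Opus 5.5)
The step you flag as the only non-routine one, sharpening $k-1$ to $d-1$, cannot be carried out, because the bound with $d-1$ is false whenever $d<k$. Since $\textbf{D}_A\in\mathbb{R}^{k\times d}$ has unit rows, $\textmd{tr}(\textbf{D}_A^T\textbf{D}_A)=k$, so $\|\textbf{D}_A\|_2^2\ge k/d$ under any change of basis. This defeats both of your routes.
\begin{itemize}
\item Incoherence controls the entries of $\textbf{D}_A\textbf{D}_A^T$, not those of the $d\times d$ matrix $\textbf{D}_A^T\textbf{D}_A$, whose average eigenvalue is $k/d$.
\item The $k-d$ rows left out of a $d$-row Gershgorin argument contribute $k-d$ to that trace, and no pairwise estimate absorbs this.
\end{itemize}
Concretely, let $\textbf{C}\in\mathbb{R}^{2\times 4}$ have as columns the unit vectors at angles $0,\pi/4,\pi/2,3\pi/4$. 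Their coherence is $1/\sqrt{2}$, but $\textbf{C}\textbf{C}^T=2\textbf{I}_2$, so $\|\textbf{C}^T\|_2^2=2>1+1/\sqrt{2}$.

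This example sits inside the paper's model. The matrix $\textbf{S}=\frac{1}{\sqrt{2}}\textbf{C}^T\textbf{C}$ has $k=4$, $d=2$, unit rows and columns, and row and column cosines at most $1/\sqrt{2}$. So Assumptions \ref{LS} and \ref{LI} hold with $l_s=u_s=1$ and any $\mu_s>1$. When $\textbf{B}$ has full column rank, $\textbf{U}_R\textbf{U}_R^T$ fixes the columns of $\textbf{A}\textbf{S}$, hence $\textbf{F}_A\textbf{F}_A^T=\textbf{S}^T\textbf{A}^T\textbf{A}\textbf{S}$. On the event of Lemma \ref{singularAB}, every off-diagonal entry of $\textbf{D}_A\textbf{D}_A^T$ then lies within $2\Delta/(1-\Delta)$ of the corresponding entry of $\textbf{C}^T\textbf{C}$, and both diagonals equal $1$. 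With $\Delta=0.01$ and $\mu_s=1.01$ this gives $\|\textbf{D}_A\|_2^2\ge 2-\sqrt{12}\cdot\frac{0.02}{0.99}>1.92$. The lemma, by contrast, asserts $\|\textbf{D}_A\|_2^2\le 1+\mu_D\approx 1.73$.

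The sound part of your plan is exactly the paper's argument. You work on the event of Lemma \ref{MC_DaDb}, which is why the probability is unchanged, and bound the top eigenvalue of the unit-diagonal Gram matrix by off-diagonal row sums. The paper's estimate $\|\textbf{D}_A^T\textbf{w}\|_2^2\le(1-\mu_D)\|\textbf{w}\|_2^2+\mu_D\|\textbf{w}\|_1^2$ is Gershgorin in another guise. The paper reaches $d-1$ only by writing $\|\textbf{w}\|_1\le\sqrt{d}\|\textbf{w}\|_2$ for $\textbf{w}\in\mathbb{R}^k$, which fails unless $d=k$ (take $\textbf{w}=\textbf{1}_k$). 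With the correct factor $\sqrt{k}$, the paper's argument yields precisely your fallback.

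So do not pursue either sharpening. Instead, state and prove $\|\textbf{D}_A\|_2,\|\textbf{D}_B\|_2\le\sqrt{1+\mu_D(k-1)}$ with the stated probabilities. This coincides with the lemma only when $\textbf{S}$ has full rank. The downstream quantities built on it ($\gamma$, $\hat{\gamma}$, $\theta_9^{(A)}$, $\theta_9^{(B)}$, and the sparsity condition of Theorem \ref{main_res}) must then carry $k-1$ in place of $d-1$.
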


\paragraph{Remarks:} The   upper bounds established in the two preceding lemmas are expressed in terms of the quantities $\mu_D$ and $\sqrt{ 1+ \mu_D(d-1)}$, both of which are governed by $ \mu_D = \frac{2\Delta}{1-\Delta} + \frac{\mu_s}{\sqrt{d}}$ as defined in Table \ref{tab:constant-definitions}.
These bounds become tighter as $\Delta$ decreases, although this comes at the cost of a  lower  probability for the bounds to hold.
From the expression of the auxiliary dictionary matrices, i.e.,  $\textbf{D}_A  = \textbf{L}_A  \textbf{S}^T\textbf{A}^T\textbf{U}_R $ and  $\textbf{D}_B  = \textbf{L}_B  \textbf{S}\textbf{B}^T\textbf{V}_R $, we see that their spectral properties are primarily determined by the corresponding coefficient matrix  ($\textbf{A}$ or $\textbf{B}$)  and the latent relation matrix  $\textbf{S}$.
The quantity $ \mu_D$ clearly separates the contributions of these two factors.
Specifically,  the first term $\frac{2\Delta}{1-\Delta} $ reflects the conditioning of the coefficient matrices through the ratio between their largest and smallest singular values (see Eqs. (\ref{express_D2}) and (\ref{express_D3}) in the proof of Lemma \ref{MC_DaDb}), whereas the second term $\frac{\mu_s}{\sqrt{d}}$ captures the incoherence of the latent relation matrix  as characterised by Assumption \ref{LI}.

\paragraph{Empirical Illustration:}

We demonstrate the bounds derived in Lemmas \ref{MC_DaDb} and \ref{BS_DaDb}, using exactly the same experiment setting as the empirical illustration in  Section \ref{sec:prop_coef_aux}.
The only difference is that we substantially reduce the sample size of one of the two  coefficient matrices, for which we keep $n=1500$ for $\textbf{A}$  while reducing the sample size for $\textbf{B}$ to $m=500$.
Following Assumption \ref{LI}, we estimate $\frac{\mu_s}{\sqrt{d}}$ by taking the maximum   cosine similarities among both the rows and columns of the generated $\textbf{S}$ in each trial.
All bounds are computed  with $\Delta = 0.05$, and the results are presented in Figure \ref{fig:dic_bound}.
For   $\textbf{D}_A $, which is constructed from   $\textbf{A}$ with the larger sample size $n=1500$, the    upper bound   on  the atom inner-product magnitudes given by Lemma  \ref{MC_DaDb}  is satisfied in all 200 trials, as shown in Figure \ref{fig:lemma15}.
In contrast, for $\textbf{D}_B $, which is constructed from   $\textbf{B}$ with a much smaller sample size $m=500$, the same upper bound holds in 187  out of 200 trials, corresponding to a success rate of $93.5\%$.
This empirical observation is consistent with the sample-complexity behavior implied by the  probabilities under which the upper bound holds, i.e., $1-ke^{-\frac{C_A\Delta^2  \sigma_A^2n}{k {M_{\max}^{(A)}}^2}}$   and $1-ke^{-\frac{C_B\Delta^2  \sigma_B^2m}{k {M_{\max}^{(B)}}^2}}$.
These probability expressions increase with the sample sizes $n$ and $m$, indicating that the bound is more likely to hold when more samples are available.
As shown in Figure \ref{fig:lemma16}, the upper bound on the spectral norm of the auxiliary dictionary matrix is satisfied in all trials, but is less tight than the bound on the atom inner products.

\begin{figure}[t]
    \centering
    \subfigure[Lemma \ref{MC_DaDb}, $\left \langle \textbf{D}_A^{(i)}, \textbf{D}_A^{(j)}\right \rangle$ and $\left \langle \textbf{D}_B^{(i)}, \textbf{D}_B^{(j)}\right \rangle$]{
        \includegraphics[width=0.45\linewidth]{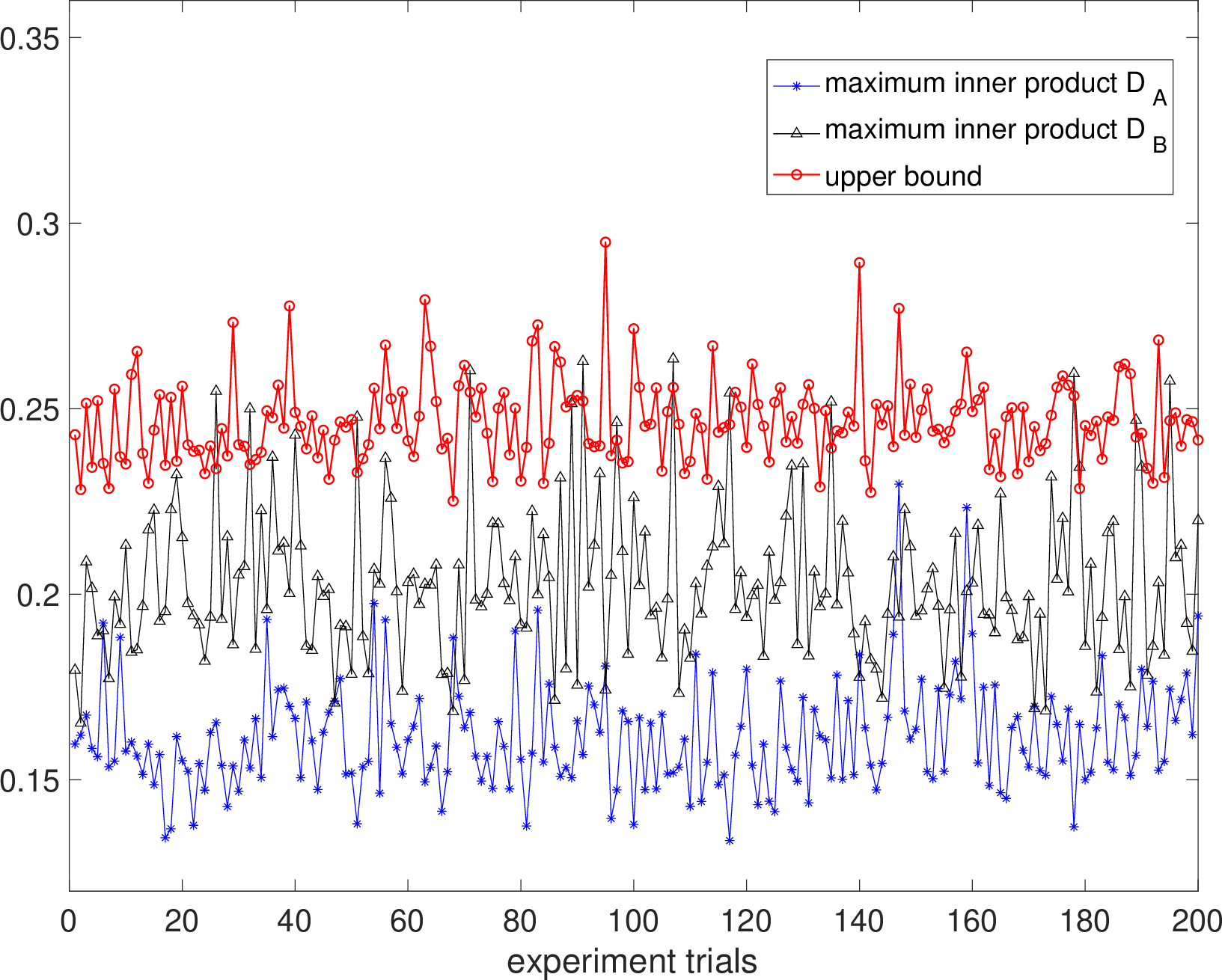}
        \label{fig:lemma15}    
    }
    \hfill
    \subfigure[Lemma \ref{BS_DaDb}, $\| \textbf{D}_A\|_2$ and $\| \textbf{D}_B\|_2$]{
        \includegraphics[width=0.45\linewidth]{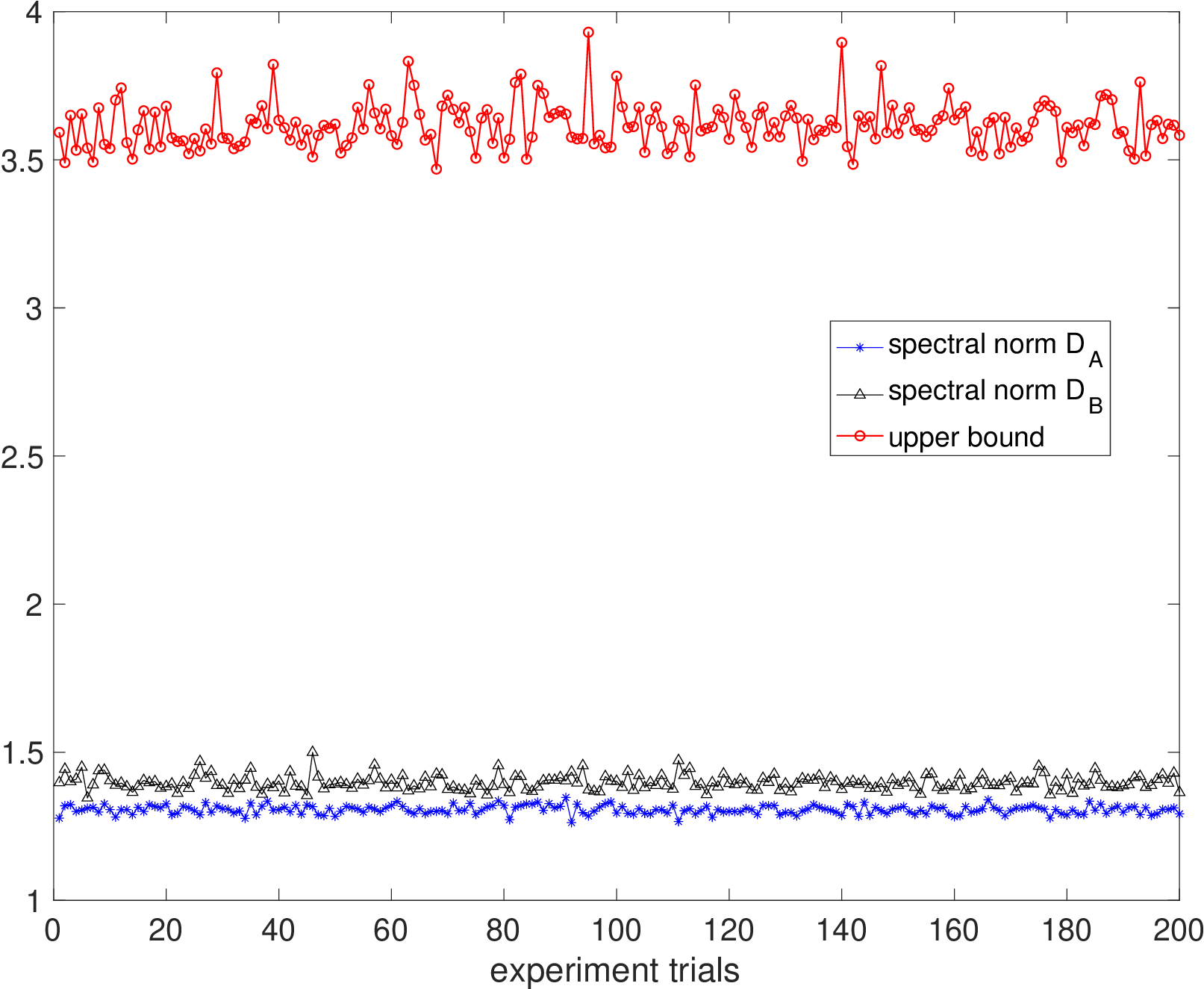}
        \label{fig:lemma16}    
    }
    \caption{Comparison of the maximum inner product between the auxiliary atoms and the spectral norm of the auxiliary dictionary matrix with their theoreticl bounds over 200 trials for $\textbf{D}_A$ and $\textbf{D}_B$.}   \label{fig:dic_bound} 
\end{figure}

\subsection{Auxiliary Observation Matrices: $\textbf{Y}_U$ and $\textbf{Y}_V$}

\begin{figure}[t]
    \centering
    \subfigure[Lemma \ref{RowLength_YuYv}, $\left\|\textbf{Y}_U^{(i)} \right\|_2$]{
        \includegraphics[width=0.45\linewidth]{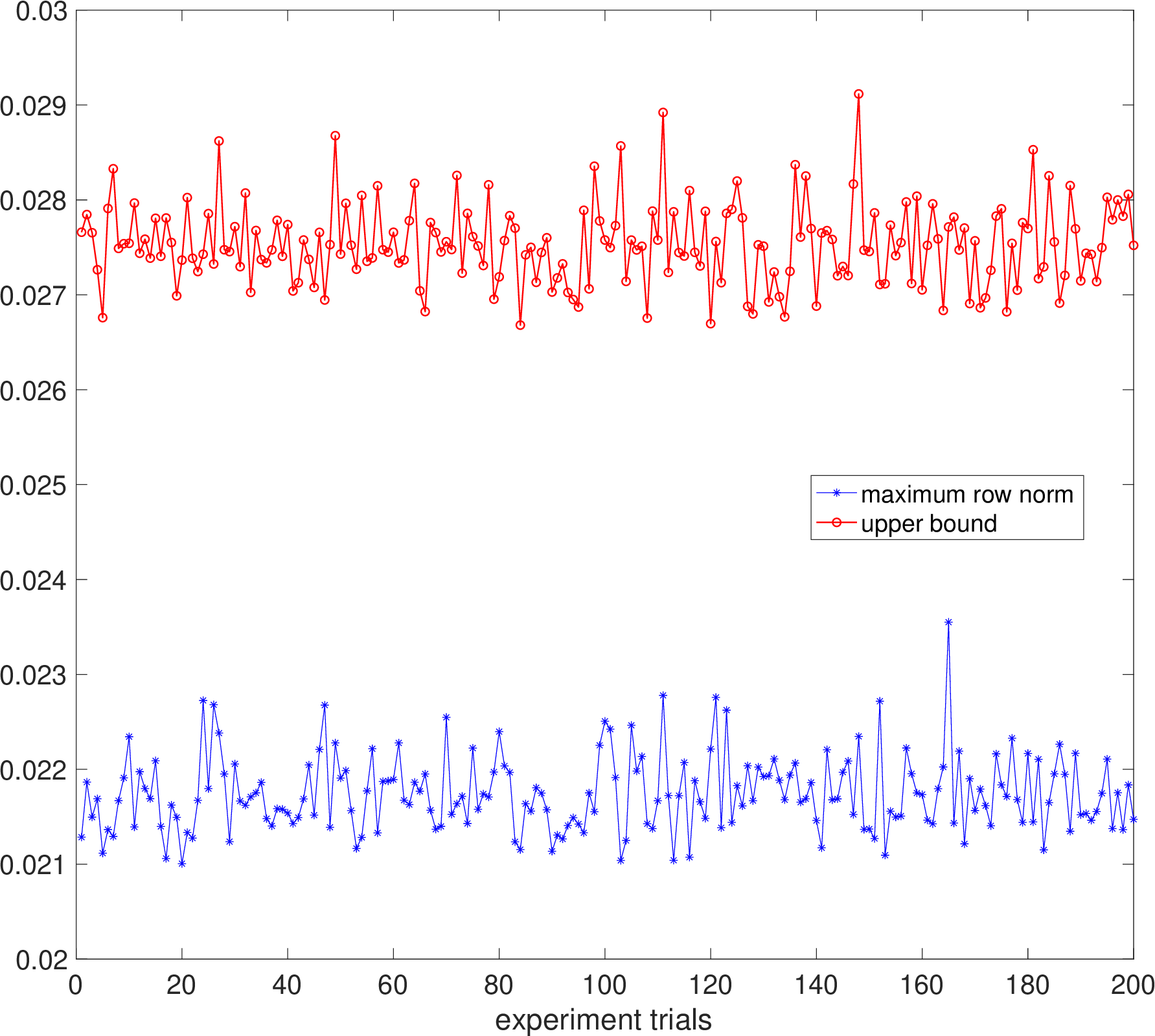}
        \label{fig:lemma17}    
    }
    \hfill
    \subfigure[Lemma \ref{SSN_YuYv}, $\left\| \textbf{Y}_U^{(I_U)} \right\|_2$]{
        \includegraphics[width=0.45\linewidth]{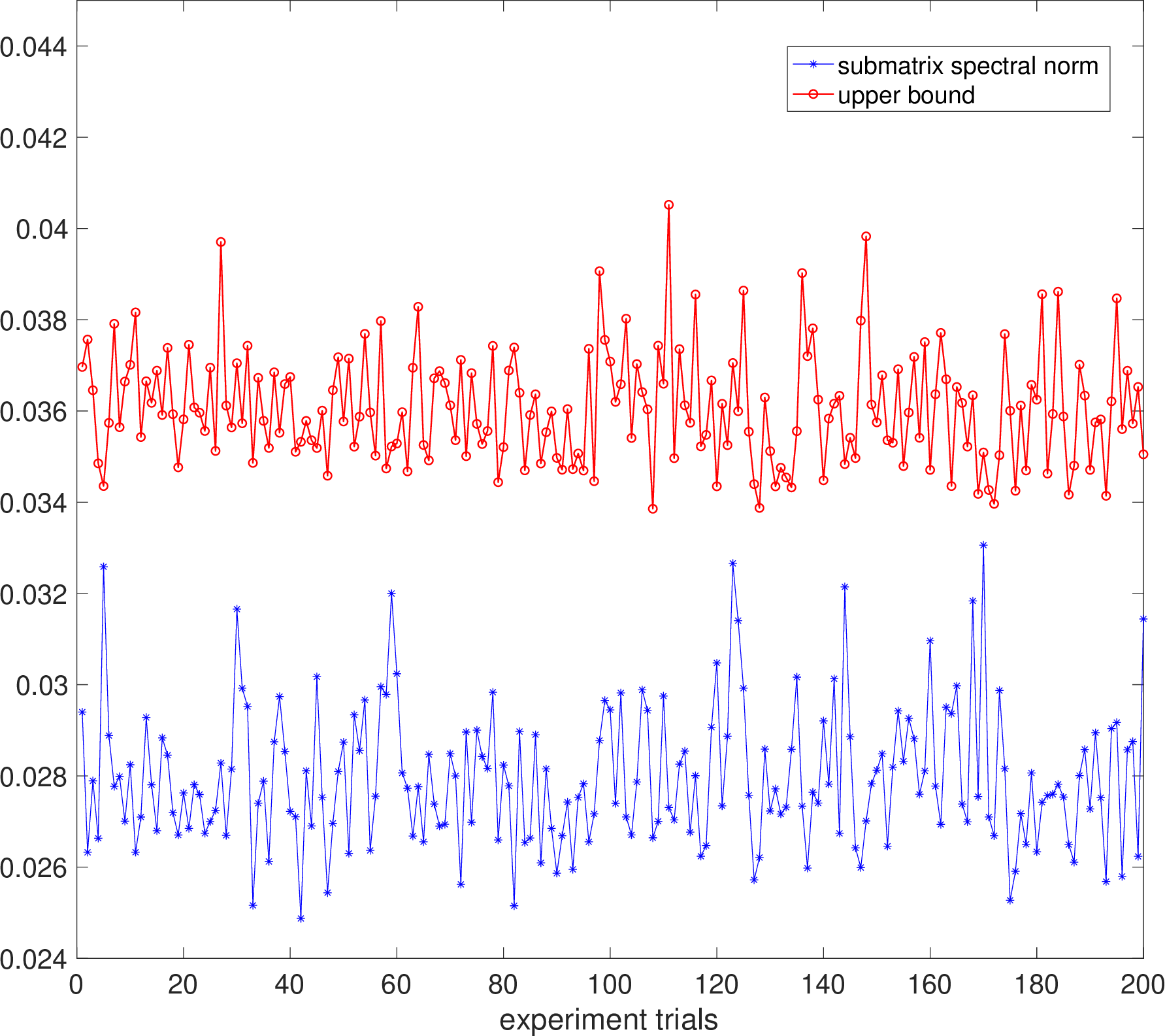}
        \label{fig:lemma18}    
    }
    \caption{Comparison of the maximum  row $l_2$-norm  of $\textbf{Y}_U$ and the spectral norm of a random selected submatrix of $\textbf{Y}_U$ with their theoretical bounds over 200 trials.}   \label{fig:observationY_bound} 
\end{figure}

Leveraging the results established for $\textbf{X}_A$ and $\textbf{X}_B$, together with those for $\textbf{D}_A$ and $\textbf{D}_B$, we first derive bounds on the row norms of the two  auxiliary observation matrices $\textbf{Y}_U$ and $\textbf{Y}_V$ in Lemma \ref{RowLength_YuYv}.
These are then  used to establish bounds on the spectral norms of submatrices of the auxiliary observation matrices.
The resulting bounds are presented in Lemma \ref{SSN_YuYv}.

\begin{lemma}[Bounded Row Norm, $\textbf{Y}_U$ and $\textbf{Y}_V$] \label{RowLength_YuYv}
 Suppose Assumptions  \ref{SC}-\ref{LI} hold.
For any $0<\Delta<1$,  the row $l_2$-norms of   $\textbf{Y}_U = \textbf{X}_A\textbf{D}_B $  and $\textbf{Y}_V = \textbf{X}_B\textbf{D}_A$  are upper bounded  by
\begin{align} 
\label{eq:Row_Yu}
\left\|\textbf{Y}_U^{(i)} \right\|_2 < \;  &  u_sM^{(A)}_{\max}   \sigma_B \sqrt{\frac{ (1+ \Delta )( 1+ s_A\mu_D )  s_As_B }{nk }  } ,  \\
\label{eq:Row_Yv}
 \left\|  \textbf{Y}_V^{(i)} \right\|_2 < \;  & u_s M^{(B)}_{\max}   \sigma_A \sqrt{\frac{ (1+ \Delta ) ( 1+ s_B\mu_D ) s_As_B }{mk } } , 
 \end{align}
 with probabilities at least $1-ke^{-\frac{C_B\Delta^2  \sigma_B^2m}{k {M_{\max}^{(B)}}^2}}$   and   $1-ke^{-\frac{C_A\Delta^2  \sigma_A^2n}{k {M_{\max}^{(A)}}^2}}$,  respectively.  
 \end{lemma}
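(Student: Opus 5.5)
The plan is to write each row of $\textbf{Y}_U$ as a sparse combination of the auxiliary atoms and bound its norm with the Gram structure of $\textbf{D}_B$. By Definition \ref{gen_aux}, $\textbf{Y}_U^{(i)} = \textbf{X}_A^{(i)}\textbf{D}_B = \sum_{j\in I} (X_A)_{ij}\textbf{D}_B^{(j)}$, where $I = supp(\textbf{A}^{(i)})$ has $|I| = s_A$ by Assumption \ref{SC}, because diagonal scaling preserves the support. Hence
\begin{equation*}
\left\|\textbf{Y}_U^{(i)}\right\|_2^2 = \textbf{X}_A^{(i)}{}_{I}\,\textbf{D}_B^{I}\left(\textbf{D}_B^{I}\right)^T \textbf{X}_A^{(i)}{}_{I}^{\,T} \le \left\|\textbf{D}_B^{I}\left(\textbf{D}_B^{I}\right)^T\right\|_2 \left\|\textbf{X}_A^{(i)}\right\|_2^2 .
\end{equation*}

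First, I bound the Gram factor. The $s_A\times s_A$ Gram matrix $\textbf{D}_B^{I}(\textbf{D}_B^{I})^T$ has unit diagonal, since the rows of $\textbf{D}_B$ are normalised. By Lemma \ref{MC_DaDb}, its off-diagonal entries are bounded in magnitude by $\mu_D$, with probability at least $1-ke^{-C_B\Delta^2\sigma_B^2 m/(k{M^{(B)}_{\max}}^2)}$. Gershgorin's theorem then gives spectral norm at most $1+(s_A-1)\mu_D < 1+s_A\mu_D$. This strict inequality is the source of the strict "$<$" in Eq. (\ref{eq:Row_Yu}).

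Second, I bound the coefficient factor. Each nonzero entry satisfies $|(X_A)_{ij}| = |\alpha_{ij}|\,\|\hat{\textbf{F}}_B^{(j)}\|_2 \le M^{(A)}_{\max} u_s\sigma_B\sqrt{(1+\Delta)s_B/(nk)} = M_A$, by Eq. (\ref{bound_maxA}) and Corollary \ref{RowLengthFhat}. Summing over the $s_A$ nonzero entries gives $\|\textbf{X}_A^{(i)}\|_2^2 \le s_A M_A^2 = {M^{(A)}_{\max}}^2u_s^2\sigma_B^2(1+\Delta)s_As_B/(nk)$. Combining the two factors and taking square roots yields exactly Eq. (\ref{eq:Row_Yu}). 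The bound for $\textbf{Y}_V$ in Eq. (\ref{eq:Row_Yv}) follows symmetrically, using $supp(\textbf{B}^{(i)})$ with $|I|=s_B$, the bound $M_B$, and the incoherence of $\textbf{D}_A$.

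The main obstacle is the probability bookkeeping. Both ingredients, the row-norm bound on $\hat{\textbf{F}}_B$ from Lemma \ref{RowLengthF} and the incoherence of $\textbf{D}_B$ from Lemma \ref{MC_DaDb}, hold on the same event: the singular-value concentration of $\textbf{B}$ from Lemma \ref{singularAB}, which has probability at least $1-ke^{-C_B\Delta^2\sigma_B^2m/(k{M^{(B)}_{\max}}^2)}$. I would therefore verify from those proofs that both lemmas are consequences of that single event, so no union bound is needed and the stated probability is obtained. The bounds on $\alpha_{ij}$ are deterministic by Assumption \ref{NZC}.
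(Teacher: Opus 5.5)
Your proposal is correct and essentially coincides with the paper's proof. The paper expands $\|\textbf{Y}_V^{(i)}\|_2^2$ entrywise, bounding the diagonal part by $s_BM_B^2$ and each of the $s_B(s_B-1)$ cross terms by $M_B^2\mu_D$ via Lemma \ref{MC_DaDb}; this gives the same quantity $s_BM_B^2\bigl(1+(s_B-1)\mu_D\bigr)<s_BM_B^2(1+s_B\mu_D)$ as your Gershgorin bound combined with $\|\textbf{X}_B^{(i)}\|_2^2\le s_BM_B^2$, and the paper assigns the same single probability from Corollary \ref{RowLengthFhat} and Lemma \ref{MC_DaDb}.

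If you carry out the check you propose, you will find the following. $M_A$ needs only the bound on $\sigma_{\max}(\textbf{B})$, but the incoherence of $\textbf{D}_B$ rests on the rank-$d$ argument in the proof of Lemma \ref{RowLengthF}, which formally also needs $\textbf{A}$ to have full column rank (Corollary \ref{rankAB}). So the $\textbf{B}$-only probability in the statement is taken from the lemma statements rather than fully re-derived from their proofs, exactly as the paper does.
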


\begin{lemma}[Submatrix Spectral Norm, $\textbf{Y}_U$ and $\textbf{Y}_V$] \label{SSN_YuYv}
 Suppose Assumptions  \ref{SC}-\ref{LI} hold.
  Let $I_U$ and $I_V$  denote the index sets of randomly selected rows of $\textbf{Y}_U = \textbf{X}_A\textbf{D}_B $  and   $\textbf{Y}_V = \textbf{X}_B\textbf{D}_A$, respectively, and let $|I_U|$ and $|I_V|$ denote their cardinalities.
  Let $\textbf{Y}_U^{(I_U)}$ and $\textbf{Y}_V^{(I_V)}$ denote the corresponding row submatrices.
 For any $0<\Delta_Y, \Delta<1$, the spectral norms of $\textbf{Y}_U^{(I_U)}$ and $\textbf{Y}_V^{(I_V)}$ are upper bounded by 
\begin{equation} 
\label{eq:row_subYuYv}
\left\| \textbf{Y}_U^{(I_U)} \right\|_2 \leq        \gamma    \sqrt{\frac{\left| I_U \right|s_As_B}{nk^2}}, \;  \left\| \textbf{Y}_V^{(I_V)} \right\|_2  \leq  \gamma \sqrt{\frac{\left| I_V \right|s_As_B}{mk^2}},
 \end{equation}
 with probabilities at least $p_u$ and $p_v$, respectively, where
 \begin{align}
     p_u = \; & 1- ke^{-\frac{C_B\Delta^2  \sigma_B^2m}{k {M_{\max}^{(B)}}^2}}-de^{- \frac{ C_U \Delta_Y^2\sigma_A^2\left| I_U \right| \left(1+ \mu_D(d-1)\right)}{ {M_{\max}^{(A)}}^2( 1+ s_A\mu_D)}},
     \\
     p_v =\; & 1- ke^{-\frac{C_A\Delta^2  \sigma_A^2n}{k {M_{\max}^{(A)}}^2}}-de^{- \frac{ C_V \Delta_Y^2\sigma_B^2\left| I_V \right| \left(1+ \mu_D(d-1)\right)}{ {M_{\max}^{(B)}}^2( 1+ s_B\mu_D)}}.
 \end{align}
Here, $C_A, C_B, C_U, C_V>0$ are positive universal constants.
\end{lemma}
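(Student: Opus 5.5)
We bound $\|\textbf{Y}_U^{(I_U)}\|_2$ by writing $\textbf{Y}_U^{(I_U)} = \textbf{X}_A^{(I_U)}\textbf{D}_B$ and studying the $d\times d$ second-moment matrix
\[
\textbf{G}_U = \bigl(\textbf{Y}_U^{(I_U)}\bigr)^T\textbf{Y}_U^{(I_U)} = \sum_{i\in I_U} \bigl(\textbf{Y}_U^{(i)}\bigr)^T\textbf{Y}_U^{(i)} .
\]
We work conditionally on $\textbf{B}$ and $\textbf{S}$. This fixes $\textbf{L}_B$, $\textbf{D}_B$ and the scaling of $\textbf{X}_A$. We first restrict to the event of Lemma \ref{RowLengthF} for $\textbf{F}_B$, which also gives Corollary \ref{RowLengthFhat} and Lemmas \ref{MC_DaDb}, \ref{BS_DaDb} and \ref{RowLength_YuYv} for $\textbf{D}_B$ and $\textbf{Y}_U$. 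This event has probability at least $1-ke^{-C_B\Delta^2\sigma_B^2m/(k{M_{\max}^{(B)}}^2)}$ and accounts for the first failure term in $p_u$.

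Conditionally, the rows $\textbf{Y}_U^{(i)} = \textbf{X}_A^{(i)}\textbf{D}_B$, $i\in I_U$, are independent: the supports and nonzero values of distinct rows of $\textbf{A}$ are independent by Assumptions \ref{SC} and \ref{NZC}. Their common second moment is
\[
\textbf{D}_B^T E\bigl[(\textbf{X}_A^{(i)})^T\textbf{X}_A^{(i)}\bigr]\textbf{D}_B .
\]
Since the nonzero entries have zero mean and are independent, the inner expectation is diagonal, with $j$-th entry $\frac{s_A}{k}\sigma_A^2\|\hat{\textbf{F}}_B^{(j)}\|_2^2 \le \frac{s_A}{k}\sigma_A^2u_s^2\sigma_B^2\frac{(1+\Delta)s_B}{nk}$. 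Combining this with Lemma \ref{BS_DaDb} gives
\[
\bigl\|E\,\textbf{G}_U\bigr\|_2 \le |I_U|\, u_s^2\sigma_A^2\sigma_B^2(1+\Delta)\bigl(1+\mu_D(d-1)\bigr)\frac{s_As_B}{nk^2}.
\]
This is exactly $\gamma^2|I_U|s_As_B/(nk^2)$ without the factor $(1+\Delta_Y)^2$.

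Next we concentrate $\textbf{G}_U$ around its mean using the matrix Chernoff/Bernstein (or Vershynin-type) bound of Theorem \ref{SpecRand} for sums of independent bounded rank-one PSD matrices in dimension $d$. Each summand has norm at most $\|\textbf{Y}_U^{(i)}\|_2^2$, and Lemma \ref{RowLength_YuYv} bounds this by $R:=u_s^2{M_{\max}^{(A)}}^2\sigma_B^2(1+\Delta)(1+s_A\mu_D)s_As_B/(nk)$. The upper-tail inequality then gives
\[
\|\textbf{G}_U\|_2\le (1+\Delta_Y)^2\|E\,\textbf{G}_U\|_2
\]
with failure probability at most $d\exp\bigl(-C\Delta_Y^2\|E\,\textbf{G}_U\|_2/R\bigr)$. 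Using the ratio $\|E\,\textbf{G}_U\|_2/R \asymp \sigma_A^2|I_U|(1+\mu_D(d-1))/({M_{\max}^{(A)}}^2(1+s_A\mu_D))$, the extra factor $1/k$ is absorbed into the universal constant $C_U$. Taking square roots yields the claimed bound, and a union bound with the conditioning event gives $p_u$. The case of $\textbf{Y}_V$ is symmetric, swapping $A\leftrightarrow B$, $n\leftrightarrow m$ and $s_A\leftrightarrow s_B$.

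The main obstacle is the concentration step. The exponent needs a \emph{lower} bound on the mean's scale relative to $R$, but we only control an upper bound on $\|E\,\textbf{G}_U\|_2$. We would handle this by applying the Chernoff inequality with $\mu_{\max}$ replaced by the upper bound above, which is legitimate because the tail inequality is monotone in $\mu_{\max}$. We would also have to check that the stated exponent matches it up to $C_U$; in particular the spare factor $1/k$ and $(1+\Delta)$ must be absorbed, and if they cannot be, the constant must be allowed to depend on $k$. A further subtlety is that the row bound of Lemma \ref{RowLength_YuYv} must hold almost surely under the conditioning, not only with high probability. It does, since it follows deterministically from the bounded support of the $\alpha_{ij}$, the sparsity $s_A$, and the coherence $\mu_D$.
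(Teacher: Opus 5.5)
\textbf{The gap: absorbing $1/k$ into $C_U$.} Your route is the paper's. You restrict to the $\textbf{B}$-event behind Lemma \ref{RowLengthF}. You bound the row second moment by $\frac{s_A\sigma_A^2}{k}c_u^2\|\textbf{D}_B\|_2^2$, using the diagonal inner factor and Lemma \ref{BS_DaDb}. You then apply Theorem \ref{SpecRand} with the almost-sure row bound of Lemma \ref{RowLength_YuYv}, calibrating the deviation to $\Delta_Y$ times the mean term; your matrix-Chernoff form is equivalent.

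The step that fails is the one you flagged. The factor $1/k$ cannot be absorbed into a universal constant, because $k$ is a problem dimension. What your computation proves is the bound with failure term
\[
d\exp\Bigl(-\frac{C_U\Delta_Y^2\sigma_A^2\left|I_U\right|\bigl(1+\mu_D(d-1)\bigr)}{k\,{M_{\max}^{(A)}}^2(1+s_A\mu_D)}\Bigr),
\]
that is, with $|I_U|$ replaced by $|I_U|/k$. The paper reaches the stated exponent only through an inconsistency. In its proof (written for $\textbf{Y}_V$), the deviation term $t\sqrt{M}$ is written with denominator $mk^2$, whereas Lemma \ref{RowLength_YuYv} gives $mk$. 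With the correct row bound, the paper's choice of $t$ yields the same extra $1/k$ you found.

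\textbf{Why no sharper argument helps: the lemma as written is false.} On the $\textbf{B}$-event, Lemma \ref{MC_DaDb} and Eq. (\ref{bound_minA}) give $\|\textbf{Y}_U^{(i)}\|_2^2\ge(1-s_A\mu_D)s_Am_A^2$. Since also $\|\textbf{Y}_U^{(I_U)}\|_2\ge\max_{i\in I_U}\|\textbf{Y}_U^{(i)}\|_2$, the claimed bound fails deterministically whenever
\[
\left|I_U\right| < \frac{(1-s_A\mu_D)(1-\Delta){M_{\min}^{(A)}}^2 l_s^2\, k}{(1+\Delta_Y)^2(1+\Delta)u_s^2\sigma_A^2\bigl(1+\mu_D(d-1)\bigr)}.
\]
Take $\textbf{S}=\textbf{I}_k$, Rademacher coefficients, $\Delta=1/k$, $\Delta_Y=1/2$, and small $\mu_s$, so that $1+\mu_D(d-1)=O(1)$. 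Take also $m\gg k^3\log k$ and $|I_U|=\lfloor ck\rfloor$ for a small absolute constant $c$. Then the bound fails with probability tending to one, while the stated $p_u$ tends to one. So the correct statement is your weaker one, and Lemma \ref{singular_YuYv} and Theorem \ref{main_res2} inherit it.

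\textbf{A minor point on conditioning.} Conditioning on $\textbf{B},\textbf{S}$ does not literally fix $\textbf{D}_B$ and $\textbf{L}_B$, since $\textbf{V}_R$ comes from the SVD of $\textbf{A}\textbf{S}\textbf{B}^T$. This is easily repaired. Note that $\|\textbf{Y}_U^{(I_U)}\|_2=\|\textbf{A}^{(I_U)}\textbf{S}\textbf{B}^T\|_2/\sqrt{nm}$. Then run your argument with $\textbf{V}_R$ replaced by a fixed orthonormal basis of the column space of $\textbf{B}\textbf{S}^T$, and with row norms $\|\textbf{S}^{(j)}\textbf{B}^T\|_2$.
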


\paragraph{Empirical Illustration:}

We demonstrate the  upper bounds derived in Lemmas \ref{RowLength_YuYv} and \ref{SSN_YuYv} for $\textbf{Y}_U \in \mathbb{R}^{1500\times 50}$ using exactly the same experiment setting as in the empirical illustration of  Section \ref{sec:prop_coef_aux}.
In each of the 200 trials,  we compute $\textbf{Y}_U$, estimate $l_s$, $u_s$ and $\mu_s$, and randomly select $|I_U|=20$ rows to 
 illustrate the result in Lemma  \ref{SSN_YuYv}.
The theoretical bounds are computed with $\Delta=0.01$.
Figure \ref{fig:observationY_bound} presents the comparison results, where the derived upper bounds hold in all 200 trials while exhibiting reasonable tightness.

\section{Main  Results}
\label{rel_res}

Having introduced the problem formulation, data-generating process, factorisation models, algorithmic framework, and the associated analyses of the problem matrices and algorithmic properties,  we now present our main theoretical results on the role of coefficient sparsity in matrix tri-factorisation.
We first analyse the auxiliary tri-factorisation problem $\textbf{R}  \approx \textbf{X}_A \tilde{\textbf{S}}\textbf{X}_B^T$ using Algorithm \ref{alg:rl}, investigating  how the coefficient sparsity levels $s_A$ and $s_B$ affect the recovery of the  auxiliary coefficient  matrices $\textbf{X}_A$ and $\textbf{X}_B$, and the auxiliary latent relation  matrix $\tilde{\textbf{S}}$.
We establish sufficient conditions under which Algorithm \ref{alg:rl} asymptotically recovers $\textbf{X}_A$, $\textbf{X}_B$, and $\tilde{\textbf{S}}$ from the observed $\mathbf{R}$ with high probability, while recovering the auxiliary dictionary matrices $\textbf{D}_A$ and $\textbf{D}_B$ as an intermediate result in the proof.
We next examine coefficient sparsity from a different perspective through the spectral dictionary approximation computed using Algorithm~\ref{alg:ini}, deriving an upper bound on its error.
These analyses reveal how coefficient sparsity affects both  the recovery conditions and the corresponding estimation errors. 

We show  that greater sparsity   (smaller  $s_A$ and $s_B$)  accelerates the convergence of the estimation errors in  Algorithm \ref{alg:rl}, reduces the approximation error of Algorithm \ref{alg:ini}, and relaxes the conditions required by the corresponding recovery guarantees.
Finally, to connect the auxiliary analysis with the original problem, we investigate the structural relationship  between  the auxiliary tri-factorisation  $\textbf{R}  \approx \textbf{X}_A \tilde{\textbf{S}}\textbf{X}_B^T$ and the original tri-factorisation  $\textbf{R}  \approx \textbf{A}\textbf{S}\textbf{B}^T$.
In particular, we characterise their shared structure,  quantify the structural shifts in the corresponding coefficient and latent relation matrices, and identify the factors  governing these shifts.
Our analysis aggregates the parameters defining the generative models into a collection of problem constants listed in Table~\ref{tab:constant-definitions}, and expresses the theoretical results in terms of these constants to highlight the role of sparsity.

\subsection{Auxiliary Model Identification}

\subsubsection{Theoretical Results}
We first present the model identification results for the auxiliary tri-factorisation model $  \textbf{R}  \approx \textbf{X}_A \tilde{\textbf{S}}\textbf{X}_B^T$ in Theorem \ref{main_res}, by analysing the recovery errors of the  auxiliary coefficient, dictionary and latent relation matrices estimated using Algorithm \ref{alg:rl}.
When applying Algorithm \ref{alg:dl}, the lasso accuracy and sparsity control parameters  are set as  $\epsilon_t =  M_A\epsilon_{B_{t-1}}^D $ and $ \rho^{(s)}_t = 8.6 M_A\epsilon_{B_{t-1}}^D$, respectively, for the auxiliary observation $\textbf{Y}_U$.
For $\textbf{Y}_V$,  they are set as $\epsilon_t =  M_B\epsilon_{A_{t-1}}^D $ and $\rho^{(s)}_t = 8.6 M_B\epsilon_{A_{t-1}}^D$. 

\begin{theorem}[Auxiliary Recovery Guarantees] \label{main_res}
Suppose Assumptions \ref{SC}-\ref{LI} hold. 
Given $0< \Delta<1$, $0<\Delta_X <  \frac{l_s(1-\Delta) \rho_{AB} }{u_s(1+\Delta) } $, and   $0<\eta < \min\left(\frac{1}{\sqrt{s_A}}, \frac{1}{\sqrt{s_B}}\right)$, 
suppose the coefficient sparsity satisfies  
\begin{equation}
\label{main:sparsity_condition}
     \max(s_A, s_B)   \leq \min\left( \frac{0.05}{ \mu_D}, \left( \frac{ \min\left(\theta_{13}^{(A)}, \theta_{13}^{(B)}\right)\eta^2 k}{   1+\mu_D(d-1)  }\right)^{\frac{1}{3}}\right).
\end{equation}
Suppose Algorithm \ref{alg:rl} is initialised with  auxiliary dictionary estimates that satisfy  
\begin{align}
\label{eq:final_dic_condA}
     \epsilon_{A_{0}}^D  \leq\; &    \min\left( \frac{1}{40\sqrt{s_B}}, \frac{   \theta_{10}^{(B)}  }{ s_B}, \frac{ \theta_{12}^{(B)}}{\sqrt{s_B^3}},   \frac{1}{ \theta_{11}^{(B)}\sqrt{s_As_B^2}},  \frac{ \sqrt{k}}{  5\theta_9^{(B)}    \max(s_A,s_B)^2  }\right),  \\
\label{eq:final_dic_condB}
         \epsilon_{B_{0}}^D  \leq   \; & \min\left( \frac{1}{40\sqrt{s_A}}, \frac{   \theta_{10}^{(A)}  }{ s_A }, \frac{ \theta_{12}^{(A)}}{\sqrt{s_A^3}},   \frac{1}{ \theta_{11}^{(A)}\sqrt{s_A^2s_B}},  \frac{ \sqrt{k}}{  5\theta_9^{(A)}    \max(s_A,s_B)^2 }\right).
\end{align}
Suppose the initial estimates are refined for $T$ iterations using Algorithm \ref{alg:dl}.
Then, the auxiliary coefficient  estimates produced using Algorithm \ref{alg:rl} satisfy  $supp\left(\hat{\textbf{X}}_{A_T}^{(i)}  \right) = supp\left(   \textbf{X}_{A}^{(i)}\right) \;\forall i\in [n]$ and  $supp\left(\hat{\textbf{X}}_{B_T}^{(i)}  \right) = supp\left(   \textbf{X}_{B}^{(i)}\right) \;\forall i\in [m]$, together with 
\begin{align}
\label{theo:coef_a_full}
    \epsilon_{A_T}^X      \leq \; & 8.5  (\eta\sqrt{s_A})^{T-1} \epsilon_{B_0}^D   M^{(A)}_{\max}   u_s\sigma_B   \sqrt{\frac{(1+ \Delta )   s_B }{nk }}, \\
\label{theo:coef_b_full}
      \epsilon_{B_T}^X    \leq \; &  8.5  (\eta\sqrt{s_B})^{T-1} \epsilon_{A_0}^D  M^{(B)}_{\max}   u_s\sigma_A  \sqrt{\frac{(1+ \Delta )  s_A }{mk }}.
\end{align}
Letting $\delta = \frac{ 0.1u_s^2 \sigma_A^2\sigma_B^2 \Delta s_As_B }{k^2 }$, the bound in Eq. (\ref{theo:coef_a_full})  holds with probability at least 
\begin{align}
\label{eq:main_pa}
\nonumber
    p_A = \;& 1 -  11 ke^{-\frac{C_B\Delta^2  \sigma_B^2m}{k  {M_{\max}^{(B)}}^2}} - 2ke^{-\frac{ \hat{C}_Al_s^2 \sigma_A^2 (1- \Delta ) \Delta_X^2 ns_B   }{u_s^2 {M^{(A)}_{\max}}^2   (1+ \Delta ) ks_A }}  -  ke^{-\frac{ \hat{C}_Al_s^2 \sigma_A^2 (1- \Delta ) \Delta_X^2 ns_B   }{u_s^2 {M^{(A)}_{\max}}^2   (1+ \Delta ) k^2 }} - 5ke^{-\frac{\tilde{C}_An}{ks_A}}  \\
    \; & -  2ke^{-\frac{\tilde{C}_A \left\lfloor\frac{s_An}{2k} \right\rfloor}{ks_A}}  -4ke^{-\frac{  ns_A}{16k}} - 2e^{- \frac{\frac{1}{2}nk^3\delta^2}{s_A\sigma_A^4 u_s^4\sigma_B^4(1+ \Delta )^2s_B^2 +\frac{1}{3}\left( {M^{(A)}_{\max}}^2k +s_A\sigma_A^2\right)  u_s^2\sigma_B^2(1+ \Delta )  s_B k \delta }},
\end{align}
and the bound  in Eq. (\ref{theo:coef_b_full}) holds with probability at least
\begin{align}
\label{eq:main_pb}
\nonumber
    p_B = \;& 1 -  11 ke^{-\frac{C_A\Delta^2  \sigma_A^2n}{k  {M_{\max}^{(A)}}^2}} - 2ke^{-\frac{ \hat{C}_Bl_s^2 \sigma_B^2 (1- \Delta ) \Delta_X^2 ms_A   }{u_s^2 {M^{(B)}_{\max}}^2   (1+ \Delta ) ks_B }}  -  ke^{-\frac{ \hat{C}_Bl_s^2 \sigma_B^2 (1- \Delta ) \Delta_X^2 ms_A   }{u_s^2 {M^{(B)}_{\max}}^2   (1+ \Delta ) k^2 }} - 5ke^{-\frac{\tilde{C}_Bm}{ks_B}}  \\
    \; & -  2ke^{-\frac{\tilde{C}_B \left\lfloor\frac{s_Bm}{2k} \right\rfloor}{ks_B}}  -4ke^{-\frac{  ms_B}{16k}} - 2e^{- \frac{\frac{1}{2}mk^3\delta^2}{s_B\sigma_B^4 u_s^4\sigma_A^4(1+ \Delta )^2s_A^2 +\frac{1}{3}\left( {M^{(B)}_{\max}}^2k +s_B\sigma_B^2\right)  u_s^2\sigma_A^2(1+ \Delta )  s_A k \delta  }}.
\end{align} 
Finally, the auxiliary  latent relation  estimate produced using Algorithm \ref{alg:rl}  satisfies
\begin{equation}
\label{theo:latent_s}
     \epsilon_S  < 3\max\left((\eta\sqrt{s_B})^T  \epsilon_{A_{0}}^D, (\eta\sqrt{s_A})^T  \epsilon_{B_{0}}^D \right)\sigma_{{\min}_R}^{-1},
 \end{equation}
 with probability at least $p_S=p_Ap_B$.
 \end{theorem}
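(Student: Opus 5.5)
The plan is to treat each auxiliary problem $\textbf{Y}_U=\textbf{X}_A\textbf{D}_B$ and $\textbf{Y}_V=\textbf{X}_B\textbf{D}_A$ as an instance of the alternating-minimisation dictionary-learning analysis of \citet{Aga16}. I would first verify that all hypotheses of that analysis hold, with the problem-specific constants supplied by Section~\ref{sec:matrix_res}. The argument is given for $\textbf{Y}_U$; the $\textbf{Y}_V$ case is symmetric with $A\leftrightarrow B$ and $n\leftrightarrow m$.

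First I collect a single high-probability event on which all the structural properties hold simultaneously:
\begin{itemize}
\item entry magnitudes $m_A\le |(X_A)_{ij}|\le M_A$, from Corollary~\ref{RowLengthFhat};
\item singular-value bounds on $\textbf{X}_A$ from Lemma~\ref{singularXAXB}, which yield $\theta_0,\theta_1$;
\item column norms from Lemma~\ref{columnXAXB}, with $\delta=0.1u_s^2\sigma_A^2\sigma_B^2\Delta s_As_B/k^2$;
\item incoherence $\mu_D$ and spectral norm $\sqrt{1+\mu_D(d-1)}$ of $\textbf{D}_B$, from Lemmas~\ref{MC_DaDb} and \ref{BS_DaDb};
\item submatrix spectral bounds on $\textbf{Y}_U$ from Lemma~\ref{SSN_YuYv}.
\end{itemize}
I also need support-overlap statistics of random $s_A$-subsets: each atom appears in roughly $ns_A/k$ rows, with Chernoff terms $e^{-ns_A/16k}$ and $e^{-\tilde C_A n/(ks_A)}$. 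A union bound over these lemmas and over the $k$ atoms produces the expression $p_A$ in Eq.~(\ref{eq:main_pa}).

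Second, the coefficient step. Given $\epsilon^D_{B_{t-1}}$, each row satisfies $\|\textbf{Y}_U^{(i)}-\textbf{X}_A^{(i)}\hat{\textbf{D}}_{B_{t-1}}\|_2\le \sqrt{s_A}M_A\epsilon^D_{B_{t-1}}$. The true coefficient is therefore feasible for the Lasso with $\epsilon_t=M_A\epsilon^D_{B_{t-1}}$, up to the $\sqrt{s_A}$ factor absorbed in Lemma~\ref{coeff_error}. The sparsity condition $s_A\le 0.05/\mu_D$ together with the dictionary perturbation gives an RIP-type condition on $\hat{\textbf{D}}_{B_{t-1}}$. Lemma~\ref{coeff_error} (i.e.\ Theorem~\ref{Candes}) then gives the entrywise error $\le 8.5\sqrt{s_A}M_A\epsilon^D_{B_{t-1}}$. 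Because $\epsilon^D_{B_0}\le \theta^{(A)}_{10}/s_A$, this error is below $m_A/2$, so thresholding at $\rho^{(s)}_t=8.6M_A\epsilon^D_{B_{t-1}}$ recovers the exact support. This is where $\theta_{10}$ and $1/(40\sqrt{s_A})$ enter.

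Third, and this is the main obstacle, the dictionary step $\hat{\textbf{D}}_{B_t}=\hat{\textbf{X}}_{A_t}^\dagger\textbf{X}_A\textbf{D}_B$. Writing $\hat{\textbf{X}}_{A_t}=\textbf{X}_A+\textbf{E}_t$ with $\mathrm{supp}(\textbf{E}_t)\subseteq\mathrm{supp}(\textbf{X}_A)$, I expand the pseudo-inverse: $\hat{\textbf{D}}_{B_t}-\textbf{D}_B\approx -(\textbf{X}_A^T\textbf{X}_A)^{-1}\textbf{X}_A^T\textbf{E}_t\textbf{D}_B$ plus second-order terms. The difficulty is that a crude spectral bound on $\textbf{E}_t$ loses too much. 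Following \citet{Aga16}, I would exploit the fact that, to first order, the error in row $i$ is a linear function of the Lasso residual with a zero-mean structure, so averaging over the $\approx ns_A/k$ rows containing each atom gives cancellation. Controlling these terms via $\theta_0,\theta_3,\theta_4,\theta_5$ and Lemma~\ref{SSN_YuYv} should give a per-atom bound $\epsilon^D_{B_t}\le (\theta_7+\theta_8)\sqrt{1+\mu_D(d-1)}\,s_A^{3/2}k^{-1/2}\,\epsilon^D_{B_{t-1}}$, which is then preserved by normalisation up to a factor 2. The second sparsity condition involving $\theta_{13}$ turns this contraction into $\epsilon^D_{B_t}\le \eta\sqrt{s_A}\,\epsilon^D_{B_{t-1}}$. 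Since $\eta\sqrt{s_A}<1$, induction keeps every iterate inside the initial basin, where conditions (\ref{eq:final_dic_condA})--(\ref{eq:final_dic_condB}) hold, so the hypotheses of the second step persist. I expect getting the exact constants $\theta_{12}$ and $\theta_9$ to match will be the fiddliest part.

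Finally, the conclusions. Combining the second and third steps at $t=T$ gives $\epsilon^X_{A_T}\le 8.5M_A\epsilon^D_{B_{T-1}}\le 8.5(\eta\sqrt{s_A})^{T-1}\epsilon^D_{B_0}M_A$, which is Eq.~(\ref{theo:coef_a_full}); exact support recovery also follows.

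For $\tilde{\textbf{S}}$ I use Eq.~(\ref{eq:scaled_S_aux}), which gives $\hat{\textbf{S}}/(nm)=\hat{\textbf{D}}_B\bm\Sigma_R^{-1}\hat{\textbf{D}}_A^T$ on the event where the supports are correct. Then
\[
\hat{\textbf{D}}_B\bm\Sigma_R^{-1}\hat{\textbf{D}}_A^T-\textbf{D}_B\bm\Sigma_R^{-1}\textbf{D}_A^T=(\hat{\textbf{D}}_B-\textbf{D}_B)\bm\Sigma_R^{-1}\hat{\textbf{D}}_A^T+\textbf{D}_B\bm\Sigma_R^{-1}(\hat{\textbf{D}}_A-\textbf{D}_A)^T .
\]
Each entry of the two terms is bounded by the row errors times unit-norm rows times $\sigma_{\min_R}^{-1}$, which yields $\epsilon_S<3\max(\cdot)\sigma^{-1}_{\min_R}$ with probability $p_Ap_B$.
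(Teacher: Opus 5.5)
Your coefficient step, the induction that keeps the iterates inside the basin, and the bound on $\epsilon_S$ all follow the paper. The dictionary step, however, is the heart of the theorem, and it rests on a mechanism you do not have. You propose to linearise $\hat{\textbf{X}}_{A_t}^{\dagger}\textbf{X}_A\textbf{D}_B$ and to gain from a ``zero-mean structure'' of the Lasso error, by averaging over the $\approx ns_A/k$ rows that use each atom. Nothing supports this.

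\begin{itemize}
\item Lemma~\ref{coeff_error} (i.e.\ Theorem~\ref{Candes}) gives only a norm bound on each row of $\bm\Delta_{A_t}=\textbf{X}_A-\hat{\textbf{X}}_{A_t}$. It provides no sign information and no first-order form.
\item The rows of $\bm\Delta_{A_t}$ are neither centred nor independent of $\textbf{X}_A$. Each is a deterministic function of $\textbf{X}_A^{(r)}$ itself and of $\hat{\textbf{D}}_{B_{t-1}}$, which after the first update is computed from all of $\textbf{Y}_U$.
\end{itemize}

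So you cannot show that sums such as $\sum_r(\hat X_A)_{ri}(\Delta_{A_t})_{rj}$ cancel. The per-atom contraction you say ``should'' follow is therefore not established.

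The paper needs no cancellation; its argument is worst-case in the error values. It starts from the deterministic bound of Lemma~\ref{dic_error}. That lemma controls the angle $\mathrm{dist}(\hat{\textbf{D}}_{B_t}^{(i)},\textbf{D}_B^{(i)})$ through the \emph{off-diagonal} row $(\hat{\textbf{X}}_{A_t}^{\dagger}\textbf{X}_A)^{(i)}_{\setminus i}$, since the diagonal entry only rescales the atom. Every quantity there is then bounded using only $\|\bm\Delta_{A_t}\|_\infty$ and the containment $\mathrm{supp}(\bm\Delta_{A_t})\subseteq\mathrm{supp}(\textbf{X}_A)$; this is why exact support recovery is indispensable.

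\begin{itemize}
\item The row $(\hat{\textbf{X}}_{A_t}^T\bm\Delta_{A_t})^{(i)}_{\setminus i}$ involves only the rows $I_i$ where atom $i$ is active. By Lemma~\ref{random_nonzero}, $|I_i|\le 3s_An/(2k)$.
\item Lemmas~\ref{random_spars} and \ref{SubSpecNorm} then give $\|\bm\Delta_{A_t}^{(I_i)}\|_2\le\|\bm\Delta_{A_t}\|_\infty\sqrt{6ns_A^3/k^2}$ (Lemma~\ref{lemma:quantity1}).
\item Similarly, $\|\textbf{X}_A^{(I_i)}\|_2$ is bounded by applying Lemma~\ref{singularXAXB} to row-submatrices (Lemma~\ref{lemma:quantity2}).
\end{itemize}

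The zero-mean assumption enters only through the \emph{true} coefficients. It makes their second-moment matrix diagonal (Lemmas~\ref{cov1} and \ref{cov2}), which yields the singular-value bounds on $\textbf{X}_A$.

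The thresholds you defer as ``fiddly'' come out of exactly this computation:
\begin{itemize}
\item $\theta_{12}$ keeps $\|(\hat{\textbf{X}}_{A_t}^T\hat{\textbf{X}}_{A_t})^{-1}\|_2\le 2\sigma_{\min}(\textbf{X}_A)^{-2}$ and the diagonal-loss term $T_1\le\frac15$;
\item $\theta_{11}$ gives $\|\bm\Delta_{A_t}\|_\infty\le 1/\sqrt{nk}$;
\item the $\theta_9$ condition keeps the off-diagonal term $T_2\|\textbf{D}_B\|_2\le\frac15$;
\item the $\theta_{13}$ sparsity condition then gives $\epsilon_D\le\sqrt2\,\mathrm{dist}\le\eta\sqrt{s_A}\,\epsilon^D_{B_{t-1}}$.
\end{itemize}
Lemma~\ref{SSN_YuYv}, which you list, is not used here.

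One smaller correction: the Lasso residual satisfies $\|\textbf{X}_A^{(i)}(\hat{\textbf{D}}_{B_{t-1}}-\textbf{D}_B)\|_2\le s_AM_A\epsilon^D_{B_{t-1}}$, not $\sqrt{s_A}M_A\epsilon^D_{B_{t-1}}$. The paper absorbs this by running Lemma~\ref{coeff_error} with $\sqrt{s_A}\,\epsilon^D_{B_{t-1}}$ in place of $\epsilon$. That rescaling is exactly where the factor $\sqrt{s_A}$ in the contraction $\eta\sqrt{s_A}$ comes from.
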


\paragraph{Proof Sketch:}
We establish recovery guarantees by showing that, under suitable conditions, the estimation errors  of Algorithm  \ref{alg:rl}  decrease linearly to zero for the auxiliary coefficient and latent relation matrices.
The complete proof is provided in Appendix \ref{app:main_proof1}, and  proceeds by deriving recursive bounds  for the auxiliary coefficient and dictionary estimation errors.
Let the auxiliary coefficient error matrices at $t$-th iteration be $\bm\Delta_{A_t} =\textbf{X}_A -  \hat{\textbf{X}}_{A_{t}}$ and $ \bm\Delta_{B_t} = \textbf{X}_B- \hat{\textbf{X}}_{B_{t}}$.
Existing results, restated in Lemma \ref{dic_error},   express the dictionary error $\textmd{dist}\left(\hat{\textbf{D}}_{A_t/B_t}^{(i)}, \textbf{D}_{A/B}^{(i)} \right)$ in terms of several quantities involving the  coefficient estimates, including $\left\|\left(\hat{\textbf{X}} _{A/B}^T \bm\Delta _{A/B}\right)^{(i)}_{\setminus i} \right\|_2 $,    $\left\|\left(\hat{\textbf{X}} _{A/B}\right)_i^T\left(\hat{\textbf{X}}_{A/B}\right)_{\setminus i}\right\|_2$, and     $\|\bm\Delta_{X_{A/B}}\|_{\infty}$, together with $\sigma_{\min}(\textbf{X}_{A/B})$, $\|\textbf{X}_{A/B} \|_2$, and   $\|\textbf{D}_{A/B} \|_2$ (where, for simplicity, we omit the subscripts).
The results  in Section \ref{sec:matrix_res} provide high-probability bounds that facilitate the analysis of these  matrix  quantities.
Algorithm \ref{alg:rl}  alternates between  coefficient estimation and   dictionary update using Algorithm \ref{alg:dl}, for which Lemma \ref{coeff_error}  bounds its coefficient estimation error at each iteration,  yielding an upper bound on $\|\bm\Delta_{X_{A/B}}\|_{\infty}$.
All these bound results are then substituted into the dictionary error bound  of  Lemma \ref{dic_error} to establish a recursive contraction of the dcitionary estimation error.
Combining the coefficient and dictionary error bounds on    $\|\bm\Delta_{X_{A/B}}\|_{\infty}$ and $\textmd{dist}\left(\hat{\textbf{D}}_{A_t/B_t}^{(i)}, \textbf{D}_{A/B}^{(i)} \right)$ yields the stated recovery guarantees for the auxiliary dictionary,  coefficient, and latent relation  matrices, corresponding to $\epsilon_{A_t/B_t}^D$, $\epsilon_{A_T/B_T}^X$, and $ \epsilon_S$, respectively.   
The same analysis also identifies the sparsity and initialisation conditions required for linear convergence and quantifies the associated high-probability guarantees.

\paragraph{Remarks:}
Theorem \ref{main_res}  establishes sufficient conditions  for the asymptotic recovery of the auxiliary coefficient matrices $\textbf{X}_A$ and $\textbf{X}_B$, and the auxiliary latent relation matrix $\tilde{\textbf{S}}$, from the observation matrix $\textbf{R}$ (up to sign and permutation), together with explicit convergence rates and high-probability guarantees. 
The  asymptotic recovery of the auxiliary dictionary matrices is established as an intermediate result in the proof.
Under these conditions, the recovery errors decay geometrically with the refinement iterations.
The recovery conditions consist of a sparsity requirement and an initialisation requirement.  
Specifically, Eq. (\ref{main:sparsity_condition})  requires the underlying coefficient vectors to be sufficiently sparse, while
Eqs. (\ref{eq:final_dic_condA}) and (\ref{eq:final_dic_condB}) require sufficiently accurate initial auxiliary dictionary estimates. 
These initilisation conditions becomes less restrictive as the coefficient sparsity increases.
The problem constants  further reveal how the recovery conditions depend on the underlying generative model. 
In particular, all three conditions become less restrictive as  $ \frac{\mu_s}{\sqrt{d}}$ (through its contribution to $\mu_D$) and $\Delta$ decrease.
By Assumption \ref{LI},   smaller values of $\frac{\mu_s}{\sqrt{d}}$ correspond  to   weaker correlations among the latent components.  
Meanwhile, decreasing $\Delta$ relaxes the recovery conditions and tightens the recovery error bounds, but lowers the associated success probabilities, thereby revealing a trade-off between stronger recoverability and  stronger probabilistic guarantees.
%
%
More importantly, the error bounds in Eqs. (\ref{theo:coef_a_full}),   (\ref{theo:coef_b_full}), and   (\ref{theo:latent_s}) decrease geometrically with contraction factors $\eta \sqrt{s_A}$ or $\eta \sqrt{s_B}$, establishing that the convergence rate depends explicitly on the coefficient sparsity.
Increasing the coefficient sparsity  accelerates convergence and enlarges the admissible range of $\eta$.
The later in turn relaxes the recovery conditions. 
Consequently, this theorem provides a rigorous characterisation of the fundamental role of coefficient sparsity in matrix tri-factorisation.
In addition, the recovery error bounds in Eqs. (\ref{theo:coef_a_full}) and (\ref{theo:coef_b_full}), together with the success probabilities in Eqs. (\ref{eq:main_pa}) and (\ref{eq:main_pb}), establish sample complexity results that characterise how the recovery performance improves as more observations are collected, i.e., as $n$ and $m$ increase.

\subsubsection{Empirical Illustration}
We empirically validate the recovery guarantees established in Theorem \ref{main_res} through a series of experiments, in which Algorithm \ref{alg:rl} is used to estimate  $\textbf{X}_A$, $\textbf{X}_B$ and $\tilde{\textbf{S}}$ from  the observation  $\textbf{R}$. 
Specifically, we perform three sets of experiments to investigate how the following factors affect  the recovery errors: (1) the coefficient sparsity levels ($s_A$ and $s_B$), (2) the  initial errors of the auxiliary dictionary estimates ($\epsilon_{A_{0}}^D$ and $\epsilon_{B_{0}}^D$), and (3) the incoherence property  of the latent relation matrix  ($\frac{\mu_s}{\sqrt{d}}$).

To implement Algorithm \ref{alg:rl}, we employ the   FISTA algorithm \citep{Beck09} to solve the  Lasso  problem at Step \ref{alg2:step:Lasso} of Algorithm \ref{alg:dl}.
To focus on observing the effect of sparsity,  FISTA  is provided with the true coefficient sparsity level,  evaluating whether the algorithm   correctly recovers both the support  and numerical values of the nonzero auxiliary coefficients.
Before evaluating the recovery performance, we resolve the permutation and sign ambiguities by aligning the estimated dictionary atoms and coefficient vectors with the ground truth.
The optimal permutation is obtained by solving a linear assignment problem.
For example, the rows of $\tilde{\textbf{X}}_{A}$ are aligned with those of $\textbf{X}_{A}$ by solving  $\max_{\bm\Pi\in \{0,1\}^{k\times k}} \text{tr}\left( \tilde{\textbf{X}}_{A}\bm\Pi \textbf{X}_{A}^T \right)  = \text{tr}\left( \tilde{\textbf{X}}_{A}\textbf{X}_{A}^T \bm\Pi \right)$  using a mixed-integer linear programming solver.
The same procedure is applied to  $\tilde{\textbf{X}}_{B}$.
Recovery performance is evaluated after alignment  using the average relative coefficient recovery errors 
\begin{equation}
    E_{\textbf{X}_{A}} =\frac{1}{n}\sum_{i=1}^n \frac{\left\| \tilde{\textbf{X}}_{A}^{(i)}-  \textbf{X}_{A}^{(i)}\right\|_2}{\left\| \textbf{X}_{A}^{(i)} \right\|_2},\; E_{\textbf{X}_{B}} =\frac{1}{m}\sum_{i=1}^m \frac{\left\| \tilde{\textbf{X}}_{B}^{(i)}-  \textbf{X}_{B}^{(i)}\right\|_2}{\left\| \textbf{X}_{B}^{(i)} \right\|_2},
\end{equation}
and the average dictionary recovery error
\begin{equation}
    E_{\textbf{D}_{A/B}} =\frac{1}{k}\sum_{i=1}^k  \left\| \tilde{\textbf{D}}_{A/B}^{(i)}-  \textbf{D}_{A/B}^{(i)}\right\|_2. 
\end{equation}
We  evaluate the   accuracy of the recovered support  for each   coefficient vector using 
\begin{equation}
    A_{supp_{\textbf{X}_{A}}} =\frac{1}{n}\sum_{i=1}^n \frac{\left| supp\left(\textbf{X}_{A}^{(i)}\right) \cap supp\left(\tilde{\textbf{X}}_{A}^{(i)}\right) \right|}{s_A},\; A_{supp_{\textbf{X}_{B}}} =\frac{1}{m}\sum_{i=1}^m \frac{\left| supp\left(\textbf{X}_{B}^{(i)}\right) \cap supp\left(\tilde{\textbf{X}}_{B}^{(i)}\right) \right|}{s_B}.
\end{equation}
Finally,   the recovery error of the auxiliary latent relation matrix is assessed by   
\begin{equation}
    E_{\textbf{S}} = \frac{\left\|\tilde{\textbf{S}} -\textbf{S}\right\|_2}{\left\| \textbf{S}\right\|_2},
\end{equation}
where the estimated latent relation matrix is first permuted using the   permutation matrices obtained from the alignment of $\textbf{X}_{A}$ and $\textbf{X}_{B}$.

 \begin{figure}[t]
    \centering
    \subfigure[ error convergence, $s_A=s_B=3$]{
    \includegraphics[width=0.45\linewidth]{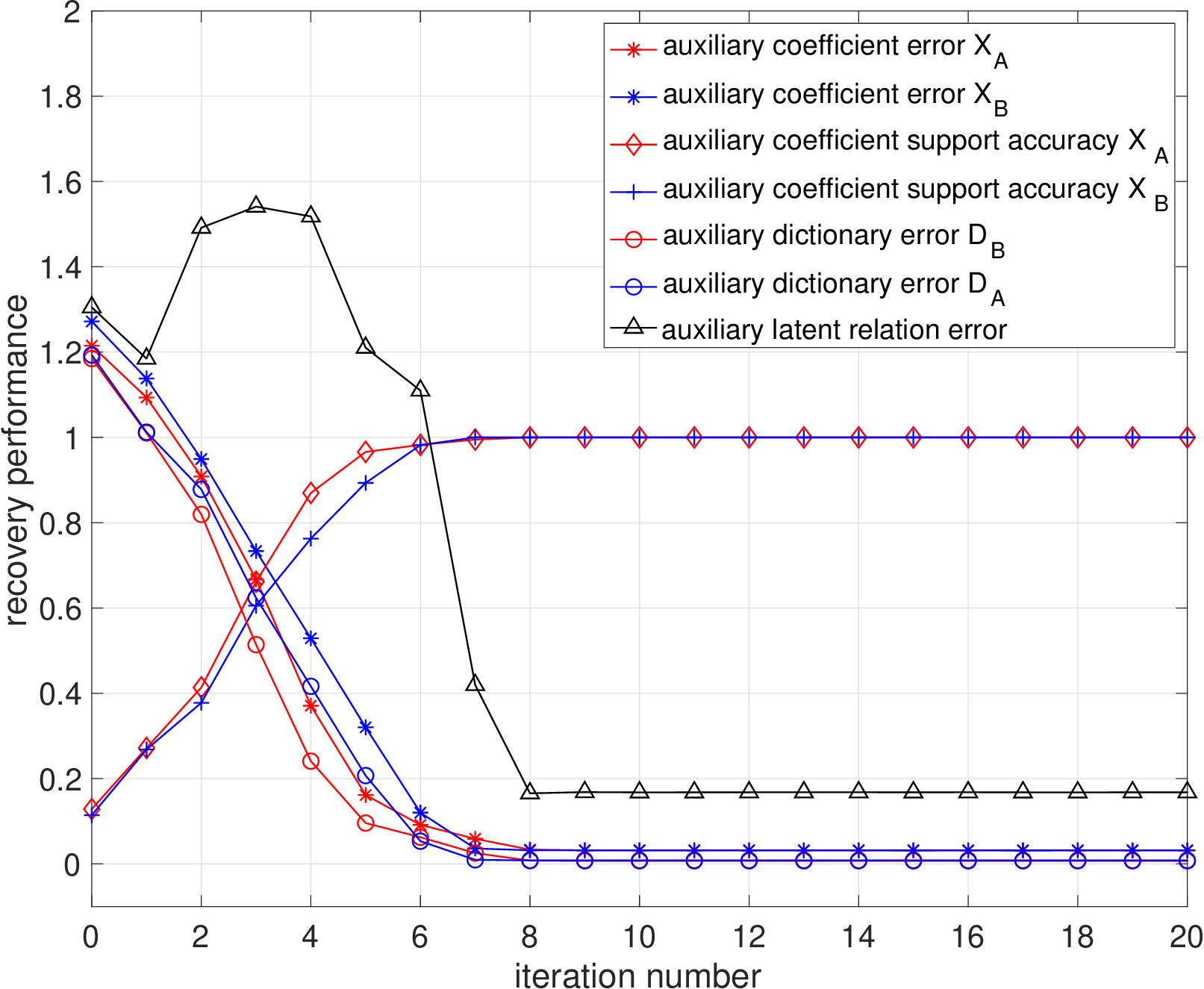}
    \label{fig:test_alg1}    
  }
   \hfill
\subfigure[error change over varying sparsity level ]{
    \includegraphics[width=0.45\linewidth]{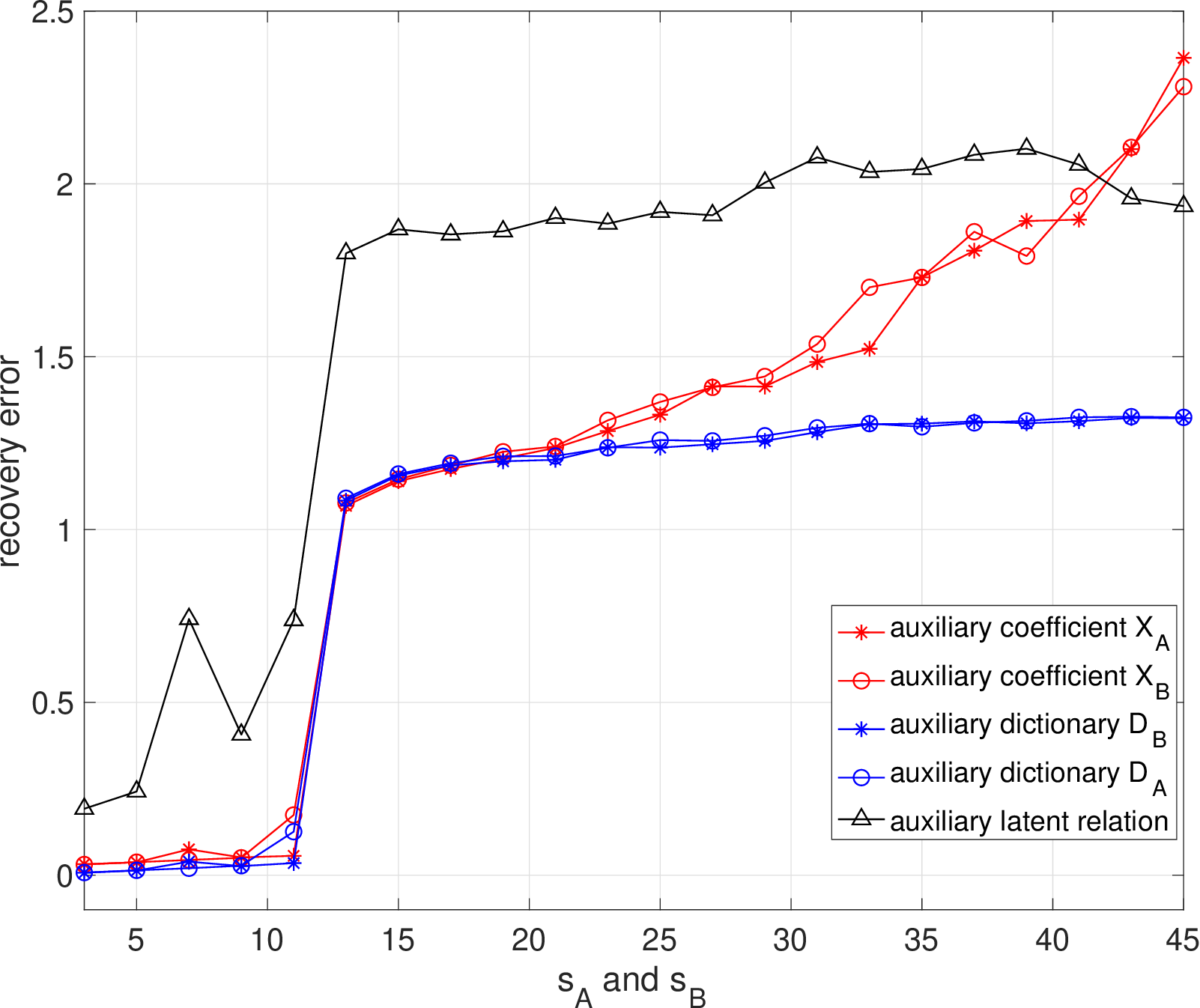}
    \label{fig:test_alg1_sparsity}    
  }
 \caption{  Illustration of (a) how the recovery errors and coefficient support accuracies converge for $s_A=s_B=3$,   (b)  how the final recovery errors and coefficient support accuracies change with varying sparsity levels. }     \label{fig:alg1_A}  
\end{figure}

 \begin{figure}[t]
    \centering
    \subfigure[ error convergence, $s_A=s_B=9$]{
    \includegraphics[width=0.45\linewidth]{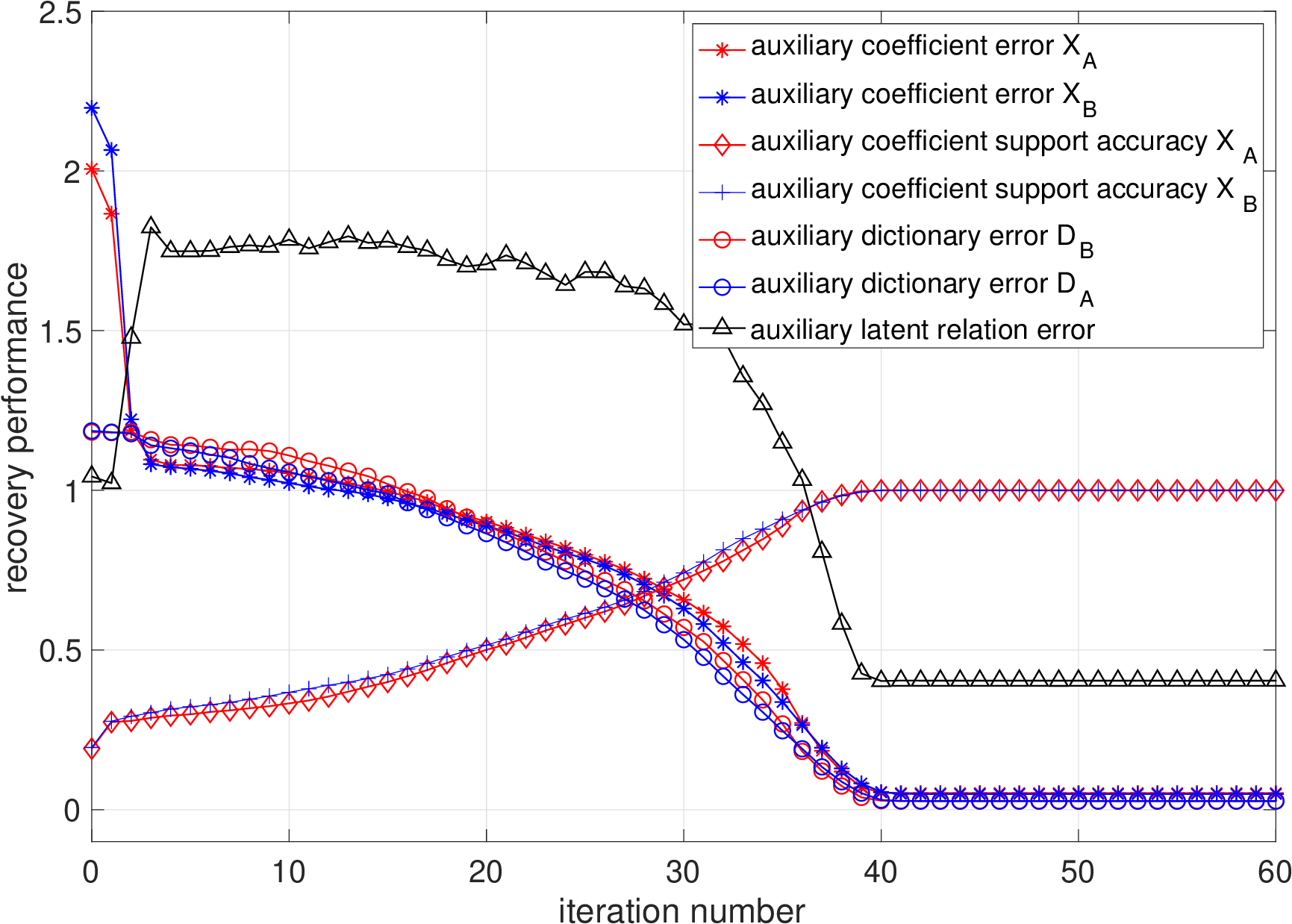}
    \label{fig:test_alg1_s9}    
  }
   \hfill
\subfigure[error convergence, $s_A=s_B=45$ ]{
    \includegraphics[width=0.45\linewidth]{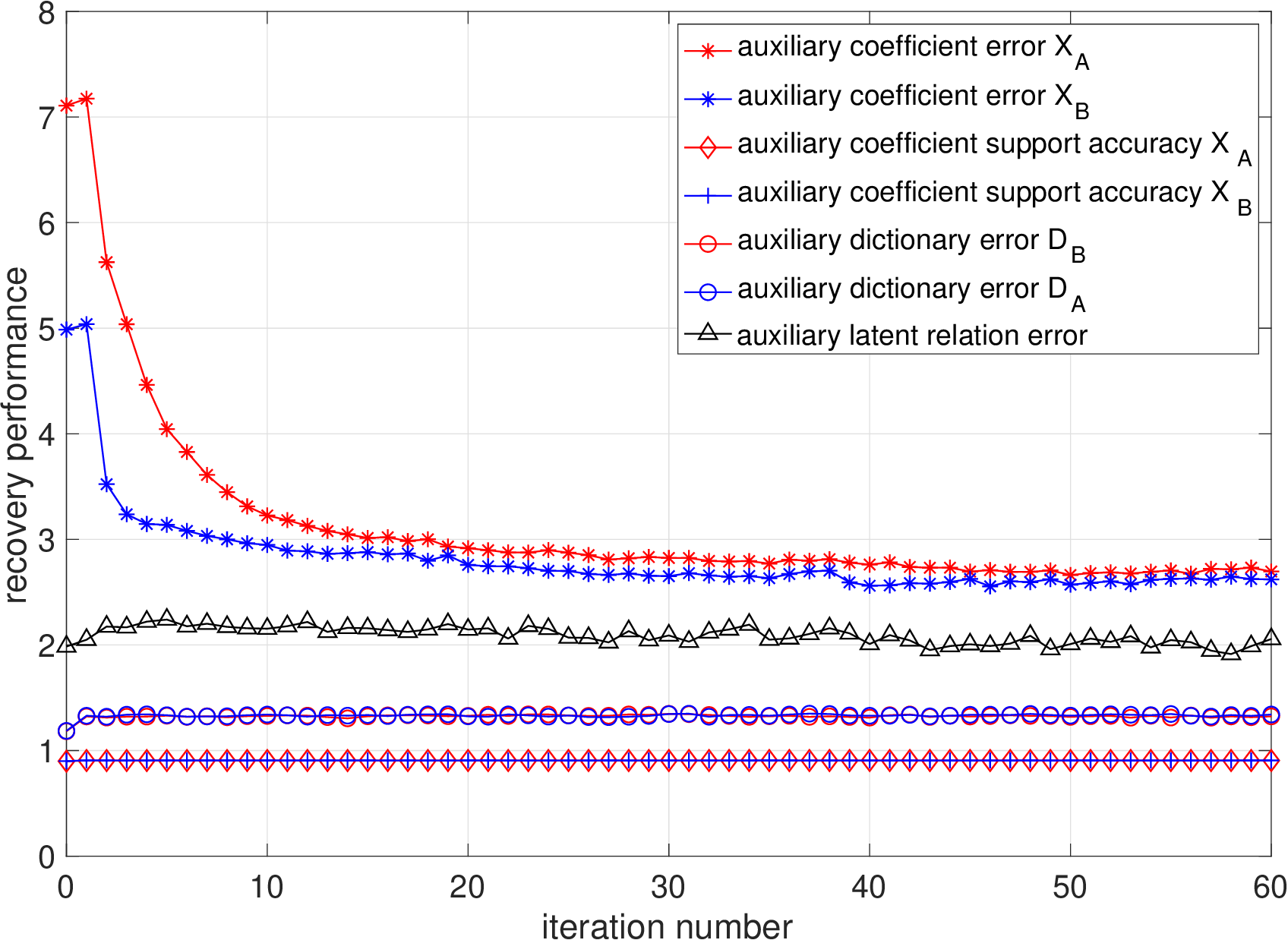}
    \label{fig:test_alg1_s45}    
  }
 \caption{  Illustration of  how the recovery errors and coefficient support accuracies converge for $s_A=s_B=9$ (a) and for   $s_A=s_B=45$  (b). }     \label{fig:alg1_B}  
\end{figure}

\paragraph{\textit{Experiment 1:}}
We investigate how coefficient sparsity affects the recovery performance of Algorithm \ref{alg:rl}.
Following the same  generative model used in the empirical illustration of  Section \ref{sec:prop_coef_aux},  we generate auxiliary coefficient matrices $\textbf{X}_{A}\in\mathbb{R}^{1500\times 50}$  and $\textbf{X}_{B}\in\mathbb{R}^{1200\times 50}$ with $s_A=s_B=3$, auxiliary dictionary matrices $\textbf{D}_{A}\in\mathbb{R}^{50\times 50}$  and $\textbf{D}_{B}\in\mathbb{R}^{50\times 50}$, and the auxiliary latent relation matrix  $\tilde{\textbf{S}} \in\mathbb{R}^{50\times 50}$.
 Algorithm \ref{alg:rl}  is initialised using a random   matrix $\textbf{D}_0 \in\mathbb{R}^{50 \times 50}$, whose entries are sampled independently from the standard normal distribution and whose rows are subsequently normalised to have unit   $l_2$-norm.
 The algorithm is then optimised  for $T=20$ iterations.
The experiment is repeated    5 times.
The averaged recovery errors and coefficient support accuracies are reported in Figure \ref{fig:test_alg1}.
Figure \ref{fig:test_alg1}  shows that the recovery errors of the auxiliary coefficient and dictionary matrices decreases rapidly, approaching zero after approximately 8 iterations, while the coefficient support accuracies approach $100\%$.
In contrast, the recovery error of the auxiliary latent relation matrix converges to approximately $0.17$.

To investigate the effect of coefficient sparsity,  we increase the iteration number to $T=200$ and vary the sparsity level by setting $s_A=s_B$   from 3 to 45  in increments of 2.
For each sparsity level,   the experiment is repeated  5 times, where the observation matrix $\textbf{R}$ is regenerated independently.
Figure \ref{fig:test_alg1_sparsity} reports the average recovery performance over the 5 trails.
The results show that the recovery performance deteriorates substantially once $s_A=s_B$ exceeds approximately 12.
This is consistent with the theoretical prediction that the recovery becomes more challenging as the coefficient vectors become less sparse.
To further illustrate this behavior,  Figure \ref{fig:alg1_B} compares the convergence trajectories for $s_A=s_B=9$ and $s_A=s_B=45$.
Compared with the case of $s_A=s_B=3$ shown in Figure \ref{fig:test_alg1}, Figure \ref{fig:test_alg1_s9} converges much more slowly when $s_A=s_B=9$.
Although the coefficient and dictionary errors eventually stabilise for $s_A=s_B=45$, they remain substantially larger than those observed under higher sparsity, indicating that the algorithm fails to accurately recover the underlying factor matrices.
When $s_A=s_B=45$,  the supports contain nearly all  indices; therefore,  the support recovery accuracy is no longer an informative performance metric.

 \begin{figure}[t]
    \centering
    \subfigure[initial error]{
    \includegraphics[width=0.45\linewidth]{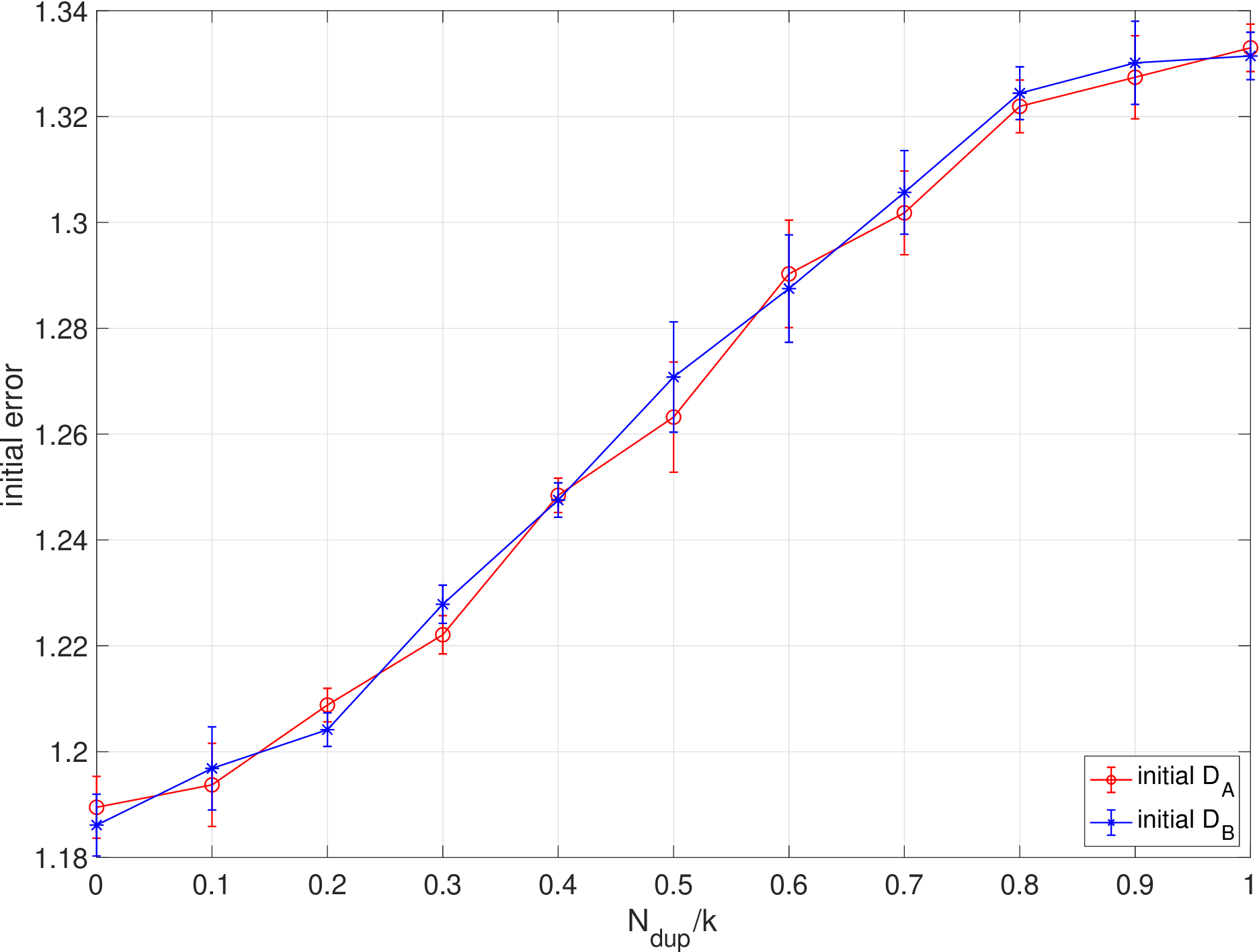}
    \label{fig:ini_quality}    
  }
   \hfill
\subfigure[recovery error]{
    \includegraphics[width=0.45\linewidth]{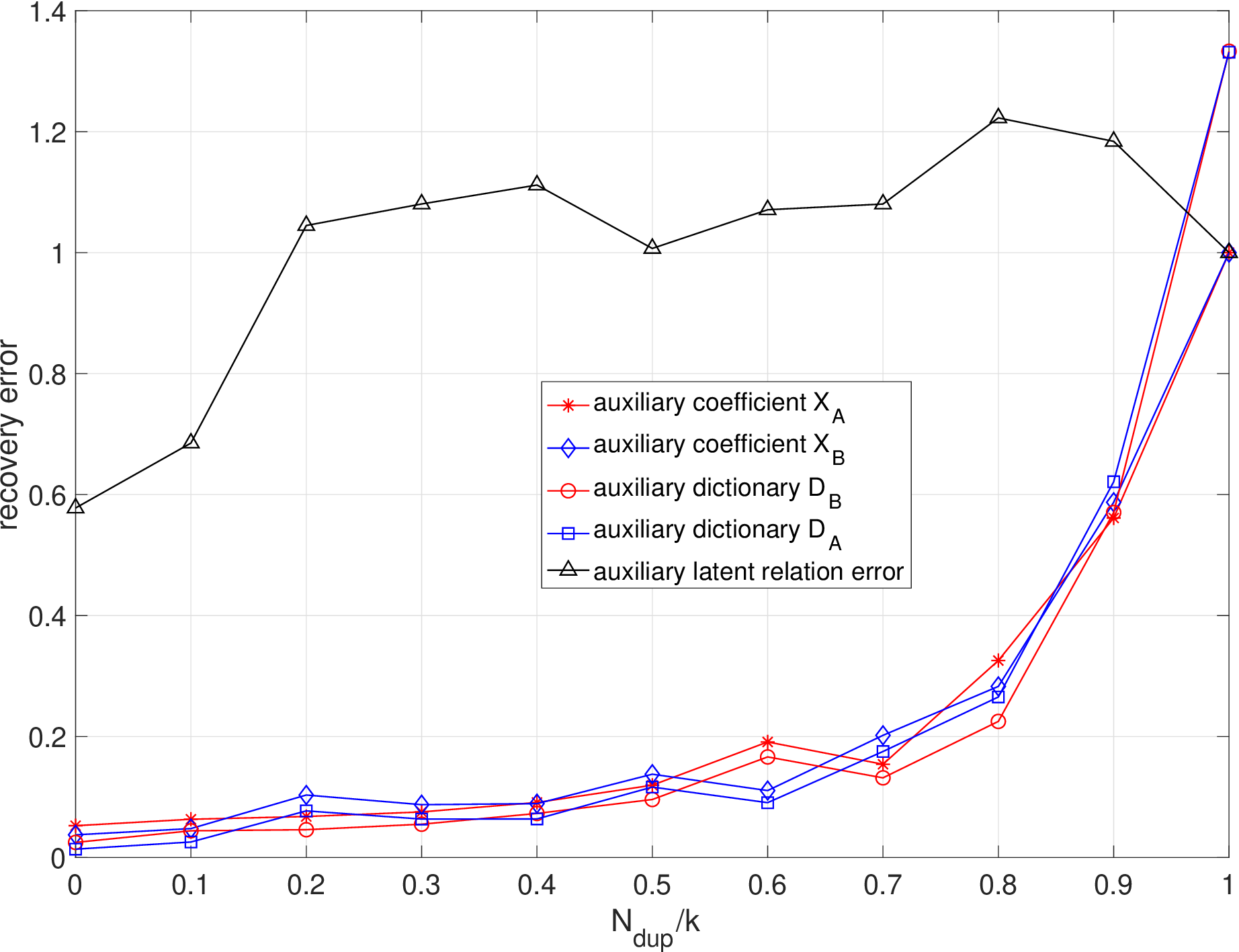}
    \label{fig:alg1_ini}    
  }
 \caption{ Illustration of the effect of the number of duplicated initial atoms, measured by $\frac{N_{\text{dup}}}{k}$, on (a) the initial auxiliary dictionary error and (b) the recovery errors of the estimated auxiliary matrices.}     \label{fig:alg1_C} 
\end{figure}

\begin{figure}[t]
    \centering
       \subfigure[error convergence, $d=40$ ]{
    \includegraphics[width=0.45\linewidth]{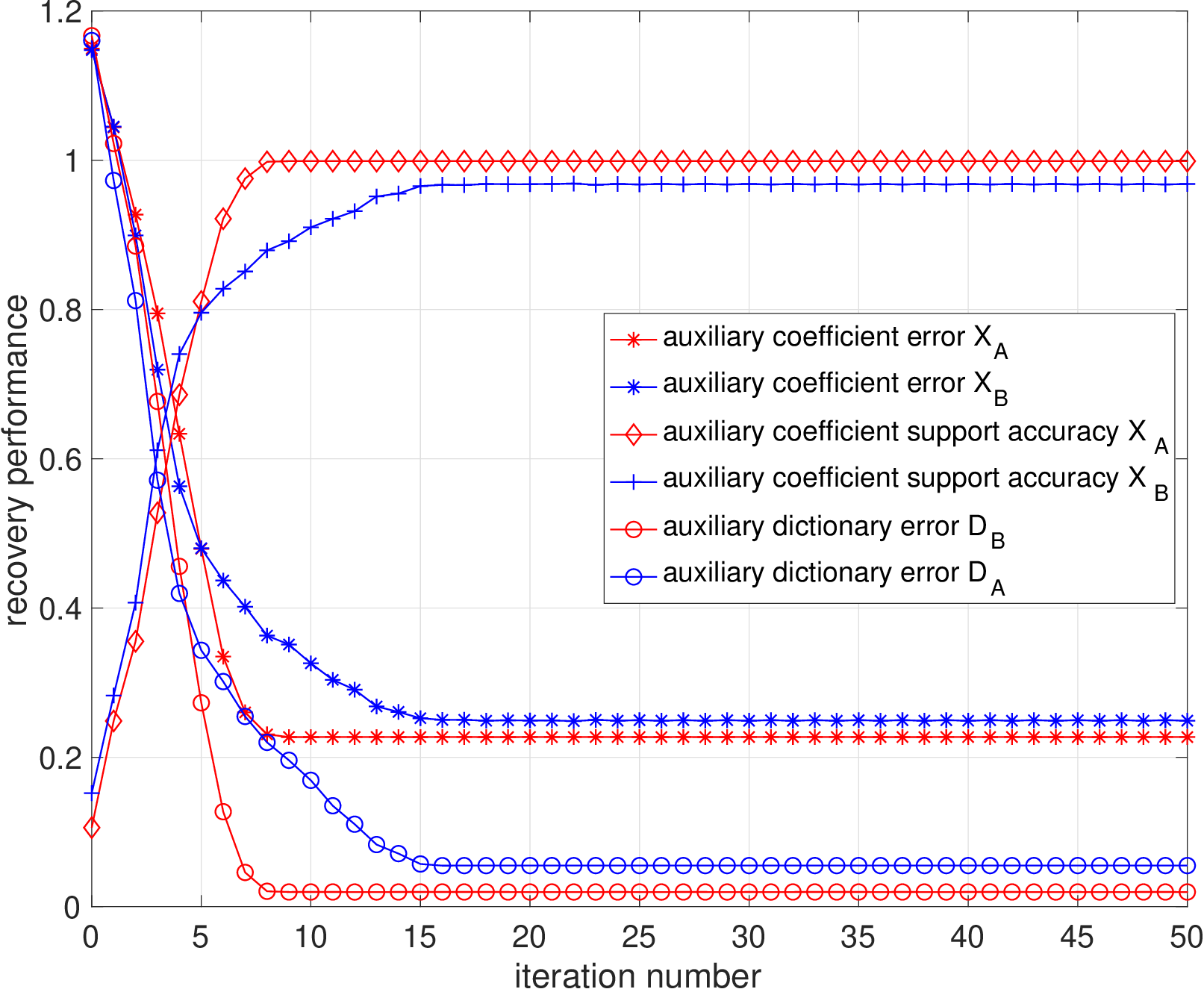}
    \label{fig:test_alg1_d40}    
  } 
 \hfill 
   \subfigure[ error change ]{
    \includegraphics[width=0.45\linewidth]{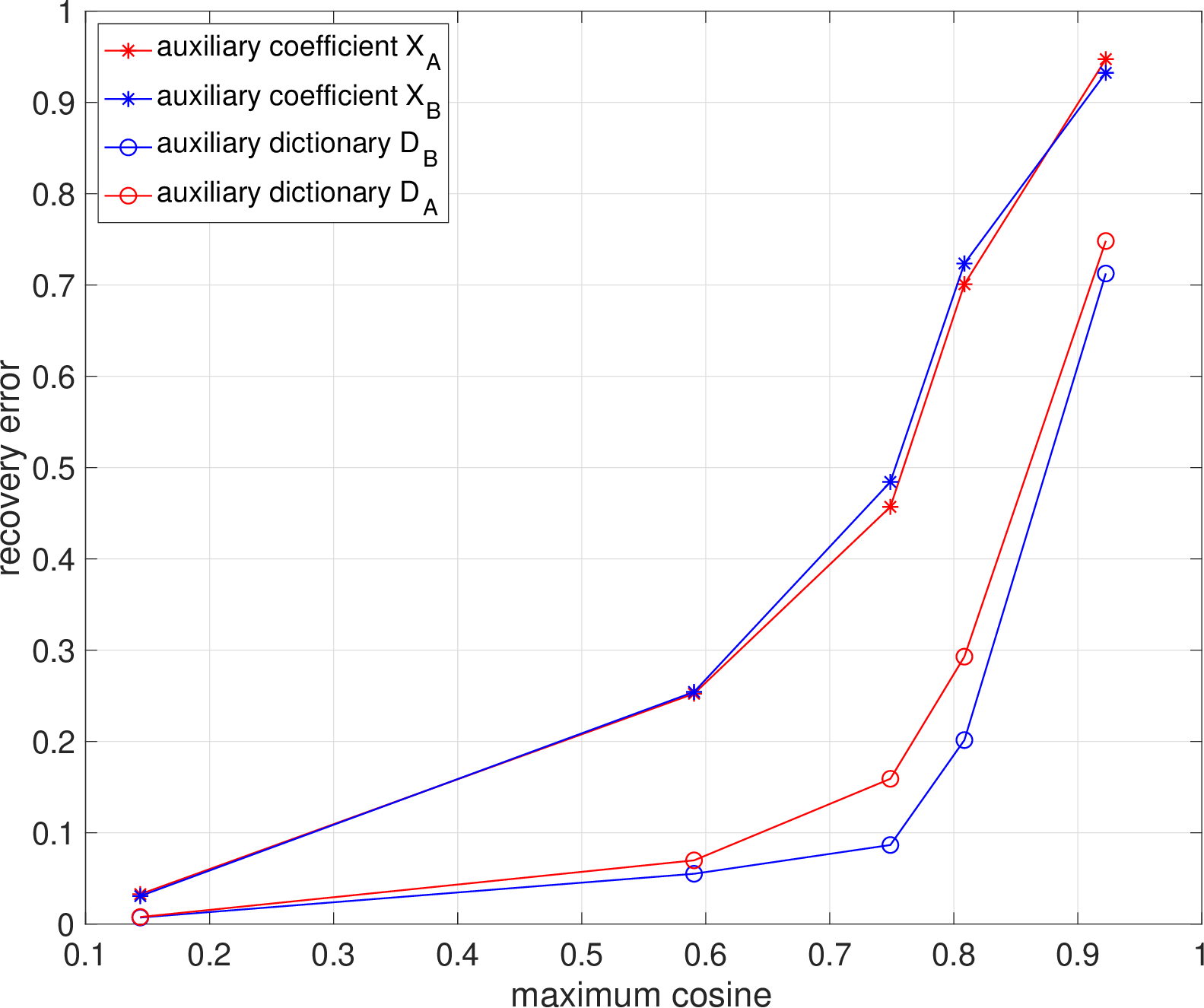}
    \label{fig:test_alg1_incoherence}    
  }
 \caption{  (a)  Illustration of  how the coefficient and dictionary recovery errors  and coefficient support accuracies converge for $d=40$.  (b) Illustration of  how the final coefficient and dictionary recovery errors change with increasing incoherence of $\textbf{S}$. }     \label{fig:alg1_D}  
\end{figure}

\paragraph{\textit{Experiment 2:}}

We investigate how the quality of the initial auxiliary dictionary estimates affects the recovery performance.
Fixing the sparsity level at $s_A=s_B=5$, we vary  the initialisation quality  by introducing  $N_{\text{dup}}$ duplicated atoms into the initial dictionary $\textbf{D}_0$.
Starting from a randomly generated $\textbf{D}_0$ constructed as in the previous experiment, we  randomly select its $N_{\text{dup}}$ rows and replace them with a common vector whose entries are identical and whose $l_2$-norm is normalised to one.
Consequently, increasing   $N_{\text{dup}}$ reduces the quality  of the dictionary initialisation.
We vary the duplication ratio $\frac{N_{\text{dup}}}{k}$ from $0$ to $1$  in increments of $0.1$.
For each duplication ratio, the experiment is repeated  5 times, where both the observation matrix  and the initial dictionaries are regenerated independently.
The average  performance over the 5 trials is reported.

To quantify the initilisation quality, we first align the  initial auxiliary dictionary atoms  with the  ground truth by resolving the permutation and sign ambiguities, and then compute the dictionary recovery error.
 Figure \ref{fig:ini_quality} reports the average initial dictionary error, together with its standard deviation  across the 5 trials.
As expected, increasing the number of duplicated atoms  progressively degrades the initialisation quality.
Figure \ref{fig:alg1_ini} reports the recovery performance obtained from these initialisations using Algorithm \ref{alg:rl}.
The recovery error of the auxiliary latent relation matrix $\tilde{\textbf{S}}$ increases noticeably at $\frac{N_{\text{dup}}}{k} =0.2$, whereas the recovery errors of the auxiliary coefficient and dictionary matrices remain relatively stable until $\frac{N_{\text{dup}}}{k} \approx 0.6$ and they increase sharply after $\frac{N_{\text{dup}}}{k} = 0.8$. 
These results suggest that Algorithm~\ref{alg:rl} is relatively robust to moderate initialisation errors, but its recovery performance deteriorates substantially once the initialisation quality falls below a certain threshold.

\paragraph{\textit{Experiment 3:}}

We investigate how the incoherence property of the latent relation matrix affects the recovery performance.
By definition, the incoherence constant $\frac{\mu_s}{\sqrt{d}}$ of $\textbf{S}$ is approximated by  the maximum off-diagonal absolute cosine similarity among the rows and columns of $\textbf{S}$.
In the previous experiments,  the latent relation matrix was generated as $\textbf{S} =  \textbf{O}  + 0.2\textbf{L}$, following the construction  in  Section \ref{sec:prop_coef_aux}.
This produces a full-rank  matrix  whose rows and columns exhibit relatively weak pairwise correlations,  resulting in a small incoherence constant.
To generate latent relation matrices with larger incoherence constants,  we instead construct rank-deficient matrices as     $\textbf{S} =  \textbf{U}\textbf{V}^T  $, where $\textbf{U}\in \mathbb{R}^{50\times d}$ and $\textbf{V}\in \mathbb{R}^{50\times d}$ contain  the first $d$ left and right singular vectors, respectively, of a random orthonormal matrix.
As the rank $d$ decreases, the rows and columns of $\textbf{S}$ become  increasingly correlated,  leading to larger off-diagonal absolute cosine similarities,  and hence a larger incoherence constant.
Fixing $s_A=s_B=3$, we vary the rank $d$ from 40 to 10 with decrements of  10.
Together with the full-rank construction $\textbf{O}  + 0.2\textbf{L}$, this yields a collection of   generative models    with increasing incoherence constant of $\textbf{S}$.
For each configuration, we repeat the experiment  independently (including  data generation and running Algorithm \ref{alg:rl}) 5 times, and report the average recovery performance.

Figure \ref{fig:test_alg1_d40} illustrates the convergence of the recovery errors for the auxiliary coefficient and latent relation matrices when $d=40$, corresponding to an approximate incoherence constant of 0.59.
Compared with the full-rank case shown in Fig \ref{fig:test_alg1}, whose approximate incoherence constant is 0.14, the recovery errors converge more slowly and stabilise at higher errors.  
 Figure \ref{fig:test_alg1_incoherence} summarises  the final recovery errors of the  auxiliary coefficient and dictionary matrices, as the approximate incoherence constant increases from 0.14 to 0.92.
The recovery errors increase substantially with the incoherence constant, demonstrating that stronger correlations among the latent components make the auxiliary tri-factorisation problem increasingly difficult.
This observation is consistent with Theorem \ref{main_res}, which predicts that smaller values of $\frac{\mu_s}{\sqrt{d}}$ relax the recovery conditions and thereby improve the recoverability of the auxiliary factor matrices. 

\subsection{Spectral Approximation of Auxiliary Dictionaries}

Unlike Algorithm  \ref{alg:dl}, which employs an optimisation procedure to iteratively refine the dictionary estimates, Algorithm \ref{alg:ini} performs a one-step estimation that directly approximates the auxiliary dictionary matrices   from the observation $\textbf{R}$ through a sequence of SVD operations.
In the following theorem, we derive an upper bound on its estimation error and establish the conditions under which this bound holds.
To state the theorem, we define $N_u = \min_{ij} \left|N_{\rho_u}
\left(\textbf{Y}_U^{(i)}, \textbf{Y}_U^{(j)}, Y_U \right)\right|$ and $N_v = \min_{ij} \left|N_{\rho_v}
\left(\textbf{Y}_V^{(i)}, \textbf{Y}_V^{(j)}, Y_V \right)\right|$, which denote the minimum sizes of the   shared $\rho$-neighbour sets  over  all pairs of rows in $\textbf{Y}_U$ and $\textbf{Y}_V$.
Here $Y_U$ and  $Y_V$  denote  the sets of rows in $\textbf{Y}_U$ and $\textbf{Y}_V$, respectively.

\begin{theorem}[Spectral Dictionary Estimation] \label{main_res2}
Suppose Assumptions \ref{SC}-\ref{LI} hold. 
Given $0< \Delta<1$, suppose
 \begin{equation}
\label{s_AB_condition1}
  s_A \leq   \frac{ {M^{(A)}_{\min}}  l_s     }{ {M^{(A)}_{\max}}u_s} \sqrt{\frac{ 1- \Delta     }{2 \mu_D(1+ \Delta )}},\;  s_B \leq   \frac{ {M^{(B)}_{\min}}  l_s     }{ {M^{(B)}_{\max}}u_s} \sqrt{\frac{ 1- \Delta     }{2 \mu_D(1+ \Delta )}}.
\end{equation}
Suppose Algorithm \ref{alg:ini} is applied with $\rho_p =\frac{62}{64}$, where the  correlation thresholds $\rho_u$ and $\rho_v$  for the inputs $\textbf{Y}_u$  and $\textbf{Y}_v$, respectively, satisfy 

\begin{align}
 \label{eq:threshold_rho_u}
& \frac{ {M^{(A)}_{\max}}^2  u_s^2\sigma_B^2 (1+ \Delta ) \mu_D s_Bs_A^2 }{nk } \leq \rho_u  \leq  \frac{ {M^{(A)}_{\min}}^2  l_s^2\sigma_B^2 (1- \Delta ) s_B  }{nk } - \frac{ {M^{(A)}_{\max}}^2  u_s^2\sigma_B^2 (1+ \Delta ) \mu_D s_Bs_A^2 }{nk }, \\
\label{eq:threshold_rho_v}
& \frac{{M^{(B)}_{\max}}^2  u_s^2\sigma_A^2 (1+ \Delta ) \mu_D s_As_B^2 }{mk } \leq \rho_v \leq  \frac{ {M^{(B)}_{\min}}^2  l_s^2\sigma_A^2 (1- \Delta ) s_A  }{mk }-\frac{{M^{(B)}_{\max}}^2  u_s^2\sigma_A^2 (1+ \Delta ) \mu_D s_As_B^2 }{mk }.
\end{align}
Given  $0<\Delta_N <1-\frac{\max\left(s_A^3, s_B^3\right)}{k}$ and $\gamma\leq \frac{1}{64}$,   the auxiliary dictionary estimates produced using Algorithm \ref{alg:ini}   satisfy
\begin{equation}
\label{theo:alg3_res}
     \epsilon_{A}^D    \leq \frac{\theta_{14}^{(B)}}{   1-\frac{s_B^3}{k} -\Delta_N  } \sqrt{\frac{s_B}{k}}, \; \epsilon_{B}^D    \leq \frac{\theta_{14}^{(A)}}{   1-\frac{s_A^3}{k} -\Delta_N  } \sqrt{\frac{s_A}{k}},
\end{equation}
with probabilities at least $p_v$ and $p_u$, respectively, where
\begin{align}
\label{alg3:prob_pv}
    p_v = \; & 1- 2ke^{-\frac{C_A\Delta^2  \sigma_A^2n}{k {M_{\max}^{(A)}}^2}}-de^{- \frac{ C_V \Delta_Y^2\sigma_B^2\left| I_V \right| \left(1+ \mu_D(d-1)\right)}{ {M_{\max}^{(B)}}^2( 1+ s_B\mu_D)}}-e^{-2\Delta_N^2\left|N_{\rho_v}\right|}-2e^{- 2\left|N_{\rho_v}\right|(\left|N_{\rho_v}\right|-1)\gamma^2},\\
\label{alg3:prob_pu}
    p_u = \; & 1- 2ke^{-\frac{C_B\Delta^2  \sigma_B^2m}{k {M_{\max}^{(B)}}^2}}-de^{- \frac{ C_U \Delta_Y^2\sigma_A^2\left| I_U \right| \left(1+ \mu_D(d-1)\right)}{ {M_{\max}^{(A)}}^2( 1+ s_A\mu_D)}}-e^{-2\Delta_N^2\left|N_{\rho_u}\right|}-2e^{- 2\left|N_{\rho_u}\right|(\left|N_{\rho_u}\right|-1)\gamma^2}.
\end{align}
\end{theorem}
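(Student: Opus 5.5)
The plan is to follow the template of the spectral initialisation analysis of \citet{Aga17} and specialise each ingredient to the auxiliary problems. We use the properties of $\textbf{X}_A,\textbf{X}_B,\textbf{D}_A,\textbf{D}_B,\textbf{Y}_U,\textbf{Y}_V$ from Section \ref{sec:matrix_res}. We treat $\textbf{Y}_U=\textbf{X}_A\textbf{D}_B$, which yields $\epsilon_B^D$; the $\textbf{Y}_V$ case is symmetric with $A\leftrightarrow B$ swapped.

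\emph{Step 1: correlation structure.} For two rows, $\textbf{Y}_U^{(i)}{\textbf{Y}_U^{(j)}}^T=\sum_{p,q}(X_A)_{ip}(X_A)_{jq}\langle\textbf{D}_B^{(p)},\textbf{D}_B^{(q)}\rangle$. We split this sum into the diagonal terms $p=q\in supp(\textbf{X}_A^{(i)})\cap supp(\textbf{X}_A^{(j)})$ and the off-diagonal terms. The entry bounds $m_A\le|(X_A)_{ij}|\le M_A$ from Corollary \ref{RowLengthFhat} and the coherence bound $\mu_D$ of Lemma \ref{MC_DaDb} give:
\begin{itemize}
\item if the supports are disjoint, $|\textbf{Y}_U^{(i)}{\textbf{Y}_U^{(j)}}^T|\le M_A^2\mu_D s_A^2$, which is the lower endpoint of Eq. (\ref{eq:threshold_rho_u});
\item if the supports share exactly one atom, $|\textbf{Y}_U^{(i)}{\textbf{Y}_U^{(j)}}^T|\ge m_A^2-M_A^2\mu_D s_A^2$, which is the upper endpoint.
\end{itemize}
Condition (\ref{s_AB_condition1}) ensures $m_A^2>2M_A^2\mu_Ds_A^2$, so the admissible interval for $\rho_u$ is nonempty. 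With such a $\rho_u$, an edge of the $\rho_u$-correlation graph is present exactly when the two supports intersect. The sign cancellations that can occur when the supports share several atoms are handled as in \citet{Aga17}.

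\emph{Step 2: unique-intersection detection.} For a connected pair with a unique shared atom $p$, the shared neighbour set consists mostly of rows whose support contains $p$. These rows are pairwise connected, giving an edge density close to one. If the pair shares more than one atom, or the neighbours come from different atoms, the density is at most roughly $1/2$. We bound the fraction of neighbours whose support does not contain $p$ by $s_A^3/k$, since the random supports make triple overlaps unlikely. Its fluctuation is controlled by a Hoeffding bound with deviation $\Delta_N$, which gives the term $e^{-2\Delta_N^2|N_{\rho_u}|}$. The edge-count score $r$ is a U-statistic, and a Hoeffding bound on it with deviation $\gamma\le 1/64$ gives the term $2e^{-2|N|(|N|-1)\gamma^2}$. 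Together these show that the threshold $\rho_p=62/64$ separates the two cases.

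\emph{Step 3: atom estimation error.} For a detected pair with shared atom $p$, we write $\textbf{L}=\sum_{\bm y\in S}\bm y^T\bm y$ and decompose it as
\[
\textbf{L}=\Big(\sum (X_A)_{ip}^2\Big)\,{\textbf{D}_B^{(p)}}^T\textbf{D}_B^{(p)}+\textbf{E}.
\]
The signal eigenvalue is at least $(1-\frac{s_A^3}{k}-\Delta_N)|S|m_A^2$. The perturbation $\textbf{E}$ collects the contributions of the other atoms in each support, together with the rows not containing $p$. We bound $\|\textbf{E}\|_2$ using the submatrix spectral norm bound of Lemma \ref{SSN_YuYv} with $|I_U|=|S|$; this is where $\hat\gamma$ and the factor $(1+4\hat\gamma)$ in $\theta_{14}^{(A)}$ arise. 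Wedin's or Davis--Kahan's $\sin\theta$ theorem then bounds $\min_{z=\pm1}\|z\bm d-\textbf{D}_B^{(p)}\|_2$ by $\|\textbf{E}\|_2$ over the eigengap. After substituting $M_A,m_A$, this ratio becomes $\theta_{14}^{(A)}\sqrt{s_A/k}/(1-\frac{s_A^3}{k}-\Delta_N)$.

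The separation parameter $\xi$ guarantees that each atom is kept exactly once, so the permutation alignment is well defined. A union bound over the events of Lemmas \ref{RowLengthF}, \ref{MC_DaDb} and \ref{SSN_YuYv} and the two Hoeffding events gives $p_u$. The factor $2k$ comes from using Lemma \ref{RowLengthF} twice.

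The main obstacle is Step 3, specifically controlling $\|\textbf{E}\|_2$ sharply enough to obtain the $\sqrt{s_A/k}$ rate rather than an $O(1)$ bound. The cross-atom part must be bounded by a submatrix spectral norm of the shape in Lemma \ref{SSN_YuYv}, not by summing row norms. The main point to verify is that conditioning on the shared atom $p$ does not break the independence that Lemma \ref{SSN_YuYv} requires.
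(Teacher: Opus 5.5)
Your Steps 1 and 2 follow the paper. Step 1 is Lemma \ref{threshold_YuYv}; note that you need, and only obtain, the two implications of Assumption \ref{threshold_rho}, not ``an edge iff the supports intersect''. Step 2 is what the paper imports directly as Lemmas \ref{UniqueAtom} and \ref{AtomSharing}.

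Step 3 is genuinely different. The paper (Lemma \ref{singular_YuYv}, written for $\textbf{Y}_V$) uses no $\sin\theta$ theorem:
\begin{itemize}
\item it writes $\min_z\|\textbf{D}_A^{(p)}-z\textbf{v}\|_2^2\le 2-2(\textbf{D}_A^{(p)}\textbf{v}^T)^2$;
\item it splits $\textbf{L}_V$ over $\tilde N_{\rho_v}$ and its complement, and compares an upper and a lower bound on $\textbf{v}\textbf{L}_V\textbf{v}^T=\|\textbf{L}_V\|_2$;
\item this gives $(\textbf{D}_A^{(p)}\textbf{v}^T)^2\ge 1-T_1-T_2-T_3$, with the $T_i$ controlled by Lemmas \ref{SSN_YuYv} and \ref{AtomSharing}, i.e.\ by exactly your inputs.
\end{itemize}
That argument is elementary and needs no eigengap. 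However, it yields $\sin^2\theta$ of order $\|\textbf{E}\|_2/\lambda$, not $\sin\theta$ of that order, so it only bounds the \emph{squared} atom error. The paper then reads this as a bound on the unsquared $\epsilon_D$ of Eq. (\ref{revised_dic_error}), which does not follow once the right-hand side is below one.

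Your route avoids this. Take $\sin\theta\le 2\|\textbf{E}\|_2/\lambda$ (no gap condition needed) together with $\min_z\|z\bm d-\textbf{D}_B^{(p)}\|_2\le\sqrt2\sin\theta$. With the paper's split, $\lambda\ge|\tilde N_{\rho_u}|m_A^2$ and $\|\textbf{E}\|_2/\lambda\le\frac12(T_1+T_2+T_3)$. Hence the unsquared error is at most $\sqrt2(T_1+T_2+T_3)$. This is below the quantity $2(T_1+T_2+T_3)$ that the paper bounds by the right-hand side of Eq. (\ref{theo:alg3_res}). So your version actually proves that inequality as stated, for $\epsilon_B^D$ itself.

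Some corrections to your sketch:
\begin{itemize}
\item A Hoeffding bound for an order-two U-statistic on $|N_{\rho_u}|$ rows only gives an exponent of order $\lfloor|N_{\rho_u}|/2\rfloor\gamma^2$. The term $2e^{-2|N_{\rho_u}|(|N_{\rho_u}|-1)\gamma^2}$ in $p_u$ therefore cannot come from your argument; it must be imported from Lemma \ref{UniqueAtom}. That lemma also requires $s_A^3\le k/1536$, a hypothesis absent from the theorem.
\item You claim that $\xi$ keeps each atom exactly once and that every atom is found, which is needed for the maximum defining $\epsilon_B^D$. This requires an explicit condition on $\xi$ and an existence/union-bound argument over atoms and pairs, which neither you nor the paper supplies.
\item Your worry about applying Lemma \ref{SSN_YuYv} is justified: the rows are selected by correlation with the data, and the residual matrix is not a row submatrix of $\textbf{Y}_U$. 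The paper does not address this either.
\end{itemize}
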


\paragraph{Proof Sketch:} 
The complete proof  is provided  in Appendix \ref{app:main_proof_alg3} and proceeds in three main steps. 
First, we establish  Lemma  \ref{threshold_YuYv}, which derives the correlation threshold requirements in Eqs. (\ref{eq:threshold_rho_u}) and   (\ref{eq:threshold_rho_v}), by building on   Lemma \ref{UniqueAtom} that characterises the conditions under which  the unique-intersection threshold $\rho_p=\frac{62}{64}$ is applicable.
Next, we establish Lemma   \ref{singular_YuYv}, which bounds  the recovery error  of the shared atom associated with   each unique-intersection pair.
Finally, the theorem follows  by combining these two   lemmas.

\paragraph{Remarks:} 
Theorem \ref{main_res2} complements Theorem \ref{main_res} by  providing a different algorithmic perspective on the role of sparsity.
It characterises how sparsity affects the spectral approximation of the auxiliary dictionary matrices produced by Algorithm \ref{alg:ini}.
The sparsity condition in Eq. (\ref{s_AB_condition1}) guarantees that the admissible ranges for the correlation thresholds in Eqs. (\ref{eq:threshold_rho_u}) and (\ref{eq:threshold_rho_v}) are non-empty.
This condition  becomes less restrictive when the coefficient amplitudes are more uniform, corresponding to  larger ratios $ \frac{ {M^{(A)}_{\min}} }{ {M^{(A)}_{\max}} }$ and $ \frac{ {M^{(B)}_{\min}} }{ {M^{(B)}_{\max}} }$;  when the row and column norms of the latent relation matrix are more balanced, corresponding to  a larger ratio $\frac{l_s}{u_s}$;   and when  the rows and columns of the  latent relation matrix are less correlated, corresponding to  a smaller value of $\frac{\mu_d}{\sqrt{d}}$ (congtributing to  $\mu_D$).
Furthermore, Eq. (\ref{theo:alg3_res}) explicitly quantifies the effect of sparsity on the approximation error,  showing that the upper bounds on $\epsilon_A^D$ and $\epsilon_B^D$ decrease as $s_A$ and $s_B$ decrease.
In contrast to the sample complexity results in Theorem \ref{main_res}, the error bounds in Eq. (\ref{theo:alg3_res}) do not depend on the observation numbers $n$ and $m$.
Rather, the numbers of observations influence only the success probabilities in  Eqs. (\ref{alg3:prob_pv}) and (\ref{alg3:prob_pu}), through $n$, $m$,  also $\left|N_{\rho_u}\right|$ and $\left|N_{\rho_v}\right|$.

\begin{figure}[t]
    \centering
\subfigure[$\rho_v$-correlation graph,  $s_A=s_B=2$]{
    \includegraphics[width=0.42 \linewidth]{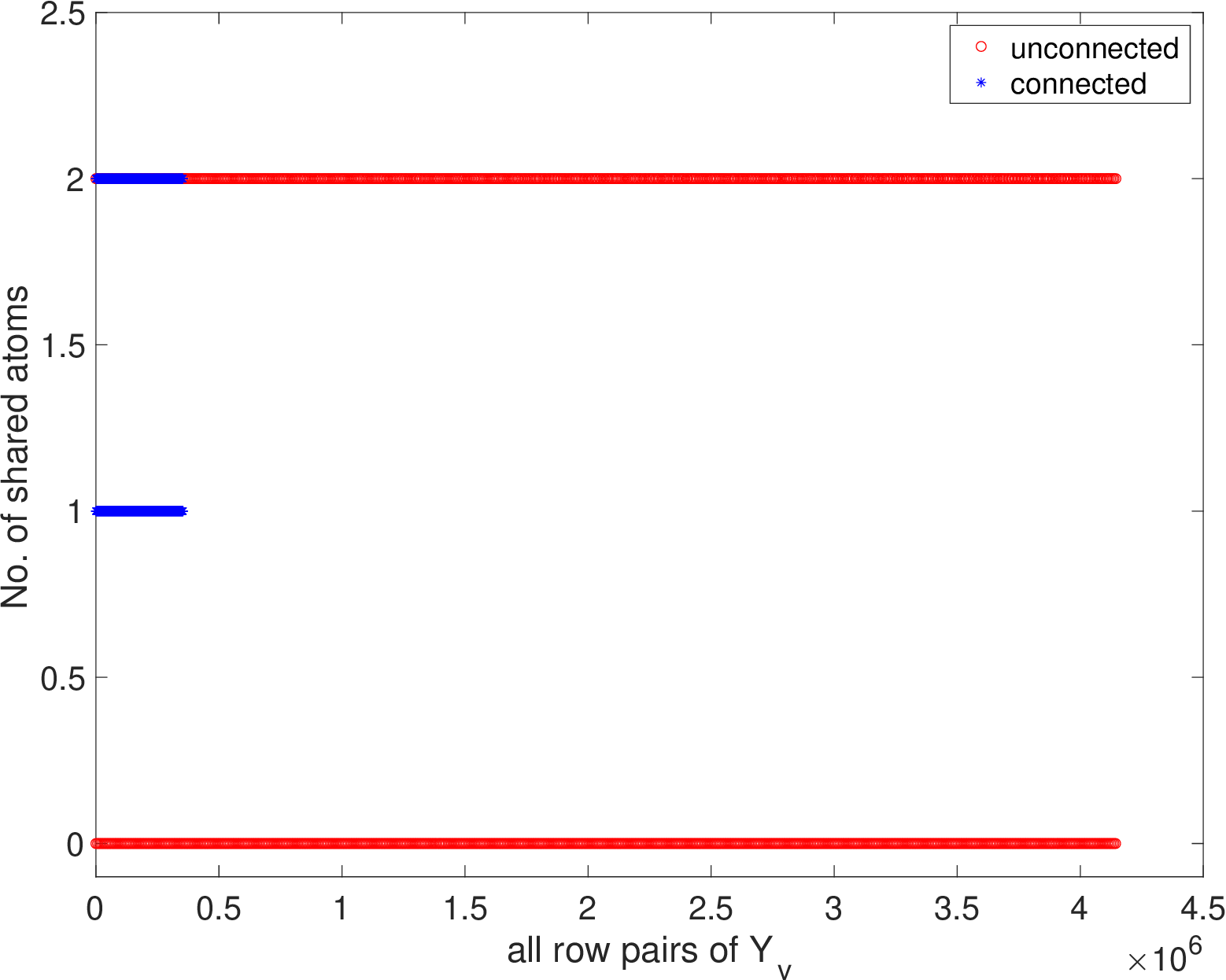}
    \label{fig:alg3_a}    
  }
   \hfill   
\subfigure[ unique-intersection score,  $s_A=s_B=2$  ]{
    \includegraphics[width=0.49\linewidth]{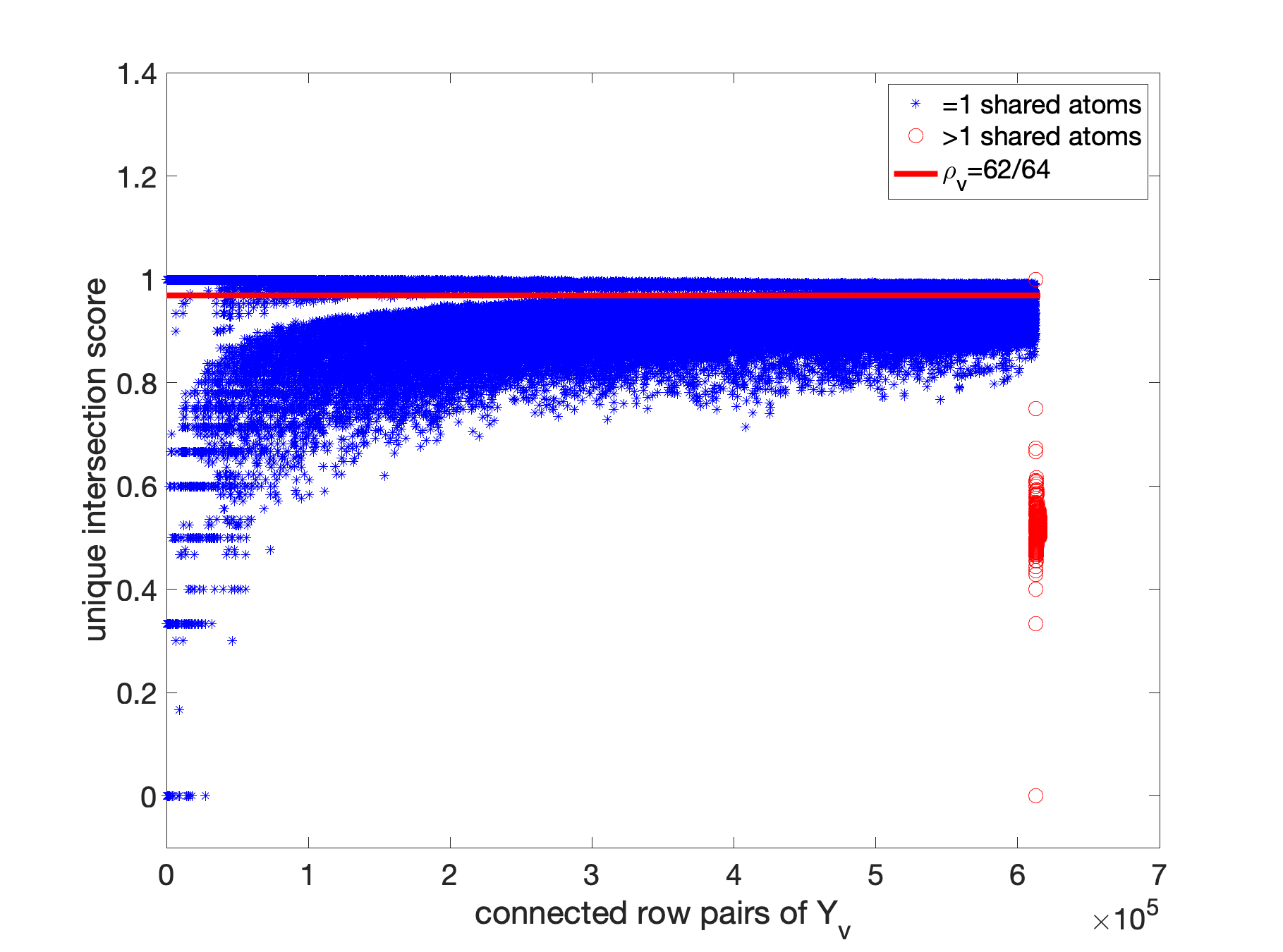}
    \label{fig:alg3_b}    
  }
    \subfigure[$\rho_v$-correlation graph,  $s_A=s_B=3$]{
    \includegraphics[width=0.42 \linewidth]{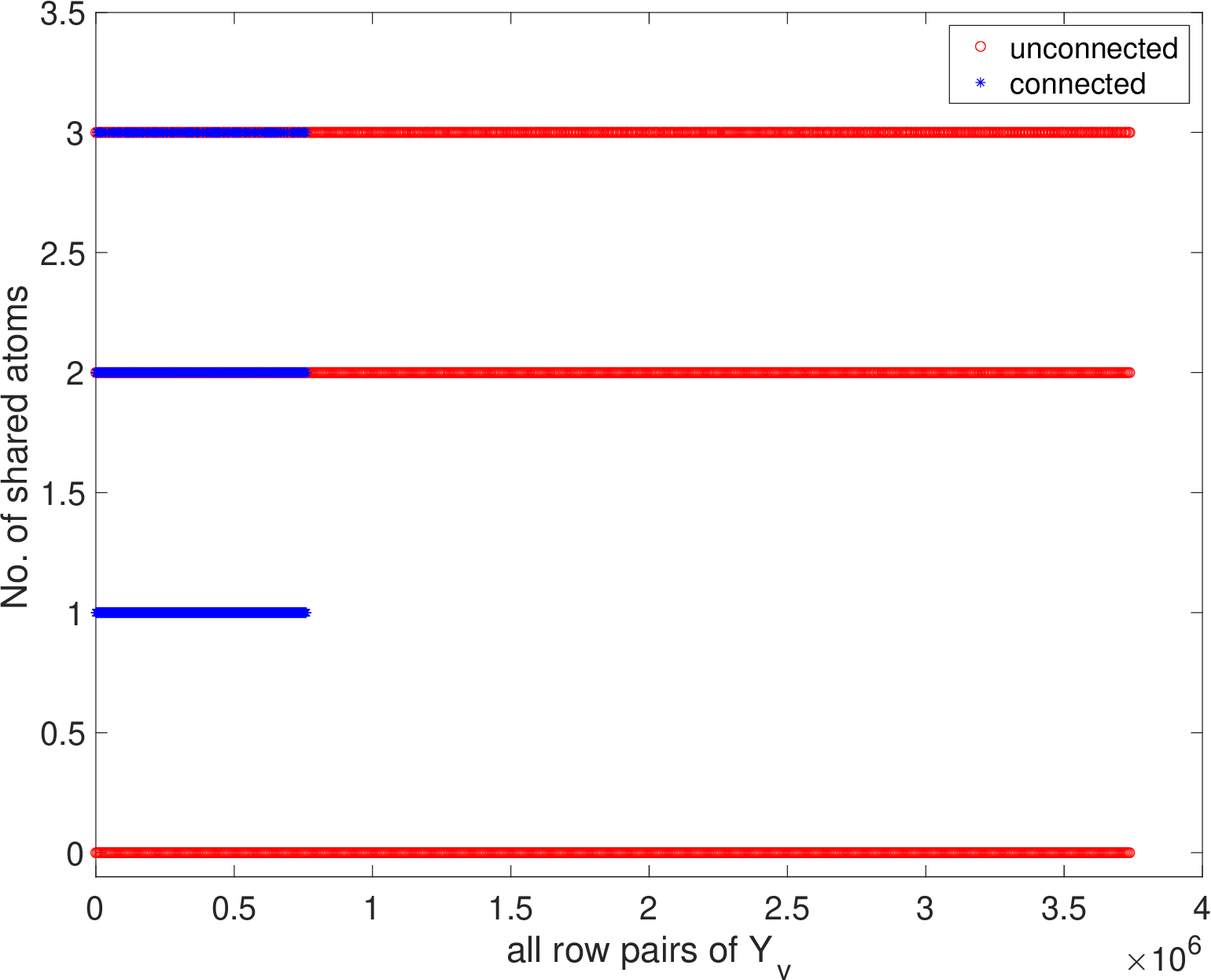}
    \label{fig:alg3_a_s3}    
  }
   \hfill
\subfigure[ unique-intersection score,  $s_A=s_B=3$  ]{
    \includegraphics[width=0.49\linewidth]{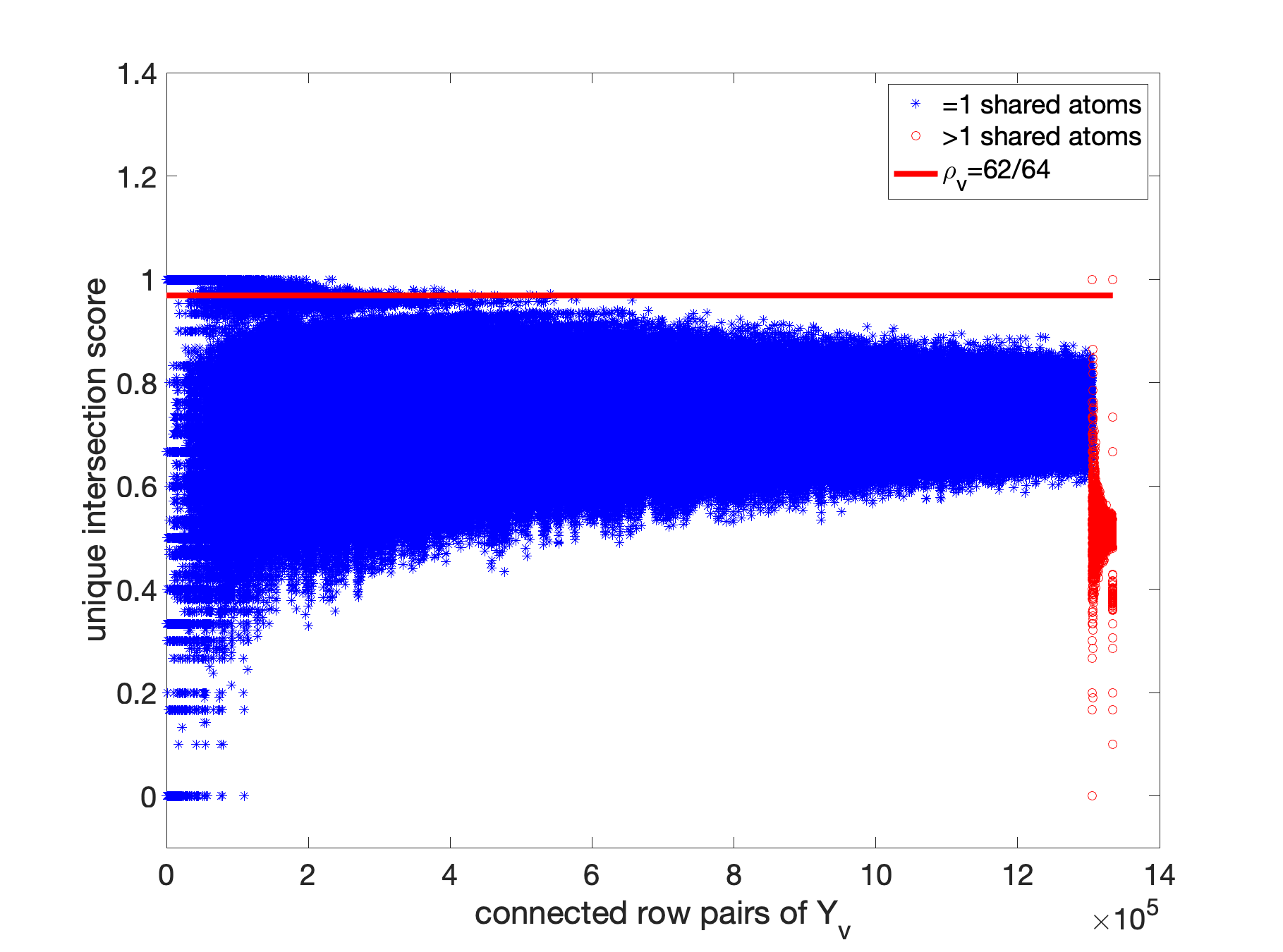}
    \label{fig:alg3_b_s3}    
  }
 \caption{(a) and (c) show the number  of shared atoms for   connected and unconnected row pairs in the $\rho_v$-correlation graph, for $s_A=s_B=2$ and $s_A=s_B=3$, respectively. (b) and (d) show the unique-intersection scores  $\frac{N_e}{\frac{1}{2}N(N-1)}$  computed in Step \ref{alg3:unique_score} of Algorithm \ref{alg:unique} for all connected row pairs, under $s_A=s_B=2$ and $s_A=s_B=3$, respectively, together with the unique-intersection threshold $\rho_p=\frac{62}{64}$.}     \label{fig:alg3_plot} 
\end{figure}

\begin{figure}[t]
    \centering
  \includegraphics[width=0.6 \linewidth]{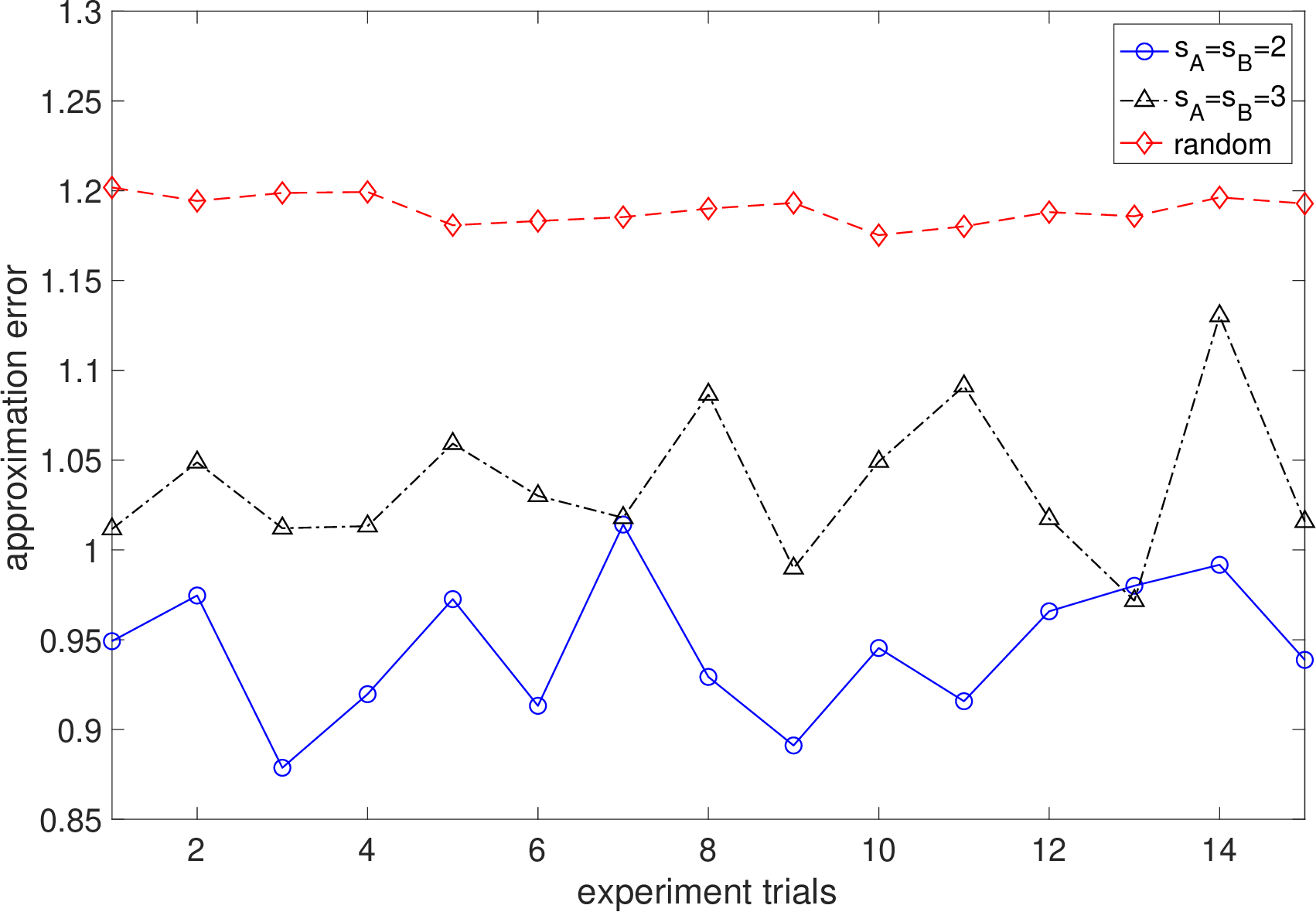}    
 \caption{ Comparison of recovery errors of dictionary estimates produced by Algorithm \ref{alg:ini} with  $s_A=s_B=2$ and  $s_A=s_B=3$, against the error of  random estimate. }         \label{fig:alg3_perf} 
\end{figure}

\paragraph{Empirical Illustration:}
We use the same data generation process as in the previous experiments with $n=3000$, $m=4000$, $k=50$, ${m_{\max}^{(A)}}={m_{\max}^{(B)}}=1$, ${M_{\max}^{(A)}}= {M_{\max}^{(B)}}=1.1$, and $\textbf{S} =  \textbf{O}  + 10^{-3}\textbf{L}$.
This configuration enables the sparsity condition in Eq. (\ref{s_AB_condition1}) for both $s_A=s_B=2$ and $s_A=s_B=3$.
Algorithm $\ref{alg:ini}$ is applied to approximate $\textbf{D}_A$ from $\textbf{Y}_v$  with hyperparameters $\rho_p = \frac{62}{64}$, $\xi=0.6$, and  a value of $\rho_v$  chosen to satisfy Eq. (\ref{eq:threshold_rho_v}) and lie close to its upper bound.

We first demonstrate the observations for the sparsity level $s_A=s_B=2$.
Figures \ref{fig:alg3_a} and \ref{fig:alg3_b} illustrate the effectiveness of the adopted  thresholds   $\rho_v$ and $\rho_p$, respectively.
As shown in Figure \ref{fig:alg3_a}, all row pairs that share exactly one atom correspond to connected pairs in the $\rho_v$-correlation graph of $\mathbf{Y}_v$.
Figure \ref{fig:alg3_b} further shows that,  among all the  connected row pairs, those sharing more than one atom  (highlighted by red circles) generally exhibit low unique-intersection scores. 
Consequently, the threshold $\rho_p=\frac{62}{64}$ successfully filters out nearly all such pairs.
We then repeat the  experiment by increasing the number of nonzero coefficients to  $s_A=s_B=3$.
Figure \ref{fig:alg3_a_s3}  confirms that  row pairs sharing exactly one atom again all appear as connected pairs.
Figure \ref{fig:alg3_b_s3} shows that the threshold $\rho_p=\frac{62}{64}$ filters out  most  pairs sharing more than one atom, although it retains more such pairs than in the case  $s_A=s_B=2$.
We compare the estimates of $\mathbf{D}_A$ produced by Algorithm \ref{alg:ini} for $s_A=s_B=2$ and  $s_A=s_B=3$ with a random estimate  obtained by sampling each entry independently from a  standard normal distribution and normalising each row to uni norm.
Figure \ref{fig:alg3_perf} reports the recovery errors over 15 independent  trials. 
The algorithm consistently produces dictionary estimates that are closer to the true dictionary than a random guess. 
Overall, increasing the number of nonzero coefficients from 2 to 3 results in higher approximation errors.

\subsection{Auxiliary Model Discrepancy}

As discussed in Section \ref{sec:AuxTri}, the transformation from  $\textbf{A}$, $\textbf{B}$, and $ \textbf{S} $ to  $\textbf{X}_A$, $\textbf{X}_B$, and $\tilde{\textbf{S}} $  introduces only bounded changes  to the norms   and pairwise angles of the coefficient vectors  and to the conditioning of the latent relation matrix.
These  structural changes are characterised in Theorem \ref{main_res3}.
Specifically, to quantify the corresponding norm changes, we define the following coefficient-norm shift factors:
\begin{equation}
    \delta_{l_2}^{(A)} =\frac{\max_{i=1}^n \frac{\left\|\textbf{X}_A^{(i)}\right\|_2 }{\left\|\textbf{A}^{(i)}\right\|_2 } }{\min_{i=1}^n \frac{\left\|\textbf{X}_A^{(i)}\right\|_2 }{\left\|\textbf{A}^{(i)}\right\|_2 } }, \; \delta_{l_2}^{(B)} =\frac{\max_{i=1}^n \frac{\left\|\textbf{X}_B^{(i)}\right\|_2 }{\left\|\textbf{B}^{(i)}\right\|_2 } }{\min_{i=1}^n \frac{\left\|\textbf{X}_B^{(i)}\right\|_2 }{\left\|\textbf{B}^{(i)}\right\|_2 } }.
\end{equation}
The corresponding angle changes are quantified by comparing the cosine similarities between coefficient vectors before and after the transformation, through the following coefficient-angle shift factors:
\begin{align}
     \delta_{\text{cos}}^{(A)}=\; & \left|  \text{cos}\left(\textbf{X}_A^{(i)}, \textbf{X}_A^{(j)}\right) -\text{cos}\left(\textbf{A}^{(i)}, \textbf{A}^{(j)}\right) \right|,\\
     \delta_{\text{cos}}^{(B)}=\; & \left|  \text{cos}\left(\textbf{X}_B^{(i)}, \textbf{X}_B^{(j)}\right) -\text{cos}\left(\textbf{B}^{(i)}, \textbf{B}^{(j)}\right) \right|.
\end{align}
The conditioning change is quantified by the following condition-number shift factor of the latent relation matrix:
\begin{equation}
    \delta_S= \sqrt{\frac{\kappa\left(\tilde{\textbf{S}}\right)}{\kappa\left(\textbf{S}\right)} }.
\end{equation}
Shift factors $\delta_{l_2}^{(A)}=\delta_{l_2}^{(B)}=\delta_S=1$ and $\delta_{\mathrm{cos}}^{(A)}=\delta_{\mathrm{cos}}^{(B)}=0$ indicate that the original and auxiliary tri-factorisations have identical coefficient norms, pairwise coefficient angles, and latent-relation  condition numbers.
Using Lemma \ref{RowLengthF}, we derive bounds on these  shift factors.
The    proof  of Theorem \ref{main_res3} is  provided in Appendix \ref{app:model_gap}.

\begin{theorem}[Structural Shift Characterisation] \label{main_res3}
Let $\mathbf{R}=\mathbf{A}\mathbf{S}\mathbf{B}^T$ be a tri-factorisation and consider its corresponding auxiliary tri-factorisation $\mathbf{R}=\mathbf{X}_A\tilde{\mathbf{S}}\mathbf{X}_B^T$ constructed according to Definition~\ref{gen_aux}.
Then the original and auxiliary coefficient matrices satisfy
\begin{equation}
    supp\left(\textbf{A}^{(i)}\right) =supp\left(\textbf{X}_A^{(i)}\right), \; supp\left(\textbf{B}^{(i)}\right) =supp\left(\textbf{X}_B^{(i)}\right).
\end{equation}
Suppose Assumptions \ref{SC}-\ref{LI} hold.   
Given $0<\Delta<1$,  the coefficient-norm and condition-number shift factors satisfy
\begin{equation}
\label{eq:shift_bound}
  \delta_{l_2}^{(A)}, \delta_{l_2}^{(B)}, \delta_{S} \in \left[\frac{l_s}{u_s}\sqrt{\frac{1-\Delta}{1+\Delta}},   \frac{u_s}{l_s}\sqrt{\frac{1+\Delta}{1-\Delta}}\right],
\end{equation}
with probabilities  at least $1-ke^{-\frac{C_B\Delta^2  \sigma_B^2m}{k  {M_{\max}^{(B)}}^2}}$, $1-ke^{-\frac{C_A\Delta^2  \sigma_A^2n}{k  {M_{\max}^{(A)}}^2}}$, and $1-ke^{-\frac{C_A\Delta^2  \sigma_A^2n}{k  {M_{\max}^{(A)}}^2}}-ke^{-\frac{C_B\Delta^2  \sigma_B^2m}{k  {M_{\max}^{(B)}}^2}}$,  respectively.
Expressed in terms of the the quantities $D_A$ and $D_B$ defined in Table \ref{tab:constant-definitions}, the coefficient-angle shift factors satisfy
\begin{equation}
    \label{eq:cosine_shift_bound}
     \delta_{\text{cos}}^{(A)}   \leq \left\{   \begin{array}{ll}
         \frac{s_AD_A}{k},  &  \text{if } \text{cos}\left(\textbf{A}^{(i)}, \textbf{A}^{(j)}\right) \geq 0, \\
        \frac{2s_AD_A}{k}, & \text{otherwise},
     \end{array}  \right. \;  \delta_{\text{cos}}^{(B)}   \leq \left\{   \begin{array}{ll}
         \frac{s_BD_B}{k},  &  \text{if } \text{cos}\left(\textbf{B}^{(i)}, \textbf{B}^{(j)}\right) \geq 0, \\
        \frac{2s_BD_B}{k}, & \text{otherwise},
     \end{array}  \right.
\end{equation}
 with probabilities at least $1-ke^{-\frac{C_B\Delta^2  \sigma_B^2m}{k  {M_{\max}^{(B)}}^2}}$ and $1-ke^{-\frac{C_A\Delta^2  \sigma_A^2n}{k  {M_{\max}^{(A)}}^2}}$, respectively.
\end{theorem}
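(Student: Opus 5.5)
The plan is to reduce everything to the diagonal-scaling structure in Definition~\ref{gen_aux}. The auxiliary coefficient matrices are $\textbf{X}_A=\textbf{A}\textbf{W}_B$ and $\textbf{X}_B=\textbf{B}\textbf{W}_A$, where $\textbf{W}_B=\frac{1}{\sqrt{nm}}\textbf{L}_B^{-1}=\textmd{diag}\big(\|\hat{\textbf{F}}_B^{(t)}\|_2\big)$ and $\textbf{W}_A=\textmd{diag}\big(\|\hat{\textbf{F}}_A^{(t)}\|_2\big)$. The support claim is deterministic. Each diagonal entry of $\textbf{W}_B$ and $\textbf{W}_A$ is a positive finite number, since $\textbf{L}_A,\textbf{L}_B$ are defined as inverses of row norms, so those norms are nonzero. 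Hence $(X_A)_{it}=a_{it}w_t$ vanishes iff $a_{it}=0$, and likewise for $\textbf{X}_B$.

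For the quantitative parts, I would invoke Corollary~\ref{RowLengthFhat} for the relevant side only. For $\textbf{X}_A$ this is the bound on $\hat{\textbf{F}}_B$, holding with probability $1-ke^{-C_B\Delta^2\sigma_B^2m/(kM_{\max}^{(B)2})}$. All $w_t$ then lie in $[w_{\min},w_{\max}]$ with $w_{\max}/w_{\min}\le \frac{u_s}{l_s}\sqrt{\frac{1+\Delta}{1-\Delta}}$, because the $s_B,n,k,\sigma_B$ factors cancel in the ratio.

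\emph{Norm shift.} Write $\|\textbf{X}_A^{(i)}\|_2^2=\sum_{t}a_{it}^2w_t^2$. Then $\|\textbf{X}_A^{(i)}\|_2/\|\textbf{A}^{(i)}\|_2\in[w_{\min},w_{\max}]$, so $\delta^{(A)}_{l_2}\le w_{\max}/w_{\min}$. Since $\delta_{l_2}\ge1\ge$ the lower endpoint, membership in the interval follows.

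\emph{Condition number shift.} Here $\tilde{\textbf{S}}=nm\textbf{L}_B\textbf{S}\textbf{L}_A=\textbf{W}_B^{-1}\textbf{S}\textbf{W}_A^{-1}$. I use $\sigma_{\max}(\textbf{PSQ})\le\|\textbf{P}\|_2\sigma_{\max}(\textbf{S})\|\textbf{Q}\|_2$, and the corresponding lower bound for $\sigma_{\min}$ with invertible diagonal $\textbf{P},\textbf{Q}$, restricted to the nonzero singular values if $\textbf{S}$ is rank deficient (same rank $d$ since the scalings are invertible). This gives $\kappa(\tilde{\textbf{S}})\le\kappa(\textbf{S})\,\kappa(\textbf{W}_A)\kappa(\textbf{W}_B)$, and symmetrically $\kappa(\textbf{S})\le\kappa(\tilde{\textbf{S}})\kappa(\textbf{W}_A)\kappa(\textbf{W}_B)$. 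Each $\kappa(\textbf{W})\le\frac{u_s}{l_s}\sqrt{\frac{1+\Delta}{1-\Delta}}$, so taking square roots gives $\delta_S$ in the stated interval. A union bound over the two events yields the stated probability.

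\emph{Angle shift.} Let $\textbf{a}=\textbf{A}^{(i)}$ and $\textbf{b}=\textbf{A}^{(j)}$, each with at most $s_A$ nonzeros. Normalise $w_t=w_{\min}(1+\epsilon_t)$ with $0\le\epsilon_t\le r^2-1$ after squaring, where $r=w_{\max}/w_{\min}$ (cosine is scale invariant). Then $\cos(\textbf{X}_A^{(i)},\textbf{X}_A^{(j)})=\frac{\sum a_tb_tw_t^2}{\sqrt{\sum a_t^2w_t^2\sum b_t^2w_t^2}}$.

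I would bound the numerator deviation $|\sum a_tb_t(w_t^2-w_{\min}^2)|\le w_{\min}^2(r^2-1)\sum_{t\in\mathrm{supp}}|a_tb_t|$ using entry bounds $|a_t|\le M^{(A)}_{\max}$. The denominators satisfy $\sum a_t^2w_t^2\ge w_{\min}^2\|\textbf{a}\|^2\ge w_{\min}^2 M^{(A)2}_{\min}\cdot|\mathrm{supp}|$. Note that $r^2-1\le\frac{u_s^2-l_s^2+(u_s^2+l_s^2)\Delta}{l_s^2(1-\Delta)}$, which is exactly the first factor of $D_A$; the factor $M_{\max}^{2}/M_{\min}^{2}$ comes from the entry bounds.

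The main obstacle is obtaining the $s_A/k$ factor rather than a constant. I expect it to come from the random-support structure: only the shared coordinates contribute, and the expected overlap $|\mathrm{supp}(\textbf{a})\cap\mathrm{supp}(\textbf{b})|$ is $s_A^2/k$, divided by the normaliser proportional to $s_A$. The first attempt would be to bound the deviation by $(r^2-1)\frac{M_{\max}^2}{M_{\min}^2}\frac{|\mathrm{overlap}|}{s_A}$ and then use the overlap size $s_A^2/k$. The sign split arises when comparing $\cos$ of the scaled vectors to the unscaled one. If $\cos\ge0$, the scaled numerator and denominator move in the same direction, giving a one-sided difference bounded once. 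Otherwise, the numerator and denominator perturbations add, giving the factor $2$.
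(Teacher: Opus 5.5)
The support, norm-shift and condition-number parts of your proposal are correct and follow the paper's argument. Your use of $\sigma_i(\textbf{P}\textbf{S}\textbf{Q})\ge\sigma_{\min}(\textbf{P})\,\sigma_i(\textbf{S})\,\sigma_{\min}(\textbf{Q})$ on the $d$ nonzero singular values is actually more robust than the paper's identity $\tilde{\textbf{S}}^{\dagger}=\frac{1}{nm}\textbf{L}_A^{-1}\textbf{S}^{\dagger}\textbf{L}_B^{-1}$, which needs $\textbf{S}$ invertible.

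The genuine gap is in the angle shift, exactly at the step you flagged. Let $\mathcal{O}_{ij}=\textmd{supp}(\textbf{A}^{(i)})\cap\textmd{supp}(\textbf{A}^{(j)})$ and let $r=w_{\max}/w_{\min}$ as you defined it. For $\cos(\textbf{A}^{(i)},\textbf{A}^{(j)})\ge 0$, your reduction (which is also the paper's, via $I^{+}$ and $I^{-}$) rigorously gives
\[
\delta_{\cos}^{(A)}\le (r^2-1)\,\frac{\sum_{t\in I^+}A_{it}A_{jt}}{\|\textbf{A}^{(i)}\|_2\|\textbf{A}^{(j)}\|_2}\le D_A\,\frac{|\mathcal{O}_{ij}|}{s_A}.
\]
Turning this into $s_AD_A/k$ requires $|\mathcal{O}_{ij}|\le s_A^2/k$, but $s_A^2/k$ is only the mean of $|\mathcal{O}_{ij}|$.

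Since $|\mathcal{O}_{ij}|$ is an integer, in the relevant regime $s_A^2<k$ that inequality forces the supports to be disjoint. A fixed pair shares an atom with probability roughly $s_A^2/k$, which is not exponentially small. Nor is this controlled by the row-norm event for $\hat{\textbf{F}}_B$, the only event behind the stated probability.

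For overlapping pairs the claimed bound is in fact false. Take $M^{(A)}_{\min}=M^{(A)}_{\max}=1$, $s_A=2$, supports $\{1,2\}$ and $\{1,3\}$ with unit entries, and realised scalings $w_1=rw$, $w_2=w_3=w$ with $1<r\le\frac{u_s}{l_s}\sqrt{\frac{1+\Delta}{1-\Delta}}$. Nothing in the row-norm event excludes this. Then $\cos(\textbf{A}^{(i)},\textbf{A}^{(j)})=\frac12$ and $\cos(\textbf{X}_A^{(i)},\textbf{X}_A^{(j)})=\frac{r^2}{r^2+1}$. So $\delta_{\cos}^{(A)}=\frac{r^2-1}{2(r^2+1)}$, which does not depend on $k$, whereas $s_AD_A/k=2D_A/k\to 0$.

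Following the paper will not close this gap. Its proof performs the same reduction and then simply asserts $\frac{\sum_{t\in I^+}A_{it}A_{jt}}{\|\textbf{A}^{(i)}\|_2\|\textbf{A}^{(j)}\|_2}\le\frac{s_A{M^{(A)}_{\max}}^2}{k{M^{(A)}_{\min}}^2}$. It applies the same unproved bound to $-\cos(\textbf{A}^{(i)},\textbf{A}^{(j)})$ to get the factor $2$ in the negative case. So that factor is not merely numerator and denominator perturbations adding: it also needs $|\cos(\textbf{A}^{(i)},\textbf{A}^{(j)})|$ itself to be $O(s_A/k)$, which is the same overlap claim.

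What your argument does establish, with the stated probability, is the overlap-dependent bound $\delta_{\cos}^{(A)}\le D_A|\mathcal{O}_{ij}|/s_A$, or $2D_A|\mathcal{O}_{ij}|/s_A$ when the cosine is negative. This bound vanishes for disjoint supports. The $s_AD_A/k$ form then holds only in expectation over the random supports, since $E|\mathcal{O}_{ij}|=s_A^2/k$, or for an average over many pairs via a concentration argument. It does not hold for each pair with probability $1-ke^{-\frac{C_B\Delta^2\sigma_B^2m}{k{M_{\max}^{(B)}}^2}}$.
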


\paragraph{Remarks:}
The structural shift arises from the transformation between the original and auxiliary factor matrices via the scaling factors  $\left\|\textbf{F}_A^{(i)} \right\|_2$ and $\left\|\textbf{F}_B^{(i)} \right\|_2$.
Consequently, the structural shift bounds  rely on the upper and lower bounds on $\left\|\textbf{F}_A^{(i)} \right\|_2$ and $\left\|\textbf{F}_B^{(i)} \right\|_2$  established  in Lemma \ref{RowLengthF}.
The derived bounds  depend  on the ratios  $\frac{u_s}{l_s}$,   $\frac{M_{\max}^{(A)}}{M_{\min}^{(A)}}$, and $\frac{M_{\max}^{(B)}}{M_{\min}^{(B)}}$, as well as  the parameter $\Delta$ for controlling the probabilities with which the bounds  hold.
These dependencies enter through the quantities $\frac{l_s}{u_s}\sqrt{\frac{1-\Delta}{1+\Delta}} $, $ \frac{u_s}{l_s}\sqrt{\frac{1+\Delta}{1-\Delta}}$, $D_A$ and $D_B$.
Smaller variations in the row and column norms of the latent relation matrix and in the nonzero coefficient amplitudes  lead to milder structural shifts after the transformation.
There is a trade-off between the tightness of the bounds and their associated success probabilities.
Finally, the upper bounds of the angle shift factors explicitly depend  on the coefficient sparsity ratios $\frac{s_A}{k}$ and $\frac{s_B}{k}$.  
Sparser coefficient vectors therefore yield  upper bounds on the angle shifts that are closer to zero, providing stronger guarantees of pairwise angle preservation.

\paragraph{Empirical Illustration:}

\begin{figure}[t]
    \centering
\subfigure[Theorem \ref{main_res3}, $\delta_{l_2}^{(A)}$ and $\delta_{S} $ ]{
    \includegraphics[width=0.45\linewidth]{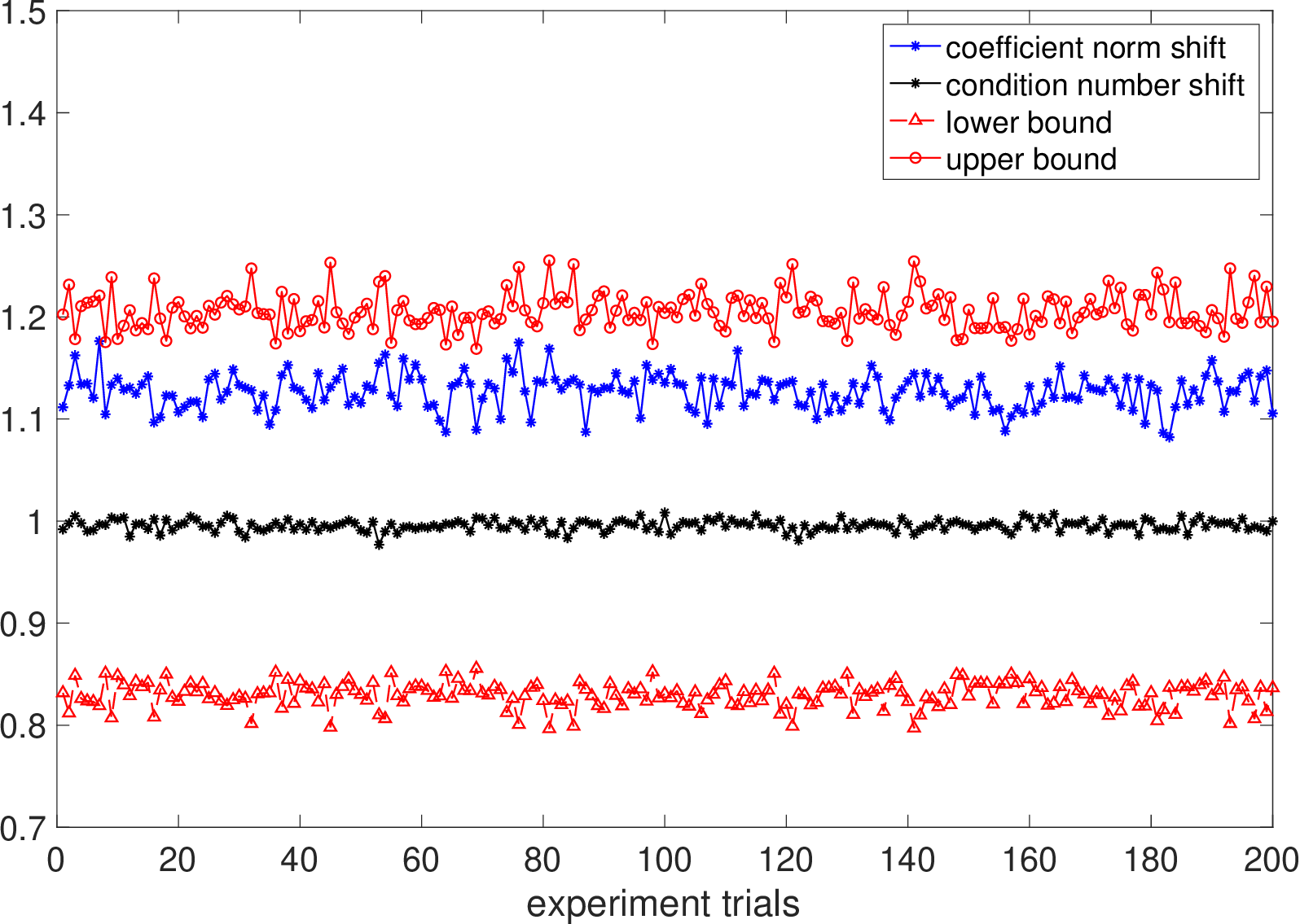}
    \label{fig:theorem22a}    
  }
   \hfill
\subfigure[Theorem \ref{main_res3}, $\delta_{\text{cos}}^{(A)} $ ]{
    \includegraphics[width=0.45\linewidth]{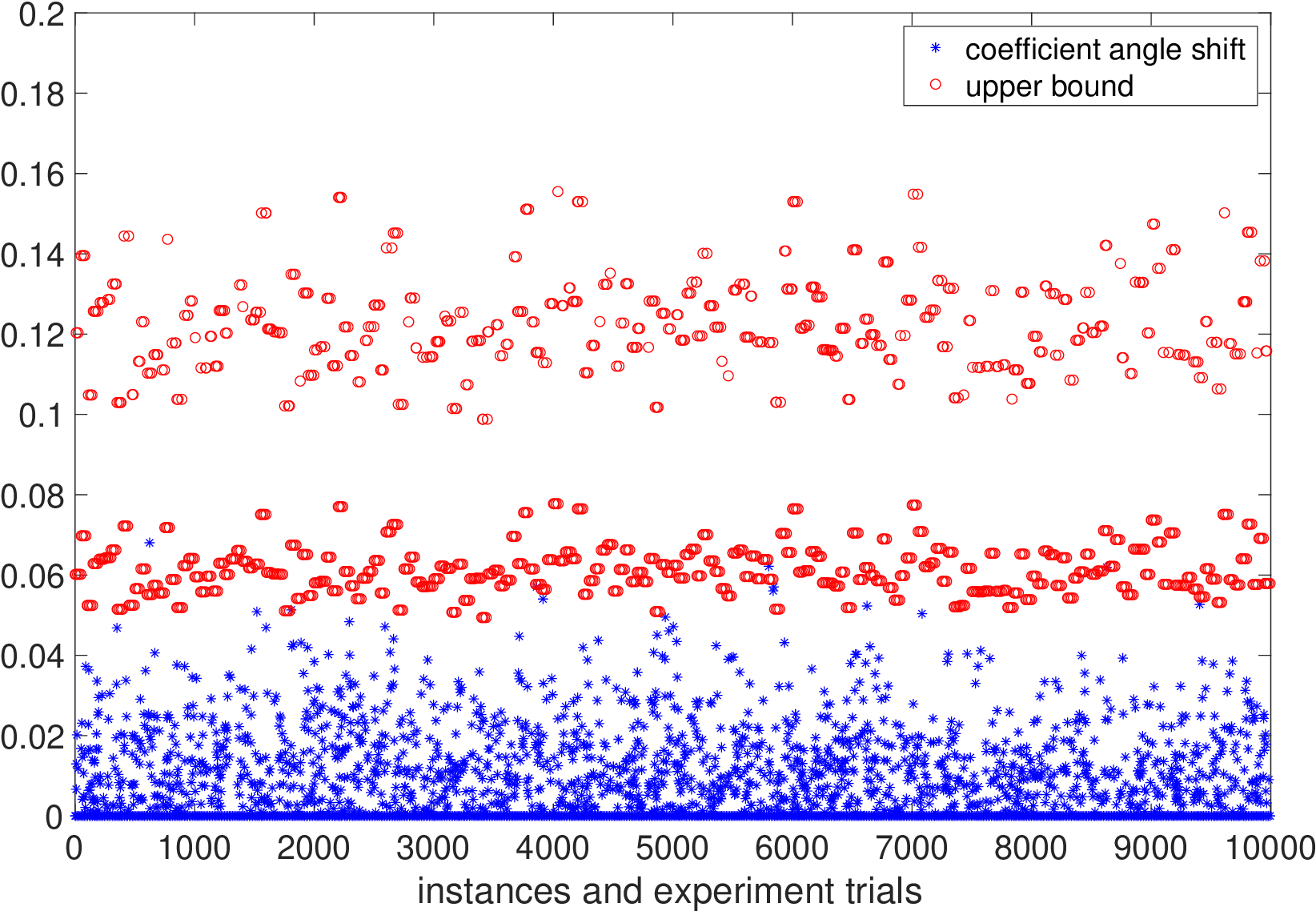}
    \label{fig:theorem22b}    
  }
 \caption{Comparison of observed shift factors and their theoretical bounds over 200 random trials. (a) Coefficient-norm and condition-number shift factors, $\delta_{l_2}^{(A)}$ and $\delta_S$. (b) Coefficient-angle shift factor, $\delta_{\mathrm{cos}}^{(A)}$, based on 50 randomly selected coefficient-vector pairs per trial.}     \label{fig:auxiliary_gap} 
\end{figure}

We empirically validate the structural shift results established in Theorem \ref{main_res3} for the left coefficient matrix ($\textbf{A}\rightarrow\textbf{X}_A $) and the latent relation matrix ($\textbf{S}\rightarrow \tilde{\textbf{S}}$)  using the same experiment setting as  that in Section \ref{sec:prop_coef_aux}.
All theoretical bounds   are computed  using $\Delta = 0.05$. 
We  generate random triples of $\textbf{A}$, $\textbf{B}$, and $\textbf{S}$, construct the corresponding auxiliary matrices $\textbf{X}_A$, $\textbf{X}_B$, and $\tilde{\textbf{S}}$, and observe the resulting structural changes over 200 independent trials.

Figure \ref{fig:theorem22a} compares the observed shift factors $ \delta_{l_2}^{(A)}$ and $ \delta_{S}$  with their theoretical lower and upper bounds  over 200  trials.    
The condition-number shift remains close to 1 with  minor variation ($0.9958\pm 0.0053$), indicating that the transformation from $\textbf{S}$ to $\tilde{\textbf{S}}$ introduces negligible changes to its conditioning.
The coefficient-norm shift exhibits slightly larger variation, staying close to 1.1   ($1.1269\pm 0.0175$).
The theoretical  bounds are satisfied in all  trials.
Figure~\ref{fig:theorem22b} compares  the angle shift factor $\delta_{\text{cos}}^{(A)}$ with their theoretical upper bounds for 50 randomly selected pairs of coefficient vectors per trial. 
The original and auxiliary cosine similarities exhibit strong agreement, as evidenced by the near-zero empirical values of $\delta_{\text{cos}}^{(A)}$ ($0.0024\pm 0.0068$).
The theoretical bounds are satisfied for nearly all $200\times50=10^4$ examined pairs except for 2 pairs.
Overall, Figure~\ref{fig:auxiliary_gap} demonstrates strong geometric and spectral alignment between the original and auxiliary tri-factorisations (near-one $ \delta_{l_2}^{(A)}$ and $ \delta_{S}$, near-zero $\delta_{\text{cos}}^{(A)}$),  indicating that recovering the auxiliary matrices can faithfully preserve  the original   latent structure.

\section{Conclusion and Future Works}

To the best of our knowledge, this paper presents the first rigorous theoretical study of sparsity-induced identifiability in general real-valued matrix tri-factorisation. 
We establish   recovery guarantees   showing that sparsity fundamentally governs  identifiability and recoverability. 
The key enabler of this analysis is  a novel decomposition strategy that transforms an otherwise intractable theoretical problem into two coupled auxiliary factorisation problems while preserving the structural information required to recover the original factor matrices.
We further show that the auxiliary formulation faithfully preserves the essential structural properties of the original factorisation, providing a rigorous justification for analysing it as a structurally consistent surrogate.
Extensive Monte Carlo experiments demonstrate close agreement between the theoretical results and empirical observations.

Our theory reveals a simple mechanism: \textit{coefficient sparsity shapes the geometry of the auxiliary factorisation problems, and this geometry, in turn, governs identifiability and recovery}.
For instance, the row-wise sparsity of   $\textbf{A}$ and  $\textbf{B}$ governs the column norm and singular values of the auxiliary coefficient matrices $\textbf{X}_A$ and $\textbf{X}_B$, together with the row and submatrix norms of the auxiliary observation matrices $\textbf{Y}_U$ and $\textbf{Y}_V$.
In contrast, the derived bounds on the spectral norm and incoherence of the auxiliary dictionary matrices $\textbf{D}_A$ and $\textbf{D}_B$ are independent of sparsity because their atoms are  $l_2$-normalised, depending instead on the incoherence of $\textbf{S}$.
Consequently,   sparsity governs the recovery guarantees by relaxing the required recovery conditions,  determining the convergence factors of Algorithms \ref{alg:rl}  and \ref{alg:dl} through $\eta\sqrt{s_A}$ and $\eta\sqrt{s_B}$,  the error bounds  through $ \sqrt{\frac{s_A}{k}}$ and  $\sqrt{\frac{s_B}{k}}$ in Algorithms \ref{alg:rl}-\ref{alg:ini} and additionally through $\sqrt{\frac{s_A^3}{k}}$ and  $\sqrt{\frac{s_B^3}{k}}$  in Algorithm \ref{alg:ini}.
Furthermore, it improves the  angular consistency between the original and auxiliary coefficient vectors via   $\frac{s_A}{k}$ and $\frac{s_B}{k}$.
These uncover a role for sparsity analogous to that in classical dictionary learning and sparse coding, while extending rigorous identifiability and recovery guarantees to general real-valued matrix tri-factorisation.

Our theoretical framework opens several promising research directions. 
First, the theoretical conditions identified by our analysis suggest practical principles for regularisation and model design. 
In particular, promoting coefficient sparsity, reducing variations in coefficient amplitudes and latent-relation norms, and encouraging incoherent latent representations may yield models that are not only more interpretable but also more identifiable and easier to recover. 
These insights may also guide the modelling of latent components in applications such as layer-wise matrix factorisation and disentangled representation learning.

More broadly, the proposed decomposition strategy provides a general blueprint for extending theoretical results beyond sparsity. 
Rather than analysing complex factorisation models directly, future work can leverage the auxiliary formulation to systematically lift existing theory, including structural constraints, recovery guarantees, and identifiability, for two-factor matrix factorisation to tri-factorisation or more general multi-factor models. 
We hope this perspective will stimulate further theoretical advances in structured matrix and tensor factorisation.

\section*{Acknowledgement}
I sincerely thank Prof. Yannis Goulermas for his inspirational and encouraging discussions, and Dr. Yian Deng for assisting in checking an early version of the proofs in Section \ref{sec:matrix_res} and of several supporting lemmas.

\appendix

\section{Used Existing Results}
\label{app:existing_res}

Our theoretical analysis builds upon a collection of classical concepts and fundamental results in random matrix theory.
These include the matrix incoherence, restricted isometry property,   inner-product-induced distance measure between vectors, Bernstein inequality,  spectrum, inverse and norm properties of random matrices,  as well as inequalities  relating the norms and sparsity of random sparse matrices. 
Our analysis also  relies  on a key theorem developed by \citet{Can08} and later restated   by \citet{Aga16} in the context of compressed sensing, which establishes recovery conditions and theoretical guarantee  for the Lasso estimator.
In addition, we employ several results from \citet{Aga16,Aga17} concerning dictionary estimate and atom properties to facilitate the analysis of Algorithms \ref{alg:dl} and \ref{alg:ini}.
Collectively, these  results form the theoretical basis of our analysis.

\subsection{Preliminary on Random Matrices}
\label{app:existing_theo}

We first introduce three concepts related to matrix incoherence, followed by the concept of restricted isometry constant $\delta_s$.
Given a matrix,  the restricted isometry constant  quantifies  the extent to which the rows of every sub-matrix formed by selecting at most $s$ rows are approximately orthonormal.

\begin{definition}[Matrix $\mu_0$-Incoherence]
Let  $\textbf{R}\in \mathbb{R}^{n  \times m}$ be a rank-$r$  matrix  with compact SVD $\textbf{R}= \textbf{U}\bm\Sigma\textbf{V}^T$. 
The matrix $\textbf{R}$ is said to be $\mu_0$-incoherent, for some $\mu_0>0$, if the rows of its singular vector matrices satisfy
\begin{align}
&\left\|\bm U^{(i)}\right\|_2  \leq \mu_0\sqrt{\frac{ r}{n}}, \forall  i \in [n], \\
&\left\|\bm V^{(i)}\right\|_2  \leq \mu_0\sqrt{\frac{ r}{m}}, \forall  i \in [m].
\end{align}
\end{definition}

\begin{definition}[Matrix $(\mu_0, \mu_1)$-Incoherence]
Let  $\textbf{R}\in \mathbb{R}^{n  \times m}$ be a rank-$r$  matrix  with compact SVD $\textbf{R}= \textbf{U}\bm\Sigma\textbf{V}^T$. 
The matrix $\textbf{R}$ is said to be $(\mu_0, \mu_1)$-incoherent, for some $\mu_0, \mu_1>0$, if, in addition to being $\mu_0$-incoherent,   the maximum entry of the matrix $\sum_{i=1}^r {\bm U^{(i)}}^T\bm V^{(i)}$ is   bounded by $\mu_1\sqrt{\frac{r}{nm}}$.
\end{definition}

\noindent 
The above concept of  $(\mu_0, \mu_1)$-incoherence is similar to the  one defined in \citet{Kes10} but uses  the definition of $\mu_1$ in \citet{Can09}.   
It is worth to mention that if the matrix $\textbf{R}$  is $\mu_0$-incoherent, it is also $(\mu_0, \mu_1)$-incoherent with $\mu_1 = \mu_0\sqrt{r}$, resulted from   Cauchy-Schwarz inequality.

\begin{definition}[Pairwise Incoherence]
\label{def:incoherence}
Let $\textbf{M} \in \mathbb{R}^{k  \times d}$ be matrix whose rows have unit $l_2$-norm, i.e., $\left\|\bm M^{(i)}\right\|_2 =1$.
The matrix $\textbf{M}$ is said to be pairwise incoherent, for some $\mu>0$, if every pair of its rows  satisfies
\begin{equation}
\left|\left\langle\bm M^{(i)}, \bm M^{(j)} \right\rangle\right| \leq \frac{\mu}{\sqrt{d}},\textmd{ for } i\neq j \text{ and } i,j \in [k].
\end{equation}
\end{definition}

\begin{definition}[Restricted Isometry Constant $\delta_s$]
\label{def:RIC}
For  each integer $s\in [k]$, the restricted isometry constant $\delta_s\in \mathbb{R}$ of a  matrix $\textbf{M} \in \mathbb{R}^{k  \times d}$  is defined as the smallest nonnegative constant such that, for every vector $\textbf{x}\in\mathbb{R}^{k}$ with at most $s$ nonzero entries, it has
\begin{equation}\label{eq:RIC1}
(1-\delta_s)\|\textbf{x}\|_2^2\leq \|\textbf{x}\textbf{M}\|_2^2\leq(1+\delta_s)\|\textbf{x}\|_2^2, 
\end{equation}
Equivalently,  
\begin{equation} \label{eq:RIC2}
1-\delta_s \leq \sigma^2_{\min}\left(\textbf{M}^{supp(\textbf{x})}\right)  \leq \sigma^2_{\max}\left(\textbf{M}^{supp(\textbf{x})}\right) \leq 1+\delta_s.
\end{equation}
\end{definition}
 


The inner-product-induced distance (Definition \ref{def:dist}) provides an upper bound on the Euclidean distance between two vectors, and has been shown to be effective for analysing dictionary estimation errors \citep{Aga16}. 
Lemma \ref{dist_lemma} states the corresponding bound.

\begin{definition} [Inner-product-induced Distance]
\label{def:dist}
The inner-product-induced distance between  two vectors $\textbf{x}, \textbf{y}\in \mathbb{R}^d$ is defined as
\begin{equation}
\textmd{dist}(\textbf{x}, \textbf{y}) = \sup_{\textbf{z} \perp \textbf{y}} \frac{\langle\textbf{z}, \textbf{x}\rangle}{\|\textbf{z}\|_2\|\textbf{x}\|_2} = \sup_{\textbf{z} \perp \textbf{x}} \frac{\langle\textbf{z}, \textbf{y}\rangle}{\|\textbf{z}\|_2\|\textbf{y}\|_2},
\end{equation}
where $\textbf{z} \perp \textbf{y}$  denotes that $\textbf{z} $ is orthogonal to $\textbf{y}$, i.e., $\langle\textbf{z}, \textbf{y}\rangle =0$. 
The inner-product-induced distance between  two matrices $\textbf{X}, \textbf{Y}\in \mathbb{R}^{k\times d}$ is defined by extending the above vector distance:
\begin{equation}
\textmd{dist}(\textbf{X}, \textbf{Y}) = \sup_{i\in [k]}\textmd{dist}\left(\textbf{X}^{(i)}, \textbf{Y}^{(i)}\right).
\end{equation}
\end{definition}

\begin{lemma}   
\label{dist_lemma}
The inner-product-induced distance between  two vectors $\textbf{x}, \textbf{y}\in \mathbb{R}^d$ satisfies
\begin{equation}
\label{lemma_distD}
\textmd{dist}(\textbf{x}, \textbf{y}) \leq \min_{a\in\{-1, +1\}} \|a\textbf{x} -\textbf{y}\|_2 \leq \sqrt{2} \textmd{dist}(\textbf{x}, \textbf{y}).
\end{equation}
\end{lemma}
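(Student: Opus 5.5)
The plan is to reduce all three quantities to functions of the angle between $\textbf{x}$ and $\textbf{y}$, and then compare them with an elementary inequality. I will work in the setting where the lemma is used, namely dictionary atoms, and so assume $\|\textbf{x}\|_2=\|\textbf{y}\|_2=1$. This assumption is genuinely needed for the middle quantity: $\textmd{dist}$ is scale invariant, but $\min_a\|a\textbf{x}-\textbf{y}\|_2$ is not. Write $c=|\langle\textbf{x},\textbf{y}\rangle|=|\cos\theta|\in[0,1]$.

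\emph{Step 1 (evaluate the distance).} Decompose $\textbf{x}=\langle\textbf{x},\textbf{y}\rangle\textbf{y}+\textbf{x}_\perp$ with $\textbf{x}_\perp\perp\textbf{y}$. For any $\textbf{z}\perp\textbf{y}$ we have $\langle\textbf{z},\textbf{x}\rangle=\langle\textbf{z},\textbf{x}_\perp\rangle$. By Cauchy--Schwarz, the supremum in Definition~\ref{def:dist} is attained at $\textbf{z}=\textbf{x}_\perp$, which gives
\begin{equation*}
\textmd{dist}(\textbf{x},\textbf{y})=\|\textbf{x}_\perp\|_2=\sqrt{1-c^2}.
\end{equation*}
If $\textbf{x}_\perp=\textbf{0}$, the supremum is $0$ and the formula still holds. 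The expression is symmetric in $\textbf{x}$ and $\textbf{y}$, which also confirms that the two forms in the definition agree.

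\emph{Step 2 (evaluate the sign-aligned Euclidean distance).} For unit vectors, $\|a\textbf{x}-\textbf{y}\|_2^2=2-2a\langle\textbf{x},\textbf{y}\rangle$. Minimising over $a\in\{-1,+1\}$ gives
\begin{equation*}
\min_{a}\|a\textbf{x}-\textbf{y}\|_2=\sqrt{2-2c}.
\end{equation*}

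\emph{Step 3 (compare).} It remains to show $1-c^2\le 2-2c\le 2(1-c^2)$ for $c\in[0,1]$. For the left inequality, $2-2c-(1-c^2)=(1-c)^2\ge 0$. For the right inequality, $2(1-c^2)-(2-2c)=2c(1-c)\ge 0$. Taking square roots yields the claimed chain.

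There is no real obstacle here. The only points needing care are justifying that the supremum equals the norm of the orthogonal component, including the degenerate case $\textbf{x}\parallel\textbf{y}$, and making the unit-norm normalisation explicit.
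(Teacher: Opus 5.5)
Your proof is correct and complete. Computing $\textmd{dist}(\textbf{x},\textbf{y})=\sqrt{1-c^2}$ and $\min_a\|a\textbf{x}-\textbf{y}\|_2=\sqrt{2-2c}$, then using $(1-c)^2\ge 0$ and $2c(1-c)\ge 0$, establishes the chain. The paper gives no proof of its own here: the lemma is listed among the existing results borrowed from \citet{Aga16}, so there is no alternative route to compare against.

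Your remark about normalisation is not merely a convenience; it is necessary. As printed, for arbitrary $\textbf{x},\textbf{y}\in\mathbb{R}^d$, the statement is false in both directions:
\begin{itemize}
\item for unit $\textbf{y}$ and $\textbf{x}=2\textbf{y}$, one has $\textmd{dist}(\textbf{x},\textbf{y})=0$ while $\min_a\|a\textbf{x}-\textbf{y}\|_2=1$, which breaks the right inequality;
\item $\textbf{x}=\epsilon\textbf{e}_1$, $\textbf{y}=\epsilon\textbf{e}_2$ with $\epsilon<1/\sqrt{2}$ gives $\textmd{dist}=1>\sqrt{2}\,\epsilon$, which breaks the left inequality.
\end{itemize}
The paper ultimately applies the lemma only to unit-norm vectors. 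These are the true atoms together with either the unit-norm estimate $\hat{\textbf{D}}$ of Lemma~\ref{coeff_error} (through Lemma~\ref{RIC_bound}) or the row-normalised estimates of Algorithm~\ref{alg:dl} in the proof of Theorem~\ref{main_res}. Your added hypothesis is therefore exactly the right one.
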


We restate below several existing results. 
Theorem \ref{SpecRand} is a restatement of Theorem 5.44 of \citet{Ver12}.
Lemma \ref{inv1} presents a fundamental result on matrix inverse \citep{Mil81}.
Lemma \ref{bern_ineq} presents one of the well-known Bernstein inequalities.
Lemmas \ref{random_spars} and \ref{random_nonzero} are restatements of Lemma 12 and Lemma 19.1 from \citet{Aga16}, respectively.

\begin{theorem} [Spectrum of Random Matrix] \label{SpecRand}
Let  $\textbf{X}\in \mathbb{R}^{n\times k}$ be a random matrix  whose rows are independent random vectors with common second moment matrix $\bm\Sigma$  and satisfy  $\left\| \textbf{X}^{(i)}\right\|_2 \leq \sqrt{M}$. 
Then, for any $t>0$, with probability at least $p = 1-ke^{-Ct^2}$, the following inequality holds
\begin{equation}
\left\| \frac{1}{n}\textbf{X}^T\textbf{X} - \bm\Sigma\right\|_2 \leq \max \left(\| \bm\Sigma\|_2^{\frac{1}{2}}\gamma, \gamma^2\right),
\end{equation}
where $\gamma = t\sqrt{\frac{M}{n}}$ and $C>0$  is a positive constant.
In particular, this inequality yields
\begin{equation}
\label{SpecRand:eq2}
\left\|\textbf{X}\right\|_2 \leq \| \bm\Sigma\|_2^{\frac{1}{2}}\sqrt{n} +t\sqrt{M}.
\end{equation}
\end{theorem}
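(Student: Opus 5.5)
This result is a restatement of Theorem 5.44 of \citet{Ver12}, and I would prove it directly by a non-commutative Bernstein inequality. Write $\bm x_i = \left(\textbf{X}^{(i)}\right)^T \in \mathbb{R}^k$, so that
\begin{equation}
\nonumber
\frac{1}{n}\textbf{X}^T\textbf{X} - \bm\Sigma = \frac{1}{n}\sum_{i=1}^n \textbf{Z}_i, \quad \textbf{Z}_i = \bm x_i\bm x_i^T - \bm\Sigma .
\end{equation}
The $\textbf{Z}_i$ are independent, symmetric, $k\times k$ and have zero mean, because $E[\bm x_i\bm x_i^T]=\bm\Sigma$.

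The first step is to bound the two parameters that Bernstein needs. Since $\|\bm\Sigma\|_2 = \sup_{\|\bm u\|_2=1}E\langle \bm x_i,\bm u\rangle^2 \leq E\|\bm x_i\|_2^2 \leq M$, each term satisfies $\|\textbf{Z}_i\|_2 \leq \|\bm x_i\|_2^2 + \|\bm\Sigma\|_2 \leq 2M$. For the variance, expand $E\textbf{Z}_i^2 = E\left[\|\bm x_i\|_2^2\bm x_i\bm x_i^T\right] - \bm\Sigma^2 \preceq M\bm\Sigma$. This gives $\left\|\sum_i E\textbf{Z}_i^2\right\|_2 \leq nM\|\bm\Sigma\|_2$.

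Next I apply matrix Bernstein in dimension $k$. For a universal constant $c>0$ it gives
\begin{equation}
\nonumber
P\left(\Big\|\sum_i \textbf{Z}_i\Big\|_2 \geq n\varepsilon\right) \leq 2k\exp\left(-c\min\left(\frac{n\varepsilon^2}{M\|\bm\Sigma\|_2}, \frac{n\varepsilon}{M}\right)\right).
\end{equation}
Choose $\varepsilon = \max\left(\|\bm\Sigma\|_2^{1/2}\gamma, \gamma^2\right)$ with $\gamma = t\sqrt{M/n}$. Because $\varepsilon \geq \|\bm\Sigma\|_2^{1/2}\gamma$, we get $n\varepsilon^2/(M\|\bm\Sigma\|_2) \geq n\gamma^2/M = t^2$. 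Because $\varepsilon \geq \gamma^2$, we get $n\varepsilon/M \geq n\gamma^2/M = t^2$. So the failure probability is at most $2k e^{-ct^2}$.

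It remains to convert this into the stated form $k e^{-Ct^2}$ by taking $C \le c/2$. For $t^2 \geq 2\ln 2/c$ we have $2e^{-ct^2}\leq e^{-Ct^2}$. For smaller $t$ we have $Ct^2\leq \ln 2$, so $k e^{-Ct^2}\geq k/2$, and the claim is vacuous for $k\geq 2$. The scalar case $k=1$ is handled the same way with the classical Bernstein inequality, whose prefactor can be absorbed analogously.

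For Eq. (\ref{SpecRand:eq2}), on the same event we have
\begin{equation}
\nonumber
\|\textbf{X}\|_2^2 = n\left\|\tfrac{1}{n}\textbf{X}^T\textbf{X}\right\|_2 \leq n\left(\|\bm\Sigma\|_2 + \max(\|\bm\Sigma\|_2^{1/2}\gamma,\gamma^2)\right) \leq n\left(\|\bm\Sigma\|_2^{1/2}+\gamma\right)^2 .
\end{equation}
Taking square roots gives $\|\textbf{X}\|_2 \leq \|\bm\Sigma\|_2^{1/2}\sqrt{n} + t\sqrt{M}$.

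I expect the main obstacle to be the variance estimate $E\textbf{Z}_i^2\preceq M\bm\Sigma$, which must be done in the positive semidefinite order rather than just in norm, and the bookkeeping of constants when absorbing the factor $2$. Everything else is a direct substitution.
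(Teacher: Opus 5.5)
Your argument is correct for $k\ge 2$, and it is the standard route: it is Vershynin's proof of his Theorem 5.44, which the paper cites from \citet{Ver12} without giving a proof of its own. The pieces all check out:
\begin{itemize}
\item you apply matrix Bernstein to $\textbf{Z}_i=\bm x_i\bm x_i^T-\bm\Sigma$, using $\|\textbf{Z}_i\|_2\le 2M$ and $0\preceq E\textbf{Z}_i^2\preceq M\bm\Sigma$;
\item the choice $\varepsilon=\max(\|\bm\Sigma\|_2^{1/2}\gamma,\gamma^2)$ makes both terms of the exponent at least $t^2$;
\item the inequality $\|\bm\Sigma\|_2+\max(\|\bm\Sigma\|_2^{1/2}\gamma,\gamma^2)\le(\|\bm\Sigma\|_2^{1/2}+\gamma)^2$ gives Eq. (\ref{SpecRand:eq2});
\item absorbing the factor $2$ works because the claimed bound is vacuous for $t^2\le 2\ln 2/c$ when $k\ge 2$.
\end{itemize}

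The step that fails is your remark on $k=1$. The two-sided scalar Bernstein bound still carries the prefactor $2$. But $1-e^{-Ct^2}>0$ for every $t>0$, so there is no vacuous regime to absorb small $t$, and the trivial bound on the failure probability, namely $1$, does not beat $e^{-Ct^2}<1$.

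In fact the statement is false for $k=1$. Take $n=1$, $M=1$, and $x_1\in\{0,1\}$ with probability $\tfrac12$ each. Then $\bm\Sigma=\tfrac12$ and $|x_1^2-\bm\Sigma|=\tfrac12$ surely. For $t<1/\sqrt2$ the threshold $\max(t/\sqrt2,t^2)$ is below $\tfrac12$, so the inequality fails with probability one. This contradicts a success probability of at least $1-e^{-Ct^2}>0$, for every $C>0$. The same happens for any odd $n$ with i.i.d. such rows once $t<1/\sqrt{2n}$, since then $|\frac1n\sum_i x_i^2-\tfrac12|\ge\frac{1}{2n}$ exceeds the threshold.

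What the Bernstein argument actually proves is a success probability of at least $1-2ke^{-ct^2}$. This can be rewritten as $1-ke^{-Ct^2}$ only for $k\ge 2$. For $k=1$ you must either keep the factor $2$ or restrict to $t$ bounded away from zero, say $t^2\ge 2\ln 2/c$.
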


\begin{lemma} [Inverse of Matrix Sum]  \label{inv1}
Let $\textbf{X}, \textbf{Y} \in \mathbb{R}^{n\times n}$  be two square matrices. 
Assume that $\textbf{X}$ and $\textbf{X}+\textbf{Y}$  are non-singular, and $\textbf{Y}$  is  rank-1.
Then, it has
\begin{equation}
(\textbf{X}+\textbf{Y})^{-1} = \textbf{X}^{-1} - \frac{1}{1+z}\textbf{X}^{-1}\textbf{Y}\textbf{X}^{-1},
\end{equation}
where $z = \textmd{tr}\left(\textbf{Y}\textbf{X}^{-1}\right)$.
\end{lemma}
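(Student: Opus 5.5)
This is the Sherman–Morrison formula, written with a general rank-one $\textbf{Y}$. The plan is to parameterise the rank-one perturbation, compute the scalar $z$ explicitly, and then check the claimed inverse by direct multiplication.

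Since $\textbf{Y}$ has rank one, write $\textbf{Y}=\bm u\bm v^T$ for some nonzero $\bm u,\bm v\in\mathbb{R}^n$. By the cyclic property of the trace,
\begin{equation}
z=\textmd{tr}\left(\bm u\bm v^T\textbf{X}^{-1}\right)=\bm v^T\textbf{X}^{-1}\bm u,
\end{equation}
which is a scalar.

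Before dividing by $1+z$, I need to confirm $1+z\neq 0$. This is the only subtle step, and it is where the nonsingularity of $\textbf{X}+\textbf{Y}$ is used. Factor
\begin{equation}
\textbf{X}+\textbf{Y}=\textbf{X}\left(\textbf{I}_n+\textbf{X}^{-1}\bm u\bm v^T\right).
\end{equation}
Set $\bm w=\textbf{X}^{-1}\bm u$, which is nonzero. Then
\begin{equation}
\left(\textbf{I}_n+\bm w\bm v^T\right)\bm w=(1+\bm v^T\bm w)\bm w=(1+z)\bm w.
\end{equation}
If $1+z=0$, the second factor would have the nonzero vector $\bm w$ in its kernel, so $\textbf{X}+\textbf{Y}$ would be singular. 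This contradicts the hypothesis, so $1+z\neq 0$.

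With that in hand, let $\textbf{M}=\textbf{X}^{-1}-\frac{1}{1+z}\textbf{X}^{-1}\bm u\bm v^T\textbf{X}^{-1}$ denote the proposed inverse. Expanding the product gives
\begin{equation}
(\textbf{X}+\bm u\bm v^T)\textbf{M}=\textbf{I}_n+\bm u\bm v^T\textbf{X}^{-1}-\frac{1}{1+z}\left(\bm u\bm v^T\textbf{X}^{-1}+\bm u(\bm v^T\textbf{X}^{-1}\bm u)\bm v^T\textbf{X}^{-1}\right).
\end{equation}
The inner factor $\bm v^T\textbf{X}^{-1}\bm u$ is the scalar $z$, so the bracket equals $(1+z)\bm u\bm v^T\textbf{X}^{-1}$. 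After dividing by $1+z$ it cancels the preceding $\bm u\bm v^T\textbf{X}^{-1}$ term, leaving $\textbf{I}_n$.

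Because these are square matrices, a one-sided inverse is automatically two-sided. Hence $\textbf{M}=(\textbf{X}+\textbf{Y})^{-1}$, and substituting back $\bm u\bm v^T=\textbf{Y}$ gives exactly the stated formula.
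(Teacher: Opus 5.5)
Your proof is correct and complete. The paper gives no proof of this lemma; it only cites it as a known result \citep{Mil81}, and your argument is the standard Sherman--Morrison verification that the citation stands in for. In particular, you use the factorisation $\textbf{X}+\textbf{Y}=\textbf{X}(\textbf{I}_n+\textbf{X}^{-1}\bm u\bm v^T)$ to show explicitly that $1+z\neq 0$, which is the step that makes the stated formula well defined.
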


\begin{lemma}[Bernstein Inequality] \label{bern_ineq}
Let $X_1,\;X_2,\;\ldots,  X_n$ be independent random scalar variables with zero mean, and  suppose $|X_i| \leq 
R$.
For any   constant $\delta>0$, it then has
\begin{equation}
P\left(\sum_{i=1}^n X_i \geq t\right) \leq  e^{-\frac{\frac{1}{2}\delta^2}{\sum_{i=1}^n  E\left[X_i^2\right] + \frac{R\delta}{3}}}.
\end{equation}
\end{lemma}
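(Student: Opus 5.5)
We read the threshold $t$ in the displayed probability as the same constant $\delta$ that appears in the exponent, so the claim is $P\left(\sum_{i=1}^n X_i \geq \delta\right) \leq \exp\left(-\frac{\frac{1}{2}\delta^2}{v + R\delta/3}\right)$ with $v=\sum_{i=1}^n E[X_i^2]$. The plan is the standard Chernoff (exponential moment) argument: bound the moment generating function of each $X_i$ using boundedness and zero mean, multiply the bounds using independence, apply Markov's inequality to $e^{\lambda S}$ with $S=\sum_i X_i$, and optimise over $\lambda>0$.

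First, the single-variable bound. For $0<\lambda<3/R$, expand $e^{\lambda X_i}=1+\lambda X_i+\sum_{p\geq 2}\lambda^p X_i^p/p!$ and take expectations. The linear term vanishes because $E[X_i]=0$. Since $|X_i|\leq R$, we have $E|X_i|^p\leq R^{p-2}E[X_i^2]$. Using $p!\geq 2\cdot 3^{p-2}$ for $p\geq 2$, this gives
\begin{equation}
\nonumber
E\left[e^{\lambda X_i}\right]\leq 1+\frac{\lambda^2E[X_i^2]}{2}\sum_{p\geq 2}\left(\frac{\lambda R}{3}\right)^{p-2}=1+\frac{\lambda^2E[X_i^2]}{2(1-\lambda R/3)}\leq \exp\left(\frac{\lambda^2E[X_i^2]}{2(1-\lambda R/3)}\right),
\end{equation}
where the last step uses $1+u\leq e^u$.

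Second, combine and apply Markov. By independence, $E[e^{\lambda S}]=\prod_i E[e^{\lambda X_i}]$. Markov's inequality then gives
\begin{equation}
\nonumber
P(S\geq\delta)\leq e^{-\lambda\delta}E\left[e^{\lambda S}\right]\leq \exp\left(-\lambda\delta+\frac{\lambda^2 v}{2(1-\lambda R/3)}\right).
\end{equation}

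Third, choose $\lambda=\delta/(v+R\delta/3)$. This satisfies $\lambda R/3<1$, and in fact $1-\lambda R/3=v/(v+R\delta/3)$. Hence $\frac{\lambda^2 v}{2(1-\lambda R/3)}=\frac{\lambda^2(v+R\delta/3)}{2}=\frac{\delta^2}{2(v+R\delta/3)}$, while $\lambda\delta=\frac{\delta^2}{v+R\delta/3}$. The exponent therefore equals $-\frac{\frac{1}{2}\delta^2}{v+R\delta/3}$, which is the claimed bound.

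The only step needing care is the moment-generating-function estimate, specifically the factorial bound $p!\geq 2\cdot 3^{p-2}$. It holds with equality at $p=2,3$ and then follows by induction, since each further factor is at least $4>3$. Everything after that is algebra; one must only check that the chosen $\lambda$ lies in $(0,3/R)$ so that the geometric series converges.
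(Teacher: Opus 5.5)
Your proof is correct. It is the standard Chernoff (moment-generating-function) derivation of Bernstein's inequality. The paper gives no proof of its own and simply quotes the lemma as well known; the $t$ in its display is a typo for $\delta$, exactly as you read it.

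The only loose end is the degenerate case $v=\sum_{i=1}^n E[X_i^2]=0$. There your choice $\lambda=\delta/(v+R\delta/3)$ gives $\lambda R/3=1$, or is undefined if also $R=0$, so the geometric-series step breaks down. Dispose of this case first: then every $X_i=0$ almost surely, hence $P(S\geq\delta)=0$ and the bound holds trivially.
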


\begin{lemma} [Norm Inequality of Random Sparse Matrix]  \label{random_spars}
Let $\textbf{X}\in \mathbb{R}^{n\times k}$ be a random matrix  whose support is generated according to  Assumption \ref{SC}.
Then, for every  matrix $\textbf{Y} \in \mathbb{R}^{n\times k}$ with $supp(\textbf{Y}) \subseteq  supp(\textbf{X})$,  with probability at least $1-ke^{-\frac{Cn}{ks}}$, the following inequality holds 
\begin{equation}
\|\textbf{Y}\|_{2} \leq 2 \|\textbf{Y}\|_{\infty}\sqrt{\frac{ns^2}{k}},
\end{equation}
where $C>0$ is a universal constant.
\end{lemma}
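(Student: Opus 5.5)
The statement to prove is Lemma \ref{random_spars}, the restatement of Lemma 12 of \citet{Aga16}. The plan is to bound $\|\textbf{Y}\|_2$ through the standard interpolation inequality $\|\textbf{Y}\|_2\leq \sqrt{\|\textbf{Y}\|_{1\to1}\,\|\textbf{Y}\|_{\infty\to\infty}}$, i.e. the product of the maximum absolute column sum and the maximum absolute row sum. This follows from Riesz--Thorin, or directly from the Schur test applied to $\textbf{Y}^T\textbf{Y}$.

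\textbf{Row sums.} These are deterministic. Since $supp(\textbf{Y})\subseteq supp(\textbf{X})$ and every row of $\textbf{X}$ has exactly $s$ nonzeros by Assumption \ref{SC}, every row of $\textbf{Y}$ has at most $s$ nonzero entries. Hence each absolute row sum is at most $s\|\textbf{Y}\|_{\infty}$.

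\textbf{Column sums.} The $j$-th absolute column sum is at most $\|\textbf{Y}\|_\infty N_j$, where $N_j=\sum_{i=1}^n\chi_{ij}$ counts the rows whose support contains $j$. Under uniform support sampling, $P(\chi_{ij}=1)=s/k$, and the indicators are independent across rows $i$. So $N_j\sim\mathrm{Bin}(n,s/k)$ with mean $ns/k$. A multiplicative Chernoff bound (equivalently, Lemma \ref{bern_ineq} applied to the centred indicators, with variance at most $ns/k$ and range $1$) gives
\[
P\left(N_j> 4\,\frac{ns}{k}\right)\leq e^{-c\,ns/k}.
\]
A union bound over the $k$ columns then yields, with probability at least $1-ke^{-cns/k}$, that every absolute column sum is at most $4\|\textbf{Y}\|_\infty\frac{ns}{k}$.

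\textbf{Conclusion.} Combining the two bounds,
\[
\|\textbf{Y}\|_2\leq\sqrt{4\|\textbf{Y}\|_\infty\frac{ns}{k}\cdot s\|\textbf{Y}\|_\infty}=2\|\textbf{Y}\|_\infty\sqrt{\frac{ns^2}{k}},
\]
which is the claimed inequality. Since $s\geq1$, the exponent satisfies $ns/k\geq n/(ks)$, so the failure probability is also at most $ke^{-Cn/(ks)}$, matching the stated form. The event depends only on the support of $\textbf{X}$, so the bound holds simultaneously for every $\textbf{Y}$ with $supp(\textbf{Y})\subseteq supp(\textbf{X})$.

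\textbf{Main obstacle.} The only genuinely probabilistic step is the concentration of $N_j$. The point to watch is getting the constant $4$ (and hence the leading factor $2$) with an exponent of the stated order. The standard Chernoff form $P(N\geq(1+\epsilon)\mu)\leq e^{-\epsilon^2\mu/(2+\epsilon)}$ with $\epsilon=3$ gives the exponent $9\mu/5$, which is more than enough. If the Bernstein form of Lemma \ref{bern_ineq} is used instead, the deviation $3ns/k$ yields an exponent proportional to $ns/k$, and this is absorbed into the universal constant $C$.
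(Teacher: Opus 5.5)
The paper does not prove this lemma; it imports it as Lemma 12 of \citet{Aga16}. There is therefore no internal argument to compare against, and your proposal is a complete and correct proof in its own right.

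The pieces fit together exactly. The Schur-test inequality $\|\textbf{Y}\|_2^2\le\|\textbf{Y}\|_{1\to1}\|\textbf{Y}\|_{\infty\to\infty}$, the deterministic row-sum bound $s\|\textbf{Y}\|_\infty$, and a union over $k$ Chernoff (or Bernstein) bounds $N_j\le 4ns/k$ combine to give precisely the constant $2$. The failure probability is $ke^{-cns/k}\le ke^{-cn/(ks)}$, so the stated exponent $n/(ks)$ is conservative. The only modelling input is that row supports are drawn independently. This is implicit in Assumption \ref{SC}, and the paper relies on it elsewhere too, e.g.\ when applying Theorem \ref{SpecRand} to the rows of $\textbf{A}$.

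You were right to make explicit that the good event depends only on the support pattern. The paper applies the lemma to the data-dependent error matrices $\bm\Delta_{A_t}$, so uniformity over all admissible $\textbf{Y}$ is exactly what is needed.

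For comparison, a route closer to the tools already in the paper goes through the $0/1$ support matrix $\bm\chi=[\chi_{ij}]$. Since $|\textbf{Y}|\le\|\textbf{Y}\|_\infty\bm\chi$ entrywise, monotonicity of the spectral norm on nonnegative matrices gives $\|\textbf{Y}\|_2\le\|\textbf{Y}\|_\infty\|\bm\chi\|_2$. One then applies Theorem \ref{SpecRand} to the rows of $\bm\chi$ with $t=\sqrt{ns/k}$. The rows have norm $\sqrt{s}$, and the second-moment matrix has norm $s^2/k$ by Lemma \ref{cov1} with $\mu=1,\sigma=0$. This gives $\|\bm\chi\|_2\le\sqrt{ns^2/k}+t\sqrt{s}=2\sqrt{ns^2/k}$ with probability $1-ke^{-Cns/k}$.

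That version needs a matrix concentration inequality and gets the factor $2$ from the two terms of Theorem \ref{SpecRand}. Yours needs only scalar concentration per column and a union bound, and it keeps every constant explicit.
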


\begin{lemma} [Nonzero Set of Random Sparse Matrix]  \label{random_nonzero}
Let $\textbf{X}\in \mathbb{R}^{n\times k}$ be a random matrix  whose support is generated according to  Assumption \ref{SC}.
Define the indicator variable  $\chi_{ij} = 1$ if $j\in \textmd{supp}\left(\textbf{X}^{(i)}\right)$ while $\chi_{ij} = 0$ otherwise. 
Then, for any  $\delta >0$ and any $j\in  [k]$,    the following inequality holds   with probability at least $1-2ke^{-\frac{\delta^2ns}{4k}}$:
\begin{equation}
(1-\delta)\frac{sn}{k} \leq \sum_{i=1}^n \chi_{ij} \leq (1+\delta)\frac{sn}{k}.
\end{equation}
\end{lemma}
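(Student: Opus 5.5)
The plan is a Chernoff/Bernstein concentration bound for a fixed column $j$, followed by a union bound over the $k$ columns and the two tails; this union bound gives the factor $2k$ in the probability.

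\emph{Distribution of the indicators.} Fix $j\in[k]$. Under Assumption \ref{SC} the supports of different rows are drawn independently, so $\chi_{1j},\ldots,\chi_{nj}$ are independent Bernoulli variables. Their success probability is
$$P(\chi_{ij}=1)=\binom{k-1}{s-1}\Big/\binom{k}{s}=\frac{s}{k},$$
by symmetry of the uniform choice of an $s$-subset. Hence $X_j=\sum_{i=1}^n\chi_{ij}$ is Binomial$(n,s/k)$ with mean $\mu=\frac{sn}{k}$.

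\emph{Upper tail.} Apply the Bernstein inequality of Lemma \ref{bern_ineq} to the centred variables $\chi_{ij}-\frac{s}{k}$. These are bounded in absolute value by $R=1$, and their total variance is at most $\mu$. With deviation $\delta\mu$ this gives
$$P\big(X_j\ge(1+\delta)\mu\big)\le \exp\!\Big(-\frac{\delta^2\mu^2/2}{\mu+\delta\mu/3}\Big)=\exp\!\Big(-\frac{\delta^2\mu}{2+2\delta/3}\Big).$$
For $\delta\le 3$ the denominator is at most $4$, so the bound is at most $e^{-\delta^2 sn/(4k)}$.

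\emph{Lower tail.} The same argument applied to $\frac{s}{k}-\chi_{ij}$ gives $P(X_j\le(1-\delta)\mu)\le e^{-\delta^2 sn/(4k)}$. Alternatively, the multiplicative Chernoff lower tail gives the sharper $e^{-\delta^2\mu/2}$. For $\delta\ge1$ the lower-tail event $X_j<(1-\delta)\mu\le 0$ is impossible, so only $\delta<1$ matters there.

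\emph{Union bound.} Summing the two tails over the $k$ columns gives the failure probability $2ke^{-\delta^2 ns/(4k)}$ claimed in Lemma \ref{random_nonzero}.

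\emph{Main obstacle.} The only delicate point is the range of $\delta$ in the upper tail. The constant $4$ in the exponent is only delivered by Bernstein (or Chernoff) for bounded $\delta$, roughly $\delta\le3$. For very large $\delta$ it can partly be rescued by noting that $X_j\le n$, which makes the upper-tail event impossible once $(1+\delta)\frac{sn}{k}\ge n$. An intermediate window, $3<\delta<\frac{k}{s}-1$, remains uncovered by these elementary bounds.

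I would handle this by stating the argument for $0<\delta\le 3$, as in the source result of \citet{Aga16}. In every subsequent use of the lemma in this paper $\delta$ is a small constant, for example $\delta=\frac12$, which yields the $\lfloor\frac{s_An}{2k}\rfloor$ and $e^{-\frac{ns_A}{16k}}$ terms. If a uniform statement for all $\delta>0$ were required, I would first try the exact Chernoff bound $\big(e^{\delta}/(1+\delta)^{1+\delta}\big)^{\mu}$. However, that bound decays only like $e^{-c\,\delta\log\delta\,\mu}$, so the $\delta^2$ rate cannot hold uniformly. The honest statement is therefore the bounded-$\delta$ version.
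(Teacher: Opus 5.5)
Your argument is correct, and it differs from the paper mainly in that the paper gives no proof of this lemma at all. It simply imports the result as Lemma 19.1 of \citet{Aga16}.

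What you supply is a self-contained derivation:
\begin{itemize}
\item $\sum_i\chi_{ij}\sim\mathrm{Bin}(n,s/k)$, by independence of the row supports (implicit in Assumption \ref{SC} and used elsewhere in the paper).
\item Bernstein with $R=1$ and variance at most $\mu=sn/k$ gives $\exp\bigl(-\delta^2\mu/(2+2\delta/3)\bigr)\le e^{-\delta^2 sn/(4k)}$ for $\delta\le 3$.
\item The lower tail is vacuous once $\delta\ge 1$.
\item A union bound over the two tails and the $k$ columns produces the factor $2k$.
\end{itemize}
This covers every use of the lemma in the paper, which only ever takes $\delta=\frac12$ (in Lemma \ref{SubSpecNorm}, giving the $e^{-ns/(16k)}$ terms).

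Your caveat about large $\delta$ is right, and the situation is stronger than you state: the lemma as written for all $\delta>0$ is false. Your justification only shows that the Chernoff upper bound is too weak. Falsity needs a lower bound on the actual tail, and a concrete instance supplies one.

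Take $s=1$, $n=k$ and $\delta=k-2$, so $(1+\delta)sn/k=k-1$. The upper inequality then fails whenever every row selects column $1$. That event has probability $k^{-k}=e^{-k\ln k}$. The lemma, however, permits a failure probability of at most $2ke^{-(k-2)^2/4}$. For $k=20$ these are roughly $e^{-59.9}$ and $e^{-77.3}$, so the claimed bound is violated.

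So restating the result for bounded $\delta$ (for example $0<\delta\le 3$) is the correct repair of the statement, not a limitation of your method.
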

This result implies that, among the $n$ randomly generated $k$-dimensional vectors each containing at most $s$  nonzero entries, the number of vectors whose $j$-th entry is  non-zero concentrates around $\frac{sn}{k}$, with the stated probability.

\subsection{A Classical Result in Compressed Sensing}
The following theorem, originally established by \citet{Can08} and later restated by \citet{Aga16}, provides sufficient conditions for approximate  sparse recovery and establishes recovery guarantees for the constrained Lasso estimator.

\begin{theorem} [Noisy Sparse Recovery] \label{Candes}
Let $\textbf{y} = \textbf{x}\textbf{D} +\textbf{e}$ be a $k$-dimensional   vector  where $\textbf{D}\in \mathbb{R}^{k\times d}$, $\textbf{x}\in \mathbb{R}^k$, and $\textbf{e} \in \mathbb{R}^{d}$. 
Suppose that $\textbf{x}$ contains at most $s$ nonzero entries, $\|\textbf{e}\|_2\leq \epsilon$, and  the restricted isometry constant of $\textbf{D}$  satisfies $\delta_{2s}\leq \sqrt{2}-1$. 
Consider the following constrained Lasso problem:
\begin{align}
\label{eq:lasso}
\hat{\textbf{x}} =\; &\arg\min_{\textbf{z}\in \mathbb{R}^{k}}  \|\textbf{z}\|_1, \\
\nonumber
& \textmd{subject to }  \|\textbf{y} - \textbf{z}\textbf{D}\|_2 \leq \epsilon.
\end{align}
There exists an explicit constant $C>0$ so that the solution satisfies
$\left\|\hat{\textbf{x}}- \textbf{x}\right\|_2 \leq C\epsilon$. 
In particular, when $\delta_{2s}\leq 0.2$, $C=8.5$ suffices.
\end{theorem}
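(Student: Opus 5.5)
The plan is to reproduce the restricted-isometry argument of \citet{Can08}, written in the row-vector convention $\textbf{y}=\textbf{x}\textbf{D}+\textbf{e}$ used here. Let $\textbf{h}=\hat{\textbf{x}}-\textbf{x}$. I would derive two basic facts from the Lasso problem in Eq. (\ref{eq:lasso}).

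First, the tube constraint. Both $\textbf{x}$ (because $\|\textbf{e}\|_2\leq\epsilon$) and $\hat{\textbf{x}}$ are feasible. The triangle inequality then gives $\|\textbf{h}\textbf{D}\|_2\leq 2\epsilon$.

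Second, the cone constraint. Let $T_0=supp(\textbf{x})$, so $|T_0|\leq s$. Optimality gives $\|\hat{\textbf{x}}\|_1\leq\|\textbf{x}\|_1$. Splitting this norm over $T_0$ and its complement yields $\|\textbf{h}_{T_0^c}\|_1\leq\|\textbf{h}_{T_0}\|_1$.

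Next I would partition $T_0^c$ into consecutive blocks $T_1,T_2,\ldots$ of size $s$, ordered by decreasing magnitude of the entries of $\textbf{h}$. The usual comparison between adjacent blocks gives $\|\textbf{h}_{T_j}\|_2\leq s^{-1/2}\|\textbf{h}_{T_{j-1}}\|_1$. Summing over $j\geq 2$ and using the cone constraint together with Cauchy--Schwarz gives
\begin{equation*}
\sum_{j\geq2}\|\textbf{h}_{T_j}\|_2\leq s^{-1/2}\|\textbf{h}_{T_0^c}\|_1\leq s^{-1/2}\|\textbf{h}_{T_0}\|_1\leq\|\textbf{h}_{T_0}\|_2\leq\|\textbf{h}_{T_{01}}\|_2,
\end{equation*}
where $T_{01}=T_0\cup T_1$. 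Hence $\|\textbf{h}\|_2\leq 2\|\textbf{h}_{T_{01}}\|_2$, and it remains to control the error on $T_{01}$.

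The key step is this control of $\textbf{h}_{T_{01}}$, using the restricted isometry constant of Definition \ref{def:RIC} with $\delta=\delta_{2s}$. Since $|T_{01}|\leq 2s$, the lower isometry bound gives
\begin{equation*}
(1-\delta)\|\textbf{h}_{T_{01}}\|_2^2\leq\|\textbf{h}_{T_{01}}\textbf{D}\|_2^2=\langle \textbf{h}_{T_{01}}\textbf{D},\textbf{h}\textbf{D}\rangle-\sum_{j\geq2}\langle \textbf{h}_{T_{01}}\textbf{D},\textbf{h}_{T_j}\textbf{D}\rangle .
\end{equation*}
I would bound the two terms on the right separately.

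For the first term, the upper isometry bound and the tube constraint give at most $\sqrt{1+\delta}\,\|\textbf{h}_{T_{01}}\|_2\cdot 2\epsilon$.

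For the cross terms, I would use the standard polarisation fact: for vectors with disjoint supports of total size at most $2s$, $|\langle\textbf{u}\textbf{D},\textbf{v}\textbf{D}\rangle|\leq\delta\|\textbf{u}\|_2\|\textbf{v}\|_2$. Applying it to $T_0$ and $T_1$ separately against each $T_j$, then using $\|\textbf{h}_{T_0}\|_2+\|\textbf{h}_{T_1}\|_2\leq\sqrt2\|\textbf{h}_{T_{01}}\|_2$ and the tail bound above, bounds the sum by $\sqrt2\,\delta\|\textbf{h}_{T_{01}}\|_2^2$.

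Dividing through by $\|\textbf{h}_{T_{01}}\|_2$ gives $\|\textbf{h}_{T_{01}}\|_2\leq 2\sqrt{1+\delta}\,\epsilon/(1-(1+\sqrt2)\delta)$. Therefore
\begin{equation*}
\|\hat{\textbf{x}}-\textbf{x}\|_2\leq\frac{4\sqrt{1+\delta_{2s}}}{1-(1+\sqrt2)\delta_{2s}}\,\epsilon .
\end{equation*}

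The constant is finite exactly when $\delta_{2s}<\sqrt2-1$. So the hypothesis $\delta_{2s}\leq\sqrt2-1$ must be read with strict inequality, which is the usual caveat in \citet{Can08}.

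For the explicit value, the constant is increasing in $\delta$. At $\delta=0.2$ it equals $4\sqrt{1.2}/(1-0.2(1+\sqrt2))\approx 4.382/0.517\approx 8.47$. Hence $C=8.5$ suffices whenever $\delta_{2s}\leq0.2$.

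I expect the main obstacle to be the cross-term estimate. It needs both the polarisation identity, which bounds $\langle\textbf{u}\textbf{D},\textbf{v}\textbf{D}\rangle$ by $\delta_{2s}$ for disjointly supported sparse vectors, and careful bookkeeping so that only $\delta_{2s}$, and not a higher-order RIP constant, enters. I would prove that inner-product bound first, by applying Definition \ref{def:RIC} to $\textbf{u}/\|\textbf{u}\|_2\pm\textbf{v}/\|\textbf{v}\|_2$ and using the parallelogram law.
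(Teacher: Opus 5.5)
Your proposal is correct and is exactly the restricted-isometry argument of \citet{Can08}, which the paper cites rather than proves. Your constant $4\sqrt{1+\delta_{2s}}/(1-(1+\sqrt{2})\delta_{2s})$ is the noise constant from that source and evaluates to about $8.47$ at $\delta_{2s}=0.2$. You are also right that this argument needs $\delta_{2s}<\sqrt{2}-1$ strictly: the paper's ``$\le$'' is a slip from the strict inequality in the source, and the equality case, though true, follows only from later, sharper RIP conditions rather than from this argument. This is immaterial here, since Lemma \ref{coeff_error} only invokes the case $\delta_{2s}\le 0.2$.
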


\subsection{ An Existing Bound  on Dictionary Estimation Error}
\label{sec:existing_dl}

Denote the standard dictionary learning   model  by $\textbf{Y} \approx \textbf{X}\textbf{D}$.
After obtaining a coefficient estimate $\hat{\textbf{X}}$, the  dictionary estimate is updated according to  $\hat{\textbf{D}} = \hat{\textbf{X}}^{\dagger} \textbf{Y} =\hat{\textbf{X}}^{\dagger} \textbf{X}\textbf{D}$, as described in Step \ref{step:dic_comp} of Algorithm \ref{alg:dl}. 
 \citet{Aga16} established an upper bound on the inner-product-induced distance between the estimated and true dictionaries.
We summarise the results from their proof of Theorem 1, Lemma 13 and Lemma 20, which are relevant to our theory development, in the following lemma.

\begin{lemma}[Dictionary Estimation Error] \label{dic_error}
Let  $\textbf{Y} = \textbf{X}\textbf{D}$, where $\textbf{Y}\in \mathbb{R}^{n\times d}$, $\textbf{X}\in \mathbb{R}^{n\times k}$, and $\textbf{D}\in \mathbb{R}^{k\times d}$.
Suppose that $\hat{\textbf{X}}\in \mathbb{R}^{n\times k}$  is an estimate of the coefficient matrix. 
Assuming  that   $ \textbf{X}$ is non-singular, and compute the dictionary estimate  as $\hat{\textbf{D}} = \hat{\textbf{X}}^{\dagger} \textbf{Y} =\hat{\textbf{X}}^{\dagger} \textbf{X}\textbf{D}$.
Define the coefficient estimation error by $\bm\Delta_X = \textbf{X} - \hat{\textbf{X}} $. 
Then the distance between the $i$-th estimated dictionary atom and the corresponding true atom satisfies
\begin{equation}
\label{eq:dic_error1}
\textmd{dist}\left(\hat{\textbf{D}}^{(i)}, \textbf{D}^{(i)} \right) \leq \frac{ \left\|\left( \hat{\textbf{X}}^{\dagger}\textbf{X}\right)^{(i)}_{\setminus i} \right\|_2 \|\textbf{D}\|_2}{1-  \left\| \left(\hat{\textbf{X}}^T\hat{\textbf{X}}\right)^{-1}\right\|_2\left\| \hat{\textbf{X}} \right\|_2  \left\| \bm\Delta_X \right\|_2 - \left\|\left(\hat{\textbf{X}}^{\dagger}\bm\Delta_X\right)^{(i)}_{\setminus i} \right\|_2  \|\textbf{D}\|_2 }. 
\end{equation}
Furthermore,  
\begin{align}
\label{eq:quatity3}
\nonumber
\left\|\left(\hat{\textbf{X}}^{\dagger}\bm\Delta_X\right)^{(i)}_{\setminus i} \right\|_2 = \; & \left\|\left( \hat{\textbf{X}}^{\dagger}\textbf{X}\right)^{(i)}_{\setminus i} \right\|_2 \\
\leq\; &    \left\| \left(\hat{\textbf{X}}^T\hat{\textbf{X}}\right)^{-1}\right\|_2  \left(\left\|\left(\hat{\textbf{X} }^T \bm\Delta_X\right)^{(i)}_{\setminus i} \right\|_2 +  \left\| \hat{\textbf{X}} \right\|_2  \left\| \bm\Delta_X \right\|_2\left\| \left(\hat{\textbf{X}}^T\hat{\textbf{X}}\right)^{-1}\right\|_2\left\|\hat{\textbf{X}}_i^T\hat{\textbf{X}}_{\setminus i}\right\|_2    \right).
\end{align}
and
\begin{align}
\label{eq:quantity1}
 \left\| \hat{\textbf{X}} \right\|_2  \leq \; & \|\textbf{X}\|_2+ \|\bm\Delta_X\|_2, \\
\label{eq:quantity2}
 \left\|\left(\hat{\textbf{X}}^T\hat{\textbf{X}}\right)^{-1}\right\|_2 \leq\; &\left({\sigma_{\min}\left(\textbf{X} \right)}^2 -   \|\bm\Delta_X\|_2^2 -2 \|\bm\Delta_X\|_2\|\textbf{X}\|_2 \right)^{-1}.
\end{align}
\end{lemma}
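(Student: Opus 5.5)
Write $\bm\Delta_X=\textbf{X}-\hat{\textbf{X}}$. I would work throughout on the event where $\hat{\textbf{X}}$ has full column rank, so that $\hat{\textbf{X}}^{\dagger}=(\hat{\textbf{X}}^T\hat{\textbf{X}})^{-1}\hat{\textbf{X}}^T$ and $\hat{\textbf{X}}^{\dagger}\hat{\textbf{X}}=\textbf{I}_k$. This is guaranteed whenever the right-hand side of Eq. (\ref{eq:quantity2}) is positive. The whole argument then starts from the identity
\[
\hat{\textbf{X}}^{\dagger}\textbf{X}=\textbf{I}_k+\hat{\textbf{X}}^{\dagger}\bm\Delta_X .
\]
Two consequences follow at once. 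First, the off-diagonal part of row $i$ of $\hat{\textbf{X}}^{\dagger}\textbf{X}$ coincides with that of $\hat{\textbf{X}}^{\dagger}\bm\Delta_X$, which is the equality in Eq. (\ref{eq:quatity3}). Second, writing $\textbf{w}=(\hat{\textbf{X}}^{\dagger}\bm\Delta_X)^{(i)}_{\setminus i}$ and $c=(\hat{\textbf{X}}^{\dagger}\bm\Delta_X)_{ii}$, the $i$-th estimated atom decomposes as
\[
\hat{\textbf{D}}^{(i)}=(1+c)\textbf{D}^{(i)}+\textbf{w}\textbf{D}^{\setminus i}.
\]

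For Eq. (\ref{eq:dic_error1}) I would use the first form of Definition \ref{def:dist} together with $\|\textbf{D}^{(i)}\|_2=1$. Take any unit $\textbf{z}\perp\textbf{D}^{(i)}$. The numerator satisfies $\langle\textbf{z},\hat{\textbf{D}}^{(i)}\rangle=\langle\textbf{z},\textbf{w}\textbf{D}^{\setminus i}\rangle\le\|\textbf{w}\|_2\|\textbf{D}\|_2$, since a submatrix has spectral norm at most $\|\textbf{D}\|_2$. For the denominator I would lower-bound
\[
\|\hat{\textbf{D}}^{(i)}\|_2\ge|1+c|-\|\textbf{w}\|_2\|\textbf{D}\|_2\ge 1-\|\hat{\textbf{X}}^{\dagger}\bm\Delta_X\|_2-\|\textbf{w}\|_2\|\textbf{D}\|_2,
\]
using $|c|\le\|\hat{\textbf{X}}^{\dagger}\bm\Delta_X\|_2$. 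Then $\|\hat{\textbf{X}}^{\dagger}\bm\Delta_X\|_2\le\|(\hat{\textbf{X}}^T\hat{\textbf{X}})^{-1}\|_2\|\hat{\textbf{X}}\|_2\|\bm\Delta_X\|_2$ gives the stated denominator. Finally, $\textbf{w}$ equals $(\hat{\textbf{X}}^{\dagger}\textbf{X})^{(i)}_{\setminus i}$ by the equality above.

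The main step is the inequality in Eq. (\ref{eq:quatity3}). Let $\textbf{G}=\hat{\textbf{X}}^T\hat{\textbf{X}}$ and $\textbf{H}=\hat{\textbf{X}}^T\bm\Delta_X$, so that $\textbf{w}=\sum_j(\textbf{G}^{-1})_{ij}\textbf{H}^{(j)}_{\setminus i}$. I would split this sum into two parts.
\begin{itemize}
\item The $j=i$ term is bounded by $|(\textbf{G}^{-1})_{ii}|\,\|\textbf{H}^{(i)}_{\setminus i}\|_2\le\|\textbf{G}^{-1}\|_2\,\|(\hat{\textbf{X}}^T\bm\Delta_X)^{(i)}_{\setminus i}\|_2$.
\item The remaining terms are bounded by $\|(\textbf{G}^{-1})^{(i)}_{\setminus i}\|_2\,\|\textbf{H}\|_2$, with $\|\textbf{H}\|_2\le\|\hat{\textbf{X}}\|_2\|\bm\Delta_X\|_2$.
\end{itemize}
To control the off-diagonal row of $\textbf{G}^{-1}$ I would use the block (Schur complement) inverse formula:
\[
(\textbf{G}^{-1})^{(i)}_{\setminus i}=-(\textbf{G}^{-1})_{ii}\,\textbf{G}_{i,\setminus i}\,(\textbf{G}_{\setminus i,\setminus i})^{-1}.
\]
Eigenvalue interlacing for the principal submatrix gives $\|(\textbf{G}_{\setminus i,\setminus i})^{-1}\|_2\le\|\textbf{G}^{-1}\|_2$. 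Since $\textbf{G}_{i,\setminus i}=\hat{\textbf{X}}_i^T\hat{\textbf{X}}_{\setminus i}$, this yields $\|(\textbf{G}^{-1})^{(i)}_{\setminus i}\|_2\le\|\textbf{G}^{-1}\|_2^2\|\hat{\textbf{X}}_i^T\hat{\textbf{X}}_{\setminus i}\|_2$. Combining the two parts reproduces the bracketed expression. Getting this bookkeeping right, in particular the interlacing step, is where I expect the only real care to be needed.

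Eq. (\ref{eq:quantity1}) is the triangle inequality applied to $\hat{\textbf{X}}=\textbf{X}-\bm\Delta_X$. For Eq. (\ref{eq:quantity2}), Weyl's inequality gives $\sigma_{\min}(\hat{\textbf{X}})\ge\sigma_{\min}(\textbf{X})-\|\bm\Delta_X\|_2$, assumed nonnegative, and squaring gives $\sigma_{\min}(\hat{\textbf{X}})^2\ge\sigma_{\min}(\textbf{X})^2-2\|\bm\Delta_X\|_2\sigma_{\min}(\textbf{X})$. Then $\sigma_{\min}(\textbf{X})\le\|\textbf{X}\|_2$ and subtracting the additional nonnegative term $\|\bm\Delta_X\|_2^2$ give a lower bound that is only smaller. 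Inverting, with positivity assumed, gives the bound on $\|(\hat{\textbf{X}}^T\hat{\textbf{X}})^{-1}\|_2$.
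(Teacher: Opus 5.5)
The paper gives no proof of this lemma. It is quoted as a summary of the proof of Theorem~1 and Lemmas~13 and~20 of \citet{Aga16}, so your argument is a self-contained reconstruction rather than a variant of an in-paper proof. It is correct, and I checked each step. The identity $\hat{\textbf{X}}^{\dagger}\textbf{X}=\textbf{I}_k+\hat{\textbf{X}}^{\dagger}\bm\Delta_X$ and the split $\hat{\textbf{D}}^{(i)}=(1+c)\textbf{D}^{(i)}+\textbf{w}\textbf{D}^{\setminus i}$ hold. So does the estimate $|c|\le\|\hat{\textbf{X}}^{\dagger}\bm\Delta_X\|_2\le\|(\hat{\textbf{X}}^T\hat{\textbf{X}})^{-1}\|_2\|\hat{\textbf{X}}\|_2\|\bm\Delta_X\|_2$. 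The block-inverse expression for the off-diagonal part of row $i$ of $\textbf{G}^{-1}$ and the interlacing bound $\|(\textbf{G}_{\setminus i,\setminus i})^{-1}\|_2\le\|\textbf{G}^{-1}\|_2$ are also correct. Together these reproduce Eqs.~(\ref{eq:dic_error1}) and~(\ref{eq:quatity3}) term for term.

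Three small points are worth recording.

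\textbf{The unit-norm hypothesis.} You are right to use $\|\textbf{D}^{(i)}\|_2=1$, but state it as a hypothesis: the lemma omits it, and the bound fails without it. Take $\hat{\textbf{X}}$ with orthonormal columns, $\textbf{X}=\hat{\textbf{X}}(\textbf{I}_2+w\textbf{e}_1\textbf{e}_2^T)$ and $\textbf{D}=\textmd{diag}(\varepsilon,1)$, where $\textbf{e}_1,\textbf{e}_2$ are the standard basis of $\mathbb{R}^2$. For $i=1$ the left side of Eq.~(\ref{eq:dic_error1}) is $|w|/\sqrt{\varepsilon^2+w^2}$, which tends to $1$ as $\varepsilon\to0$. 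The right side stays at $|w|/(1-2|w|)$, which is small for small $w$. In the paper this is harmless, since $\textbf{D}_A$ and $\textbf{D}_B$ have unit-norm rows by construction.

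\textbf{Positivity of the denominator.} Eq.~(\ref{eq:dic_error1}) is only meaningful, and your division only legitimate, when its denominator is positive. Say so explicitly.

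\textbf{The condition in Eq.~(\ref{eq:quantity2}).} You need not assume $\sigma_{\min}(\textbf{X})\ge\|\bm\Delta_X\|_2$ separately. Positivity of the right-hand side already forces $\sigma_{\min}(\textbf{X})^2>\|\bm\Delta_X\|_2^2$. Alternatively, apply Weyl's inequality directly to $\hat{\textbf{X}}^T\hat{\textbf{X}}=\textbf{X}^T\textbf{X}-\textbf{X}^T\bm\Delta_X-\bm\Delta_X^T\textbf{X}+\bm\Delta_X^T\bm\Delta_X$. Its perturbation has spectral norm at most $2\|\textbf{X}\|_2\|\bm\Delta_X\|_2+\|\bm\Delta_X\|_2^2$, so this gives the stated expression with no squaring step.
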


\subsection{Known Results on Shared Dictionary Atoms}
\label{sec:existing_ini}

Given a dictionary $\textbf{D}\in \mathbb{R}^{k\times d}$, let  $\textbf{y}  \in \mathbb{R}^d$ be a row vector obtained as a linear combination of the dictionary atoms:  $\textbf{y}  = \sum_{i=1}^k x_i \textbf{D}^{(i)}$, where the coefficient vector $\textbf{x} =[x_1, x_2, \ldots, x_k]$ is generated according to the   element-wise process as in Definition \ref{def:sparse:gen},  integrating Assumptions \ref{SC} and \ref{NZC}.
\begin{definition}[Sparse Vector Generation]  
\label{def:sparse:gen}
Generate a random row vector $\textbf{x}\in \mathbb{R}^k$  with at most $s$ nonzero entries according to the following  element-wise construction:   $\forall i \in [k]$,
\begin{equation} \label{genCoefficent1}
x_i= M_{i}\chi_{i},  
\end{equation}
where  $M_{i}\in \mathbb{R} $ is  a  real-valued random variable and $\chi_{i} \in \{0,1\}$ is a binary random variable.
Specifically,   the random variables $\{M_i\}_{i=1}^{k}$ are drawn independently and identically distributed with mean $\mu$ and variance $\sigma^2$.
The binary variables $\{\chi_i\}_{i=1}^{k}$ are generated by first selecting the support $supp(\textbf{x})$ uniformly at random from all subsets of $[k]$ of cardinality $s$, and then setting $\chi_{i} = 1$ if $i\in \textmd{supp}\left(\textbf{x}\right)$, while $\chi_{i } = 0$ otherwise.
\end{definition}

Let $I(\textbf{y}, \textbf{D}) =supp(\textbf{x})$ denote the index set of    atoms  contributing to the representation of $\textbf{y}$.
For two vectors $\textbf{y}_i$ and $\textbf{y}_j$, the quantity $|I(\textbf{y}_i, \textbf{D})  \cap I(\textbf{y}_j, \textbf{D})| $   denotes the number of  shared   atoms.
We  also write $D(\textbf{y}_i, \textbf{y}_j) = I(\textbf{y}_i, \textbf{D})  \cap I(\textbf{y}_j, \textbf{D})$ for their shared support.
If $|I(\textbf{y}_i, \textbf{D})  \cap I(\textbf{y}_j, \textbf{D})| =1$, then $\textbf{y}_i$ and $\textbf{y}_j$ form a unique intersection  pair.
The shared atom can then be estimated from $\textbf{y}_i$ and $\textbf{y}_j$ using Steps \ref{step:atom1} and \ref{step:atom2} of Algorithm \ref{alg:ini}.
Consequently,  identifying unique intersection  pair becomes a key step in estimating dictionary atoms.
\citet{Aga17} proposed to identify    unique intersection pairs  by constructing  the $\rho$-correlation graph   $G_{\rho}(Y)$ (Definition \ref{def:rho_corr_graph}) and examining the shared $\rho$-neighbour set  $N_{\rho}(\textbf{y}_i, \textbf{y}_j, Y)$ (Definition \ref{def:share_neighbour}) for each connected pair of vertices in the graph.
This procedure forms the basis of Algorithms \ref{alg:ini} and \ref{alg:unique}.
Specifically,  for each connected pair  in $G_{\rho}(Y)$, the connectivity density of the  $\rho$-correlation graph constructed from $N_{\rho}(\textbf{y}_i, \textbf{y}_j, Y)$ is assessed.
The success of this procedure depends on an appropriate choice of the correlation threshold $\rho$, for which a sufficient condition   is given in the following assumption.
\begin{assumption}[Correlation Threshold $\rho$] \label{threshold_rho}
Given a dictionary $\textbf{D}\in \mathbb{R}^{k\times d}$, let $Y=\{\textbf{y}_i\}_{i=1}^n$ be a collection of vectors, each represented as a linear combination of rows in $\textbf{D}$, where the  coefficient vectors  are generated according to  Definition \ref{def:sparse:gen}.   
Assume that the correlation threshold $\rho$ satisfies the following conditions for any pair of vectors $\textbf{y}_i, \textbf{y}_j \in Y$:
\begin{itemize}
\item If  $|I(\textbf{y}_i, \textbf{D})  \cap I(\textbf{y}_j, \textbf{D})| =1$, then $ \left|\textbf{y}_i\textbf{y}_j^T\right| > \rho$.
\item If $ \left|\textbf{y}_i\textbf{y}_j^T\right| > \rho$, then  $|I(\textbf{y}_i, \textbf{D})  \cap I(\textbf{y}_j, \textbf{D})| \geq 1$.
\end{itemize}
\end{assumption}

Our analysis relies on key results from the proofs of Lemma 3.1, Proposition 3.1 and Proposition 3.2 in \citet{Aga17},   restated below as   Lemmas \ref{UniqueAtom} and \ref{AtomSharing}.
The choice of the unique intersection threshold $\rho_p =\frac{62}{64}$ in Algorithm \ref{alg:unique} is motivated by Lemma \ref{UniqueAtom}.
\begin{lemma}[Unique Intersection Condition] \label{UniqueAtom}
Given a dictionary $\textbf{D}\in \mathbb{R}^{k\times d}$, let $Y=\{\textbf{y}_i\}_{i=1}^n$ be a collection of vectors, each represented as a linear combination of the rows in $\textbf{D}$, where the  coefficient vectors  are generated according to  Definition \ref{def:sparse:gen}.     
For any  $\textbf{y}_i, \textbf{y}_j \in Y$, define 
\begin{equation}
P(\textbf{y}_i,\textbf{y}_j, Y) =  \left\{ (t, l)\left|  t,l\in  [k],  \textbf{y}_t, \textbf{y}_l  \in N_{\rho}(\textbf{y}_i, \textbf{y}_j, Y),  \left|\textbf{y}_t\textbf{y}_l^T\right| > \rho\right.\right\}, 
\end{equation}
and let $\bar{N} = \frac{1}{2}\left|N_{\rho}(\textbf{y}_i, \textbf{y}_j, Y)\right| \left(\left|N_{\rho}(\textbf{y}_i, \textbf{y}_j, Y)\right|-1\right)$.
Suppose that $s^3 \leq \frac{k}{1536}$ and that the correlation threshold  $\rho$  satisfies Assumption \ref{threshold_rho}.
Then the following holds 
\begin{equation}
     \left\{\begin{array}{ll}
         \frac{\left|P(\textbf{y}_i,\textbf{y}_j, Y)\right|}{ \bar{N}}  \geq  \frac{62}{64},  &    \textmd{if }   |I(\textbf{y}_i, \textbf{D})  \cap I(\textbf{y}_j, \textbf{D})| =1 , \\
         \frac{\left|P(\textbf{y}_i,\textbf{y}_j, Y)\right|}{ \bar{N}}  \leq  \frac{61}{64}, &   \textmd{otherwise},
        \end{array}
        \right.
\end{equation}
with probability at least $1-2e^{- 2\bar{N}\gamma^2}$ where $\gamma\leq \frac{1}{64}$ is  a  constant.  
\end{lemma}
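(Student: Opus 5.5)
The statement is Lemma \ref{UniqueAtom}, restated from \citet{Aga17}. I would reconstruct their argument by conditioning on the support structure of the pair $(\textbf{y}_i,\textbf{y}_j)$ and counting edges inside the shared neighbour set. The first step is to use Assumption \ref{threshold_rho} to reduce all correlation events to support-intersection events. Two vectors that share exactly one atom are always connected, and connected vectors always share at least one atom. This sandwiches the edge structure of $G_\rho(Y)$ between combinatorial events on the random supports, which are governed by Definition \ref{def:sparse:gen}.

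\emph{Case 1: a unique intersection pair.} Suppose $\textbf{y}_i$ and $\textbf{y}_j$ share exactly one atom, say index $a$. I would show that a typical element $\textbf{y}_t$ of $N_\rho(\textbf{y}_i,\textbf{y}_j,Y)$ contains $a$ in its support. Otherwise $\textbf{y}_t$ must hit $I(\textbf{y}_i)\setminus\{a\}$ and $I(\textbf{y}_j)\setminus\{a\}$ separately. For a uniformly random support of size $s$, the first event has probability of order $s/k$. The second has probability of order $s^2/k^2$. Hence, conditioned on membership in the neighbour set, the fraction of members not containing $a$ is $O(s^2/k)$. Any two neighbours that both contain $a$ share at least $a$. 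I would then argue, via the sandwich from the first step, that such pairs are connected with high probability: the relevant uniqueness-versus-multiple-intersection probabilities are again $O(s^2/k)$. Together these give an expected edge fraction of at least $1-O(s^3/k)$ (in the worst case). Under $s^3\le k/1536$ the constants should push this expected density above $63/64$.

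\emph{Case 2: the pair shares zero atoms or at least two atoms.} If it shares zero atoms, the pair is not connected, so it is irrelevant. If it shares two or more atoms, $a$ and $b$ say, then the neighbours split into those containing $a$ but not $b$, and those containing $b$ but not $a$. Each class has constant conditional probability, roughly $1/2$ each, and neighbours from different classes are typically disjoint outside $\{a,b\}$, so they are not connected. The expected edge density is therefore bounded away from one, around $1/2+O(s^3/k)$, which is safely below $61/64$.

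The final step is concentration. Conditioned on the supports of $\textbf{y}_i$ and $\textbf{y}_j$, the neighbours are i.i.d. The edge indicators, viewed as a function of the pairs, form a U-statistic with bounded kernel. A Hoeffding bound for U-statistics, or McDiarmid's inequality, gives deviation at most $\gamma$ with probability $1-2e^{-2\bar N\gamma^2}$. Choosing $\gamma\le 1/64$ keeps Case 1 above $62/64$ and Case 2 below $61/64$.

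The main obstacle is the constant bookkeeping in Case 1. It must show that the non-$a$ neighbours, and pairs of $a$-neighbours that fail to be connected, together cost at most $1/64$ in density under $s^3\le k/1536$. I would first write the conditional probabilities as explicit ratios of binomial coefficients and bound them crudely by $s^2/k$-type terms. Since the paper treats this lemma as an imported result, I would cite \citet{Aga17} for the exact constants and present the argument above as a sketch.
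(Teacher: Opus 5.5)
Your skeleton is the right one. Assumption \ref{threshold_rho} sandwiches every edge between the events ``share exactly one atom'' and ``share at least one atom''. Your expected-density estimates are also correct at sketch level: close to $1$ for a unique-intersection pair, and about $1/t\le 1/2$ when $t\ge 2$ atoms are shared.

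The genuine gap is the concentration step. Write $L=|N_{\rho}(\textbf{y}_i,\textbf{y}_j,Y)|$, so $\bar N=L(L-1)/2$. Hoeffding's inequality for a second-order U-statistic with kernel in $[0,1]$ gives deviation probability $2e^{-2\lfloor L/2\rfloor\gamma^2}$. McDiarmid gives $2e^{-L\gamma^2/2}$, since replacing one neighbour changes $L-1$ indicators, i.e.\ the density by $2/L$. Neither yields $2e^{-2\bar N\gamma^2}$. That exponent is what scalar Hoeffding gives for $\bar N$ \emph{independent} indicators, but the edge indicators are dependent: each neighbour enters $L-1$ of them.

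A sharper tool cannot fix this. In the unique-intersection case with $s\ge 2$, a neighbour missing the shared atom $a$ shares no atom with almost every neighbour containing $a$. The density is then at most about $1-2f(1-f)$, where $f$ is the fraction of such neighbours, a Binomial proportion with positive mean. Hence the density drops below $62/64$ with probability at least $e^{-cL}$ for some $c>0$ depending on $s$, $k$ and the model. For any fixed $\gamma>0$ this exceeds $2e^{-2\bar N\gamma^2}$ once $L$ is large.

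What your argument actually proves is the dichotomy with probability $1-2e^{-2\lfloor L/2\rfloor\gamma^2}$. To get it you must condition on $\textbf{y}_i,\textbf{y}_j$ themselves and on the membership set, not just on the supports, because membership depends on coefficient values. You should state that weaker bound and not claim the $\bar N$ rate.

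Two smaller corrections. First, your orders of magnitude in Case 1 are off. A random $s$-subset meets a fixed set of $s-1$ atoms with probability about $s^2/k$, not $s/k$. So a neighbour avoiding $a$ has probability about $s^4/k^2$, against $s/k$ for one containing $a$. The conditional fraction of such neighbours is therefore about $s^3/k$. This term is where $s^3\le k/1536$ is used, not the roughly $s^2/k$ cost of extra shared atoms between two $a$-neighbours.

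Second, the ``otherwise'' branch of the statement includes pairs sharing no atom, which you set aside. That is harmless for Algorithm \ref{alg:ini}, which only visits connected pairs, but the lemma as written needs one more line. Common neighbours of a disjoint pair typically take one atom from each support. Two of them share an atom with probability about $2/s-1/s^2\le 3/4$, so the density stays well below $61/64$.
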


\begin{lemma}[Unique Intersection Set Size] \label{AtomSharing}
Given a dictionary $\textbf{D}\in \mathbb{R}^{k\times d}$, let $Y=\{\textbf{y}_i\}_{i=1}^n$ be a collection of vectors, each represented as a linear combination of the rows in $\textbf{D}$, where the  coefficient vectors  are generated according to  Definition \ref{def:sparse:gen}.       
Let $\textbf{y}_i, \textbf{y}_j \in Y$ be a unique intersection pair,  and suppose they share the single   atom $\textbf{D}^{(p)}$, i.e.,  $D(\textbf{y}_i, \textbf{y}_j) = \left\{\textbf{D}^{(p)}\right\}$. 
Partition $N_{\rho}(\textbf{y}_i, \textbf{y}_j, Y) $ into the following two disjoint subsets:
\begin{align}
\label{eq:set1}
\tilde{N}_{\rho}(\textbf{y}_i, \textbf{y}_j, Y)  = & \left\{ t \left| t\in N_{\rho}(\textbf{y}_i, \textbf{y}_j, Y), D(\textbf{y}_i, \textbf{y}_t) =D(\textbf{y}_j, \textbf{y}_t) =\left \{\textbf{D}^{(p)}\right\} \right.\right\} ,\\
\label{eq:set2}
\tilde{N}_{\rho}^{\neg}(\textbf{y}_i, \textbf{y}_j, Y)  = &N_{\rho}(\textbf{y}_i, \textbf{y}_j, Y) \setminus \tilde{N}_{\rho}(\textbf{y}_i, \textbf{y}_j, Y).
\end{align}
Then  the following bounds hold:
\begin{align}
\left|\tilde{N}_{\rho}(\textbf{y}_i, \textbf{y}_j, Y) \right| \geq\;  & \left(1-\frac{s^3}{k} -\Delta\right) \left|N_{\rho}(\textbf{y}_i, \textbf{y}_j, Y)\right| ,\\
\left|\tilde{N}_{\rho}^{\neg}(\textbf{y}_i, \textbf{y}_j, Y) \right|  \leq \; & \left(\frac{s^3}{k} +\Delta\right) \left|N_{\rho}(\textbf{y}_i, \textbf{y}_j, Y)\right|, 
\end{align}
with probability at least $1-e^{-2\Delta^2\left|N_{\rho}(\textbf{y}_i, \textbf{y}_j, Y)\right|}$ where $\Delta>0$.
\end{lemma}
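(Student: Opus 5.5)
The plan is to reduce the statement to a per-neighbour probability estimate followed by a Hoeffding bound. Throughout I condition on the supports $I_i=I(\textbf{y}_i,\textbf{D})$ and $I_j=I(\textbf{y}_j,\textbf{D})$, with $I_i\cap I_j=\{p\}$, and on the index set of the remaining vectors. Each other vector $\textbf{y}_t$ then has a support drawn independently and uniformly from the $s$-subsets of $[k]$, as in Definition \ref{def:sparse:gen}.

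\emph{Step 1 (which $t$ can lie in $N_\rho$).} I would use the two implications of Assumption \ref{threshold_rho}. By the second implication, any $t\in N_{\rho}(\textbf{y}_i,\textbf{y}_j,Y)$ satisfies $I_t\cap I_i\neq\emptyset$ and $I_t\cap I_j\neq\emptyset$. Define the \emph{good} event $G_t=\{I_t\cap(I_i\cup I_j)=\{p\}\}$. On $G_t$ we have $D(\textbf{y}_i,\textbf{y}_t)=D(\textbf{y}_j,\textbf{y}_t)=\{\textbf{D}^{(p)}\}$, so both pairs are unique intersections. The first implication then gives $|\textbf{y}_i\textbf{y}_t^T|>\rho$ and $|\textbf{y}_j\textbf{y}_t^T|>\rho$, so $G_t$ implies $t\in N_\rho$ and $t\in\tilde N_\rho$. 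Conversely, $t\in\tilde N^{\neg}_\rho$ forces the \emph{bad} event $B_t$. This is the event that $I_t$ meets both $I_i$ and $I_j$ but $I_t\cap(I_i\cup I_j)\neq\{p\}$. It splits into two cases: either $p\in I_t$ together with at least one further index of $(I_i\cup I_j)\setminus\{p\}$, or $p\notin I_t$ and $I_t$ hits both $I_i\setminus\{p\}$ and $I_j\setminus\{p\}$.

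\emph{Step 2 (conditional bad probability).} Next I would bound $P(B_t)\big/\big(P(G_t)+P(B_t)\big)\le P(B_t)/P(G_t)$. This ratio dominates $P(t\in\tilde N^\neg_\rho\mid t\in N_\rho)$. For the pieces:
\begin{itemize}
\item[] $P(G_t)\ge \frac{s}{k}\bigl(1-\frac{2s^2}{k}\bigr)$;
\item[] the first bad case has probability at most $\frac{s}{k}\cdot\frac{2s^2}{k}$, by a union bound over the at most $2s$ extra indices, each included with probability at most $s/k$;
\item[] the second bad case has probability at most $s^2\cdot\frac{s^2}{k^2}$.
\end{itemize}
The ratio is therefore $O(s^2/k)+O(s^3/k)$, which is at most $s^3/k$ after tracking constants as in the proof of Proposition 3.2 of \citet{Aga17}. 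The regime $s^3\le k/1536$ of Lemma \ref{UniqueAtom} keeps the denominator $1-2s^2/k$ harmless. Getting exactly the constant $s^3/k$, rather than $Cs^3/k$, is the step I expect to be most delicate. I would first try exact hypergeometric counting of $|I_t\cap (I_i\cup I_j)|$ instead of the crude union bounds, and otherwise accept the constant inherited from \citet{Aga17}.

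\emph{Step 3 (concentration).} Conditionally on $N=|N_\rho(\textbf{y}_i,\textbf{y}_j,Y)|$, the indicators $\mathbf 1\{t\in\tilde N^\neg_\rho\}$ for $t\in N_\rho$ are independent Bernoulli variables with means at most $s^3/k$. This uses the independence of supports across $t$ and the fact that membership in $N_\rho$ depends only on $t$'s own coefficients given $\textbf{y}_i,\textbf{y}_j$. Hoeffding's inequality then gives $|\tilde N^\neg_\rho|\le (s^3/k+\Delta)N$ with probability at least $1-e^{-2\Delta^2N}$. Since $\tilde N_\rho$ and $\tilde N^\neg_\rho$ partition $N_\rho$, the complementary bound $|\tilde N_\rho|\ge(1-s^3/k-\Delta)N$ holds on the same event.
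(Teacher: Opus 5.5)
The paper gives no proof of this lemma; it restates it from \citet{Aga17}. Your route has the same shape as that argument. First, bound the conditional probability that a neighbour is bad given that it lies in $N_\rho$. Then apply Hoeffding conditionally on the neighbour set. Your Steps 1 and 3 are sound. This includes the conditional-independence claim, since the indicator pairs $(\mathbf 1\{t\in N_\rho\},\mathbf 1\{t\in\tilde N^{\neg}_\rho\})$ are i.i.d.\ across $t$ given $\textbf{y}_i,\textbf{y}_j$.

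You are also right to invoke Assumption \ref{threshold_rho}. The lemma as stated carries no hypothesis on $\rho$, and it is false without one. For very small $\rho$, $N_\rho$ contains essentially every sample, and the good fraction is only about $s/k$.

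Step 2 does not go through as written. With your union bounds the ratio is at most $\frac{2s^3/k^2+s^4/k^2}{(s/k)(1-2s^2/k)}=\frac{s^3+2s^2}{k-2s^2}$. This is strictly larger than $s^3/k$, so appealing to the constants of \citet{Aga17} is not a proof.

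The sharper count you suggest does close the gap. Conditionally on $p\in I_t$, the other $s-1$ indices form a uniform $(s-1)$-subset of $[k]\setminus\{p\}$. The set $(I_i\cup I_j)\setminus\{p\}$ has $2s-2$ elements. This gives three bounds:
\begin{itemize}
\item[] $P(G_t)\ge \frac{s}{k}\bigl(1-\frac{2(s-1)^2}{k-1}\bigr)$;
\item[] the first bad case (with $p$) has probability at most $\frac{s}{k}\cdot\frac{2(s-1)^2}{k-1}$;
\item[] the second bad case (without $p$) has probability at most $(s-1)^2\frac{s(s-1)}{k(k-1)}$, summing over the $(s-1)^2$ pairs from $I_i\setminus\{p\}$ and $I_j\setminus\{p\}$.
\end{itemize}
The ratio is then at most $\frac{(s-1)^2(s+1)}{k-1-2(s-1)^2}$. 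Since $s^3-(s-1)^2(s+1)=s^2+s-1$, this is at most $s^3/k$ if and only if $k(s^2+s-1)\ge s^3\bigl(1+2(s-1)^2\bigr)$. That inequality holds whenever $k\ge 2s^3$; the difference of the two sides is then at least $s^3(6s-5)\ge 0$. In particular it holds in the regime $s^3\le k/1536$ in which the paper uses the lemma.

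With the conditional bad probability bounded by $s^3/k$, your Hoeffding step yields $|\tilde N^{\neg}_\rho|\le (s^3/k+\Delta)|N_\rho|$ with conditional probability at least $1-e^{-2\Delta^2|N_\rho|}$. The complementary bound follows because the two sets partition $N_\rho$.
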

In the above, the set  $P(\textbf{y}_i,\textbf{y}_j, Y)$ contains all the connected pairs in the graph  $G_{\rho}\left(N_{\rho}(\textbf{y}_i, \textbf{y}_j, Y)\right)$,  while  $\bar{N}$ denotes the total   number of possible edges in this graph.

\section{Proofs of Key Matrix Properties}
\label{app_proof_matrix}

\subsection{Proof of Lemma  \ref{singularAB}}
\label{proof_singularAB}

\subsubsection{A Supporting Lemma}

To facilitate the derivation of the bounds in Lemma \ref{singularAB}, we first derive a general expression for the singular values of the second-moment matrix of random sparse vectors generated according to Definition~\ref{def:sparse:gen}. The resulting expressions are presented in Lemma \ref{cov1}.
\begin{lemma}[Singular Values of the Second Moment Matrix, $x_i= M_{i}\chi_{i}$] \label{cov1}
Let  $\textbf{x}\in \mathbb{R}^k$ be a random  vector generated according to Definition \ref{def:sparse:gen}, containing at most $s$ nonzero entries.  
Then, its second moment  matrix  $\bm\Sigma =  E\left[\textbf{x}^T\textbf{x} \right]$ has one singular value equal to  $\sigma_{\max}(\bm\Sigma)$, and the remaining $k-1$ singular values equal to $\sigma_{\min}(\bm\Sigma)$, where 
 \begin{align}
 \sigma_{\max}(\bm\Sigma)  =  \; &  \frac{s^2\mu^2 +s\sigma^2 }{k}, \\
\sigma_{\min}(\bm\Sigma)  = \;  & \frac{s\left(\mu^2 +\sigma^2\right)}{k}-\frac{s(s-1)\mu^2}{k(k-1)}.
\end{align}
\end{lemma}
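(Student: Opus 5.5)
We compute $\bm\Sigma = E[\textbf{x}^T\textbf{x}]$ entrywise and then read off its spectrum from its structure. Since $x_i = M_i\chi_i$ and the magnitudes $\{M_i\}$ are independent of the support indicators $\{\chi_i\}$, every entry factorises as $E[x_ix_j] = E[M_iM_j]\,E[\chi_i\chi_j]$.

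For the diagonal entries, $\chi_i^2=\chi_i$ and $P(\chi_i=1)=\frac{s}{k}$, because the support is a uniformly random $s$-subset of $[k]$. Together with $E[M_i^2]=\mu^2+\sigma^2$, this gives
\[
E[x_i^2]=\frac{s(\mu^2+\sigma^2)}{k}.
\]
For $i\neq j$, independence gives $E[M_iM_j]=\mu^2$. The pair probability is
\[
P(\chi_i=\chi_j=1)=\binom{k-2}{s-2}\Big/\binom{k}{s}=\frac{s(s-1)}{k(k-1)},
\]
so
\[
E[x_ix_j]=\frac{s(s-1)\mu^2}{k(k-1)}.
\]
Write $a$ for the common diagonal value and $b$ for the common off-diagonal value. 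Then $\bm\Sigma = (a-b)\textbf{I}_k + b\textbf{1}_k\textbf{1}_k^T$.

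The spectrum of this structured matrix is immediate. The vector $\textbf{1}_k$ is an eigenvector with eigenvalue $a+(k-1)b$. Every vector orthogonal to $\textbf{1}_k$ is an eigenvector with eigenvalue $a-b$, which has multiplicity $k-1$.

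Substituting the values of $a$ and $b$ gives
\[
a-b=\frac{s(\mu^2+\sigma^2)}{k}-\frac{s(s-1)\mu^2}{k(k-1)},
\]
which matches the stated $\sigma_{\min}(\bm\Sigma)$. For the other eigenvalue,
\[
a+(k-1)b=\frac{s(\mu^2+\sigma^2)}{k}+\frac{s(s-1)\mu^2}{k}=\frac{s^2\mu^2+s\sigma^2}{k},
\]
which matches the stated $\sigma_{\max}(\bm\Sigma)$.

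Because $\bm\Sigma$ is symmetric positive semidefinite, as a second-moment matrix, its singular values coincide with its eigenvalues. It remains to confirm the ordering, i.e. that $a+(k-1)b \ge a-b$. This is equivalent to $kb\ge0$, which holds because $b\ge0$ (it is a nonnegative multiple of $\mu^2$). Equality occurs when $\mu=0$ or $s=1$, in which case all $k$ singular values coincide and the statement still holds.

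The main obstacle is only bookkeeping: getting the pair-inclusion probability $\frac{s(s-1)}{k(k-1)}$ right, and checking the ordering of the two eigenvalues so that they are correctly labelled max and min.
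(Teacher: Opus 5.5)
Your proof is correct and follows the paper's route: you factorise $E[x_ix_j]=E[M_iM_j]\,E[\chi_i\chi_j]$, write $\bm\Sigma$ as a multiple of $\textbf{I}_k$ plus a multiple of $\textbf{1}_k\textbf{1}_k^T$, and read off the eigenvalue along $\textbf{1}_k$ and the $(k-1)$-fold eigenvalue on its orthogonal complement. You also derive the pair-inclusion probability from binomial coefficients, note that positive semidefiniteness makes the singular values equal the eigenvalues, and check the ordering $kb\ge 0$; the paper leaves these steps implicit.
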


\begin{proof}
We utilise the following basic results on $\chi_{i}$  and $M_i$ \citep{Aga16}: 
\begin{align}
\nonumber
&E[\chi_i] =  \frac{s}{k}, \; E\left[\chi_i^2\right] =\frac{s}{k}, \; E[\chi_i\chi_j] = \frac{s(s-1)}{k(k-1)}, \\
\nonumber
& E[M_i]  = \mu, \;E\left[M_i^2\right]  =\mu^2 +\sigma^2, \; E[M_iM_j] =  \mu^2.
\end{align}
The diagonal element of $\bm\Sigma$ is 
\begin{equation}
\bm\Sigma_{ii} = E\left[M_{i}^2\chi_{i}^2\right] =E\left[M_{i}^2\right] E\left[\chi_{i}^2\right]  =\frac{s\left(\mu^2 +\sigma^2\right)}{k}.
\end{equation}
The off-diagonal element of $\bm\Sigma$ is
\begin{equation}
\bm\Sigma_{ij} = E[\chi_{i}M_{i}   \chi_{j}M_{j} ]  =E[M_{i}M_{j}]E[\chi_{i}\chi_j] = \frac{s(s-1)}{k(k-1)}\mu^2 .
\end{equation}
The second moment matrix of $\textbf{x}$ is 
\begin{equation}
\bm\Sigma= E[\textbf{x}\textbf{x}^T] = \left(\frac{s\left(\mu^2 +\sigma^2\right)}{k}-\frac{s(s-1)\mu^2}{k(k-1)}\right) \textbf{I}_k + \frac{s(s-1)\mu^2}{k(k-1)}\textbf{1}_k\textbf{1}_k^T.
\end{equation}
Define
\begin{align}
\label{eq:a}
a= \;& \frac{s\left(\mu^2 +\sigma^2\right)}{k}-\frac{s(s-1)\mu^2}{k(k-1)}, \\
\label{eq:b}
b= \;& \frac{s(s-1)\mu^2}{k(k-1)}.
\end{align}
Eigen-decompose the size-$k$ matrix of ones as  
$\textbf{1}_k\textbf{1}_k^T = \textbf{U}_{1} \left[
\begin{array}{cc c c}
  k & 0 &\ldots &0     \\
  0&   0  &\ldots &0   \\
  0&   0  &\ldots &0    \\  
\end{array}\right ]
\textbf{U}_{1}^T$. 
Subsequently, we have 
\begin{equation}
\bm\Sigma = a \textbf{I}_k  +b \textbf{1}_k\textbf{1}_k^T = \textbf{U}_{1} \left[
\begin{array}{cc c c}
  a+kb & 0 &\ldots &0     \\
  0&   a  &\ldots &0   \\
  0&   0  &\ldots &a    \\  
\end{array}\right ]
\textbf{U}_{1}^T.
\end{equation}
Therefore
 \begin{align}
 \label{eq_akb1}
\sigma_{\max}(\bm\Sigma) =&\; a+kb =   \frac{s^2\mu^2 +s\sigma^2 }{k}, \\ 
 \label{eq_akb2}
\sigma_{\min}(\bm\Sigma)  =&\;  a =\frac{s\left(\mu^2 +\sigma^2\right)}{k}-\frac{s(s-1)\mu^2}{k(k-1)}.
\end{align}
This completes the proof. 

\end{proof}

\subsubsection{Main Proof}

\begin{proof}
We provide proof of Lemma  \ref{singularAB} for $\textbf{A}$, and the same applies to $\textbf{B}$.  
The rows of $\textbf{A}$ are independent random vectors sampled by following the generating process described in Definition \ref{def:sparse:gen}, with a zero mean $\mu_A =0$ and a fixed variance $\sigma_A^2$. 
These rows share the same second moment matrix, denoted by $\bar{\bm\Sigma}_A$.
Applying Lemma \ref{cov1}, $\bar{\bm\Sigma}_A$ has $k$ equal singular values  given as
\begin{equation}
\label{eq:sigAA}
    \sigma_{\max}\left(\bar{\bm\Sigma}_A\right)  = \left\| \bar{\bm\Sigma}_A\right\|_2  =   \sigma_{\min}\left(\bar{\bm\Sigma}_A\right) = \frac{s_A\sigma_A^2}{k}.
\end{equation}
Each row of $\textbf{A}$  satisfies $\left\|\textbf{A}^{(i)}\right\|_2 \leq \sqrt{s_A} M^{(A)}_{\max}$.  
Applying Theorem \ref{SpecRand} with $\gamma = t\sqrt{\frac{s_A{M^{(A)}_{\max}}^2}{n}}$, the following holds with probability at least   $1 - ke^{-C_At^2}$:
\begin{align}
\left\|\frac{1}{n} \textbf{A}^T\textbf{A}- \bar{\bm\Sigma}_A\right\|_2 \leq\; & \max\left(\left\| \bar{\bm\Sigma}_A\right\|_2^{\frac{1}{2}} t\sqrt{\frac{s_A{M^{(A)}_{\max}}^2}{n}}, \frac{t^2s_A{M^{(A)}_{\max}}^2}{n}\right)\\
\nonumber
= \; & \max\left(  t\sqrt{\frac{\sigma_A^2s_A^2{M^{(A)}_{\max}}^2}{nk}}, \frac{t^2s_A{M^{(A)}_{\max}}^2}{n}\right)\\
\nonumber
=\; & \max\left( t\sqrt{\frac{\sigma_A^2{M^{(A)}_{\max}}^2k}{n  }} , \frac{t^2{M^{(A)}_{\max}}^2k}{n }\right)\frac{s_A}{k}  \\
\nonumber
=\; &\max\left(  \delta\sigma_A , \delta^2\right)\frac{s_A }{k},
\end{align}
where 
\begin{equation}
\label{t_delta}
t = \delta\sqrt{\frac{n }{k{M^{(A)}_{\max}}^2}}.
\end{equation}
For a given $0< \Delta <1$, we want to choose a sufficiently small value, i.e.,  $\delta =  \Delta \sigma_A $, such that  
\begin{equation}
\label{choice_delta}
\max\left(  \delta  \sigma_A , \delta^2\right)  =   \max\left(   \Delta \sigma_A^2  ,   \Delta^2\sigma_A^2  \right)  =  \Delta \sigma_A^2 ,
\end{equation}
and such a choice of $\delta$   results in 
\begin{equation}
\label{eq:sigma_bound2}
  \sigma_{\max} \left(\frac{1}{n} \textbf{A}^T\textbf{A}- \bar{\bm\Sigma}_A\right) = \left\|\frac{1}{n} \textbf{A}^T\textbf{A}- \bar{\bm\Sigma}_A\right\|_2 \leq   \frac{\Delta s_A\sigma_A^2}{ k },  
\end{equation}
Applying the triangle inequality, as well as  Eqs. (\ref{eq:sigAA}) and (\ref{eq:sigma_bound2}), we obtain  the following:
\begin{equation}
    \left\|\frac{1}{n} \textbf{A}^T \textbf{A}\right\|_2 \leq  \left\|\frac{1}{n} \textbf{A}^T\textbf{A}- \bar{\bm\Sigma}_A\right\|_2  +\|\bar{\bm\Sigma}_A\|_2 <  \frac{\Delta  s_A\sigma_A^2 }{ k }  + \frac{  s_A\sigma_A^2}{k} =\frac{(1+ \Delta )s_A\sigma_A^2}{k },
\end{equation}
which results in  
\begin{equation}
\label{sigAbound_max}
\sigma_{\max}( \textbf{A})  = \left\|\textbf{A}^T \textbf{A}\right\|_2^{\frac{1}{2}} < \sqrt{\frac{(1+ \Delta ) n s_A\sigma_A^2 }{k } }.
\end{equation}
Similarly, applying singular value inequality of matrix sum, it has  
\begin{equation}
 \sigma_{\min}\left(\frac{1}{n}  \textbf{A}^T \textbf{A}\right) \geq   \sigma_{\min}(\bar{\bm\Sigma}_A)  - \sigma_{\max} \left(\frac{1}{n} \textbf{A}^T\textbf{A}- \bar{\bm\Sigma}_A\right)   \geq   \frac{s_A\sigma_A^2}{ k }   -   \frac{\Delta s_A\sigma_A^2}{  k }   = \frac{(1- \Delta )s_A\sigma_A^2}{ k }   ,
\end{equation}
and therefore 
\begin{equation}
\label{sigAbound_min}
\sigma_{\min}( \textbf{A})  \geq \sqrt{\frac{(1- \Delta)ns_A\sigma_A^2 }{ k }}.
\end{equation}
Calculating $t$ from $\delta =  \Delta \sigma_A $ by Eq . (\ref{t_delta}),  we obtain $t =  \delta\sqrt{\frac{n }{k{M^{(A)}_{\max}}^2}} = \sqrt{\frac{\Delta^2 n\sigma_A^2}{k{M^{(A)}_{\max}}^2}}$.  
Eqs. (\ref{sigAbound_max}) and (\ref{sigAbound_min}) hold with probability at least $1-ke^{-\frac{C_A\Delta^2  \sigma_A^2n}{k{M_{\max}^{(A)}}^2}}$. 
This completes the proof.

\end{proof}

\subsection{Proof of Lemma \ref{RowLengthF} }

\begin{proof}
We prove for the $\textbf{F}_B$  case and the same applies to $\textbf{F}_A$. Overall the proof builds on the fact $\textmd{rank}(\textbf{R})=\textmd{rank}\left(\textbf{A}\textbf{S}\textbf{B}^T\right)=\textmd{rank}(\textbf{S})=d$  that holds with a high probability.

 Among the $k$ left singular vectors of  $\textbf{B} \in \mathbb{R}^{m\times k}$,  there are $d$ vectors (stored as columns of   $\textbf{U}_{B}^{(R)} \in \mathbb{R}^{m\times d}$)  lying within the subspace spanned by the right singular vectors of $\textbf{R}$ (stored as columns of   $\textbf{V}_R\in  R^{m\times d} $), otherwise it would contradict the rank-$d$ fact stated above.  Subsequently, there always exists an orthogonal matrix $\textbf{O}_1\in \mathbb{R}^{d\times d}$ so that $\textbf{U}_{B}^{(R)}= \textbf{V}_R \textbf{O}_1$.  
 The remaining $k-d$ left singular vectors  of  $\textbf{B} $ (stored as columns of   $\textbf{U}_{B}^{(R_\perp)}\in \mathbb{R}^{m\times (k-d)}$) are orthogonal to the subspace spanned by $\textbf{V}_R$,  thus  $\textbf{V}_R^T\textbf{U}_{B}^{(R_\perp)} =\textbf{0}$.  
 Present SVD of $\textbf{B}$  using $\textbf{U}_{B}^{(R)}$ and $\textbf{U}_{B}^{(R_\perp)}$  along with their corresponding singular values stored as diagonals of $\bm\Sigma_{B}^{(R)}\in \mathbb{R}^{d\times d} $ and $\bm\Sigma_{B}^{(R_\perp)} \in \mathbb{R}^{(k-d)\times (k-d)} $,  also the corresponding  right singular vectors stored as columns of $\textbf{V}_{B}^{(R)} \in \mathbb{R}^{k\times d} $ and $\textbf{V}_{B}^{(R_\perp)}\in \mathbb{R}^{k\times (k-d)}$, we have
\begin{equation}
\label{SVD_B}
\textbf{B} = \left[\textbf{U}_{B}^{(R)}, \textbf{U}_{B}^{(R_\perp)} \right] \left [
\begin{array}{cc}
  \bm\Sigma_{B}^{(R)} &     \textbf {0}  \\
 \textbf {0} &      \bm\Sigma_{B}^{(R_\perp)}  
\end{array}
\right ] \left[\textbf{V}_{B}^{(R)} , \textbf{V}_{B}^{(R_\perp)} \right]^T.
\end{equation}
Utilizing $\textbf{V}_R^T\textbf{U}_{B}^{(R_\perp)}= \textbf{0}$ and $\textbf{U}_{B}^{(R)}= \textbf{V}_R \textbf{O}_1$, we have
 \begin{equation}
 \textbf{B}^T\textbf{V}_R =  \textbf{V}_{B}^{(R)} \bm\Sigma_{B}^{(R)}{\textbf{U}_{B}^{(R)}}^T\textbf{V}_R =  \textbf{V}_{B}^{(R)} \bm\Sigma_{B}^{(R)}\textbf{O}_1^T.
\end{equation}
Starting  from  $\textmd{rank}\left(\textbf{S}\textbf{B}^T\textbf{V}_R\right) = d $ that is also a result of the rank-$d$ fact as stated in the beginning   and applying  compact SVD of $\textbf{S}$, we have  
\begin{equation}
\textmd{rank}\left(\textbf{U}_S\bm\Sigma_S\textbf{V}_S^T\textbf{B}^T\textbf{V}_R\right) =\textmd{rank}\left(\textbf{U}_S\bm\Sigma_S\textbf{V}_S^T\textbf{V}_{B}^{(R)} \bm\Sigma_{B}^{(R)}\textbf{O}_1^T\right)  = d.
\end{equation}
This indicates $\textmd{rank}\left(\textbf{V}_S^T\textbf{V}_{B}^{(R)}\right)=d$. Given both $\textbf{V}_S, \textbf{V}_{B}^{(R)} \in \mathbb{R}^{k\times d}$ contain $d$ orthogonal basis vectors of a $k$-dimensional space,  they must span the same subspace in order to satisfy $\textmd{rank}\left(\textbf{V}_S^T\textbf{V}_{B}^{(R)}\right)=d$. Therefore, there exists an orthogonal matrix $\textbf{O}_2\in \mathbb{R}^{d\times d}$ such that $\textbf{V}_{B}^{(R)} = \textbf{V}_S\textbf{O}_2$. Letting $\textbf{z}_i = \textbf{S}^{(i)}  \textbf{V}_{B}^{(R)} = \textbf{U}_S^{(i)}\bm\Sigma_S\textbf{V}_S^T \textbf{V}_{B}^{(R)} = \textbf{U}_S^{(i)}\bm\Sigma_S\textbf{O}_2$,  it has
\begin{equation}
\label{si_zi}
\|\textbf{z}_i\|_2^2 =  \textbf{U}_S^{(i)}\bm\Sigma_S\textbf{O}_2\textbf{O}_2^T \bm\Sigma_S {\textbf{U}_S^{(i)}}= \textbf{U}_S^{(i)}\bm\Sigma_S \textbf{V}_S^T\textbf{V}_S\bm\Sigma_S{ \textbf{U}_S^{(i)}} ^T  =  \left\|\textbf{S}^{(i)} \right\|_2^2.
\end{equation}
Now, we use  $\textbf{z}_i$ to re-express ${\textbf{F}}_B^{(i)}$ and obtain
\begin{equation}
\label{fi_zi}
\left\| {\textbf{F}}_B^{(i)}  \right \|_2  =    \left\| \textbf{S}^{(i)}  \textbf{B}^T\textbf{V}_R\right \|_2   = \left\| \textbf{S}^{(i)} \textbf{V}_{B}^{(R)} \bm\Sigma_{B}^{(R)}\textbf{O}_1\right\|_2=  \left\| \textbf{z}_i \bm\Sigma_{B}^{(R)}\right \|_2.
\end{equation}
With Eq. (\ref{si_zi}), it is easy to derive
\begin{equation} 
\label{length_inequality}
 \left \|\textbf{S}^{(i)} \right\|_2^2 \sigma_{\min}^2\left( \textbf{B}\right) = \left\| \textbf{z}_i\right \|_2^2 \sigma_{\min}^2\left( \textbf{B}\right) \leq    \left\| \textbf{z}_i \bm\Sigma_{B}^{(R)}\right \|_2^2     \leq \left\| \textbf{z}_i\right \|_2^2\sigma_{\max}^2\left( \textbf{B}\right) = \left \|\textbf{S}^{(i)} \right\|_2^2 \sigma_{\max}^2\left( \textbf{B}\right).
\end{equation}
Combining the above with  $l_s  \leq\left \|\textbf{S}^{(i)} \right\|_2  \leq u_s$ (Assumption \ref{LS}) and Lemma \ref{singularAB},  it has
\begin{equation}
l_s\sigma_B\sqrt{\frac{(1- \Delta ) m s_B }{k }}   \leq   \left\| {\textbf{F}}_B^{(i)}  \right \|_2   \leq    u_s\sigma_B\sqrt{\frac{(1+ \Delta ) m s_B }{k }},
\end{equation}
which holds with probability at least $1-ke^{-\frac{C_B\Delta^2  \sigma_B^2m}{k s_B{M_{\max}^{(B)}}^2}}$.
\end{proof}

\subsection{Proof of Lemma \ref{singularXAXB}}

\subsubsection{A Supporting Lemma}
\label{app:spectrum_proof}

To characterise how $\textbf{X}_A$ and $\textbf{X}_B$ are scaled from $\textbf{A}$ and $\textbf{B}$, we introduce the scaling process in Definition \ref{def:sparse:gen_scaled} and bound the extreme singular values of the second-moment matrix of a scaled random vector in Lemma \ref{cov2}.
\begin{definition}[Scaled Sparse Vector Generation]  
\label{def:sparse:gen_scaled}
Generate a random row vector $\textbf{x}\in \mathbb{R}^k$  with at most $s$ nonzero entries  according to the following  element-wise process:  
\begin{equation} \label{genCoefficent2}
x_i= M_{i}\chi_{i}c_i, \; \forall i \in [k],
\end{equation}
where $c_i\in \mathbb{R}$ is a nonzero scaling constant satisfying  $0 < c_l\leq |c_i|  \leq c_u$,  and  $\chi_{i}$   and $M_i$ are generated as in Definition \ref{def:sparse:gen}.
\end{definition}
\begin{lemma}[Second Moment Singular Values, $x_i= M_{i}\chi_{i}c_i$] \label{cov2}
Consider a random row vector $\textbf{x}\in \mathbb{R}^k$  with at most $s$ nonzero entries generated by  following the process described in Definition \ref{def:sparse:gen_scaled}. 
The singular values of the second moment matrix of $\textbf{x}$ are bounded by
\begin{align}
\label{eq:bound1}
& \sigma_{\max}(\bm\Sigma)  \leq     \frac{c_u^2\left(s^2\mu^2 +s\sigma^2\right) }{k}, \\
\label{eq:bound2}
&\sigma_{\min}(\bm\Sigma) \geq  c_l^2 \left(\frac{s\left(\mu^2 +\sigma^2\right)}{k}-\frac{s(s-1)\mu^2}{k(k-1)}\right) \left(1+\frac{(s-1)k\mu^2}{\left(s\mu^2+\sigma^2\right)(k-1)} \right)^{-1}.  
\end{align}
\end{lemma}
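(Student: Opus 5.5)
The plan is to write the second moment matrix of the scaled vector as a congruence of the unscaled one, and then transfer the spectral bounds of Lemma \ref{cov1} through the diagonal scaling. Let $\textbf{C}=\textmd{diag}(c_1,\ldots,c_k)$ and let $\textbf{x}_0$ be the unscaled vector of Definition \ref{def:sparse:gen}, so that $\textbf{x}=\textbf{x}_0\textbf{C}$. Then
\begin{equation}
\bm\Sigma=E\left[\textbf{x}^T\textbf{x}\right]=\textbf{C}\,\bm\Sigma_0\,\textbf{C},\qquad \bm\Sigma_0=a\textbf{I}_k+b\textbf{1}_k\textbf{1}_k^T,
\end{equation}
with $a$ and $b$ as in Eqs. (\ref{eq:a}) and (\ref{eq:b}). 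Because $\bm\Sigma$ is symmetric positive semidefinite, its singular values are its eigenvalues.

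For the upper bound (\ref{eq:bound1}), submultiplicativity gives $\|\bm\Sigma\|_2\leq\|\textbf{C}\|_2^2\|\bm\Sigma_0\|_2\leq c_u^2(a+kb)$. Lemma \ref{cov1} gives $a+kb=\frac{s^2\mu^2+s\sigma^2}{k}$, which is exactly the stated bound.

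For the lower bound (\ref{eq:bound2}), the most direct route is $\sigma_{\min}(\textbf{C}\bm\Sigma_0\textbf{C})\geq c_l^2\sigma_{\min}(\bm\Sigma_0)=c_l^2a$. This is already at least as strong as (\ref{eq:bound2}), since the factor $\left(1+\frac{(s-1)k\mu^2}{(s\mu^2+\sigma^2)(k-1)}\right)^{-1}$ is at most $1$; the proof could stop there. To match the paper's form, I would instead go through the inverse. I would compute $\bm\Sigma^{-1}=\textbf{C}^{-1}\bm\Sigma_0^{-1}\textbf{C}^{-1}$ and evaluate $\bm\Sigma_0^{-1}$ with Lemma \ref{inv1}, taking $\textbf{X}=a\textbf{I}_k$ and $\textbf{Y}=b\textbf{1}_k\textbf{1}_k^T$. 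This gives
\begin{equation}
\bm\Sigma_0^{-1}=\frac{1}{a}\textbf{I}_k-\frac{b}{a(a+kb)}\textbf{1}_k\textbf{1}_k^T,
\end{equation}
with $z=kb/a$. Then $\|\bm\Sigma^{-1}\|_2\leq c_l^{-2}\|\bm\Sigma_0^{-1}\|_2$, and I would bound $\|\bm\Sigma_0^{-1}\|_2$ by the triangle inequality as $\frac{1}{a}+\frac{kb}{a(a+kb)}=\frac{1}{a}\left(1+\frac{kb}{a+kb}\right)$. Substituting $a+kb=\frac{s(s\mu^2+\sigma^2)}{k}$ and $kb=\frac{s(s-1)\mu^2}{k-1}$ gives $\frac{kb}{a+kb}=\frac{(s-1)k\mu^2}{(s\mu^2+\sigma^2)(k-1)}$. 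Inverting $\|\bm\Sigma^{-1}\|_2$ then yields (\ref{eq:bound2}).

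The main obstacle is this algebraic simplification, and in particular handling the sign of the rank-one correction. The triangle inequality sidesteps the sign question, but it produces exactly the looser factor in the statement. I would check that $a>0$, which follows from $\mu^2+\sigma^2-\frac{(s-1)\mu^2}{k-1}>0$ since $s\leq k$, so that all the inverses are well defined. Also, $c_i$ may be negative, but only $|c_i|$ enters the norms of $\textbf{C}$ and $\textbf{C}^{-1}$, so the sign of the scaling does not affect any of the bounds.
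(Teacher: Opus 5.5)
Your proposal is correct and is essentially the paper's own argument. Your congruence $\bm\Sigma=\textbf{C}\bm\Sigma_0\textbf{C}$ is the same matrix as the paper's $a\,\mathrm{diag}(\textbf{c}\circ\textbf{c})+b\,\textbf{c}\textbf{c}^T$, and, like you, the paper bounds $\sigma_{\max}$ by $c_u^2(a+kb)$ and gets the lower bound from Lemma \ref{inv1} plus the triangle inequality on the inverse. Your side remark is also right: because $b\geq 0$, $\bm\Sigma_0$ has all eigenvalues at least $a$ (in fact $\|\bm\Sigma_0^{-1}\|_2=1/a$ exactly), so $\sigma_{\min}(\bm\Sigma)\geq c_l^2a$ holds directly, and the extra factor in (\ref{eq:bound2}) is only an artifact of the triangle-inequality step.
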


\begin{proof}
Building on the same results on  $\chi_{i}$  and $M_i$ as used in Lemma \ref{cov1},  the diagonal element of $\bm\Sigma$ is 
\begin{equation}
\label{eq:statistics1}
\bm\Sigma_{ii} = E\left[M_{i}^2\chi_{i}^2c_i^2\right] =E\left[M_{i}^2\right] E\left[\chi_{i}^2\right] c_i^2 =\frac{s\left(\mu^2 +\sigma^2\right)c_i^2}{k}.
\end{equation}
The off-diagonal element of $\bm\Sigma$ is
\begin{equation}
\label{eq:statistics2}
\bm\Sigma_{ij} = E[\chi_{i}M_{i}c_i  \chi_{j}M_{j}c_j ]  =E[M_{i}M_{j}]E[\chi_{i}\chi_j]c_ic_j = \frac{s(s-1)}{k(k-1)}\mu^2c_ic_j.
\end{equation}
Defining  $a$ and $b$ as in Eqs. (\ref{eq:a}) and (\ref{eq:b}) and
letting $\textbf{c}=[c_1,c_2,\ldots, c_k]^T$,  the second moment matrix of $\textbf{x}$ can be expressed as
\begin{equation}
\bm\Sigma = a\text{diag}(\textbf{c}\circ \textbf{c}) + b\textbf{c}\textbf{c}^T.
\end{equation}
Applying  the triangle inequality for norm,   the expression of $a+kb$ in Eq. (\ref{eq_akb1}), and the fact that $  |c_i|  \leq c_u$,  it has 
\begin{align}
\sigma_{\max}(\bm\Sigma)  =  \|\bm\Sigma\|_2  \leq \;  &  a \left\|\text{diag}(\textbf{c}\circ \textbf{c}) \right\|_2 + b\left\|\textbf{c}\textbf{c}^T\right\|_2  \leq (a+kb)c_u^2 = \frac{c_u^2\left(s^2\mu^2 +s\sigma^2\right) }{k}.
\end{align}

Defining $\textbf{c}^-=\left[c_1^{-1},c_2^{-1},\ldots, c_k^{-1}\right]^T$  and  applying  the matrix inverse result in Lemma  \ref{inv1}, triangle inequality of norm, $ |c_i|  \geq c_l$, and  the fact $ \textmd{tr}\left(\textbf{c}\textbf{c}^T \text{diag}^{-1}(\textbf{c}\circ \textbf{c})\right) = k$, we have
\begin{align}
\nonumber
\left \|\bm\Sigma^{-1}\right\|_2    =\; & \left\|a^{-1}  \text{diag}^{-1}(\textbf{c}\circ \textbf{c}) +\frac{a^{-2}b}{1+ ka^{-1}b }\text{diag}^{-1}(\textbf{c}\circ \textbf{c})\textbf{c}\textbf{c}^T \text{diag}^{-1}(\textbf{c}\circ \textbf{c}) \right\|_2\\
\nonumber
\leq \; &    a^{-1} \left\|\text{diag}\left(\textbf{c}^-\circ \textbf{c}^-\right)\right\|_2+ \frac{a^{-1}b}{a+kb} \|\textbf{c}^-\|_2^2  \leq  c_l^{-2} a^{-1}\left(1 + \frac{kb}{a+kb}\right). 
\end{align}
By applying Eqs. (\ref{eq:b}), (\ref{eq_akb1}) and (\ref{eq_akb2}), it has
\begin{equation}
 \left \|\bm\Sigma^{-1}\right\|_2  \leq c_l^{-2}\left(\frac{s\left(\mu^2 +\sigma^2\right)}{k}-\frac{s(s-1)\mu^2}{k(k-1)}\right)^{-1} \left(1+\frac{(s-1)k\mu^2}{\left(s\mu^2+\sigma^2\right)(k-1)} \right),
\end{equation}
which gives  
\begin{align} 
\sigma_{\min}(\bm\Sigma) = \frac{1}{\left \|\bm\Sigma^{-1}\right\|_2 } \geq c_l^2 \left(\frac{s\left(\mu^2 +\sigma^2\right)}{k}-\frac{s(s-1)\mu^2}{k(k-1)}\right) \left(1+\frac{(s-1)k\mu^2}{\left(s\mu^2+\sigma^2\right)(k-1)} \right)^{-1}.
\end{align}


\end{proof}

\subsubsection{Main Proof}

\begin{proof}
We provide proof for Lemma \ref{singularXAXB} for $\textbf{X}_A$, and the same applies to $\textbf{X}_B$. 
The rows of $\textbf{X}_A$ are generated by   Definition \ref{def:sparse:gen_scaled} with  zero mean $\mu_A=0$, fixed variance $\sigma_A^2$, and  a bounded scaling constant  $c_l \leq c_i=\left\|{\hat{\textbf{F}}}_B^{(i)}\right\|_2 \leq c_u$ for which, according to Corollary \ref{RowLengthFhat},  it has 
\begin{equation} \label{clu}
c_l = l_s\sigma_B\sqrt{\frac{(1- \Delta )  s_B }{nk }}     , \; c_u = u_s\sigma_B\sqrt{\frac{(1+ \Delta )  s_B }{nk }}. 
\end{equation}
Applying  Lemma  \ref{cov2} with $\mu=0$,   the extreme singular values of the second moment matrix of $\textbf{X}_A$ are bounded by
\begin{align} 
\label{ieq_max}
& \sigma_{\max}(\bm\Sigma_A) =  \left\|  \bm\Sigma_A\right\|_2 \leq  \frac{ c_u^2\sigma_A^2 s_A }{k} = \frac{ \eta_u^2 s_A }{k}, \\
\label{ieq_min}
  &\sigma_{\min}(\bm\Sigma_A) \geq \frac{ c_l^2\sigma_A^2 s_A }{k} = \frac{ \eta_l^2 s_A }{k},  
\end{align}
where $\eta_l = c_l \sigma_A $ and $\eta_u = c_u \sigma_A $  are introduced to simplify notations, i.e.,
\begin{equation} \label{etalu}
\eta_l = l_s\sigma_A\sigma_B\sqrt{\frac{(1- \Delta )  s_B }{nk }}     , \; \eta_u = u_s\sigma_A\sigma_B\sqrt{\frac{(1+ \Delta )  s_B }{nk }}.
\end{equation}
Applying Eq. (\ref{bound_maxA}), we have
\begin{equation} 
\left\|\textbf{X}_A^{(i)}\right\|_2   \leq \sqrt{s_A}M_A  = M^{(A)}_{\max}   u_s\sigma_B\sqrt{\frac{(1+ \Delta )   s_As_B }{nk }} = \eta\sqrt{ (1+ \Delta )s_B},
\end{equation}
where $\eta = M^{(A)}_{\max}   u_s\sigma_B\sqrt{\frac{s_A}{nk}} $ is introduced to simplify notations.

Applying Theorem \ref{SpecRand} with $\gamma = t_X\eta\sqrt{\frac{(1+ \Delta )s_B }{n }}$ and then incorporating Eq. (\ref{ieq_max}), there exists a constant $\hat{C}_A$ such that the following holds with probability at least   $1-ke^{-\hat{C}_At_X^2}$:
\begin{align}
\left\|\frac{1}{n} \textbf{X}_A^T\textbf{X}_A- \bm\Sigma_A\right\|_2  \leq\; & \max\left(\left\|  \bm\Sigma_A\right\|_2^{\frac{1}{2}} t_X\eta\sqrt{\frac{(1+ \Delta )s_B }{n}},   \frac{ t_X^2\eta^2(1+ \Delta )s_B}{n}\right)\\
\nonumber
\leq \; & \max\left(   \eta_u\sqrt{\frac{ t_X^2 \eta^2   (1+ \Delta )s_As_B }{nk}},   \frac{ t_X^2\eta^2(1+ \Delta )s_B}{n}\right)\\
\nonumber
\leq \; & \max\left(   \eta_u\sqrt{\frac{t_X^2\eta^2(1+ \Delta )k  }{n  }}, \frac{ t_X^2\eta^2 (1+ \Delta )k }{n}\right)  \frac{\max(s_A, s_B)}{k}  \\
\nonumber
=\; &\max\left( \eta_u\delta, \delta^2\right)\frac{\max(s_A, s_B)}{k}  ,
\end{align}
where 
\begin{equation}
\label{t_delta2}
    t_X=  \delta\sqrt{\frac{ n  }{\eta^2(1+ \Delta )k }}.
\end{equation}
We want to choose a sufficiently small value for $\delta$, i.e., $\delta = \eta_l\Delta_X$ with $0<\Delta_X<1$, such that  
\begin{equation}
\label{choice_delta2}
\max\left( \eta_u  \delta, \delta^2\right)  = \max\left( \eta_l\eta_u\Delta_X   ,  \eta_l^2\Delta_X^2\right)  = \eta_l  \eta_u\Delta_X.
\end{equation}
This results in the following 
\begin{equation}
\label{Delta_bound}
\sigma_{\max} \left(\frac{1}{n} \textbf{X}_A^T\textbf{X}_A-  \bm\Sigma_A\right) = \left\|\frac{1}{n} \textbf{X}_A^T\textbf{X}_A- \bm\Sigma_A\right\|_2   \leq \frac{\eta_l \eta_u \Delta_X\max(s_A, s_B)}{k} .
\end{equation}
Combining  Eqs. (\ref{Delta_bound}) and (\ref{ieq_max}), it enables the following bound after applying the triangle inequality:
\begin{equation}
\nonumber
\left\|\frac{1}{n}  \textbf{X}_A^T\textbf{X}_A \right\|_2 \leq \left\|\frac{1}{n} \textbf{X}_A^T\textbf{X}_A - \bm\Sigma_A\right\|_2  +\|\bm\Sigma_A\|_2 
\leq  \frac{\eta_l\eta_u \Delta_X\max(s_A, s_B)}{k}+  \frac{ \eta_u^2 s_A }{k}  \leq  \frac{\eta_u^2(1+\Delta_X)\max(s_A, s_B)}{k}.
\end{equation}
Incorporating the formulation of $\eta_u$ in Eq. (\ref{etalu}) and multiplying both sides by $n$, it has
\begin{align}
\label{sigXAbound_max}
\nonumber
\sigma_{\max}( \textbf{X}_A)  \leq\;& \eta_u\sqrt{\frac{(1+\Delta_X)n\max(s_A, s_B)}{k}} \\
 \leq \; &   \frac{u_s\sigma_A\sigma_B\sqrt{(1+\Delta)(1+\Delta_X)} \max(s_A, s_B)}{k}  .
\end{align}
Similarly, applying singular value inequality of matrix sum, also combining Eq. (\ref{Delta_bound}) and Eq. (\ref{ieq_min}), it has  
\begin{equation}
 \sigma_{\min}\left(\frac{1}{n}  \textbf{X}_A^T\textbf{X}_A \right) \geq   \sigma_{\min}(\bm\Sigma_A)  - \sigma_{\max} \left(\frac{1}{n} \textbf{X}_A^T\textbf{X}_A-  \bm\Sigma_A\right)   \geq \frac{ \eta_l^2 s_A }{k} - \frac{\eta_l\eta_u \Delta_X\max(s_A, s_B)}{k} ,
\end{equation}
and therefore 
\begin{align}
\label{sigXAbound_min}
\nonumber
\sigma_{\min}( \textbf{X}_A)  \geq \; & \eta_l\sqrt{\frac{\left(s_A-\frac{\eta_u}{\eta_l}\Delta_X\max(s_A, s_B)\right) n}{k}} \\
\nonumber
\geq \; & \eta_l  \sqrt{\frac{\left(\frac{\min(s_A, s_B)}{\max(s_A,s_B)}-\frac{\eta_u}{\eta_l}\Delta_X\right) n\max(s_A, s_B)}{k}} \\
\geq\; & l_s\sigma_A\sigma_B\sqrt{1-\Delta}\left(\frac{\min(s_A, s_B)}{\max(s_A,s_B)}-\frac{(1+\Delta)u_s\Delta_X}{(1-\Delta)l_s}\right)^{\frac{1}{2}} \frac{ \min(s_A, s_B)}{k } . 
\end{align}
To guarantee a positive lower bound, we require that 
\begin{equation}
     \Delta_X <   \frac{l_s(1-\Delta)\min(s_A, s_B)}{u_s(1+\Delta)\max(s_A, s_B)}.
\end{equation}
Calculating $t_X$ from $\delta =   \eta_l \Delta_X$ by Eq . (\ref{t_delta}),  we obtain 
\begin{equation}
    t_X =  \sqrt{\frac{ \eta_l^2\Delta_X^2n  }{\eta^2(1+ \Delta )k }} =  \sqrt{ \frac{   \eta_l^2  \Delta_X^2n^2  }{ {M^{(A)}_{\max}}^2   u_s^2\sigma_B^2(1+ \Delta )s_A  }} = \sqrt{\frac{ l_s^2 \sigma_A^2 (1- \Delta ) \Delta_X^2  s_B n  }{ {M^{(A)}_{\max}}^2   u_s^2(1+ \Delta ) s_Ak }}.
\end{equation}
Overall, for Eqs. (\ref{sigAbound_max}) and (\ref{sigAbound_min}) to hold, it requiresprobability at least $1-ke^{-C_Bt^2}-ke^{-\hat{C}_At_X^2}$. This is because the results of  $c_l$, $c_u$ and $M_A$  build  on  the bounds of   $\left\|{\hat{\textbf{F}}}_B^{(i)}\right\|_2$   in Corollary \ref{RowLengthFhat}, which  need to hold jointly. 
This completes the proof.

\end{proof}

\subsection{Proof of Lemma \ref{columnXAXB}}
\begin{proof}
Each element in the $j$-th column of $\textbf{X}_A$ is generated by following Definition (\ref{def:sparse:gen_scaled}) with zero mean $\mu_A=0$, fixed  variance $\sigma_A^2$, and scaling constant $c_j = \left\|\hat{\textbf{F}}_B^{(j)} \right\|_2$. 
Applying Eq. (\ref{eq:statistics1}) and Corollary  \ref{RowLengthFhat},  the second moment  $ E\left[x_j^2\right]$ satisfies 
\begin{equation}
\label{eq:mean_bound}
  \frac{l_s^2 \sigma_A^2\sigma_B^2(1- \Delta )  s_As_B }{nk^2 }  \leq E\left[x_j^2\right] = \frac{s_A\sigma_A^2\left\|\hat{\textbf{F}}_B^{(j)} \right\|_2^2}{k} \leq  \frac{u_s^2 \sigma_A^2\sigma_B^2(1+ \Delta )  s_As_B }{nk^2 }   .
\end{equation}
We define another set of random variables  $\left\{W_i =(X_A)^2_{ij} - E\left[x_j^2\right] \right\}_{i=1}^n $, for a given $j$.
By definition, they are random variables with zero mean, i.e., $E[W_i] =0 $, also it has
\begin{equation}
 \|({\textbf{X}_A})_j\|_2^2 =   \sum_{i=1}^n(X_A)^2_{ij} = \sum_{i=1}^n   W_i  + nE\left[x_j^2\right].
\end{equation}
For any positive $\delta>0$, it has
\begin{equation}
    p\left(\left| \sum_{i=1}^n W_i \right| \leq \delta\right)  = p\left(  n E\left[x_j^2\right]  - \delta  \leq \left\|({\textbf{X}_A})_j\right\|_2^2    \leq n E\left[x_j^2\right]  + \delta \right).
\end{equation}
This means that, with probability  $ p\left(\left| \sum_{i=1}^n W_i \right| \leq \delta\right)$, the quantity $\left\|({\textbf{X}_A})_j\right\|_2^2  $ falls within the interval $\left[E\left[x_j^2\right]  - \delta, E\left[x_j^2\right]  + \delta \right]$, which, together with Eq. (\ref{eq:mean_bound}),  results in the following bounds for $\left\|({\textbf{X}_A})_j\right\|_2^2$, as
\begin{equation}
\label{eq:mean_bound2}
  \frac{l_s^2 \sigma_A^2\sigma_B^2(1- \Delta )  s_As_B }{k^2 } -\delta \leq \left\|({\textbf{X}_A})_j\right\|_2^2   \leq  \frac{u_s^2 \sigma_A^2\sigma_B^2(1+ \Delta )  s_As_B }{k^2 } +\delta,
\end{equation}
Below we derive the probability for the above bounds to hold.

Applying  Bernstein's inequality in Lemma \ref{bern_ineq},  for  $|W_i| \leq R$, it has
\begin{equation}
p\left(\left| \sum_{i=1}^n W_i \right| \leq \delta\right) \geq 1-2e^{-\frac{\frac{1}{2}\delta^2}{nE\left[W_i^2\right] + \frac{R\delta}{3}}}.
\end{equation}
Applying Eqs. (\ref{bound_maxA}) and (\ref{eq:mean_bound}),  we compute $R$ by deriving an upper bound for $|W_i|$ as below:
\begin{equation}
|W_i| =  \left |(X_A)^2_{ij} - E\left[x_j^2\right] \right | \leq M_A^2 +E\left[x_j^2\right]    
\leq     \frac{ {M^{(A)}_{\max}}^2   u_s^2\sigma_B^2(1+ \Delta )  s_B }{nk }  + \frac{u_s^2 \sigma_A^2\sigma_B^2(1+ \Delta )  s_As_B }{nk^2 }.
\end{equation}
Combining the last two terms above and setting it as $R$, it has 
\begin{equation}
\label{eq:R}
    R=  \frac{\left( {M^{(A)}_{\max}}^2k +s_A\sigma_A^2\right)  u_s^2\sigma_B^2(1+ \Delta )  s_B }{nk^2 }.
\end{equation}
Next, we bound $E\left[W_i^2\right]$ by
\begin{align}
\nonumber
E\left[W_i^2\right] =\;&  E\left[(X_A)^4_{ij}\right] - \left(E\left[x_j^2\right] \right)^2 \leq E[\chi_i]E\left[M_i^4\right]c^4_j \leq E[\chi_i]\left(E\left[M_i^2\right]\right)^2c^4_j = \frac{s_A\sigma_A^4\left\|\hat{\textbf{F}}_B^{(j)} \right\|_2^4}{k}, 
\end{align}
where by H\"{o}lder's inequality $E\left[M_i^4\right] \leq E\left[M_i^2\right]E\left[M_i^2\right]$.
Applying   Corollary  \ref{RowLengthFhat}, it  has
\begin{equation}
\label{eq:EW2}
   E\left[W_i^2\right] \leq \frac{ s_A\sigma_A^4 u_s^4\sigma_B^4(1+ \Delta )^2s_B^2 }{n^2k^3}  .
\end{equation}
Incorporate Eqs. (\ref{eq:R}) and (\ref{eq:EW2}) into the probability formula, it has
\begin{equation}
\label{eq:colum_prob}
 1 -2e^{-\frac{\frac{1}{2}\delta^2}{nE\left[W_i^2\right] +\frac{R\delta}{3}}}  
\geq   1- 2e^{- \frac{\frac{1}{2}nk^3\delta^2}{s_A\sigma_A^4 u_s^4\sigma_B^4(1+ \Delta )^2s_B^2 +\frac{1}{3}\left( {M^{(A)}_{\max}}^2k +s_A\sigma_A^2\right)  u_s^2\sigma_B^2(1+ \Delta )  s_B k \delta }}.
\end{equation}
Also, when developing the results, we use the bound result for $\left\|\hat{\textbf{F}}_B^{(j)} \right\|_2$ from Corollary  \ref{RowLengthFhat}, which holds with probability at least $1-ke^{-\frac{C_B\Delta^2  \sigma_B^2m}{k s_B{M_{\max}^{(B)}}^2}}$.
It is combined with Eq. (\ref{eq:colum_prob}) to obtain the joint probability bound as stated in the lemma.
 This completes the proof.

\end{proof}

\subsection{Proof of Lemma \ref{MC_DaDb}}
\begin{proof}
We provide the proof  for $\textbf{D}_B$   and the same applies to $\textbf{D}_A$. 
Following a similar strategy as  in proof  of  Lemma \ref{RowLengthF} with the same definition of $\textbf{z}_i = \textbf{S}^{(i)} \textbf{V}_{B}^{(R)}$, we have shown in Eqs. (\ref{si_zi}) and (\ref{fi_zi}) that $\left\| \textbf{z}_i \right \|_2 = \left\| \bm S^{(i)} \right \|_2$ and $\left\| {\textbf{F}}_B^{(i)}  \right \|_2 =  \left\| \textbf{z}_i \bm\Sigma_{B}^{(R)}\right \|_2$.
Similarly, it has
\begin{align}
\label{fi_zi_inner}
\nonumber
\left\langle {\textbf{F}}_B^{(i)} , {\textbf{F}}_B^{(j)}  \right \rangle  = \;& \left\langle \textbf{S}^{(i)}\textbf{B}^T\textbf{V}_R,  \textbf{S}^{(j)}\textbf{B}^T\textbf{V}_R \right \rangle\\  
=\; &  \textbf{S}^{(i)} \textbf{V}_{B}^{(R)} \bm\Sigma_{B}^{(R)}\textbf{O}_1 \textbf{O}_1 ^T  \bm\Sigma_{B}^{(R)}  {\textbf{V}_{B}^{(R)} }^T {\textbf{S}^{(i)}}^T =  \left\langle  \textbf{z}_i \bm\Sigma_{B}^{(R)}, \textbf{z}_j \bm\Sigma_{B}^{(R)}\right \rangle .
\end{align}
and 
\begin{equation}
\label{si_zi_inner}
\left\langle\textbf{z}_i, \textbf{z}_j \right \rangle =  \textbf{U}_S^{(i)}\bm\Sigma_S\textbf{O}_2\textbf{O}_2^T \bm\Sigma_S {\textbf{U}_S^{(j)}}= \textbf{U}_S^{(i)}\bm\Sigma_S \textbf{V}_S^T\textbf{V}_S\bm\Sigma_S{ \textbf{U}_S^{(j)}} ^T  =  \left\langle \textbf{S}^{(i)}, \textbf{S}^{(j)} \right \rangle.
\end{equation}
Utilising the above and denoting the singular values in $\bm\Sigma_{B}^{(R)} \in \mathbb{R}^{d\times d}$ by $\{\sigma_i\}_{i=1}^d$ where $\max\left(\{\sigma_i\}_{i=1}^d\right)\leq \sigma_{\max}^2( \textbf{B} )$ and $\min\left(\{\sigma_i\}_{i=1}^d\right)\geq \sigma_{\min}^2( \textbf{B} )$, we  have 
\begin{equation} 
\label{express_D}
 \left |   \left \langle \textbf{D}_B^{(i)}, \textbf{D}_B^{(j)}\right \rangle  \right | = \frac{\left | \left\langle \textbf{F}_B^{(i)} , \textbf{F}_B^{(j)}     \right\rangle  \right |}{\left\| \textbf{F}_B^{(i)} \right \|_2\left\| \textbf{F}_B^{(j)} \right \|_2} =  \frac{\left | \left\langle \textbf{z}_i \bm\Sigma_{B}^{(R)} , \textbf{z}_j \bm\Sigma_{B}^{(R)}    \right\rangle  \right |}{\left\| \textbf{z}_i \bm\Sigma_{B}^{(R)} \right \|_2\left\| \textbf{z}_j \bm\Sigma_{B}^{(R)} \right \|_2}  \leq \frac{ \left|\sum_{t=1}^d \sigma_t^2 z_{it} z_{jt}\right|}{\sigma_{\min}^2( \textbf{B} )\left\| \textbf{z}_i \right \|_2\left\| \textbf{z}_j \right \|_2},
\end{equation}
where $z_{it}$ and $z_{jt}$ denote the $t$-th elements in the vectors $\textbf{z}_i$ and $\textbf{z}_j$, respectively.

Identify the index set $I^{+}$ so that for each $t\in I^+$ it has $z_{it}z_{jt} \geq 0$, also identify the index set $I^{-}$ so that for each $t\in I^-$ it has $z_{it}z_{jt} <0$.
We firstly analyse  the case where $ \sum_{t=1}^d \sigma_t^2 z_{it} z_{jt} \geq 0 $. 
This results in
\begin{align}
\nonumber
    \left|\sum_{t=1}^d \sigma_t^2 z_{it} z_{jt}\right| =\; & \sum_{t\in I^+}  \sigma_t^2   \underbrace{z_{it} z_{jt} }_{\geq 0} + \sum_{t\in I^-}  \sigma_t^2 \underbrace{z_{it} z_{jt}}_{<0} \\
\nonumber
    \leq\; &  \sigma_{\max}^2( \textbf{B} ) \sum_{t\in I^+}  \underbrace{z_{it} z_{jt} }_{\geq 0} +  \sigma_{\min}^2( \textbf{B} ) \sum_{t\in I^-}   \underbrace{z_{it} z_{jt}}_{<0} \\
    \nonumber
    = \; & \left(\sigma_{\max}^2( \textbf{B} ) -\sigma_{\min}^2( \textbf{B} )\right) \sum_{t\in I^+}    z_{it} z_{jt}   + \sigma_{\min}^2( \textbf{B} )\left( \sum_{t\in I^+}    z_{it} z_{jt} + \sum_{t\in I^-}    z_{it} z_{jt} \right) \\
    =\; & \left(\sigma_{\max}^2( \textbf{B} ) -\sigma_{\min}^2( \textbf{B} )\right) \sum_{t\in I^+}    z_{it} z_{jt}  + \sigma_{\min}^2( \textbf{B} )\left|\left\langle\textbf{z}_i, \textbf{z}_j \right \rangle\right|. 
\end{align}
Also, it has
\begin{equation}
\label{eq:zizjI+}
    \left(\sum_{t\in I^+}    z_{it} z_{jt} \right)^2 \leq \left(\sum_{t\in I^+} z_{it}^2\right)\left(\sum_{t\in I^+} z_{jt}^2 \right)\leq \left(\sum_{t=1}^d z_{it}^2\right)\left(\sum_{t=1}^d z_{jt}^2\right) = \left\| \textbf{z}_i \right \|_2^2\left\| \textbf{z}_j \right \|_2^2,
\end{equation}
and therefore $\sum_{t\in I^+}    z_{it} z_{jt} \leq \left\| \textbf{z}_i \right \|_2 \left\| \textbf{z}_j \right \|_2$.
Incorporating these  into Eq. (\ref{express_D}), it has
\begin{align} 
\label{express_D2}
\nonumber
 \left |   \left \langle \textbf{D}_B^{(i)}, \textbf{D}_B^{(j)}\right \rangle  \right | \leq \; & \left(\frac{\sigma_{\max}^2( \textbf{B} )}{\sigma_{\min}^2( \textbf{B} )} -1\right)\frac{\sum_{t\in I^+} z_{it} z_{jt}}{\left\| \textbf{z}_i \right \|_2\left\| \textbf{z}_j \right \|_2} + \frac{\left|\left\langle\textbf{z}_i, \textbf{z}_j \right \rangle\right|}{\left\| \textbf{z}_i \right \|_2\left\| \textbf{z}_j \right \|_2}  \\
 \leq  \; &\left(\frac{\sigma_{\max}^2( \textbf{B} )}{\sigma_{\min}^2( \textbf{B} )} -1\right) + \left| \cos\left(\bm S^{(i)}, \bm S^{(j)} \right) \right|.
\end{align}
When $ \sum_{t=1}^d \sigma_t^2 z_{it} z_{jt} < 0 $, it has 
\begin{align}
\nonumber
    \left|\sum_{t=1}^d \sigma_t^2 z_{it} z_{jt}\right| =\; &   -\sum_{t\in I^-}  \sigma_t^2 \underbrace{z_{it} z_{jt}}_{< 0} - \sum_{t\in I^+}  \sigma_t^2   \underbrace{z_{it} z_{jt} }_{\geq 0} \\
\nonumber
    \leq\; &  -\sigma_{\max}^2( \textbf{B} ) \sum_{t\in I^-}  \underbrace{z_{it} z_{jt} }_{< 0} -  \sigma_{\min}^2( \textbf{B} ) \sum_{t\in I^+}   \underbrace{z_{it} z_{jt}}_{\geq 0} \\
    =\; &- \left(\sigma_{\max}^2( \textbf{B} ) -\sigma_{\min}^2( \textbf{B} )\right) \sum_{t\in I^-}    z_{it} z_{jt}  + \sigma_{\min}^2( \textbf{B} )\left|\left\langle\textbf{z}_i, \textbf{z}_j \right \rangle\right|. 
\end{align}
Similar result that $-\sum_{t\in I^-}    z_{it} z_{jt} \leq \left\| \textbf{z}_i \right \|_2 \left\| \textbf{z}_j \right \|_2$ can be derived in exactly the same way as in Eq. (\ref{eq:zizjI+}). Incorporating these into Eq. (\ref{express_D}), it has
\begin{align} 
\label{express_D3}
\nonumber
 \left |   \left \langle \textbf{D}_B^{(i)}, \textbf{D}_B^{(j)}\right \rangle  \right | \leq \; & \left(\frac{\sigma_{\max}^2( \textbf{B} )}{\sigma_{\min}^2( \textbf{B} )} -1\right)\frac{\left(-\sum_{t\in I^-}  z_{it} z_{jt}\right)}{\left\| \textbf{z}_i \right \|_2\left\| \textbf{z}_j \right \|_2} + \frac{\left|\left\langle\textbf{z}_i, \textbf{z}_j \right \rangle\right|}{\left\| \textbf{z}_i \right \|_2\left\| \textbf{z}_j \right \|_2}  \\
 \leq  \; &\left(\frac{\sigma_{\max}^2( \textbf{B} )}{\sigma_{\min}^2( \textbf{B} )} -1\right) + \left| \cos\left(\bm S^{(i)}, \bm S^{(j)} \right) \right|.
\end{align}
Applying Lemma \ref{singularAB} and Assumption \ref{LI}, it  has
\begin{equation}
     \left |   \left \langle \textbf{D}_B^{(i)}, \textbf{D}_B^{(j)}\right \rangle  \right | \leq  \frac{2\Delta}{1-\Delta} + \frac{\mu_s}{\sqrt{d}},
\end{equation}
 which holds with the same probability as stated in Lemma \ref{singularAB}.
 This completes the proof.
\end{proof}

\subsection{Proof of Lemma \ref{BS_DaDb}}
\begin{proof}
We provide the proof  for $\textbf{D}_A$   and the same applies to $\textbf{D}_B$. 
Given an arbitrary vector $\textbf{w}\in \mathbb{R}^k$, we start from analysing $\left\|\textbf{D}_A^T\textbf{w}\right\|_2$, which plays a key role in the spectral norm of $\textbf{D}_A$.
Given that $\left\|\textbf{D}_A^{(i)}\right\|_2  =1$, it has
\begin{equation}
\left\|\textbf{D}_A^T\textbf{w}\right\|_2^2 = \sum_{i=1}^kw_i^2\left\|\textbf{D}_A^{(i)}\right\|^2_2 + \sum_{i\neq j  }  w_iw_j \left\langle \textbf{D}_A^{(i)}, \textbf{D}_A^{(j)} \right\rangle = \|\textbf{w}\|_2^2 +\sum_{i\neq j  }  w_iw_j \left\langle \textbf{D}_A^{(i)}, \textbf{D}_A^{(j)} \right\rangle.
\end{equation}
Letting $\mu_{D_A} = \max_{ij}\left|\left\langle \textbf{D}_A^{(i)}, \textbf{D}_A^{(j)} \right\rangle  \right|$ and applying triangle inequality,   
it has 
\begin{equation}
\left\|\textbf{D}_A^T\textbf{w}\right\|_2^2 \leq    \|\textbf{w}\|_2^2    + \sum_{i\neq j  }|w_iw_j| \left|\left\langle \textbf{D}_A^{(i)}, \textbf{D}_A^{(j)} \right\rangle  \right|  \leq \|\textbf{w}\|_2^2 + \mu_{D_A} \sum_{i\neq j  }|w_iw_j|  \leq (1-\mu_{D_A})\|\textbf{w}\|_2^2 + \mu_{D_A}\|\textbf{w}\|_1^2.
\end{equation}
As a result, using $\|\textbf{w}\|_1 \leq \sqrt{d}\|\textbf{w}\|_2$, it has
\begin{equation}
\frac{\left\|\textbf{D}_A^T\textbf{w}\right\|_2^2}{\|\textbf{w}\|_2^2}  \leq 1-\mu_{D_A} +   \frac{\mu_{D_A}\|\textbf{w}\|_1^2  }{\|\textbf{w}\|_2^2}  \leq 1  +   \mu_{D_A}(d-1).
\end{equation}
Apply the definition of spectral norm $\| \textbf{D}_A\|_2 =  \max_{\textbf{w}\in \mathbb{R}^k} \frac{\left\|\textbf{D}_A^T\textbf{w}\right\|_2}{\|\textbf{w}\|_2}$, it has
\begin{equation}
 \| \textbf{D}_A\|_2^2  \leq   1+ \mu_{D_A}(d-1).
\end{equation}
Finally, by applying Lemma \ref{MC_DaDb} which gives $\mu_{D_A} = \frac{2\Delta}{1-\Delta} + \frac{\mu_s}{\sqrt{d}}$, we prove for $\textbf{D}_A$
The same  proof applies to $\textbf{D}_B$.
\end{proof}

\subsection{Proof of Lemma \ref{RowLength_YuYv}}

\begin{proof}
We provide the proof  for $\textbf{Y}_V$   and the same applies to $\textbf{Y}_U$.  
Applying  Corollary  \ref{RowLengthFhat} and  Lemma \ref{MC_DaDb}, and the fact that $\left\|\textbf{D}_A^{(i)}\right\|_2=1$, and letting $\mu_{D_A}=\max_{j\neq t} \left|\left\langle\textbf{D}_A^{(j)}, \textbf{D}_A^{(t)}\right\rangle\right| $, it has
\begin{align}
\nonumber
\left\| \textbf{Y}_V^{(i)} \right\|_2^2 =\;&  \sum_{j=1}^k (X_B)_{ij}^2\left\|\textbf{D}_A^{(j )}\right\|_2^2 +\sum_{j\neq t } (X_B)_{ij} (X_B)_{it} \left\langle\textbf{D}_A^{(j)}, \textbf{D}_A^{(t)}\right\rangle\\
\leq \; & s_B M_B ^2  +    s_B(s_B-1) M_B ^2 \mu_{D_A}     <   s_BM_B^2(1+ s_B\mu_{D_A}) .
\end{align}
Therefore, it has
\begin{equation}
\left\| \textbf{Y}_V^{(i)} \right\|_2   \leq   M_B\sqrt{s_B (1+ s_B\mu_{D_A})} \leq  M^{(B)}_{\max}  u_s \sigma_A \sqrt{\frac{ (1+ \Delta )  s_As_B }{mk } \left( 1+ s_B\left(\frac{2\Delta}{1-\Delta} + \frac{\mu_s}{\sqrt{d}}\right)\right)} ,
\end{equation}
which holds with probability at least  $1-ke^{-\frac{C_A\Delta^2  \sigma_A^2n}{k {M_{\max}^{(A)}}^2}}$.    This completes the proof.
\end{proof}

\subsection{Proof of Lemma \ref{SSN_YuYv}}

\begin{proof}
We provide the proof  for $\textbf{Y}_V$   and the same applies to $\textbf{Y}_U$.  
We first analyse the singular value of the second moment matrix of $\textbf{Y}_V^{(i)}$, and define
\begin{equation}
    \textbf{K} = E\left[{\textbf{Y}_V^{(i)}}^T\textbf{Y}_V^{(i)}\right] = \textbf{D}_A^T  E\left[{\textbf{X}_B^{(i)}}^T\textbf{X}_B^{(i)}\right] \textbf{D}_A.
\end{equation}   
For an arbitrary row vector $\textbf{w}\in \mathbb{R}^d$, letting $\textbf{z} = \textbf{w}\textbf{D}_A^T$, it has
    \begin{align}
\nonumber
\textbf{w}\textbf{K} \textbf{w}^T =\;&   \textbf{w}\textbf{D}_A^T  E\left[{\textbf{X}_B^{(i)}}^T\textbf{X}_B^{(i)}\right] \textbf{D}_A\textbf{w}   \\
\nonumber
= \; & \sum_{i=1}^{d} z_i^2  E\left[\chi_i^2M_i^2c_i^2\right] +\sum_{i\neq j} z_iz_j  E\left[\chi_iM_ic_i\chi_jM_jc_j\right]   = \frac{s_B \sigma_B^2 }{k}  \sum_{i=1}^{d} c_i^2z_i^2 \\   
\leq \; & \frac{s_Bc_u^2 \sigma_B^2\|\textbf{z}\|_2^2 }{k}  \leq  \frac{s_Bc_u^2 \sigma_B^2\|\textbf{w}\|_2^2 \|\textbf{D}_A\|_2^2}{k} ,    
\end{align}
where  $c_u$  is an upper bound of $c_i  = \left\|\hat{\textbf{F}}_A^{(i)}\right\|_2 $, and it has $c_u = u_s\sigma_A\sqrt{\frac{(1+ \Delta )  s_A }{mk }} $ according to Corollary \ref{RowLengthFhat}.
Subsequently, applying Lemma \ref{BS_DaDb}, it has
\begin{equation}
    \sigma_{\max}^2(\textbf{K}) \leq \frac{s_Bc_u^2 \sigma_B^2  \|\textbf{D}_A\|_2^2}{k} \leq  \frac{ u_s^2\sigma_A^2\sigma_B^2s_As_B( 1 +\Delta) \left(1+ \mu_D(d-1)\right)}{mk^2} ,
\end{equation}
which holds with probability at least  $1-ke^{-\frac{C_A\Delta^2  \sigma_A^2n}{k {M_{\max}^{(A)}}^2}}$.  

Next, applying Eq. (\ref{SpecRand:eq2}) from Theorem \ref{SpecRand} with $t = s_A\sqrt{\left| I_V \right|}\delta_v$ and applying Lemma \ref{RowLength_YuYv},
 it has 
\begin{align}
&\left\| \textbf{Y}_V^{(I_V)} \right\|_2 \\
\nonumber
\leq \; & \sqrt{ \left| I_V \right| } \sigma_{\max}(\textbf{K})+ t M^{(B)}_{\max}  u_s \sigma_A \sqrt{\frac{ (1+ \Delta ) \left( 1+ s_B\mu_D\right) s_As_B }{mk^2 } }  \\
\nonumber
\leq \; & u_s \sigma_A \sigma_B \sqrt{\frac{ \left| I_V \right| s_As_B( 1 +\Delta) \left(1+ \mu_D(d-1)\right)}{mk^2}} +  t M^{(B)}_{\max}  u_s \sigma_A \sqrt{\frac{ (1+ \Delta ) \left( 1+ s_B\mu_D\right) s_As_B }{mk^2} },  
\end{align}
with probability at least  $ 
p=  1- ke^{-\frac{C_A\Delta^2  \sigma_A^2n}{k {M_{\max}^{(A)}}^2}}-de^{- C_Vt^2} $.
We choose a sufficiently small $t$ such that $t M^{(B)}_{\max}  \sqrt{   1+ s_B\mu_D  } =  \Delta_Y\sigma_B \sqrt{ \left| I_V \right| \left(1+ \mu_D(d-1)\right) }$ with  $0<\Delta_Y<1$, resulting in 
\begin{equation}
    t  = \frac{\Delta_Y\sigma_B}{M^{(B)}_{\max}}\sqrt{\frac{  \left| I_V \right| \left(1+ \mu_D(d-1)\right)}{ 1+ s_B\mu_D}}.
\end{equation}
This enables the following result 
\begin{equation}
    \left\| \textbf{Y}_V^{(I_V)} \right\|_2 \leq   (1+\Delta_Y)   u_s  \sigma_A \sigma_B    \sqrt{\frac{   ( 1 +\Delta)\left(1+ \mu_D(d-1)\right)\left| I_V \right|s_As_B}{mk^2}}, 
\end{equation}
which holds  with probability at least  $ 
p=  1- ke^{-\frac{C_A\Delta^2  \sigma_A^2n}{k {M_{\max}^{(A)}}^2}}-de^{- \frac{ C_V \Delta_Y^2\sigma_B^2\left| I_V \right| \left(1+ \mu_D(d-1)\right)}{ {M_{\max}^{(B)}}^2( 1+ s_B\mu_D)}} $.
This completes the proof.

\end{proof}

\section{Proof of Theorem \ref{main_res}}
\label{app:main_proof1}

According to Lemma \ref{dic_error}, the upper bound of  the inner-product-induced distance between the estimated and original auxiliary dictionary matrices depends on norm-based  quantities like $\left\|\left(\hat{\textbf{X}} _{A}^T \bm\Delta _{A}\right)^{(i)}_{\setminus i} \right\|_2 $,  $\left\|\left(\hat{\textbf{X}} _{B}^T \bm\Delta _{B}\right)^{(i)}_{\setminus i} \right\|_2$, $\left\|\left(\hat{\textbf{X}} _{A}\right)_i^T\left(\hat{\textbf{X}} _{A}\right)_{\setminus i}\right\|_2$, and $\left\|\left(\hat{\textbf{X}} _{B}\right)_i^T\left(\hat{\textbf{X}} _{B}\right)_{\setminus i}\right\|_2$, as well as $\|\bm\Delta_{X_{A/B}}\|_{\infty}$.
We first derive upper bounds on these quantities in Lemmas \ref{coeff_error}, \ref{lemma:quantity1} and \ref{lemma:quantity2}, then conduct further  estimation error analysis for proving Theorem \ref{main_res}.
Specifically, Lemma \ref{coeff_error}  analyses the coefficient estimation error,   yielding an upper bound on $\|\bm\Delta_{X_{A/B}}\|_{\infty}$.
Lemma  \ref{lemma:quantity1} bounds $\left\|\left(\hat{\textbf{X}} _{A}^T \bm\Delta _{A}\right)^{(i)}_{\setminus i} \right\|_2 $ and $\left\|\left(\hat{\textbf{X}} _{B}^T \bm\Delta _{B}\right)^{(i)}_{\setminus i} \right\|_2$, while Lemma \ref{lemma:quantity2} bounds $\left\|\left(\hat{\textbf{X}} _{A}\right)_i^T\left(\hat{\textbf{X}} _{A}\right)_{\setminus i}\right\|_2$  and $\left\|\left(\hat{\textbf{X}} _{B}\right)_i^T\left(\hat{\textbf{X}} _{B}\right)_{\setminus i}\right\|_2$.

\subsection{Supporting Lemma \ref{coeff_error} and Its Proof}
\label{app:support_main_lasso}

\begin{lemma}[Lasso Coefficient Error Bound] \label{coeff_error}
Let $\textbf{y} = \textbf{x}\textbf{D}$ be generated from a
coefficient vector $\mathbf{x}\in\mathbb{R}^{k}$ containing $s$ nonzero entries that have bounded amplitude, i.e.,   $ m \leq \|\textbf{x}\|_{\infty} \leq M$. 
Suppose that the dictionary matrix has unit-norm rows  $\left\|\textbf{D}^{(i)} \right\|_2=1$,  and is pairwise incoherent with parameter $\mu_d >0$ as in Definition \ref{def:incoherence}, where  $\frac{2\mu_ds}{\sqrt{d}} \leq 0.1$.  
Let $\hat{\textbf{D}}\in \mathbb{R}^{k\times d}$ be a dictionary estimate that has unit-norm rows and is aligned with $\textbf{D}$ up to permutation and sign flips.
Suppose the dictionary estimate has bounded error such that $ \max_{i=1}^k   \left\|\hat{\textbf{D}}^{(i)} - \textbf{D}^{(i)} \right\|_2 \leq \frac{\epsilon}{\sqrt{s}}$.
Consider the constrained Lasso problem
\begin{align}
 \nonumber
\tilde{\textbf{x}} =\; &\arg\min_{\textbf{z}\in \mathbb{R}^{k}}   \|\textbf{z}\|_1, \\
\label{eq:Lasso_accuracy}
&\textmd{subject to } \left \|\textbf{y} - \textbf{z}\hat{\textbf{D}} \right\|_2 \leq \sqrt{s}M\epsilon,
\end{align}
and obtain $ \hat{\textbf{x}}$ by thresholding each entry  of $ \tilde{\textbf{x}}$ as
\begin{equation}
\label{eq:thresholding_function}
\hat{x}_i =\left\{
\begin{array}{ll}
 \tilde{x}_i, &  \textmd{if }  \left|\tilde{x}_i \right| \geq
 8.6\sqrt{s}M\epsilon, \\
0,   &       \textmd{otherwise}. 
\end{array}
\right.
\end{equation}
If $ \epsilon \leq \min\left(\frac{1}{40}, \frac{m }{17.2M \sqrt{s}} \right)$, then   $supp(\hat{\textbf{x}}) = supp(\textbf{x}) $  and 
\begin{equation}
\left\|  \hat{\textbf{x}}- \textbf{x} \right\|_{\infty} \leq 8.5\sqrt{s}M\epsilon.
\end{equation}
\end{lemma}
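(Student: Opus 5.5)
We plan to reduce the statement to Theorem \ref{Candes}, applied with the estimated dictionary $\hat{\textbf{D}}$ as the sensing matrix and the dictionary mismatch absorbed into the noise term. Write
\begin{equation*}
\textbf{y} = \textbf{x}\textbf{D} = \textbf{x}\hat{\textbf{D}} + \textbf{e}, \qquad \textbf{e} = \textbf{x}\bigl(\textbf{D}-\hat{\textbf{D}}\bigr).
\end{equation*}
The proof then has three steps: bound $\|\textbf{e}\|_2$, verify the RIP condition for $\hat{\textbf{D}}$, and convert the resulting $l_2$ error into support recovery through the thresholding rule.

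\emph{Step 1: the noise term.} Since $\textbf{x}$ has $s$ nonzero entries, each of magnitude at most $M$,
\begin{equation*}
\|\textbf{e}\|_2 \leq \sum_{i\in supp(\textbf{x})} |x_i|\,\bigl\|\textbf{D}^{(i)}-\hat{\textbf{D}}^{(i)}\bigr\|_2 \leq sM\frac{\epsilon}{\sqrt{s}} = \sqrt{s}M\epsilon .
\end{equation*}
This is exactly the constraint level in Eq. (\ref{eq:Lasso_accuracy}), so the true $\textbf{x}$ is feasible for the Lasso problem.

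\emph{Step 2: the RIP of $\hat{\textbf{D}}$.} This is the step we expect to be the main obstacle. We first bound the restricted isometry constant of $\textbf{D}$ using Gershgorin's theorem. For any support $I$ with $|I|\leq 2s$, the Gram matrix $\textbf{D}^{I}{\textbf{D}^{I}}^T$ has unit diagonal and off-diagonal entries bounded by $\mu_d/\sqrt{d}$. Hence its eigenvalues lie in $[1-2s\mu_d/\sqrt{d},\,1+2s\mu_d/\sqrt{d}]$, so $\delta_{2s}(\textbf{D})\leq 0.1$.

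We then transfer this bound to $\hat{\textbf{D}}$ via a perturbation argument:
\begin{equation*}
\bigl\|\hat{\textbf{D}}^{I}-\textbf{D}^{I}\bigr\|_2 \leq \bigl\|\hat{\textbf{D}}^{I}-\textbf{D}^{I}\bigr\|_F \leq \sqrt{2s}\,\frac{\epsilon}{\sqrt{s}} = \sqrt{2}\,\epsilon \leq \frac{\sqrt{2}}{40}.
\end{equation*}
The singular values of $\hat{\textbf{D}}^{I}$ therefore lie within $\sqrt{1\pm 0.1}\pm 0.036$. Squaring gives $\delta_{2s}(\hat{\textbf{D}})\leq 0.2$, though only barely, so the constants must be tracked carefully. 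If the bound turns out too loose, the fallback is Gershgorin applied directly to $\hat{\textbf{D}}$: for $i\neq j$,
\begin{equation*}
\bigl|\langle \hat{\textbf{D}}^{(i)},\hat{\textbf{D}}^{(j)}\rangle\bigr| \leq \frac{\mu_d}{\sqrt{d}} + \frac{2\epsilon}{\sqrt{s}} + \frac{\epsilon^2}{s}.
\end{equation*}
Summing over $2s-1$ off-diagonal terms gives roughly $0.1+4\sqrt{s}\,\epsilon$. This is acceptable only if $\epsilon$ is small relative to $1/\sqrt{s}$, so we would prefer the Frobenius route, possibly tightened by using the unit row norms to control the cross terms.

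\emph{Step 3: thresholding and support recovery.} With $\delta_{2s}(\hat{\textbf{D}})\leq 0.2$ and $\|\textbf{e}\|_2\leq\sqrt{s}M\epsilon$, Theorem \ref{Candes} with $C=8.5$ gives
\begin{equation*}
\|\tilde{\textbf{x}}-\textbf{x}\|_\infty \leq \|\tilde{\textbf{x}}-\textbf{x}\|_2 \leq 8.5\sqrt{s}M\epsilon .
\end{equation*}
Off the support, $|\tilde{x}_i|\leq 8.5\sqrt{s}M\epsilon < 8.6\sqrt{s}M\epsilon$, so these entries are zeroed. On the support, the condition $\epsilon\leq m/(17.2M\sqrt{s})$ gives $m\geq 17.2\sqrt{s}M\epsilon$, and hence
\begin{equation*}
|\tilde{x}_i| \geq m - 8.5\sqrt{s}M\epsilon \geq 8.7\sqrt{s}M\epsilon > 8.6\sqrt{s}M\epsilon,
\end{equation*}
so these entries are kept. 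It follows that $supp(\hat{\textbf{x}})=supp(\textbf{x})$. Moreover $\hat{x}_i=\tilde{x}_i$ on the support and $\hat{x}_i = x_i = 0$ off it, so $\|\hat{\textbf{x}}-\textbf{x}\|_\infty\leq 8.5\sqrt{s}M\epsilon$.
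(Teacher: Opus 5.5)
Your proposal is correct and follows the paper's proof step for step. The mismatch is absorbed into $\textbf{e}=\textbf{x}(\textbf{D}-\hat{\textbf{D}})$ with $\|\textbf{e}\|_2\le\sqrt{s}M\epsilon$; incoherence gives $\delta_{2s}(\textbf{D})\le 0.1$; a Frobenius bound on $\hat{\textbf{D}}^{I}-\textbf{D}^{I}$ transfers this to $\hat{\textbf{D}}$ (the content of Lemma \ref{RIC_bound}); and Theorem \ref{Candes} with $C=8.5$ plus thresholding finishes the argument.

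Your two-sided Weyl estimate closes Step 2 with more room than you expected: $\delta_{2s}(\hat{\textbf{D}})\le(\sqrt{1.1}+\sqrt{2}/40)^2-1\approx 0.175$, so the Gershgorin fallback is unnecessary. It is also sounder than the paper's Lemma \ref{RIC_bound}, which keeps only the smaller one-sided deviation; it takes $\min(a,b)$ where the definition of $\hat{\delta}_s$ requires $\max(a,b)$. With the lemma's extra factor $\sqrt{2}$ in the perturbation radius, the correct upper side would come to about $0.207$ at $\epsilon=1/40$, which does not certify $\hat{\delta}_{2s}<0.2$.
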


\subsubsection{A Supporting Lemma}

\begin{lemma}[RIC Perturbation Bound] \label{RIC_bound}
Given a matrix $\textbf{D}\in R^{k\times d}$ whose rows  have unit $l_2$-norm, i.e., $\left\|\textbf{D}^{(i)} \right\|_2=1$. 
Let $\delta_s$ denote its restricted isometry constant with  $s\in [k]$. 
Suppose $\textbf{D}$ is perturbed to $\hat{\textbf{D}}\in R^{k\times d}$   without row sign flips,  satisfying $ \max_{i=1}^k   \left \|\hat{\textbf{D}}^{(i)}- \textbf{D}^{(i)} \right\|_2 \leq \epsilon$.  
Let $\hat{\delta}_s$ denote the changed restricted isometry constant. 
Then it  has
\begin{equation}
\hat{\delta}_s  \leq   \delta_s +    2\epsilon\sqrt{2s(1-\delta_s)}- 2s\epsilon^2 .  
\end{equation}
\end{lemma}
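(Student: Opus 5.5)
The plan is to compare the two quadratic forms $\|\textbf{x}\hat{\textbf{D}}\|_2^2$ and $\|\textbf{x}\textbf{D}\|_2^2$ directly over $s$-sparse vectors, rather than passing through a generic eigenvalue perturbation bound. This is what produces the specific constants $2\epsilon\sqrt{2s(1-\delta_s)}$ and $-2s\epsilon^2$.

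\textbf{Step 1 (perturbation size).} Write $\textbf{E}=\hat{\textbf{D}}-\textbf{D}$, so that $\|\textbf{E}^{(i)}\|_2\le\epsilon$ for every row. Fix an $s$-sparse $\textbf{x}$ with $\|\textbf{x}\|_2=1$ and support $I$. By Cauchy--Schwarz on the support,
\begin{equation*}
\|\textbf{x}\textbf{E}\|_2\le\|\textbf{x}\|_2\,\|\textbf{E}^{I}\|_F\le\sqrt{s}\,\epsilon .
\end{equation*}
I will carry the slightly weaker bound $\|\textbf{x}\textbf{E}\|_2\le\sqrt{2s}\,\epsilon$. This is the form that reproduces the constants in the statement: the factor $\sqrt2$ matches the conversion between the Euclidean error and the inner-product-induced distance in Lemma~\ref{dist_lemma}, which is how $\epsilon$ is supplied downstream. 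Since it is weaker, it is harmless.

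\textbf{Step 2 (lower isometry).} By Definition~\ref{def:RIC}, $\|\textbf{x}\textbf{D}\|_2\ge\sqrt{1-\delta_s}$. The reverse triangle inequality then gives $\|\textbf{x}\hat{\textbf{D}}\|_2\ge\sqrt{1-\delta_s}-\sqrt{2s}\,\epsilon$. To square this safely, assume $\sqrt{2s}\,\epsilon\le\sqrt{1-\delta_s}$; this is the regime used later, where $\epsilon$ is small. Squaring yields
\begin{equation*}
\|\textbf{x}\hat{\textbf{D}}\|_2^2\ge 1-\delta_s-2\epsilon\sqrt{2s(1-\delta_s)}+2s\epsilon^2 .
\end{equation*}
Reading off $1-\hat\delta_s$ as the infimum over all such $\textbf{x}$ gives exactly $\hat\delta_s\le\delta_s+2\epsilon\sqrt{2s(1-\delta_s)}-2s\epsilon^2$ for the lower side of Eq.~(\ref{eq:RIC1}).

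\textbf{Step 3 (upper isometry, the main obstacle).} The naive estimate $\|\textbf{x}\hat{\textbf{D}}\|_2^2\le(\sqrt{1+\delta_s}+\sqrt{2s}\,\epsilon)^2$ involves $\sqrt{1+\delta_s}$ and a positive $\epsilon^2$ term, so it does not fit the stated form. My first attempt will exploit that both $\textbf{D}$ and $\hat{\textbf{D}}$ have unit-norm rows with no sign flips. Then $\langle\textbf{E}^{(i)},\textbf{D}^{(i)}\rangle=-\tfrac12\|\textbf{E}^{(i)}\|_2^2\le 0$, and the diagonal part of the Gram matrix $\hat{\textbf{D}}^{I}{\hat{\textbf{D}}^{I}}{}^T$ is unchanged. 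So only off-diagonal entries move, and I will bound their change by the same cross-term estimate as in Step 2.

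If this cannot be closed with the stated constant, I will instead argue via the symmetric spectral characterisation in Eq.~(\ref{eq:RIC2}). Deviations of $\sigma^2(\hat{\textbf{D}}^{I})$ from $1$ are controlled by those of $\sigma^2(\textbf{D}^{I})$. It then suffices to show that, for the unit-row Gram matrix, the relevant extreme deviation is attained on the lower side, because the trace is fixed at $s$. In that case Step 2's bound is the binding one.

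Finally, taking the maximum over supports $I$ with $|I|\le s$ completes the proof.
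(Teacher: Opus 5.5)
Your Steps 1 and 2 coincide with the paper's proof. The paper likewise inflates $\sqrt{s}\,\epsilon$ to $\sqrt{2s}\,\epsilon$ via Lemma~\ref{dist_lemma} and obtains the same lower endpoint $1-a$, with $a=\delta_s+2\epsilon\sqrt{2s(1-\delta_s)}-2s\epsilon^2$.

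The gap is Step 3, and it cannot be closed, because the inequality is false. It fails even under your extra assumptions (unit rows for $\hat{\textbf{D}}$ and $\sqrt{2s}\,\epsilon\le\sqrt{1-\delta_s}$).

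\textbf{Why the trace argument fails.} A fixed trace only forces the eigenvalue deviations from $1$ to sum to zero. So one large eigenvalue can be offset by several mildly small ones. For example, three unit rows with pairwise inner product $c>0$ have Gram eigenvalues $1+2c,1-c,1-c$, so the upper side binds.

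\textbf{Why the cross-term route fails.} The difference $\|\textbf{x}\hat{\textbf{D}}\|_2^2-\|\textbf{x}\textbf{D}\|_2^2=2\langle\textbf{x}\textbf{D},\textbf{x}\textbf{E}\rangle+\|\textbf{x}\textbf{E}\|_2^2$ is governed by $\|\textbf{x}\textbf{D}\|_2$. On the upper side this can be as large as $\sqrt{1+\delta_s}$, not $\sqrt{1-\delta_s}$. Freezing the diagonal does not help, because the off-diagonal entries are exactly what move the top eigenvalue.

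\textbf{Counterexample.} Take $k=d=s=3$, $\textbf{D}^{(j)}=(0.6\cos\tfrac{2\pi j}{3},\,0.6\sin\tfrac{2\pi j}{3},\,0.8)$ and $\hat{\textbf{D}}^{(j)}=(r\cos\tfrac{2\pi j}{3},\,r\sin\tfrac{2\pi j}{3},\,0.81)$ with $r=\sqrt{1-0.81^2}$.
All rows are unit and no sign is flipped. The row error is $\epsilon\approx0.0169$, so $\sqrt{6}\,\epsilon\le\sqrt{1-\delta_3}$ holds. The pairwise inner products rise from $0.46$ to $0.48415$. Hence $\delta_3=0.92$ and $\hat\delta_3=0.9683$, whereas the claimed bound is $0.92+2\epsilon\sqrt{0.48}-6\epsilon^2\approx0.9417$.
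Without a smallness condition the statement fails trivially. $\hat{\textbf{D}}=\textbf{D}$ is admissible for every $\epsilon$, and the right-hand side drops below $\delta_s$ once $\epsilon>\sqrt{2(1-\delta_s)/s}$.

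\textbf{Where the paper's proof breaks.} It breaks at the same place. It places the Gram eigenvalues of $\hat{\textbf{D}}^{(I)}$ in $[1-a,1+b]$, with $b=\delta_s+2s\epsilon^2+2\epsilon\sqrt{2s(1+\delta_s)}$ (your naive upper estimate). It then infers $\hat\delta_s\le\min(a,b)=a$ from the inclusion $[1-\hat\delta_s,1+\hat\delta_s]\subseteq[1-a,1+b]$. Definition~\ref{def:RIC} requires the opposite inclusion: the eigenvalues must lie inside $[1-\hat\delta_s,1+\hat\delta_s]$. That gives only $\hat\delta_s\le\max(a,b)=b$.

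\textbf{What does hold.} With no smallness assumption, $\hat\delta_s\le\delta_s+2\epsilon\sqrt{s(1+\delta_s)}+s\epsilon^2$, using your sharper bound $\|\textbf{x}\textbf{E}\|_2\le\sqrt{s}\,\epsilon$ from Step 1. In Lemma~\ref{coeff_error} the support size is $2s$, the row error is $\epsilon/\sqrt{s}$, $\delta_{2s}\le0.1$ and $\epsilon\le1/40$. There this gives $\hat\delta_{2s}\le0.176<0.2$, so the downstream use survives. It survives only without the artificial factor $\sqrt{2}$, which would push the bound to about $0.207$.
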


\begin{proof}
  For any row vector $\textbf{w}  \in R^{k}$ with at most $s$ nonzero elements,   denote its index set of the nonzero elements by $I= supp(\textbf{w} )$ with $|I|\leq s$. 
  We analyse  the quantity $\|\textbf{w} \hat{\textbf{D}}\|_2 = \left \|\textbf{w}_I\hat{\textbf{D}}^{(I)} \right\|_2$ through examining the singular values of  $\hat{\textbf{D}}^{(I)}$ and applying the definition of restricted isometry constant   in Definition \ref{def:RIC}:
\begin{align}
\nonumber
\sigma_{\max} \left (\hat{\textbf{D}}^{(I)} \right) \leq \; & \sigma_{\max} \left (\textbf{D}^{(I)} \right)  + \sigma_{\max} \left( \hat{\textbf{D}}^{(I)}-\textbf{D}^{(I)}\right)   \leq \sigma_{\max} \left (\textbf{D} \right)    +\left\|  \hat{\textbf{D}}^{(I)} -\textbf{D}^{(I)}   \right \|_2   \\
\leq \;& \sqrt{1+\delta_s}  + \left\|  \hat{\textbf{D}}^{(I)}-\textbf{D}^{(I)}  \right \|_F \leq \sqrt{1+\delta_s}  + \sqrt{s}\max_{i\in I} \left\|  \hat{\textbf{D}}^{(i)}-\textbf{D}^{(i)}  \right \|_2.
\end{align}
Applying Lemma \ref{dist_lemma}, the above singular value can be further bounded by
\begin{equation}
\sigma_{\max} \left (\hat{\textbf{D}}^{(I)} \right) \leq   \sqrt{1+\delta_s}  + \sqrt{2s} \textmd{dist}\left(\textbf{D}, \hat{\textbf{D}}\right)   \leq   \sqrt{1+\delta_s}  + \sqrt{2s}\epsilon.
\end{equation}
Similarly, we can derive that  
\begin{equation}
\sigma_{\min} \left (\hat{\textbf{D}}^{(I)} \right) \geq   \sigma_{\min} \left (\textbf{D}^{(I)} \right)  - \sigma_{\max} \left( \hat{\textbf{D}}^{(I)}-\textbf{D}^{(I)}\right)     \geq  \sqrt{1- \delta_s}  - \sqrt{2s} \epsilon .
\end{equation}

Define an $s\times s$ matrix as the Gram matrix of $\hat{\textbf{D}}^{(I)} $ and denote it by $\textbf{G}_I$.
All of its eigenvalues are within the interval
$\left[\left( \sqrt{1-\delta_s} - \sqrt{2s}\epsilon \right)^2, \left(\sqrt{1+\delta_s}  + \sqrt{2s}\epsilon \right)^2\right]$.  According to Definition \ref{def:RIC},  $\hat{\delta}_s$ is the smallest number such that the following holds
\begin{align}
\nonumber
\left[1-\hat{\delta}_s, 1+ \hat{\delta}_s\right] \subseteq \; &\left[\left( \sqrt{1-\delta_s} - \sqrt{2s}\epsilon \right)^2, \left(\sqrt{1+\delta_s}  + \sqrt{2s}\epsilon \right)^2\right]\\
= \; & \left[1- \left(\underbrace{\delta_s  - 2s\epsilon^2 + 2\epsilon\sqrt{2s(1-\delta_s)}}_a\right), 1+\underbrace{\delta_s  + 2s\epsilon^2 + 2\epsilon\sqrt{2s(1+\delta_s)}}_b\right].
\end{align}
Because, in order for $\left[1-\hat{\delta}_s, 1+ \hat{\delta}_s\right] \subseteq [1-a, 1+b]$ to hold, it should have $\hat{\delta}_s \leq \min(a,b)$, therefore it has
\begin{equation}
\nonumber
\hat{\delta}_s \leq  \delta_s  + 2\sqrt{2s\epsilon^2(1-\delta_s)}- 2s\epsilon^2 .
\end{equation}
\end{proof}

\subsubsection{Main Proof}

\begin{proof}
The dictionary matrix  $\textbf{D}$ is $\mu_d$-incoherent, which means,  $\textmd{ for }i,j \in  [k] $, we have $\left|\left\langle\bm D^{(i)}, \bm D^{(j)} \right\rangle\right| \leq \frac{\mu_d}{\sqrt{d}}$. For any row vector $\textbf{w}  \in R^{k}$ with at most $s$ nonzero elements, let $I$ denote its index set of the nonzero elements. We have 
\begin{align}
\nonumber
\|\textbf{w} \textbf{D}\|_2^2 =\; &  \left\|\sum_{i=1}^k w_i \textbf{D}^{(i)} \right\|_2^2 \leq \sum_{i=1}^k w_i^2 \left\|\textbf{D}^{(i)} \right\|_2^2 + \sum_{i\neq j}|w_iw_j |\left|\left\langle\bm D^{(i)}, \bm D^{(j)} \right\rangle\right| \\
\nonumber
\leq \; &\sum_{i=1}^k w_i^2  + \frac{\mu_d}{\sqrt{d}}\sum_{i\neq j}|w_iw_j | \leq \left\|\textbf{w}\right\|_2^2  + \frac{\mu_d}{\sqrt{d}}\|\textbf{w}\|_1^2 \\
= \;  &\left(1+\frac{\mu_d}{\sqrt{d}} \frac{\left\|\textbf{w}\right\|_1^2}{\left\|\textbf{w}\right\|_2^2}\right) \left\|\textbf{w}\right\|_2^2  = \left(1+\frac{\mu_d}{\sqrt{d}} \frac{\left\|\textbf{w}_I\right\|_1^2}{\left\|\textbf{w}_I\right\|_2^2}\right) \left\|\textbf{w}\right\|_2^2 \leq  \left(1+\frac{\mu_ds}{\sqrt{d}} \right) \left\|\textbf{w}\right\|_2^2,
\end{align}
where  the last inequality results from $\|\textbf{w}_I\|_1\leq \sqrt{s}\|\textbf{w}_I\|_2$. Similarly,  we have
\begin{equation}
\|\textbf{w} \textbf{D}\|_2^2 \geq \sum_{i=1}^k w_i^2 \left\|\textbf{D}^{(i)} \right\|_2^2 - \sum_{i\neq j}|w_iw_j |\left|\left\langle\bm D^{(i)}, \bm D^{(j)} \right\rangle\right|  \geq \left(1-\frac{\mu_ds}{\sqrt{d}} \right) \left\|\textbf{w}\right\|_2^2.
\end{equation}
So, the restricted isometry constant of $\textbf{D}$ satisfies $\delta_s \leq \frac{\mu_ds}{\sqrt{d}} $.  
The assumption $\frac{2\mu_ds}{\sqrt{d}}  \leq 0.1$  indicates $\delta_{2s}  \leq 0.1$.
Denoting the restricted isometry constant of $\hat{\textbf{D}}$ by $\hat{\delta}_{s}$, applying Lemma \ref{RIC_bound} with $2s$ and $ \max_{i=1}^k   \left \|\hat{\textbf{D}}^{(i)}- \textbf{D}^{(i)} \right\|_2 \leq \frac{\epsilon}{\sqrt{s}}$, and assuming $\epsilon \leq \frac{1}{40}$, it has
\begin{equation}
  \hat{\delta}_{2s}    \leq \delta_{2s}  +    4\epsilon\sqrt{ 1-\delta_{2s} } - 4 \epsilon^2  \leq  \delta_{2s}    +  4\epsilon  - 4\epsilon^2  \leq 0.1 + 0.1 -4 \epsilon^2  <0.2,
\end{equation}
which enables the special case of Theorem \ref{Candes}, where $C=8.5$ suffices.
For the observed vector $\textbf{y}$, we have  $\textbf{y} =  \textbf{x}\hat{\textbf{D}} +  \textbf{x}\left(\hat{\textbf{D}}-\textbf{D}\right)$. 
Following the definition of $\textbf{e}$ in Theorem \ref{Candes}, the fact that $\textbf{x}$ contains at most $s$ nonzero elements, and $ \max_{i=1}^k   \left \|\hat{\textbf{D}}^{(i)}- \textbf{D}^{(i)} \right\|_2 \leq \frac{\epsilon}{\sqrt{s}}$, it has 
\begin{equation}
\|\textbf{e}\|_2  = \left\| \textbf{x}\left(\hat{\textbf{D}}-\textbf{D}\right) \right\|_2 \leq  \sum_{i=1}^k |x_i|\left\| \hat{\textbf{D}}^{(i)}-\textbf{D}^{(i)} \right\|_2  \leq   \sqrt{s}M\epsilon.
\end{equation}
Applying Theorem \ref{Candes} with $C=8.5$,  the solution $\tilde{\textbf{x}} = \arg\min_{ \left \|\textbf{y} - \textbf{z}\hat{\textbf{D}} \right\|_2 \leq \sqrt{s}M\epsilon} \; \|\textbf{z}\|_1 $ satisfies
\begin{equation}
\left\|\tilde{\textbf{x}}- \textbf{x}\right\|_{\infty} \leq \left\|\tilde{\textbf{x}}- \textbf{x}\right\|_2 \leq 8.5\sqrt{s}M\epsilon.
\end{equation}

Subsequently, for each zero element of $\textbf{x}$, its Lasso estimation satisfies $|\tilde{x}_i| \leq 8.5\sqrt{s}M\epsilon$.
For each nonzero element of $ \textbf{x}$, its Lasso estimation  satisfies $ \left|\tilde{x}_i\right| \geq   \left|x_i\right| -  \left| \tilde{x}_i -x_i \right | \geq m - 8.5\sqrt{s}M\epsilon \geq 8.7\sqrt{s}M\epsilon$, 
under the assumption $\epsilon\sqrt{s} \leq \frac{m }{17.2M} $.
As a result, after thresholding by Eq. (\ref{eq:thresholding_function}) with  an engineered threshold of $8.6\sqrt{s}M\epsilon$,   the estimations of the zero elements in $ \textbf{x}$  become zero while those estimations  of the nonzero elements in $ \textbf{x}$ remain unchanged. Thus $supp(\hat{\textbf{x}}) = supp(\textbf{x}) $, and meanwhile,
\begin{equation}
\left\|\hat{\textbf{x}}- \textbf{x}\right\|_{\infty} \leq \left\|\tilde{\textbf{x}}- \textbf{x}\right\|_{\infty} \leq 8.5\sqrt{s}M\epsilon.
\end{equation}
To summarise, the above result requires two conditions $\epsilon \leq \frac{1}{40 }$ and $\epsilon\sqrt{s} \leq \frac{m}{17.2M} $ to hold simultaneously, and this is expressed as $\epsilon \leq \min\left(\frac{1}{40 }, \frac{m }{17.2M\sqrt{s } } \right)$.
\end{proof}

\subsection{Supporting Lemma \ref{lemma:quantity1} and Its Proof}
\label{app:support_main1}

\begin{lemma}[On $\left\|\left(\hat{\textbf{X}} _{A}^T \bm\Delta _{A}\right)^{(i)}_{\setminus i} \right\|_2 $ and $\left\|\left(\hat{\textbf{X}} _{B}^T \bm\Delta _{B}\right)^{(i)}_{\setminus i} \right\|_2$] 
\label{lemma:quantity1}
 Suppose Assumptions  \ref{SC}-\ref{LI} hold, also    $supp(\bm\Delta _{A})  \subseteq supp(\textbf{X}_A)$  and $supp(\bm\Delta _{B})  \subseteq supp(\textbf{X}_B)$. 
 There exist universal constants $\tilde{C}_A, \tilde{C}_B>0$ such that the following inequalities hold 
 \begin{align}
\left\|\left(\hat{\textbf{X}} _{A}^T \bm\Delta _{A}\right)^{(i)}_{\setminus i} \right\|_2  \leq \; & \left\|\bm\Delta _{A}\right\|_{\infty}  \sqrt{\frac{  6 n s_A^3}{k^2}}\left(\left\| ( \textbf{X}_A)_i \right\|_2 +  2\left\|  \bm\Delta_A \right\|_{\infty}\sqrt{\frac{ns_A^2}{k}} \right),  \\
\left\|\left(\hat{\textbf{X}} _{B}^T \bm\Delta _{B}\right)^{(i)}_{\setminus i} \right\|_2  \leq \; & \left\|\bm\Delta _{B}\right\|_{\infty}  \sqrt{\frac{  6 m s_B^3}{k^2}}\left(\left\| ( \textbf{X}_B)_i \right\|_2 +  2 \left\|  \bm\Delta_B \right\|_{\infty}\sqrt{\frac{ms_B^2}{k}} \right),
\end{align}
with   probabilities at least $1-ke^{-\frac{\tilde{C}_An}{ks_A}} -  ke^{-\frac{\tilde{C}_A\left\lfloor\frac{s_An}{2k} \right\rfloor}{ks_A}} -2ke^{-\frac{  ns_A}{16k}}$  and $1 -ke^{-\frac{\tilde{C}_Bm}{ks_B}} -  ke^{-\frac{\tilde{C}_B\left\lfloor\frac{s_Bm}{2k} \right\rfloor}{ks_B}} -2ke^{-\frac{  ms_B}{16k}}$, respectively.
\end{lemma}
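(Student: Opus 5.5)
We prove the bound for $\textbf{X}_A$; the $\textbf{X}_B$ case is identical with $(n,s_A)$ replaced by $(m,s_B)$. Write $\hat{\textbf{X}}_A=\textbf{X}_A-\bm\Delta_A$. Then
$$\left(\hat{\textbf{X}}_A^T\bm\Delta_A\right)^{(i)}_{\setminus i}=\left((\textbf{X}_A)_i^T\bm\Delta_A\right)_{\setminus i}-\left((\bm\Delta_A)_i^T\bm\Delta_A\right)_{\setminus i}.$$
Each term is a row vector whose $j$-th entry, $j\neq i$, is an inner product of column $i$ with column $j$. Only rows $r$ with $i\in supp(\textbf{X}_A^{(r)})$ contribute, since $supp(\bm\Delta_A)\subseteq supp(\textbf{X}_A)$. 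Let $I_i=\{r: i\in supp(\textbf{X}_A^{(r)})\}$. Both terms can then be written as $\textbf{v}^T\bm\Delta_A^{(I_i)}$ restricted off column $i$, with $\textbf{v}=(\textbf{X}_A)_i^{(I_i)}$ or $(\bm\Delta_A)_i^{(I_i)}$. This gives
$$\left\|\left(\hat{\textbf{X}}_A^T\bm\Delta_A\right)^{(i)}_{\setminus i}\right\|_2\le \left(\|(\textbf{X}_A)_i\|_2+\|(\bm\Delta_A)_i\|_2\right)\left\|\left(\bm\Delta_A^{(I_i)}\right)_{\setminus i}\right\|_2 .$$

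First, bound $\|(\bm\Delta_A)_i\|_2$. Column $i$ of $\bm\Delta_A$ has at most $|I_i|$ nonzeros. Lemma \ref{random_nonzero} with $\delta=1/2$ gives $|I_i|\le \tfrac{3}{2}\tfrac{s_An}{k}$ for all $i$, with failure probability $2ke^{-ns_A/16k}$. Hence $\|(\bm\Delta_A)_i\|_2\le\|\bm\Delta_A\|_\infty\sqrt{3s_An/2k}$. A cruder but sufficient alternative is $\|(\bm\Delta_A)_i\|_2\le\|\bm\Delta_A\|_2\le 2\|\bm\Delta_A\|_\infty\sqrt{ns_A^2/k}$ from Lemma \ref{random_spars}, which matches the stated form. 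I plan to use this Lemma \ref{random_spars} bound for the second term, because it produces exactly the factor $2\|\bm\Delta_A\|_\infty\sqrt{ns_A^2/k}$ and costs the probability $ke^{-\tilde C_An/ks_A}$.

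Second, and this is the main step, bound the submatrix norm $\|(\bm\Delta_A^{(I_i)})_{\setminus i}\|_2$. Conditioned on $i$ lying in the support, the rows indexed by $I_i$ have the remaining $s_A-1$ support entries uniform over $[k]\setminus\{i\}$. So $(\bm\Delta_A^{(I_i)})_{\setminus i}$ is again supported on a random sparse pattern of the type in Assumption \ref{SC}, with $|I_i|$ rows and sparsity at most $s_A$. Applying Lemma \ref{random_spars} to it gives
$$\|\cdot\|_2\le 2\|\bm\Delta_A\|_\infty\sqrt{|I_i|s_A^2/k},$$
with failure probability $ke^{-\tilde C_A|I_i|/ks_A}$. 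The lower bound $|I_i|\ge\lfloor s_An/2k\rfloor$, also from Lemma \ref{random_nonzero}, turns this failure probability into the stated $ke^{-\tilde C_A\lfloor s_An/2k\rfloor/ks_A}$. The upper bound $|I_i|\le \tfrac32 s_An/k$ then yields $2\sqrt{3/2}\,\|\bm\Delta_A\|_\infty\sqrt{ns_A^3/k^2}=\|\bm\Delta_A\|_\infty\sqrt{6ns_A^3/k^2}$, which is exactly the leading factor in the statement.

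The obstacle is justifying the conditional application of Lemma \ref{random_spars}. The set $I_i$ is random, and the lemma must hold uniformly over $\bm\Delta_A$, which may depend on the data. I would handle this by conditioning on the row supports, so the sub-pattern is an independent uniform sparse pattern. Since Lemma \ref{random_spars} is stated for every $\textbf{Y}$ with the support inclusion, dependence of $\bm\Delta_A$ on the data is harmless. A union bound over $i\in[k]$ accounts for the factors of $k$ in the probabilities. Multiplying the two bounds and collecting the failure probabilities gives the claim.
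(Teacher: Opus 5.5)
Your argument is correct and shares the paper's skeleton. Like the paper, you restrict to the rows $I_i$ supporting column $i$, factor out $\|(\textbf{X}_A)_i\|_2+\|(\bm\Delta_A)_i\|_2$ (bounding the second term via Lemma \ref{random_spars} on all of $\bm\Delta_A$), and use $|I_i|\in[s_An/2k,\,3s_An/2k]$ from Lemma \ref{random_nonzero}. Where you differ is the submatrix step.

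The paper drops the restriction $\setminus i$ and bounds by $\|\bm\Delta_A^{(supp((\textbf{X}_A)_i))}\|_2$. It then handles the random row set through Lemma \ref{SubSpecNorm}, a total-probability argument over $|I_i|$ built on the fixed-$I$ version of Lemma \ref{random_spars}. You instead keep column $i$ removed and condition on $I_i$ alone, so the off-$i$ block is a fresh uniform $(s_A-1)$-sparse pattern on $[k]\setminus\{i\}$.

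Your route is the more rigorous one. Conditioned on $I_i$, column $i$ of $\bm\Delta_A^{(I_i)}$ is fully populated. Since Lemma \ref{random_spars} must hold uniformly over data-dependent $\bm\Delta_A$, $\|\bm\Delta_A^{(I_i)}\|_2$ can reach $\|\bm\Delta_A\|_\infty\sqrt{|I_i|}\ge\|\bm\Delta_A\|_\infty\sqrt{s_An/2k}$. That exceeds $\|\bm\Delta_A\|_\infty\sqrt{6ns_A^3/k^2}$ once $k>12s_A^2$. So the fixed-$I$ bound does not transfer to the support-correlated set $I_i$ when column $i$ is kept, whereas your conditional application is legitimate.

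Two things to tidy:
\begin{itemize}
\item Apply Lemma \ref{random_spars} to the conditional pattern with parameters $(s_A-1,k-1)$, and note that $(s_A-1)^2/(k-1)\le s_A^2/k$ and $(k-1)e^{-Cl/((k-1)(s_A-1))}\le k e^{-Cl/(ks_A)}$. The case $s_A=1$ is trivial, since the block vanishes.
\item Drop the union bound over $i$. The statement is for a fixed $i$, and the factors of $k$ already come from Lemmas \ref{random_spars} and \ref{random_nonzero}. A union bound over $i$ of the conditional step would give $k^2$ rather than the stated $k$.
\end{itemize}
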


\subsubsection{A Supporting Lemma}

It is well known that the spectral norm of a submatrix  is always  bounded above by that of the original matrix, i.e., $\left\|\textbf{Y}^{(I)}\right\|_2 \leq \left\|\textbf{Y}\right\|_2$  for any matrix $\textbf{Y} \in \mathbb{R}^{n\times d} $ and any index set $I  \subseteq [n]$.
Lemma \ref{SubSpecNorm} considers the more specialised case, in which  a tighter bound depends on the submatrix size, i.e., $\left\|\textbf{Y}^{(I)}\right\|_2 \leq \mu(|I|) $.
It establishes a universal upper bound   on the spectral norms of a particular family of submatrices.
\begin{lemma}[Submatrix Spectral Norm] \label{SubSpecNorm}
Let $I_i $ denote the support of the $i$-th column of a random support matrix   generated according to  Definition \ref{def:sparse:gen}, i.e., $I_i =   \{\chi_{ji}| \chi_{ji}=1, j\in [n]\} $, $\forall i\in [k]$. 
Suppose that   $\textbf{Y} \in \mathbb{R}^{n\times d} $ satisfies $\left\|\textbf{Y}^{(I)}\right\|_2 \leq \mu(|I|) $,  with probability at least $1-\theta(|I|)$, for every index set $ I\subseteq [n]$.
Then, the spectral norms of all  submatrices $\left\{\textbf{Y}^{(I_i)}\right\}_{i=1}^k$  are uniformly upper bounded by
\begin{equation}
\left\|\textbf{Y}^{(I_i)}\right\|_2 \leq \max_{ l \in \left\{\left\lceil \frac{sn}{2k} \right\rceil,\ldots \left\lfloor\frac{3sn}{2k} \right\rfloor \right\}} \mu( l),
\end{equation}
 with probability at least $1-\max_{ l \in \left\{\left\lceil \frac{sn}{2k} \right\rceil,\ldots \left\lfloor\frac{3sn}{2k} \right\rfloor \right\}}\theta( l) -2ke^{-\frac{ ns}{16k}}$.
\end{lemma}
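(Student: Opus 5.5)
The plan is to condition on the sizes of the column supports $I_i$ and then apply the hypothesised bound with a union bound over sizes. First, use Lemma \ref{random_nonzero} with $\delta=\frac{1}{2}$. The column-$i$ support indicator sum $|I_i|=\sum_{j=1}^n\chi_{ji}$ then satisfies
\[
\frac{sn}{2k}\le |I_i|\le \frac{3sn}{2k}
\]
simultaneously for all $i\in[k]$, with probability at least $1-2ke^{-\frac{ns}{16k}}$. Since $|I_i|$ is an integer, this places it in $\{\lceil \frac{sn}{2k}\rceil,\ldots,\lfloor\frac{3sn}{2k}\rfloor\}$. Call this event $\mathcal{E}$.

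Second, on $\mathcal{E}$ each $|I_i|=l$ for some admissible $l$. Because $\textbf{Y}$ is fixed (or independent of the support draw), the hypothesis gives $\|\textbf{Y}^{(I)}\|_2\le\mu(l)\le\max_{l'}\mu(l')$ for every index set $I$ with $|I|=l$, failing with probability at most $\theta(l)\le\max_{l'}\theta(l')$.

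The intended reading is that the hypothesis holds uniformly over index sets of a given size. In that case one event of probability $1-\theta(l)$ covers all sets of size $l$, and so all $I_i$ that happen to have that size. Combining with $\mathcal{E}$ via $P(A\cap B)\ge 1-P(A^c)-P(B^c)$ then yields the stated probability $1-\max_l\theta(l)-2ke^{-\frac{ns}{16k}}$.

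The main obstacle is precisely this step. A naive union bound over the $k$ columns, or over all admissible sizes $l$, would produce a factor $k$, or the number of sizes, in front of $\theta$, which the statement does not have. I would handle this by stating explicitly that the hypothesis is understood as a uniform-in-$I$ bound for each cardinality. Alternatively, I would note that in the intended application (Lemma \ref{SSN_YuYv}) $\theta(l)$ is monotone in $l$ and the dominant event is the one at the worst size. Either way, the proof reduces to intersecting $\mathcal{E}$ with a single high-probability event and bounding the failure probability by the maximum over admissible sizes, as in the statement.
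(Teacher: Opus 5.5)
Your first step, the event $\mathcal{E}$ from Lemma~\ref{random_nonzero} with $\delta=\frac12$, matches the paper. The combination step you flag, however, is a genuine gap, and neither of your remedies closes it.

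Read the hypothesis as uniform over all index sets of a fixed cardinality. The columns $I_1,\ldots,I_k$ will still generally have several different sizes. So on $\mathcal{E}$ you need the intersection of the uniform events $E_l$ over all admissible $l$, and its failure probability is only bounded by $\sum_l\theta(l)$. Monotonicity of $\theta$ does not change this. A union of events is controlled by its largest probability only when the events are nested, and the $E_l$ (each with its own threshold $\mu(l)$) are not.

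Your uniform route can be completed with one more observation: deleting rows cannot increase the spectral norm. Let $L\le n$ be the largest admissible size. Every $I_i$ with $|I_i|\le L$ lies in some set of size $L$. The single event $E_L$ then gives $\|\textbf{Y}^{(I_i)}\|_2\le\mu(L)\le\max_l\mu(l)$ for all $i$ simultaneously, with failure probability $\theta(L)+2ke^{-\frac{ns}{16k}}$, and no independence is needed. But this uses a stronger hypothesis than the per-set one in the statement.

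The paper needs no union over sizes. It treats one column $I_i$ at a time and applies the law of total probability over the disjoint events $\{|I_i|=l\}$:
\begin{align*}
P\big(\|\textbf{Y}^{(I_i)}\|_2>\max_{l}\mu(l)\big)
&\le \sum_{l=\lceil\frac{sn}{2k}\rceil}^{\lfloor\frac{3sn}{2k}\rfloor} P\big(\|\textbf{Y}^{(I_i)}\|_2>\mu(l)\mid |I_i|=l\big)\,P(|I_i|=l)+P\big(|I_i|\notin[\tfrac{sn}{2k},\tfrac{3sn}{2k}]\big)\\
&\le \max_{l}\theta(l)+2ke^{-\frac{ns}{16k}},
\end{align*}
since the weights $P(|I_i|=l)$ sum to at most one. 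The point is that $\theta(l)$ is \emph{averaged} against the distribution of $|I_i|$ rather than summed; this is the idea missing from your proposal.

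The conditional bound $P(\cdot\mid|I_i|=l)\le\theta(l)$ is exactly where your remark that $\textbf{Y}$ must be independent of the support draw is needed. The paper uses it implicitly by writing unconditional probabilities.

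Note also that this argument controls one fixed column $i$. The ``uniformly'' in the statement is therefore effectively per column. Your insistence on simultaneous control of all $k$ columns at this probability is what forced you into the union bound.
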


\begin{proof}
We start from the assumption that the spectral norm of the submatrix $\textbf{Y}^{(I_i)}$ is upper bounded by a quantity depending on the size of $I_i$, i.e.,
\begin{equation}
\label{original_p_res}
p\left(\left\|\textbf{Y}^{(I_i)}\right\|_2 > \mu(|  I_i |)\right)  \leq  \theta(|  I_i |).
\end{equation}
Applying Lemma  \ref{random_nonzero} with $\delta =\frac{1}{2}$,   it has 
\begin{equation}
\label{supp_size_p}
\frac{sn}{2k} \leq  |I_i| = \sum_{i=1}^n \chi_{ij} \leq  \frac{3sn}{2k},
\end{equation}
with probability at least $1-2ke^{-\frac{  ns}{16k}}$. 
Expanding $p\left(\left\|\textbf{Y}^{(I_i)}\right\|_2  > t  \right)$ using Eq. (\ref{supp_size_p}), it has
\begin{align}
\nonumber
 p\left(\left\|\textbf{Y}^{(I_i)}\right\|_2  > t  \right)   = \; &\sum_{l =1}^n  p\left(\left\|\textbf{Y}^{(I_i))}\right\|_2 > t \right)  p\left(|  I_i | = l\right) \\
\nonumber
\leq \; & \sum_{l =\left\lceil \frac{sn}{2k} \right\rceil}^{\left\lfloor\frac{3sn}{2k} \right\rfloor}   p\left(\left\|\textbf{Y}^{(I_i))}\right\|_2 > t \right)   p\left(|  I_i | = l \right)   +   \sum_{l=1}^{ l \leq \left\lceil \frac{sn}{2k} \right\rceil-1}  p  \left(|  I_i | = l \right)  + \sum_{l=\left\lfloor\frac{3sn}{2k} \right\rfloor +1}^{ n  }   p \left(|  I_i | = l \right) \\
 \label{result_p1}
\leq  \;  & \sum_{l =\left\lceil \frac{sn}{2k} \right\rceil}^{\left\lfloor\frac{3sn}{2k} \right\rfloor}   p\left(\left\|\textbf{Y}^{(I_i))}\right\|_2 > t \right)   p\left(|  I_i | =  l \right)    +2ke^{-\frac{  ns }{16k}}.
\end{align}
Letting $t=\max_{ l \in \left\{\left\lceil \frac{sn}{2k} \right\rceil,\ldots \left\lfloor\frac{3sn}{2k} \right\rfloor \right\}} \mu( l)$,   Eq.  (\ref{result_p1}) results in 
\begin{align}
\nonumber
  p\left(\left\|\textbf{Y}^{(I_i)}\right\|_2  > \max_{ l \in \left\{\left\lceil \frac{sn}{2k} \right\rceil,\ldots \left\lfloor\frac{3sn}{2k} \right\rfloor \right\}} \mu( l) \right)   \leq \;&  \sum_{l =\left\lceil \frac{sn}{2k} \right\rceil}^{\left\lfloor\frac{3sn}{2k} \right\rfloor}  p\left(\left\|\textbf{Y}^{(I_i)}\right\|_2  > \mu(l) \right) p\left(|  I_i | = l\right)  +2ke^{-\frac{  ns}{16k}}   \\
  \nonumber
 \leq\;  &  \sum_{l =\left\lceil \frac{sn}{2k} \right\rceil}^{\left\lfloor\frac{3sn}{2k} \right\rfloor}  \theta(l) p\left(|  I_i | = l\right)  +2ke^{-\frac{  ns}{16k}}  \\ 
 \leq\; & \max_{ l \in \left\{\left\lceil \frac{sn}{2k} \right\rceil,\ldots \left\lfloor\frac{3sn}{2k} \right\rfloor \right\}}\theta( l) +2ke^{-\frac{  ns}{16k}}.
\end{align}
This directly leads to the final result, and completes the proof.

\end{proof}

\subsubsection{Main Proof}

\begin{proof}
Define an $n\times n$ diagonal matrix $\textbf{D}_i$, where its $j$-th diagonal element is equal to 1 if   $(X_A)_{ji} \neq 0 $ while   0 otherwise. 
Applying Lemma \ref{random_spars} under the assumption of $supp(\bm\Delta_{A})  \subseteq supp(\textbf{X}_A)$,  the following holds 
\begin{align}
\nonumber
\left\|\left(\hat{\textbf{X}}_{A}^T \bm\Delta_{A}\right)^{(i)}_{\setminus i} \right\|_2  =  \; & \left\|\left(\hat{\textbf{X}}_{A}^T \textbf{D}_i\bm\Delta_{A}\right)^{(i)}_{\setminus i} \right\|_2 \\
\nonumber
\leq\; & \left\| \left(\hat{\textbf{X}}_{A}\right)_i \right\|_2\left\| \left(\textbf{D}_i \bm\Delta_{A}\right)_{\setminus i}  \right\|_2 \leq   \left\| (\hat{\textbf{X}}_{A})_i \right\|_2 \left\|\bm\Delta_{A}^{(supp((\textbf{X}_A)_i))}\right\|_2 \\
\nonumber
\leq \;  & \left(\left\| ( \textbf{X}_A)_i \right\|_2 +  \left\| ( \bm\Delta_{A})_i \right\|_2 \right) \left\|\bm\Delta_{A}^{(supp((\textbf{X}_A)_i))}\right\|_2 \\
\label{eq:result1}
\leq \;  & \left(\left\| ( \textbf{X}_A)_i \right\|_2 +  2\left\|  \bm\Delta_{A} \right\|_{\infty}\sqrt{\frac{ns_A^2}{k}} \right) \left\|\bm\Delta_{A}^{(supp((\textbf{X}_A)_i))}\right\|_2,
\end{align}
with probability at least $1-ke^{-\frac{Cn}{ks_A}}$.
Given an  index set $I\subseteq [n]$, applying Lemma \ref{random_spars},   the following holds 
\begin{equation}
\left\|\bm\Delta_{A}^{(I)}\right\|_2 \leq 2\left\|\bm\Delta_{A}^{(I)}\right\|_{\infty} \sqrt{\frac{|I|s_A^2}{k}} \leq  2\left\|\bm\Delta_{A}\right\|_{\infty} \sqrt{\frac{| I|s_A^2}{k}},
\end{equation}
with probability at least $1-ke^{\frac{C| I|}{ks_A}}$, which results in the following two functions to be used by Lemma  \ref{SubSpecNorm}:
\begin{align}
\mu(|I|)  =\; & 2\left\|\bm\Delta_{A}\right\|_{\infty} \sqrt{\frac{|I|s_A^2}{k}}, \\
\theta(|I|)  =\; & ke^{-\frac{C| I|}{ks_A}}.
\end{align}
Applying   Lemma  \ref{SubSpecNorm} with $I  = supp((\textbf{X}_A)_i)$,   it has
\begin{equation}
p\left(\left\|\bm\Delta_{A}^{(supp((\textbf{X}_A)_i))}\right\|_2  \leq \max_{ l \in \left\{\left\lceil \frac{sn}{2k} \right\rceil,\ldots \left\lfloor\frac{3sn}{2k} \right\rfloor \right\}} \mu( l) = 2 \left\|\bm\Delta_{A}\right\|_{\infty} \sqrt{\frac{ \left\lfloor\frac{3s_An}{2k} \right\rfloor s_A^2}{k}} \right) \geq 1-ke^{-\frac{C\left\lfloor\frac{s_An}{2k} \right\rfloor}{ks_A}} -2ke^{-\frac{  ns_A}{16k}}.
\end{equation}
The above holds when further relaxing the left side to 
\begin{equation}
p\left(\left\|\bm\Delta_{A}^{(supp((\textbf{X}_A)_i))}\right\|_2  \leq   2\left\|\bm\Delta_{A}\right\|_{\infty} \sqrt{\frac{  \frac{3s_An}{2k} \times   s_A^2}{k}} \right) =  p\left(\left\|\bm\Delta_{A}^{(supp((\textbf{X}_A)_i))}\right\|_2  \leq  \left\|\bm\Delta_{A}\right\|_{\infty} \sqrt{\frac{  6 n s_A^3}{k^2}} \right).
\end{equation}
Combining this with Eq. (\ref{eq:result1}), with probability at least $1 -ke^{-\frac{Cn}{ks_A}} - ke^{-\frac{C\left\lfloor\frac{s_An}{2k} \right\rfloor}{ks_A}} -2ke^{-\frac{  ns_A}{16k}}$, the following holds
\begin{equation}
\left\|\left(\hat{\textbf{X}}_{A}^T \bm\Delta_{A}\right)^{(i)}_{\setminus i} \right\|_2  \leq \left\|\bm\Delta_{A}\right\|_{\infty}  \sqrt{\frac{  6 n s_A^3}{k^2}}
\left(\left\| ( \textbf{X}_A)_i \right\|_2 +  2\left\|  \bm\Delta_{A} \right\|_{\infty}\sqrt{\frac{ns_A^2}{k}} \right).
\end{equation}
Replace the constant notation $C$ with $\tilde{C}_A$ and the proof is completed.
\end{proof}

\subsection{Supporting Lemma \ref{lemma:quantity2} and Its Proof}
\label{app:support_main2}

\begin{lemma}[On $\left\|\left(\hat{\textbf{X}}_{A_{t}}\right)_i^T\left(\hat{\textbf{X}}_{A_{t}}\right)_{\setminus i}\right\|_2$ and $\left\|\left(\hat{\textbf{X}}_{B_{t}}\right)_i^T\left(\hat{\textbf{X}}_{B_{t}}\right)_{\setminus i}\right\|_2$] \label{lemma:quantity2}
Suppose Assumptions  \ref{SC}-\ref{LI} hold, also     $supp(\bm\Delta_{A_{t}})  \subseteq supp(\textbf{X}_A)$  and $supp(\bm\Delta_{B_{t}})  \subseteq supp(\textbf{X}_B)$.  
For any $0<\Delta<1$ and $0<\Delta_X <  \frac{l_s(1-\Delta)\min(s_A, s_B)}{u_s(1+\Delta)\max(s_A, s_B)} $,  the following inequalities hold 
\begin{align}
\; & \left\| \left(\hat{\textbf{X}}_{A_{t}}\right)_i^T\left(\hat{\textbf{X}}_{A_{t}}\right)_{\setminus i}\right\|_2  
\leq   \left(\left\|\bm\Delta_{A}\right\|_{\infty} \sqrt{\frac{  6 n s_A^3}{k^2}} +  \sigma_X \sqrt{\frac{1.5s_A}{k}}   \right) 
\left(\left\| ( \textbf{X}_A)_i \right\|_2 +  2\left\|  \bm\Delta_{A} \right\|_{\infty}\sqrt{\frac{ns_A^2}{k}} \right),   \\
\; & \left\| \left(\hat{\textbf{X}}_{B_{t}}\right)_i^T\left(\hat{\textbf{X}}_{B_{t}}\right)_{\setminus i}\right\|_2  \leq \left(\left\|\bm\Delta_{B}\right\|_{\infty} \sqrt{\frac{  6 m s_B^3}{k^2}} +  \sigma_X \sqrt{\frac{1.5s_B}{k}}   \right) 
\left(\left\| ( \textbf{X}_B)_i \right\|_2 +  2\left\|  \bm\Delta_{B} \right\|_{\infty}\sqrt{\frac{ms_B^2}{k}} \right),  
\end{align}
 with probabilities at least $p_A$ and $p_B$, respectively, where
\begin{align}
p_A= \; & 1-   ke^{-\frac{C_B\Delta^2  \sigma_B^2m}{k  {M_{\max}^{(B)}}^2}} - ke^{-\frac{ \hat{C}_Al_s^2 \sigma_A^2 (1- \Delta ) \Delta_X^2 ns_B   }{u_s^2 {M^{(A)}_{\max}}^2   (1+ \Delta ) k^2 }} -2ke^{-\frac{\tilde{C}_An}{ks_A}}-  ke^{-\frac{\tilde{C}_A \left\lfloor\frac{s_An}{2k} \right\rfloor}{ks_A}} -2ke^{-\frac{  ns_A}{16k}},\\
p_B = \; &  1-   ke^{-\frac{C_A\Delta^2  \sigma_A^2n}{k  {M_{\max}^{(A)}}^2}} - ke^{-\frac{ \hat{C}_Bl_s^2 \sigma_B^2 (1- \Delta ) \Delta_X^2 ms_A   }{u_s^2 {M^{(B)}_{\max}}^2   (1+ \Delta ) k^2 }} -2ke^{-\frac{\tilde{C}_Bm}{ks_B}}-  ke^{-\frac{\tilde{C}_B \left\lfloor\frac{s_Bm}{2k} \right\rfloor}{ks_B}} -2ke^{-\frac{  ms_B}{16k}}.
\end{align}

\end{lemma}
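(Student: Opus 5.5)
We write $\hat{\textbf{X}}_{A_t} = \textbf{X}_A - \bm\Delta_{A_t}$ and bound the quantity by a product of a column norm and a submatrix spectral norm, following the template of Lemma \ref{lemma:quantity1}. As there, let $\textbf{D}_i$ be the diagonal selector of $supp((\textbf{X}_A)_i)$. Since $supp(\bm\Delta_{A_t})\subseteq supp(\textbf{X}_A)$, the column $(\hat{\textbf{X}}_{A_t})_i$ vanishes outside $supp((\textbf{X}_A)_i)$, so
\begin{equation*}
\left\|(\hat{\textbf{X}}_{A_t})_i^T(\hat{\textbf{X}}_{A_t})_{\setminus i}\right\|_2 \leq \left\|(\hat{\textbf{X}}_{A_t})_i\right\|_2\left\|\hat{\textbf{X}}_{A_t}^{(supp((\textbf{X}_A)_i))}\right\|_2 .
\end{equation*}
The first factor is bounded exactly as in Eq. (\ref{eq:result1}): by the triangle inequality and Lemma \ref{random_spars}, $\|(\hat{\textbf{X}}_{A_t})_i\|_2 \leq \|(\textbf{X}_A)_i\|_2 + 2\|\bm\Delta_A\|_\infty\sqrt{ns_A^2/k}$, with probability at least $1-ke^{-\tilde{C}_An/(ks_A)}$.

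For the second factor we use $\|\hat{\textbf{X}}_{A_t}^{(I)}\|_2\leq \|\bm\Delta_{A}^{(I)}\|_2 + \|\textbf{X}_A^{(I)}\|_2$ with $I=supp((\textbf{X}_A)_i)$. The error part is handled verbatim as in Lemma \ref{lemma:quantity1}. Lemma \ref{random_spars} gives $\mu(|I|)=2\|\bm\Delta_A\|_\infty\sqrt{|I|s_A^2/k}$, and Lemma \ref{SubSpecNorm} with $|I|\leq 3s_An/(2k)$ then yields $\|\bm\Delta_A\|_\infty\sqrt{6ns_A^3/k^2}$, at the cost of the terms $ke^{-\tilde{C}_An/(ks_A)}$, $ke^{-\tilde{C}_A\lfloor s_An/(2k)\rfloor/(ks_A)}$ and $2ke^{-ns_A/(16k)}$.

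For $\|\textbf{X}_A^{(I)}\|_2$ we want $\sigma_X\sqrt{1.5s_A/k}$. Heuristically, Lemma \ref{singularXAXB} gives $\|\textbf{X}_A\|_2\leq\sigma_X$ for $n$ rows, and a row submatrix with $|I|\approx 1.5 s_An/k$ rows should have spectral norm scaled by $\sqrt{|I|/n}$. To make this rigorous, rerun the argument of Lemma \ref{singularXAXB} on the $|I|$ rows. Apply Theorem \ref{SpecRand} with $|I|$ in place of $n$, using the second-moment bound (\ref{ieq_max}) and the row-norm bound $\sqrt{s_A}M_A$, with $t_X$ chosen as before but scaled to $|I|$. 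This gives
\begin{equation*}
\|\textbf{X}_A^{(I)}\|_2\leq \eta_u\sqrt{(1+\Delta_X)|I|\max(s_A,s_B)/k}.
\end{equation*}
With $|I|\leq 1.5s_An/k$ this is at most $\sigma_X\sqrt{1.5s_A/k}$, after inserting $\eta_u$ from (\ref{etalu}) and simplifying as in (\ref{sigXAbound_max}). Because $|I|\approx s_An/k$ replaces $n$ in the tail, the exponent loses a factor $s_A/k$, which turns $ks_A$ into $k^2$ and produces the term $ke^{-\hat{C}_A(\cdots)ns_B/(\cdots k^2)}$ in $p_A$. The term $ke^{-C_B\Delta^2\sigma_B^2m/(k{M^{(B)}_{\max}}^2)}$ comes from Corollary \ref{RowLengthFhat}, which is needed for $c_u$ and $M_A$.

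The main obstacle is this last step. The rows indexed by $I$ are conditioned to have entry $i$ nonzero, so they are not i.i.d. draws from the unconditional second moment $\bm\Sigma_A$. I would handle this in one of two ways. The first is to bound the conditional second moment: its $i$-th diagonal entry is $c_i^2\sigma_A^2$ and the remaining entries carry weights $(s_A-1)/(k-1)\leq s_A/k$. Its norm is therefore $\max(c_u^2\sigma_A^2,\, c_u^2\sigma_A^2 s_A/k)$, and the spectral-norm bound picks up at most a harmless extra constant. The second is to argue via Lemma \ref{SubSpecNorm}, conditioning on $|I|=l$ and taking the worst $l$ in $[\lceil s_An/(2k)\rceil,\lfloor 3s_An/(2k)\rfloor]$. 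I would try the second route first, since it reuses existing machinery and matches the stated probability terms. A union bound over all the events then gives $p_A$, and the $\textbf{X}_B$ case is symmetric.
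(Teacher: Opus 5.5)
Your decomposition and your treatment of $\|\textbf{X}_A^{(I)}\|_2$ follow the paper's proof, which also rescales Lemma \ref{singularXAXB} to $|I|$ rows and invokes Lemma \ref{SubSpecNorm}. There is a genuine gap, however: the step $\|\textbf{X}_A^{(I)}\|_2\le\sigma_X\sqrt{1.5s_A/k}$ for $I=supp((\textbf{X}_A)_i)$ is false in the sparse regime, and the conditioning issue you flagged is exactly why.

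\begin{itemize}
\item Column $i$ of $\textbf{X}_A$ lives entirely on $I$, so $\|\textbf{X}_A^{(I)}\|_2\ge\|(\textbf{X}_A)_i\|_2$. By Lemma \ref{columnXAXB} this is of order $l_s\sigma_A\sigma_B\sqrt{(1-\Delta)s_As_B}/k$.
\item The target is $\sigma_X\sqrt{1.5s_A/k}=u_s\sigma_A\sigma_B\sqrt{1.5(1+\Delta)(1+\Delta_X)}\,\max(s_A,s_B)\sqrt{s_A}/k^{3/2}$. For $s_A=s_B=s$ it is smaller than the column norm by a factor of about $\frac{u_s}{l_s}\sqrt{1.5s/k}$, up to $\Delta$-factors. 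That is roughly $0.3$ for the paper's $s=3$, $k=50$, and the sparsity condition of Theorem \ref{main_res} forces $s\ll k$.
\item Your first repair has an arithmetic slip. $\max(c_u^2\sigma_A^2,\,c_u^2\sigma_A^2s_A/k)=c_u^2\sigma_A^2$, which is a factor $k/s_A$ larger than the bound (\ref{ieq_max}) from which $\sigma_X$ was built, not a harmless constant.
\item Your second repair does not address the problem. Conditioning on $|I|=l$ still leaves every row of $I$ with entry $i$ nonzero. Lemma \ref{SubSpecNorm} has a hypothesis about deterministic index sets, so it cannot be applied to the data-dependent set $I$ while ignoring that conditional law.
\end{itemize}

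The missing idea is to keep the exclusion of column $i$ rather than passing to the full row submatrix. The exact first step is $\|(\hat{\textbf{X}}_{A_t})_i^T(\hat{\textbf{X}}_{A_t})_{\setminus i}\|_2\le\|(\hat{\textbf{X}}_{A_t})_i\|_2\,\|(\hat{\textbf{X}}_{A_t})^{(I)}_{\setminus i}\|_2$, where the second matrix keeps rows $I$ and columns $[k]\setminus\{i\}$.

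Now condition on $I=J$ for a fixed set $J$, treating the scalings $c_j$ as fixed, as the proof of Lemma \ref{singularXAXB} does. This is a product event over the independent rows, so the rows of $(\textbf{X}_A)^{(J)}_{\setminus i}$ are i.i.d. Each has $s_A-1$ nonzeros placed uniformly among the other $k-1$ columns.
\begin{itemize}
\item Their second moment is diagonal with entries $c_j^2\sigma_A^2\frac{s_A-1}{k-1}\le c_u^2\sigma_A^2\frac{s_A}{k}$, the same bound as (\ref{ieq_max}).
\item Their norms are at most $\sqrt{s_A}M_A$.
\item Theorem \ref{SpecRand} on $|J|$ rows therefore legitimately gives your $\eta_u\sqrt{(1+\Delta_X)|J|\max(s_A,s_B)/k}$.
\item Summing conditional failure probabilities over $J$, with Lemma \ref{random_nonzero} confining $|J|$ to $[s_An/(2k),\,3s_An/(2k)]$, yields $\sigma_X\sqrt{1.5s_A/k}$ and the $k^2$-exponent term of $p_A$ as you computed.
\end{itemize}

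The error part you import from the proof of Lemma \ref{lemma:quantity1} has the same defect. $\|\bm\Delta_A^{(I)}\|_2$ can be as large as $\|\bm\Delta_A\|_\infty\sqrt{|I|}$, which exceeds $\|\bm\Delta_A\|_\infty\sqrt{6ns_A^3/k^2}$ once $k>12s_A^2$. It is repaired the same way: apply Lemma \ref{random_spars} conditionally to the rows $J$ and the columns other than $i$, with sparsity $s_A-1$ over $k-1$ columns.
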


\begin{proof}
Since $ \hat{\textbf{X}}_{A} =  \textbf{X}_A- \bm\Delta_{A} $, we have 
\begin{equation} \label{eq:quantity2_res}
\left\| \left(\hat{\textbf{X}}_{A}\right)_i^T\left(\hat{\textbf{X}}_{A}\right)_{\setminus i}\right\|_2 \leq  \left\| \left(\hat{\textbf{X}}_{A}\right)_i^T \left( \textbf{X}_A\right)_{\setminus i} \right\|_2  +   \left\| \left(\hat{\textbf{X}}_{A}\right)_i^T \left( \bm\Delta_{A}\right)_{\setminus i} \right\|_2 .
\end{equation}
Lemma \ref{lemma:quantity1} provides an upper bound for the second term, now  we focus on analysing the first term.
Define $\textbf{D}_i$ the same way as in the proof for Lemma \ref{lemma:quantity1} and follow a similar derivation.  
By applying Lemma \ref{random_spars} under the assumption of  $supp(\bm\Delta_{A})  \subseteq supp(\textbf{X}_A)$,  the following holds 
\begin{align}
\label{eq_result2}
\nonumber
 \left\| \left(\hat{\textbf{X}}_{A}\right)_i^T \left( \textbf{X}_A\right)_{\setminus i} \right\|_2 \leq \; & \left\| \left(\hat{\textbf{X}}_{A}\right)_i \right\|_2\left\| \left(\textbf{D}_i \textbf{X}_A\right)_{\setminus i}  \right\|_2  \leq   \left\| \left(\hat{\textbf{X}}_{A}\right)_i \right\|_2 \left\|\textbf{X}_A^{(supp((\textbf{X}_A)_i))}\right\|_2  \\
 \leq \; & \left(\left\| ( \textbf{X}_A)_i \right\|_2 +  2\left\|  \bm\Delta_{A} \right\|_{\infty}\sqrt{\frac{ns_A^2}{k}} \right)\left\|\textbf{X}_A^{(supp((\textbf{X}_A)_i))}\right\|_2,
\end{align}
with probability at least $1-ke^{-\frac{Cn}{ks_A}}$.

For an arbitrary submatrix $\tilde{\textbf{X}}_A^{(I)}$ containing rows of $\textbf{X}_A$ but with its scaling factor changed to $\frac{1}{\sqrt{|I|m}}$ from $\frac{1}{\sqrt{nm}}$, its maximum singular value  is bounded by   Lemma \ref{singularXAXB}, i.e.,   
\begin{equation}
\label{sig_bound_scale_sub}
 \sigma_{\max}\left(\tilde{\textbf{X}}_A^{(I)}\right) \leq     \frac{u_s\sigma_A\sigma_B\sqrt{(1+\Delta)(1+\Delta_X)} \max(s_A, s_B)}{k},
\end{equation}
with probability at least  $1-   ke^{-\frac{C_B\Delta^2  \sigma_B^2m}{k  {M_{\max}^{(B)}}^2}} - ke^{-\frac{ \hat{C}_Al_s^2 \sigma_A^2 (1- \Delta ) \Delta_X^2 |I|s_B   }{u_s^2 {M^{(A)}_{\max}}^2   (1+ \Delta ) ks_A }} $. 
Going back to the submatrix $\textbf{X}_A^{(I)}$ that we are after, containing rows of $\textbf{X}_A$ with the same scaling factor $\frac{1}{\sqrt{nm}}$, its maximum singular value  is bounded by  the upper bound in Eq. (\ref{sig_bound_scale_sub}) after multiplying the scaling factor of $\sqrt{\frac{|I|}{n}}$. 
This implies the following two functions to be used by Lemma  \ref{SubSpecNorm} for the spectral number of $\textbf{X}_A^{(I)}$ :
\begin{align}
\mu(|I|)  = \;&  u_s\sigma_A\sigma_B\sqrt{(1+\Delta)(1+\Delta_X)} \max(s_A, s_B) \sqrt{\frac{|I|}{nk^2}}, \\
\theta(|I|) =\;& ke^{-\frac{C_B\Delta^2  \sigma_B^2m}{k  {M_{\max}^{(B)}}^2}} + ke^{-\frac{ \hat{C}_Al_s^2 \sigma_A^2 (1- \Delta ) \Delta_X^2 |I|s_B   }{u_s^2 {M^{(A)}_{\max}}^2   (1+ \Delta ) ks_A }} .
\end{align}
Applying   Lemma  \ref{SubSpecNorm},   we have
\begin{align}
\nonumber
& p\left( \left\|\textbf{X}_A^{(supp((\textbf{X}_A)_i))}\right\|_2  \leq    u_s\sigma_A\sigma_B\sqrt{(1+\Delta)(1+\Delta_X)} \max(s_A, s_B) \sqrt{\frac{1.5s_A}{k^3}}  \right) \\
\geq\;&  1- ke^{-\frac{C_B\Delta^2  \sigma_B^2m}{k  {M_{\max}^{(B)}}^2}} - ke^{-\frac{ 0.5\hat{C}_Al_s^2 \sigma_A^2 (1- \Delta ) \Delta_X^2 ns_B   }{u_s^2 {M^{(A)}_{\max}}^2   (1+ \Delta )  k^2 }} -2ke^{-\frac{  ns_A}{16k}}.
\end{align}

Combining it with the result in Eq. (\ref{eq_result2}), it has
\begin{equation}
 \left\| \left(\hat{\textbf{X}}_{A}\right)_i^T \left( \textbf{X}_{A}\right)_{\setminus i} \right\|_2 \leq   \sigma_X \sqrt{\frac{1.5s_A}{k}} \left(\left\| ( \textbf{X}_A)_i \right\|_2 +  2\left\|  \bm\Delta_{A} \right\|_{\infty}\sqrt{\frac{ns_A^2}{k}} \right),
\end{equation}
with probability at least
\begin{equation}
p=   1- ke^{-\frac{C_B\Delta^2  \sigma_B^2m}{k  {M_{\max}^{(B)}}^2}} - ke^{-\frac{ 0.5\hat{C}_Al_s^2 \sigma_A^2 (1- \Delta ) \Delta_X^2 n s_B   }{u_s^2 {M^{(A)}_{\max}}^2   (1+ \Delta ) k^2  }} -2ke^{-\frac{  ns_A}{16k}} -ke^{-\frac{\tilde{C}_An}{ks_A}},
\end{equation}
where the constant notation $C$ is replaced by $\tilde{C}_A$.
Finally, adding result from Lemma \ref{lemma:quantity1}, the quantity in Eq. (\ref{eq:quantity2_res})  is bounded by 
\begin{equation}
 \left\| \left(\hat{\textbf{X}}_{A}\right)_i^T\left(\hat{\textbf{X}}_{A}\right)_{\setminus i}\right\|_2  \leq \left(\left\|\bm\Delta_{A}\right\|_{\infty} \sqrt{\frac{  6 n s_A^3}{k^2}} +  \sigma_X\sqrt{\frac{1.5s_A}{k}}   \right) 
\left(\left\| ( \textbf{X}_A)_i \right\|_2 +  2\left\|  \bm\Delta_{A} \right\|_{\infty}\sqrt{\frac{ns_A^2}{k}} \right),  
\end{equation}
with probability at least
\begin{equation}
p_A= 1-   ke^{-\frac{C_B\Delta^2  \sigma_B^2m}{k  {M_{\max}^{(B)}}^2}} - ke^{-\frac{ 0.5\hat{C}_Al_s^2 \sigma_A^2 (1- \Delta ) \Delta_X^2 n s_B   }{u_s^2 {M^{(A)}_{\max}}^2   (1+ \Delta ) k^2  }} -2ke^{-\frac{\tilde{C}_An}{ks_A}}-  ke^{-\frac{\tilde{C}_A \left\lfloor\frac{s_An}{2k} \right\rfloor}{ks_A}} -2ke^{-\frac{  ns_A}{16k}}.
\end{equation}
This completes the proof.

\end{proof}

\subsection{Main Proof}
\begin{proof}
We provide proof for $\textbf{Y}_U$ case  and the same applies to $\textbf{Y}_V$.
At iteration $t$, the auxiliary problem to solve is $\textbf{Y}_U = \textbf{X}_{A}\textbf{D}_{B_t}$, for which we permute and sign flip the coefficient and dictionary estimation to match the ground truth.
Define $\tilde{\epsilon}_{B_{t-1}}^D $ and $\tilde{\epsilon}_{A_{t-1}}^D $ such that 
\begin{equation}
    \epsilon_{B_{t-1}}^D =\frac{\tilde{\epsilon}_{B_{t-1}}^D }{\sqrt{s_A}},\;   \epsilon_{A_{t-1}}^D =\frac{\tilde{\epsilon}_{A_{t-1}}^D }{\sqrt{s_B}}.
\end{equation}
For each row of the observation matrix $\textbf{Y}_U^{(i)}$, Algorithm \ref{alg:dl}  solves the following Lasso problem:
 \begin{align}
 \label{app:lasso}
 & \min_{\textbf{z}\in \mathbb{R}^{k}}   \|\textbf{z}\|_1, \\
\nonumber
\; &\textmd{subject to } \left \|\textbf{Y}_U^{(i)} - \textbf{z}\hat{\textbf{D}}_{B_{t-1}} \right\|_2 \leq \sqrt{s_A}M_A\tilde{\epsilon}_{B_{t-1}}^D.
\end{align}
Then, it obtains $\hat{ \textbf{X}}_{A_t}^{(i)}$ by thresholding the resulting solution,  keeping elements with amplitude greater than  $8.6\sqrt{s_A}M_A\tilde{\epsilon}_{B_{t-1}}^D$. 
Our proof builds on Lemma \ref{dic_error},   analysing  and upper bounding each contributing term in the dictionary error in Eq. (\ref{eq:dic_error1}).
Defining $G_1= \left\|\left(\hat{\textbf{X}}_{A_t}^T\hat{\textbf{X}}_{A_t}\right)^{-1}\right\|_2 $, $G_2 = \left\|\left(\hat{\textbf{X}}_{A_{t}}^T \bm\Delta_{A_{t}}\right)^{(i)}_{\setminus i} \right\|_2 $, $G_3=\left\|\left(\hat{\textbf{X}}_{A_{t}}\right)_i^T\left(\hat{\textbf{X}}_{A_{t}}\right)_{\setminus i}\right\|_2$,  $G_4 = \left\| \hat{\textbf{X}}_{A_t} \right\|_2  \left\| \bm\Delta_{A_t} \right\|_2$, $T_1 =G_1G_4$, and $T_2 =  G_1G_2+ G_4G_1^2G_3$, Lemma \ref{dic_error} results in
\begin{equation}
\label{dist_bound0}
\textmd{dist}\left(\hat{\textbf{D}}_{B_{t}}^{(i)}, \textbf{D}_{B}^{(i)} \right) \leq  \frac{   T_2 \|\textbf{D}_B\|_2}{1-  T_1 -  T_2  \|\textbf{D}_{B}\|_2 },
\end{equation}

Firstly, we analyse the coefficient error $\left\| \bm\Delta_{A_{t}}\right\|_{\infty} = \left\|\hat{\textbf{X}}_{A_t} - \textbf{X}_{A}\right\|_{\infty} $.
Following the concept of pairwise incoherence in Definition \ref{def:incoherence} and according to Lemma \ref{MC_DaDb},  $\textbf{D}_{B}$ is pairwise incoherent with    $\mu_d = \mu_D\sqrt{d} $ with a  probability at least  $1-ke^{-\frac{C_B\Delta^2  \sigma_B^2m}{k {M_{\max}^{(B)}}^2}}$.
Applying Lemma \ref{coeff_error},   the coefficient estimation satisfies the following
\begin{equation}
\label{lasso_lemma_res1}
supp\left(\hat{\textbf{X}}_{A_t}^{(i)}  \right) = supp\left(   \textbf{X}_{A}^{(i)}\right), \;\forall i\in [n],
\end{equation}
and the estimation error is upper bounded by
\begin{equation}
\label{lasso_lemma_res2}
\left\| \bm\Delta_{A_{t}}\right\|_{\infty} =\max_{i=1}^n\left\| \hat{ \textbf{X}}_{A_t}^{(i)} -  \textbf{X}_{A}^{(i)}\right\|_{\infty} \leq 8.5M_A\sqrt{s_A}\tilde{\epsilon}_{B_{t-1}}^D, 
\end{equation}
under the error requirement of 
\begin{equation}
     \epsilon_{B_{t-1}}^D = \max_{i=1}^k   \left\|\left(\hat{\textbf{D}}^{(B)}\right)^{(i)} - \left(\textbf{D}^{(B)}\right)^{(i)} \right\|_2 \leq \frac{\tilde{\epsilon}_{B_{t-1}}^D}{\sqrt{s_A}},
\end{equation} 
as well  as the two conditions of $\frac{2\mu_ds_A}{\sqrt{d}} \leq 0.1$ (equivalently $2\mu_Ds_A\leq 0.1$) and  $\tilde{\epsilon}_{B_{t-1}}^D \leq \min\left(\frac{1}{40}, \frac{m_A}{17.2M_A\sqrt{s_A}}\right)$.
Rewriting these two conditions, one imposes a sparsity requirement   
\begin{equation}
\label{eq:sparsity_cond1}
    s_A  \leq \frac{0.05}{ \mu_D},
\end{equation}
while the other requires the dictionary error obtained from the last iteration to be sufficiently small such that 
\begin{equation}
\label{dic_error0}
\tilde{\epsilon}_{B_{t-1}}^D  \leq \min\left(\frac{1}{40}, \frac{M_{\min}^{(A)}l_s}{17.2M_{\max}^{(A)}u_s\sqrt{s_A}}\sqrt{\frac{1-\Delta}{1+\Delta}}\right),
\end{equation}
Applying Eq. (\ref{bound_maxA})  to expand $M_A$, Eq. (\ref{lasso_lemma_res2}) becomes
 \begin{equation}
     \label{delta_norm0}
\left\| \bm\Delta_{A_{t}}\right\|_{\infty} \leq  8.5\tilde{\epsilon}_{B_{t-1}}^D  M^{(A)}_{\max}   u_s\sigma_B\sqrt{\frac{(1+ \Delta )  s_As_B }{nk }},
 \end{equation}
under the two conditions in  Eqs. (\ref{eq:sparsity_cond1}) and (\ref{dic_error0}). 
Eq. (\ref{delta_norm0}) holds with a  probability at least   $1- 2ke^{-\frac{C_B\Delta^2  \sigma_B^2m}{k {M_{\max}^{(B)}}^2}}$ which follows from the results of Lemma \ref{MC_DaDb} and Corollary \ref{RowLengthFhat}.

According to Eq. (\ref{lasso_lemma_res1}), we have   $supp\left(\hat{\textbf{X}}_{A_t}^{(i)} -  \textbf{X}_{A}^{(i)}\right) \subseteq supp\left(   \textbf{X}_{A}^{(i)}\right)$, consequently, it has $supp\left(\bm\Delta_{A_{t}}\right) \subseteq supp\left(   \textbf{X}_{A} \right)$.
This enables  the application of  Lemma \ref{random_spars} over $\left\|\bm\Delta_{A_{t}}\right\|_{2} $, leading to $ \left\|\bm\Delta_{A_{t}}\right\|_2 \leq 2 \sqrt{\frac{ns_A^2}{k}} \left\|\bm\Delta_{A_{t}}\right\|_{\infty}$,  which, together with Eq. (\ref{eq:quantity2}), results in
\begin{align}
\nonumber
G_1=\; & \left\|\left(\hat{\textbf{X}}_{A_t}^T\hat{\textbf{X}}_{A_t}\right)^{-1}\right\|_2 \leq\left({\sigma_{\min}\left(\textbf{X}_{A}\right)}^2 -  \frac{4ns_A^2}{k}\left\|\bm\Delta_{A_{t}}  \right\|_{\infty}^2 -4 \sqrt{\frac{ns_A^2}{k}} \left\|\bm\Delta_{A_{t}}\right\|_{\infty}\|\textbf{X}_{A}\|_2 \right)^{-1} \\
\label{XAXA_inv}
=\; &{\sigma_{\min}\left(\textbf{X}_{A}\right)}^{-2} \left(1-  4\left( \frac{ \sqrt{\frac{ns_A^2}{k}}\left\|\bm\Delta_{A_{t}}\right\|_{\infty}}{\sigma_{\min}\left(  \textbf{X}_{A}\right)}\right)^2 - 4  \times  \frac{\sigma_{\max}\left(  \textbf{X}_{A}\right)}{\sigma_{\min}\left(  \textbf{X}_{A}\right)} \times \frac{ \sqrt{\frac{ns_A^2}{k}}\left\|\bm\Delta_{A_{t}}\right\|_{\infty}}{\sigma_{\min}\left(  \textbf{X}_{A}\right)}  \right)^{-1},
\end{align}
and this holds with probability at least $1-k\exp\left(-\frac{\hat{C}_An}{ks_A}\right)$.
Applying Lemma \ref{singularXAXB}, it has
\begin{equation}
\label{temp1}
    \frac{\sigma_{\max}\left(  \textbf{X}_{A}\right)}{\sigma_{\min}\left(  \textbf{X}_{A}\right)} =  \frac{\sigma_{\max}\left(  \textbf{X}_{A}\right)}{\sigma_{\min}\left(  \textbf{X}_{A}\right)} \leq  \frac{u_s\sqrt{(1+\Delta)(1+\Delta_X)} \max(s_A, s_B)}{l_s \sqrt{1-\Delta}\left(\frac{\min(s_A, s_B)}{\max(s_A,s_B)}-\frac{(1+\Delta)u_s\Delta_X}{(1-\Delta)l_s}\right)^{\frac{1}{2}}\min(s_A, s_B)} = \theta_1.
\end{equation}
Applying  Lemma \ref{singularXAXB} and Eq. (\ref{delta_norm0}), it has
\begin{equation}
\label{temp2}
     \frac{ \sqrt{\frac{ns_A^2}{k}}\left\|\bm\Delta_{A_{t}}\right\|_{\infty}}{\sigma_{\min}\left(  \textbf{X}_{A}\right)} \leq\;  \underbrace{\frac{8.5 M^{(A)}_{\max}   u_s \max(s_A, s_B) }{l_s\sigma_A \min(s_A, s_B)\left(\frac{\min(s_A, s_B)}{\max(s_A,s_B)}-\frac{(1+\Delta)u_s\Delta_X}{(1-\Delta)l_s}\right)^{\frac{1}{2}}}\sqrt{\frac{ 1+ \Delta     }{ 1- \Delta    }}}_{\theta_2^{(A)}} \tilde{\epsilon}_{B_{t-1}}^D s_A.
\end{equation}
Incorporating Eqs. (\ref{temp1}) and (\ref{temp2}) into Eq. (\ref{XAXA_inv}), it has 
\begin{equation}
\label{XAXA_inv2}
    G_1 \leq  {\sigma_{\min}\left(\textbf{X}_{A}\right)}^{-2}\left(1 -   \left(2\epsilon_{B_{t-1}}^{D } s_A  \theta_2^{(A) }\right)^2 - 4\tilde{\epsilon}_{B_{t-1}}^Ds_A\theta_1\theta_2^{(A)} \right)^{-1}.
\end{equation}
Under the following error condition  
\begin{equation}
\label{dic_error1}
    \tilde{\epsilon}_{B_{t-1}}^D \leq \min\left(\frac{1}{4 s_A\theta_2^{(A)}}, \frac{1}{16 s_A\theta_1 \theta_2^{(A)}}\right),
\end{equation}
 The quantity $G_1$ can be further bounded by
\begin{equation}
\label{XaXa_bound_final}
G_1 \leq {\sigma_{\min}\left(\textbf{X}_{A}\right)}^{-2} \left( 1- \frac{1}{4} -\frac{1}{4}   \right)^{-1} = 2\sigma_{\min}\left(\textbf{X}_{A}\right)^{-2}   .
\end{equation}
The  bound in Eq. (\ref{XaXa_bound_final}) holds with probability  resulted from Lemma \ref{singularXAXB},  also Lemma \ref{MC_DaDb} and Corollary \ref{RowLengthFhat} to obtain Eq. (\ref{delta_norm0}), as well as Lemma \ref{random_spars} to obtain  Eq. (\ref{XAXA_inv}), which is at least $1-   3ke^{-\frac{C_B\Delta^2  \sigma_B^2m}{k  {M_{\max}^{(B)}}^2}} - ke^{-\frac{ \hat{C}_Al_s^2 \sigma_A^2 (1- \Delta ) \Delta_X^2 ns_B   }{u_s^2 {M^{(A)}_{\max}}^2   (1+ \Delta ) ks_A }}-k\exp\left(-\frac{\hat{C}_An}{ks_A}\right)$.

We continue to analyse  $G_2$ and $G_3$, then $G_4$. 
Defining $G =   \left\| ( \textbf{X}_A)_i \right\|_2 +  2\left\|  \bm\Delta_{A_t} \right\|_{\infty}\sqrt{\frac{ns_A^2}{k}}$, applying Lemma \ref{lemma:quantity1} for $G_2$ and Lemma \ref{lemma:quantity2} for $G_3$,  it has
\begin{align}
\label{bound_G2}
G_2 \leq \; &    G \sqrt{\frac{  6 n s_A^3}{k^2}}  \left\|\bm\Delta_{A_{t}}\right\|_{\infty}, \\
\label{bound_G3}
G_3 \leq \; & G \left(\left\|\bm\Delta_{A}\right\|_{\infty} \sqrt{\frac{  6 n s_A^3}{k^2}} +  \sigma_X \sqrt{\frac{1.5s_A}{k}}   \right),
\end{align}
which simultaneously  hold   with probability at least the following:
\begin{equation}
    p = 1-   ke^{-\frac{C_B\Delta^2  \sigma_B^2m}{k  {M_{\max}^{(B)}}^2}} - ke^{-\frac{ \hat{C}_Al_s^2 \sigma_A^2 (1- \Delta ) \Delta_X^2 ns_B   }{u_s^2 {M^{(A)}_{\max}}^2   (1+ \Delta ) k^2 }} - 3ke^{-\frac{\tilde{C}_An}{ks_A}}-  2ke^{-\frac{\tilde{C}_A \left\lfloor\frac{s_An}{2k} \right\rfloor}{ks_A}} -4ke^{-\frac{  ns_A}{16k}}.
\end{equation}
Assuming the following dictionary error condition is satisfied
\begin{equation}
\label{dic_error2}
\tilde{\epsilon}_{B_{t-1}}^D \leq  \left( 8.5  M^{(A)}_{\max}   u_s\sigma_B\sqrt{(1+ \Delta )s_As_B}\right)^{-1} ,
\end{equation}
Eq. (\ref{delta_norm0}) can be reduced to a simpler  bound as below
 \begin{equation}
 \label{delta_norm}
\left\| \bm\Delta_{A_{t}}\right\|_{\infty}   \leq  \frac{1}{\sqrt{nk}}.
\end{equation}
Applying  Lemma \ref{columnXAXB} with $\delta = \frac{ 0.1u_s^2 \sigma_A^2\sigma_B^2 \Delta s_As_B }{k^2 }$ and  Eq. (\ref{delta_norm}),  it has
 \begin{equation}
 \label{bound_G}
  G \leq   \sqrt{  \frac{  u_s^2 \sigma_A^2\sigma_B^2 (1+1.1\Delta) s_As_B }{k^2 }} +\frac{2s_A}{k} \leq \frac{(u_s  \sigma_A \sigma_B  \sqrt{1+1.1\Delta}+2)\max(s_A,s_B)}{k},
 \end{equation}
 which holds with probability at least  
 \begin{equation}
     p=1 - 2ke^{-\frac{C_B\Delta^2  \sigma_B^2m}{k  {M_{\max}^{(B)}}^2}} - 2e^{- \frac{\frac{1}{2}nk^3\delta^2}{s_A\sigma_A^4 u_s^4\sigma_B^4(1+ \Delta )^2s_B^2 +\frac{1}{3}\left( {M^{(A)}_{\max}}^2k +s_A\sigma_A^2\right)  u_s^2\sigma_B^2(1+ \Delta )  s_B k \delta }}.
 \end{equation}
Incorporating Eqs. (\ref{bound_G})  and (\ref{delta_norm0}) into Eq. (\ref{bound_G2}) to further bound $G_2$, we have
\begin{align}
\nonumber
G_2    \leq \; &  \frac{\sqrt{6}(u_s  \sigma_A \sigma_B  \sqrt{1+1.1\Delta}+2)\max(s_A,s_B)^2}{k^2} \times \sqrt{ns_A}\left\| \bm\Delta_{A_{t}}\right\|_{\infty}  \\
\label{bound_G2_new}
< \; & \underbrace{21(u_s  \sigma_A \sigma_B  \sqrt{1+1.1\Delta}+2) M^{(A)}_{\max}   u_s\sigma_B\sqrt{1+ \Delta}}_{\theta_3^{(A)}} \frac{\max(s_A,s_B)^2}{k^2}\sqrt{\frac{s_A^2s_B}{k}}\tilde{\epsilon}_{B_{t-1}}^D. 
\end{align}
Incorporating Eqs. (\ref{bound_G})  and (\ref{delta_norm}) into Eq. (\ref{bound_G3}), it has 
 \begin{align}
\nonumber
G_3 \leq \; &  \frac{(u_s  \sigma_A \sigma_B  \sqrt{1+1.1\Delta}+2)\max(s_A,s_B)}{k}\left(  \sqrt{\frac{  6  s_A^3}{k^3}} +  \sigma_X   \sqrt{\frac{1.5s_A}{k}} \right) \\
\label{bound_G3_new}
< \; &  \underbrace{1.3(u_s  \sigma_A \sigma_B  \sqrt{1+1.1\Delta}+2)\left(2 + u_s\sigma_A\sigma_B\sqrt{(1+\Delta)(1+\Delta_X)} \right)}_{\theta_4} \frac{\max(s_A,s_B)^2}{k^2} \sqrt{\frac{   s_A }{k }}.
\end{align}
The above results on $G_2$ and $G_3$ simultaneously hold with probability at least
\begin{align}
\nonumber
    p = \;& 1 -  5 ke^{-\frac{C_B\Delta^2  \sigma_B^2m}{k  {M_{\max}^{(B)}}^2}} - ke^{-\frac{ \hat{C}_Al_s^2 \sigma_A^2 (1- \Delta ) \Delta_X^2 ns_B   }{u_s^2 {M^{(A)}_{\max}}^2   (1+ \Delta ) k^2 }} - 3ke^{-\frac{\tilde{C}_An}{k^2}}-  2ke^{-\frac{\tilde{C}_A \left\lfloor\frac{s_An}{2k} \right\rfloor}{ks_A}} -4ke^{-\frac{  ns_A}{16k}} \\
    \; &- 2e^{- \frac{\frac{1}{2}nk^3\delta^2}{s_A\sigma_A^4 u_s^4\sigma_B^4(1+ \Delta )^2s_B^2 +\frac{1}{3}\left( {M^{(A)}_{\max}}^2k +s_A\sigma_A^2\right)  u_s^2\sigma_B^2(1+ \Delta )  s_B k \delta }}.
\end{align}
This is resulted from the probabilities required for Eq. (\ref{bound_G}), Eq. (\ref{delta_norm0}) that requires the same probability as Eq. (\ref{delta_norm}), and also for Eqs. (\ref{bound_G2}) and (\ref{bound_G3}) to hold.
To analyse $G_4$, applying Lemma \ref{singularXAXB}, Lemma \ref{random_spars}, Eq. (\ref{delta_norm}) and Eq. (\ref{delta_norm0}), it has 
 \begin{align}
 \nonumber
G_4 = & \left\| \hat{\textbf{X}}_{A_t} \right\|_2  \left\| \bm\Delta_{A_t} \right\|_2 \le  \left(\left\| \textbf{X}_{A} \right\|_2 + \left\| \bm\Delta_{A_t} \right\|_2 \right) \left\| \bm\Delta_{A_t} \right\|_2 =  \left(\sigma_{\max}\left(\textbf{X}_A \right) + \left\| \bm\Delta_{A_t} \right\|_2 \right) \left\| \bm\Delta_{A_t} \right\|_2   \\
\nonumber
\leq \; & 2\left( \frac{u_s\sigma_A\sigma_B\sqrt{(1+\Delta)(1+\Delta_X)} \max(s_A, s_B)}{k}   + 2 \left\|\bm\Delta_{A_{t}}\right\|_{\infty}\sqrt{\frac{ns_A^2}{k}}  \right) \left\|\bm\Delta_{A_{t}}\right\|_{\infty}\sqrt{\frac{ns_A^2}{k}}    \\
\nonumber
\leq \; & 2\left( \frac{u_s\sigma_A\sigma_B\sqrt{(1+\Delta)(1+\Delta_X)} \max(s_A, s_B)}{k}   +     \frac{2 s_A}{k }   \right)\sqrt{\frac{ns_A^2}{k}} \left\|\bm\Delta_{A_{t}}\right\|_{\infty}  \\
\label{bound_G4}
\leq \;& \frac{2\max(s_A, s_B)}{k}\left(  u_s\sigma_A\sigma_B\sqrt{(1+\Delta)(1+\Delta_X)} +  2   \right)\sqrt{\frac{ns_A^2}{k}}\left\|\bm\Delta_{A_{t}}\right\|_{\infty},  \\
\leq \; & \underbrace{17 \left(  u_s\sigma_A\sigma_B\sqrt{(1+\Delta)(1+\Delta_X)} +  2   \right)\sqrt{ 1+\Delta}  M^{(A)}_{\max}   u_s\sigma_B}_{\theta_5^{(A)}} \frac{\max(s_A, s_B)^2s_A\tilde{\epsilon}_{B_{t-1}}^D   }{k^2},
\end{align}
which holds with probability at least 
\begin{equation}
    p = 1-   3ke^{-\frac{C_B\Delta^2  \sigma_B^2m}{k  {M_{\max}^{(B)}}^2}} - ke^{-\frac{ \hat{C}_Al_s^2 \sigma_A^2 (1- \Delta ) \Delta_X^2 ns_B   }{u_s^2 {M^{(A)}_{\max}}^2   (1+ \Delta ) ks_A }} - ke^{-\frac{\tilde{C}_An}{ks_A}} .
\end{equation}

Now we proceed to analyse the quantities  $T_1 = G_1G_4$ and $ T_2 = G_1G_2+ G_4G_1^2G_3 $ using the above derived bounds for $\{G_i\}_{i=1}^4$ and  Lemma \ref{singularXAXB}.
It has 
\begin{align}
\nonumber
T_1 \leq \; &    \frac{2\sigma_{\min}\left(\textbf{X}_{A}\right)^{-2}\theta_5^{(A)}\max(s_A, s_B)^2s_A\tilde{\epsilon}_{B_{t-1}}^D   }{k^2}, \\
\label{bound_T1_final}
\leq \; & \underbrace{\frac{2\theta_5^{(A)}\max(s_A, s_B)^2}{l_s^2\sigma_A^2\sigma_B^2 (1-\Delta)\left(\frac{\min(s_A, s_B)}{\max(s_A,s_B)}-\frac{(1+\Delta)u_s\Delta_X}{(1-\Delta)l_s}\right)  \min(s_A, s_B)^2}}_{\theta_6^{(A)}}s_A\tilde{\epsilon}_{B_{t-1}}^D,
\end{align}
also
\begin{align}
\nonumber 
 G_1G_2   < \; &    \frac{2\sigma_{\min}\left(\textbf{X}_{A}\right)^{-2}\theta_3^{(A)}\max(s_A,s_B)^2}{k^2}\sqrt{\frac{s_A^2s_B}{k}}\tilde{\epsilon}_{B_{t-1}}^D \\
 \leq\; &\underbrace{\frac{2\theta_3^{(A)}\max(s_A, s_B)^2}{l_s^2\sigma_A^2\sigma_B^2 (1-\Delta)\left(\frac{\min(s_A, s_B)}{\max(s_A,s_B)}-\frac{(1+\Delta)u_s\Delta_X}{(1-\Delta)l_s}\right)  \min(s_A, s_B)^2}}_{\theta_7^{(A)}}\sqrt{\frac{s_A^2s_B}{k}}\tilde{\epsilon}_{B_{t-1}}^D,
\end{align}
and
\begin{align}
\nonumber 
G_4G_1^2G_3   <\; &  \frac{ 4 \sigma_{\min}\left(\textbf{X}_{A}\right)^{-4}\theta_4\theta_5^{(A)} \max(s_A, s_B)^4 }{k^4} \sqrt{\frac{   s_A^3 }{k }}\tilde{\epsilon}_{B_{t-1}}^D\\
\leq\; &  \underbrace{\frac{ 4\theta_4\theta_5^{(A)}\max(s_A,s_B)^4 }{l_s^4\sigma_A^4\sigma_B^4 (1-\Delta)^2\left(\frac{\min(s_A, s_B)}{\max(s_A,s_B)}-\frac{(1+\Delta)u_s\Delta_X}{(1-\Delta)l_s}\right)^2  \min(s_A, s_B)^4}}_{\theta_8^{(A)}} \sqrt{\frac{   s_A^3 }{k }}\tilde{\epsilon}_{B_{t-1}}^D.
\end{align}
Combining the above, we have
\begin{equation}
\label{bound_T2_final}
    T_2 = G_1G_2 +G_4G_1^2G_3 < \left(\theta_7^{(A)}+\theta_8^{(A)}\right)\sqrt{\frac{\max(s_A,s_B)^3}{k}}\tilde{\epsilon}_{B_{t-1}}^D
\end{equation}
Incorporating  Eqs. (\ref{bound_T1_final}) and  (\ref{bound_T2_final}) into Eq. (\ref{dist_bound0}) and applying Lemma \ref{BS_DaDb}, it has 
\begin{align}
\nonumber
\textmd{dist}\left( \hat{\textbf{D}}_{B_{t}}^{(i)}, \textbf{D}_{B_t}^{(i)} \right) \leq \;&   \frac{   T_2 \|\textbf{D}_B\|_2}{1-  T_1 -  T_2  \|\textbf{D}_{B}\|_2 } \\
\nonumber
\leq \; & \frac{\sqrt{1+\mu_D(d-1)}\left(\theta_7^{(A)}+\theta_8^{(A)}\right)\sqrt{\frac{\max(s_A,s_B)^3}{k}}\tilde{\epsilon}_{B_{t-1}}^D}{1-\theta_6^{(A)}s_A\tilde{\epsilon}_{B_{t-1}}^D - \sqrt{1+\mu_D(d-1)}\left(\theta_7^{(A)}+\theta_8^{(A)}\right)\sqrt{\frac{\max(s_A,s_B)^3}{k}}\tilde{\epsilon}_{B_{t-1}}^D}\\
\label{dist_bound_final}
=\; & \frac{ \theta_9^{(A)}\sqrt{\frac{\max(s_A,s_B)^3}{k}}\tilde{\epsilon}_{B_{t-1}}^D}{1-\theta_6^{(A)}s_A\tilde{\epsilon}_{B_{t-1}}^D -  \theta_9^{(A)}\sqrt{\frac{\max(s_A,s_B)^3}{k}}\tilde{\epsilon}_{B_{t-1}}^D}.
\end{align}
When the following  error condition  holds
\begin{equation}
\label{final_dic_error_XaDb}
    \tilde{\epsilon}_{B_{t-1}}^D \leq \min\left(\frac{1}{5\theta_6^{(A)}s_A  }, \frac{ 1}{ 5 \theta_9^{(A)}   \sqrt{\frac{\max(s_A,s_B)^3}{k}} }  \right),
\end{equation}
and  when the following sparsity condition holds for $\eta>0$
\begin{equation}
\label{eq:sparsity_cond2}
    \frac{\max(s_A, s_B)^3}{k} \leq \frac{0.18\eta^2}{  {\theta_9^{(A) }}^2},
\end{equation}
Eq. (\ref{dist_bound_final}) results in $ \textmd{dist}\left(\hat{\textbf{D}}_{B_{t }}^{(i)}, \textbf{D}_{B}^{(i)} \right) \leq   \frac{\sqrt{0.18}\eta \tilde{\epsilon}_{B_{t-1}}^D}{0.6 } = \frac{ \eta \tilde{\epsilon}_{B_{t-1}}^D}{\sqrt{2}} $.
Applying Lemma \ref{dist_lemma}, it then has  
\begin{equation}
\label{final_dic_error_recovery_Db}
  \epsilon_D\left(\hat{\textbf{D}}_{B_t}^{(i)}, \textbf{D}_{B}^{(i)} \right) \leq \sqrt{2}\textmd{dist}\left(\hat{\textbf{D}}_{B_t}^{(i)}, \textbf{D}_{B}^{(i)} \right) \leq     \eta \tilde{\epsilon}_{B_{t-1}}^D .
\end{equation}
The probability for this result to hold is the joint probability for results on $G_1$, $G_2$, $G_3$ and $G_4$ to hold simultaneously, which is at least $p_A$ in Eq. (\ref{eq:main_pa}), obtained by combining the previously computed probabilities, where  $\delta = \frac{ 0.1u_s^2 \sigma_A^2\sigma_B^2 \Delta s_As_B }{k^2 }$.

Defining $ \tilde{\epsilon}_{B_{t}}^D =   \eta\sqrt{s_A} \tilde{\epsilon}_{B_{t-1}}^D$, Eq. (\ref{final_dic_error_recovery_Db}) finally yields  the following recursive error bound 
\begin{equation}
\label{iterative_coeff_error}
   \epsilon_{B_{t}}^D = \max_{i=1}^k\epsilon_D\left(\hat{\textbf{D}}_{B_t}^{(i)}, \textbf{D}_{B}^{(i)} \right) \leq \eta \tilde{\epsilon}_{B_{t-1}}^D = \frac{\eta\sqrt{s_A} \tilde{\epsilon}_{B_{t-1}}^D}{\sqrt{s_A}}  = \frac{\tilde{\epsilon}_{B_{t}}^D}{\sqrt{s_A}}, 
\end{equation}
which enables to apply Lemma \ref{coeff_error} iteratively, and this subsequently  enables to estimate iteratively the dictionary error using Eq. (\ref{final_dic_error_recovery_Db}).
Starting from $\epsilon_{B_{0}}^D = \max_{i=1}^k\epsilon_D\left(\hat{\textbf{D}}_{B_0}^{(i)}, \textbf{D}_{B}^{(i)} \right) = \frac{\tilde{\epsilon}_{B_{0}}^D}{\sqrt{s_A}}$,  applying iteratively Eq. (\ref{iterative_coeff_error}) and $ \tilde{\epsilon}_{B_{t}}^D =   \eta\sqrt{s_A} \tilde{\epsilon}_{B_{t-1}}^D$   results in
\begin{equation}
\label{eq:final_dic_errorB}
 \epsilon_{B_{t}}^D    \leq   (\eta\sqrt{s_A})^t  \frac{\tilde{\epsilon}_{B_{0}}^D}{\sqrt{s_A}} =  (\eta\sqrt{s_A})^t \epsilon_{B_{0}}^D,
\end{equation}
which is incorporated  into Eq. (\ref{delta_norm0}) with $t=T$ to obtain Eq. (\ref{theo:coef_a_full}).

Combining all the error conditions in Eqs. (\ref{dic_error0}), (\ref{dic_error1}), (\ref{dic_error2}) and (\ref{final_dic_error_XaDb}) results in
\begin{equation}
      \tilde{\epsilon}_{B_{0}}^D  \leq    \min\left( \frac{1}{40}, \frac{   \theta_{10}^{(A)}  }{\sqrt{s_A}}, \frac{ \theta_{12}^{(A)}}{s_A},   \frac{1}{ \theta_{11}^{(A)}\sqrt{s_As_B}},  \frac{ 1}{  5\theta_9^{(A)}  \sqrt{\frac{\max(s_A,s_B)^3}{k}} }\right),
\end{equation}
which leads to the  error requirement  in Eq. (\ref{eq:final_dic_condB}) by multiplying $\frac{1}{\sqrt{s_A}}$.
The used problem constants   are given by 
\begin{align}
    \theta_{10}^{(A)} = \; & \frac{M_{\min}^{(A)}l_s}{17.2M_{\max}^{(A)}u_s}\sqrt{\frac{1-\Delta}{1+\Delta}}, \\
    \theta_{11}^{(A)} =\; & 8.5M^{(A)}_{\max}u_s\sigma_B\sqrt{1+ \Delta}, \\
    \theta_{12}^{(A)} = \; &\min\left(\frac{1}{4\theta_2^{(A)}}, \frac{1}{16\theta_1\theta_2^{(A)}}, \frac{1}{5\theta_6^{(A)}}\right).
\end{align}  
Expanding the expression of $\theta_9^{(A) }$, the sparsity condition in Eq. (\ref{eq:sparsity_cond2}) can be re-written as $\max(s_A, s_B)   \leq  \left( \frac{0.18\eta^2 k}{  (1+\mu_D(d-1))\left(\theta_7^{(A)} + \theta_8^{(A) }\right)^2}\right)^{\frac{1}{3}}$.
Re-defining the new constant $\theta_{13}^{(A)} = \frac{0.18}{ \left(\theta_7^{(A)} + \theta_8^{(A) }\right)^2}$, this re-written sparsity condition  and the sparsity condition  in Eq. (\ref{eq:sparsity_cond1}) together result  in
\begin{equation}
     \max(s_A, s_B)   \leq \min\left( \frac{0.05}{ \mu_D}, \left( \frac{\theta_{13}^{(A)}\eta^2 k}{ 1+\mu_D(d-1)  }\right)^{\frac{1}{3}}\right).
\end{equation}

All the above analysis  applies to $\textbf{Y}_V$, leading to results on $ \epsilon_{B_{T}}^D$ and $ \epsilon_{A_{T}}^X$. Finally we analyse the estimation error of the auxiliary latent relation  matrix $\epsilon_{S}\left(\hat{\textbf{S}},  \tilde{\textbf{S}}\right) $, which is the maximum-norm-based  difference between the ground truth $\frac{1}{nm}\tilde{\textbf{S}} =  \textbf{D}_B\bm\Sigma_R^{-1}\textbf{D}_A^T$   and   its estimation $\frac{1}{nm}\hat{\textbf{S}}= \hat{\textbf{D}}_B\bm\Sigma_R^{-1}\hat{\textbf{D}}_A^T$ returned by Algorithm \ref{alg:rl} after $T$ iterations of  coefficient and dictionary estimation. 
Define $\epsilon_T^D = \max\left(\epsilon_{A_{T}}^D, \epsilon_{B_{T}}^D\right) = \max\left((\eta \sqrt{s_B})^T\epsilon_{A_{0}}^D, (\eta \sqrt{s_A})^T\epsilon_{B_{0}}^D\right)$, and by definition and Eqs. (\ref{eq:final_dic_errorB}) and (\ref{eq:final_dic_condB}), it has $0<\epsilon_T^D <1$.
We focus on analysing the following:
\begin{align}
    \frac{1}{nm}\left|\tilde{s}_{ij} - \hat{s}_{ij}\right| = \;&  \left|\textbf{D}_B^{(i)}\bm\Sigma_R^{-1}\textbf{D}_A^{(j)^T} - \hat{\textbf{D}}_B^{(i)}\bm\Sigma_R^{-1}\hat{\textbf{D}}_A^{(j)^T}  \right| \\
    \nonumber
    = \; &  \left|\left(\textbf{D}_B^{(i)}-\hat{\textbf{D}}_B^{(i)}+\hat{\textbf{D}}_B^{(i)}\right)\bm\Sigma_R^{-1}\textbf{D}_A^{(j)^T} - \hat{\textbf{D}}_B^{(i)}\bm\Sigma_R^{-1}\left(\hat{\textbf{D}}_A^{(j)} - \textbf{D}_A^{(j)} + \textbf{D}_A^{(j)}\right)^T \right| \\
    \nonumber
    \leq \; & \left|\left(\textbf{D}_B^{(i)}-\hat{\textbf{D}}_B^{(i)} \right)\bm\Sigma_R^{-1}\textbf{D}_A^{(j)^T} \right| + \left|\hat{\textbf{D}}_B^{(i)}\bm\Sigma_R^{-1}\left(\hat{\textbf{D}}_A^{(j)} - \textbf{D}_A^{(j)}  \right)^T \right| \\
    \nonumber
    \leq \; & \left\|\textbf{D}_B^{(i)}-\hat{\textbf{D}}_B^{(i)} \right\|_2 \left\| \textbf{D}_A^{(j)} \bm\Sigma_R^{-1}\right\|_2 +   \left\|\hat{\textbf{D}}_B^{(i)}\bm\Sigma_R^{-1}  \right\|_2 \left\|\hat{\textbf{D}}_A^{(j)} - \textbf{D}_A^{(j)}  \right\|_2\\
    \nonumber
    \leq \; & \epsilon_T^D \left\|\textbf{D}_A^{(j)} \bm\Sigma_R^{-1}\right\|_2 + \epsilon_T^D \left\|\hat{\textbf{D}}_B^{(i)}\bm\Sigma_R^{-1}  \right\|_2 \\
    \nonumber
    \leq \; &\epsilon_T^D \left\| \textbf{D}_A^{(j)} \bm\Sigma_R^{-1}\right\|_2 + \epsilon_T^D \left\|\left(\textbf{D}_B^{(i)}-\hat{\textbf{D}}_B^{(i)}\right)\bm\Sigma_R^{-1}  \right\|_2  + \epsilon_T^D \left\| \textbf{D}_B^{(i)} \bm\Sigma_R^{-1}  \right\|_2 \\
    \nonumber
    \leq\;& \epsilon_T^D\sigma_{{\min}_R}^{-1} \left\|\textbf{D}_A^{(j)} \right\|_2 + \epsilon_T^D\sigma_{{\min}_R}^{-1}\left\|\textbf{D}_B^{(i)}-\hat{\textbf{D}}_B^{(i)}\right\|_2 +\epsilon_T^D\sigma_{{\min}_R}^{-1} \left\|\textbf{D}_B^{(j)} \right\|_2 \\
    \nonumber
    \leq \; & 2\epsilon_T^D\sigma_{{\min}_R}^{-1}+\left(\epsilon_T^{D}\right)^2\sigma_{{\min}_R}^{-1} < 3\epsilon_T^D\sigma_{{\min}_R}^{-1}.
\end{align}
This completes the proof.

\end{proof}

\section{Proofs for Theorem \ref{main_res2}}
\label{app:main_proof_alg3}

Algorithm \ref{alg:ini}    approximates the dictionary matrix  through two key operations.
The  theoretical properties  of these two operations are established in Lemmas \ref{threshold_YuYv} and \ref{singular_YuYv}, which provide the foundation for proving Theorem \ref{main_res2}.

\subsection{Supporting Lemma \ref{threshold_YuYv} and Its Proof}
The first key operation of   Algorithm \ref{alg:ini}  identifies  unique intersection pairs using Algorithm \ref{alg:unique}.
\citet{Aga17} showed that, when  the unique intersection threshold is set as $\rho_p = \frac{62}{64}$  and  the correlation threshold  $\rho$ satisfies a certain condition,  Algorithm \ref{alg:unique}  returns dictionary estimate with bounded approximation error.
This result is re-stated  as  Lemma \ref{UniqueAtom}  and the required condition  on $\rho$ is formalised in Assumption \ref{threshold_rho}.
It is therefore important to identify suitable choices of  $\rho$ that satisfy Assumption \ref{threshold_rho}.
Lemma \ref{threshold_YuYv} provides one such choice.

\begin{lemma}[Correlation Threshold] 
\label{threshold_YuYv}
Suppose Assumptions  \ref{SC}-\ref{LI} hold. 
Given $0< \Delta <1$, assume that Eq. (\ref{s_AB_condition1}) is satisfied.
Then, the   threshold ranges specified in Eqs. (\ref{eq:threshold_rho_u}) and  (\ref{eq:threshold_rho_v}) satisfy Assumption  \ref{threshold_rho} for $\textbf{Y}_U $ and   $\textbf{Y}_V $, with  probabilities at least $1-ke^{-\frac{C_B\Delta^2  \sigma_B^2m}{k {M_{\max}^{(B)}}^2}}$ and $1-ke^{-\frac{C_A\Delta^2  \sigma_A^2n}{k {M_{\max}^{(A)}}^2}}$, respectively.
\end{lemma}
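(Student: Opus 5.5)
We work with $\textbf{Y}_V=\textbf{X}_B\textbf{D}_A$; the argument for $\textbf{Y}_U$ is the same with the roles of $A$ and $B$ swapped. Take two rows $\textbf{y}_i=\textbf{Y}_V^{(i)}$ and $\textbf{y}_j=\textbf{Y}_V^{(j)}$, with supports $I_i$ and $I_j$ of size $s_B$, and let $D=I_i\cap I_j$ be the shared support. Writing $x_{it}=(X_B)_{it}$, expand the inner product as
\begin{equation}
\textbf{y}_i\textbf{y}_j^T=\sum_{t\in D}x_{it}x_{jt}+\sum_{t\in I_i}\sum_{h\in I_j,\,h\neq t}x_{it}x_{jh}\left\langle\textbf{D}_A^{(t)},\textbf{D}_A^{(h)}\right\rangle .
\end{equation}
The atoms have unit norm, so the first sum is the diagonal term and the second is the cross term. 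We work on the event where Corollary \ref{RowLengthFhat} and Lemma \ref{MC_DaDb} both hold. On that event, Eqs. (\ref{bound_maxB})--(\ref{bound_minB}) give $m_B\le|x_{it}|\le M_B$ on the supports, and the pairwise atom coherence is at most $\mu_D$. The cross term has at most $s_B^2$ summands, so it is bounded in absolute value by $s_B^2M_B^2\mu_D$. Since $M_B^2=\frac{{M^{(B)}_{\max}}^2u_s^2\sigma_A^2(1+\Delta)s_A}{mk}$, this bound is exactly $\frac{{M^{(B)}_{\max}}^2u_s^2\sigma_A^2(1+\Delta)\mu_D s_As_B^2}{mk}$, the lower end of the range in Eq. (\ref{eq:threshold_rho_v}).

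The second bullet of Assumption \ref{threshold_rho} follows by contraposition. If $D=\emptyset$, the diagonal term vanishes and $|\textbf{y}_i\textbf{y}_j^T|$ is at most the cross-term bound. That bound is at most $\rho_v$ by the lower end of Eq. (\ref{eq:threshold_rho_v}), so $|\textbf{y}_i\textbf{y}_j^T|\le\rho_v$ and the pair is not connected.

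For the first bullet, suppose $|D|=1$. The diagonal term is then a single product with $|x_{it}x_{jt}|\ge m_B^2=\frac{{M^{(B)}_{\min}}^2l_s^2\sigma_A^2(1-\Delta)s_A}{mk}$. By the reverse triangle inequality, $|\textbf{y}_i\textbf{y}_j^T|\ge m_B^2-s_B^2M_B^2\mu_D$, which is exactly the upper end of Eq. (\ref{eq:threshold_rho_v}). Hence $|\textbf{y}_i\textbf{y}_j^T|\ge\rho_v$. This is the step that needs the most care. The assumption requires the strict inequality $>\rho$, so either the admissible range must be read with a strict upper end or the cross-term bound must be made slightly strict. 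The latter is easy: the row support has $s_B(s_B-1)$ off-diagonal pairs, so the cross term really has at most $s_B^2-1$ summands, and $(s_B^2-1)M_B^2\mu_D<s_B^2M_B^2\mu_D$ gives the strict inequality.

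Finally, the range in Eq. (\ref{eq:threshold_rho_v}) must be nonempty, which means $2s_B^2M_B^2\mu_D\le m_B^2$. Substituting the expressions for $M_B$ and $m_B$, this reduces to $s_B^2\le\frac{{M^{(B)}_{\min}}^2l_s^2(1-\Delta)}{2{M^{(B)}_{\max}}^2u_s^2\mu_D(1+\Delta)}$, which is the square of the condition in Eq. (\ref{s_AB_condition1}). The only random ingredients are the row-norm bounds on $\hat{\textbf{F}}_A$ and the coherence of $\textbf{D}_A$. Both are consequences of Lemma \ref{singularAB} applied to $\textbf{A}$, and they hold on the same event. That event has probability at least $1-ke^{-\frac{C_A\Delta^2\sigma_A^2n}{k{M_{\max}^{(A)}}^2}}$, which is the stated probability for $\textbf{Y}_V$. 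The $\textbf{Y}_U$ case uses $M_A$, $m_A$, $\textbf{D}_B$ and the singular-value event for $\textbf{B}$, and gives the stated probability for $\textbf{Y}_U$.
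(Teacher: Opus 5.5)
Your proof is correct and follows the paper's argument. You bound the cross term by $s_B^2M_B^2\mu_D$ using Lemma \ref{MC_DaDb} and Eqs. (\ref{bound_maxB})--(\ref{bound_minB}), separate the disjoint-support and single-shared-atom cases to obtain the interval in Eq. (\ref{eq:threshold_rho_v}), and reduce its non-emptiness to Eq. (\ref{s_AB_condition1}) on the singular-value event for $\textbf{A}$. Your explicit justification of the strict inequality fills a step the paper simply asserts: the cross term has $s_B^2-|I_i\cap I_j|=s_B^2-1$ summands, and $M_B^2\mu_D>0$.
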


 \begin{proof}
 We provide the proof  for $\textbf{Y}_V$   and the same applies to $\textbf{Y}_U$.  
Let $\textbf{y}_i$ and $\textbf{y}_j$ denote the $i$th and $j$th rows of $\textbf{Y}_V$: $\textbf{y}_i = \sum_{t=1}^{k}(X_B)_{it}\textbf{D}_A^{(t)}$ and $\textbf{y}_j = \sum_{t=1}^{k}(X_B)_{jt}\textbf{D}_A^{(t)}$. 
When they do not share any common dictionary atom, applying Lemma  \ref{MC_DaDb}, it has
\begin{equation}
\left|\langle \textbf{y}_i, \textbf{y}_j\rangle \right |=     \left| \sum_{t,g=1}^k (X_B)_{it}(X_B)_{jg}\left\langle \textbf{D}_A^{(t)}, \textbf{D}_A^{(g)}\right\rangle \right| 
\leq   s_B^2 M_B^2 \mu_D.
\end{equation}
When  $\textbf{y}_i$ and $\textbf{y}_j$ share one common atom,  it has
\begin{align}
\nonumber
\left|\langle \textbf{y}_i, \textbf{y}_j\rangle  \right | =\; & \left|(X_B)_{ip}(X_B)_{jp}\left\langle \textbf{D}_A^{(p)}, \textbf{D}_A^{(p)}\right\rangle  +  \sum_{ t\neq g }  (X_B)_{it}(X_B)_{jg}\left\langle \textbf{D}_A^{(t)}, \textbf{D}_A^{(g)}\right\rangle \right| \\
\nonumber
\geq \; &   \left| (X_B)_{ip}(X_B)_{jp}\left\langle \textbf{D}_A^{(p)}, \textbf{D}_A^{(p)}\right\rangle \right| -  \left|  \sum_{ t\neq g}  (X_B)_{it}(X_B)_{jg}\left\langle \textbf{D}_A^{(t)}, \textbf{D}_A^{(g)}\right\rangle \right|    >   m_B^2 -  s_B^2 M_B^2  \mu_D .
\end{align}
To guarantee the identification of vector pairs that share one single atom, a sufficient condition is to let 
\begin{equation}
   m_B^2 -  s_B^2 M_B^2 \mu_D \geq s_B^2 M_B^2 \mu_D, 
\end{equation}
i.e.,  $m_B^2   \geq 2s_B^2 M_B^2 \mu_D$. 
Therefore, an eligible  threshold $\rho_v$  for filtering  $\left|\langle \textbf{y}_i, \textbf{y}_j\rangle  \right |$ is within the following range  
\begin{equation}
\label{eq:rho_v_threshold1}
    s_B^2 M_B^2 \mu_D \leq \rho_v \leq m_B^2-s_B^2 M_B^2 \mu_D,
\end{equation}
in order to identify the unique intersection pair.
Incorporating Eqs. (\ref{bound_maxB}) and (\ref{bound_minB}) into $m_B^2   \geq 2s_B^2 M_B^2 \mu_D$, it has
\begin{equation}
       \frac{ {M^{(B)}_{\min}}^2  l_s^2\sigma_A^2 (1- \Delta ) s_A  }{mk } \geq \frac{2{M^{(B)}_{\max}}^2  u_s^2\sigma_A^2 (1+ \Delta ) \mu_D s_As_B^2 }{mk }  ,
\end{equation}
which is equivalent to asking
\begin{equation}
       s_B \leq \frac{ {M^{(B)}_{\min}}  l_s     }{ {M^{(B)}_{\max}}u_s} \sqrt{\frac{ 1- \Delta     }{2 \mu_D(1+ \Delta )}}.
\end{equation}
Also, expanding with Eqs. (\ref{bound_maxB}) and (\ref{bound_minB}), the condition in Eq. (\ref{eq:rho_v_threshold1}) that the threshold $\rho_v$ should satisfy becomes
\begin{equation}
\label{eq:rho_v_threshold2}
    \frac{{M^{(B)}_{\max}}^2  u_s^2\sigma_A^2 (1+ \Delta ) \mu_D s_As_B^2 }{mk } \leq \rho_v \leq  \frac{ {M^{(B)}_{\min}}^2  l_s^2\sigma_A^2 (1- \Delta ) s_A  }{mk }-\frac{{M^{(B)}_{\max}}^2  u_s^2\sigma_A^2 (1+ \Delta ) \mu_D s_As_B^2 }{mk } .
\end{equation}
The probability for the above to hold is the same as that in Corollary \ref{RowLengthFhat} to enable the bounds for $\textbf{F}_A$ and $\textbf{D}_A$,  i.e., $ 
p=  1- ke^{-\frac{C_A\Delta^2  \sigma_A^2n}{k {M_{\max}^{(A)}}^2}}$.

\end{proof}

\subsection{Supporting Lemma \ref{singular_YuYv} and Its Proof}

The second key operation, corresponding to Steps \ref{step:atom1}-\ref{step:atom2} of Algorithm \ref{alg:ini},  estimates the shared  atom  for each identified unique intersection pair  $\left(\textbf{Y}_U^{(i)}, \textbf{Y}_U^{(j)}\right)$ or $\left(\textbf{Y}_V^{(i)}, \textbf{Y}_V^{(j)}\right)$  as the leading singular vector of the following $d\times d$ matrix:
\begin{equation}
\textbf{L}_U\left(\textbf{Y}_U^{(i)}, \textbf{Y}_U^{(j)}, Y_V\right) =   \sum_{t\in N_{\rho_u}
\left(\textbf{Y}_U^{(i)}, \textbf{Y}_U^{(j)}, Y_U \right)} {\textbf{Y}_U^{(t)}}^T\textbf{Y}_U^{(t)},  
\end{equation}
or 
\begin{equation}
\textbf{L}_V\left(\textbf{Y}_V^{(i)}, \textbf{Y}_V^{(j)}, Y_V\right) =   \sum_{t\in N_{\rho_v}
\left(\textbf{Y}_V^{(i)}, \textbf{Y}_V^{(j)}, Y_V \right)} {\textbf{Y}_V^{(t)}}^T\textbf{Y}_V^{(t)}. 
\end{equation}
Lemma \ref{singular_YuYv}  bounds the estimation error of this spectral approximation.

\begin{lemma}[Atom Approximation Error] \label{singular_YuYv}
Suppose Assumptions \ref{SC}-\ref{LI} hold.
Denote the shared atom of a unique intersection pair as $D\left(\textbf{Y}_U^{(i)}, \textbf{Y}_U^{(j)}\right) = \left\{\textbf{D}_B^{(p)}\right\}$ and $D\left(\textbf{Y}_V^{(i)}, \textbf{Y}_V^{(j)}\right) = \left\{\textbf{D}_A^{(p)}\right\}$, and denote the leading singular vectors of $ \textbf{L}_U\left(\textbf{Y}_U^{(i)}, \textbf{Y}_U^{(j)}, Y_U\right)$ and  $   \textbf{L}_V\left(\textbf{Y}_V^{(i)}, \textbf{Y}_V^{(j)}, Y_V\right)$ by $\textbf{u}$ and $\textbf{v}$, respectively.
Given $0< \Delta, \Delta_Y<1$,  and $0<\Delta_N <1-\frac{\max\left(s_A^3, s_B^3\right)}{k}$, the following approximation error bounds hold
\begin{align}
\min_{z\in\{+1,-1\}}\left\|\textbf{D}_B^{(p)} -z\textbf{u}\right\|_2^2 <  \; &   \frac{12M^{(A)}_{\max}u_s^2\sigma_A^2\hat{\gamma}(1+4\hat{\gamma})}{ {M^{(A)}_{\min}}^2l_s^2(1- \Delta )\left( 1-\frac{s_A^3}{k} -\Delta_N\right) } \sqrt{\frac{s_A}{k}}, \\
\min_{z\in\{+1,-1\}}\left\|\textbf{D}_A^{(p)} -z\textbf{v}\right\|_2^2   <  \; &\frac{12M^{(B)}_{\max}u_s^2\sigma_B^2\hat{\gamma}(1+4\hat{\gamma})}{ {M^{(B)}_{\min}}^2l_s^2(1- \Delta )\left( 1-\frac{s_B^3}{k} -\Delta_N\right)  } \sqrt{\frac{s_B}{k}},
\end{align}
with probabilities at least  $p_u$ and $p_v$, respectively, where
\begin{align}
\label{alg1_prob1}
p_u = \; & 1- ke^{-\frac{C_B\Delta^2  \sigma_B^2m}{k {M_{\max}^{(B)}}^2}}-de^{- \frac{ C_U \Delta_Y^2\sigma_A^2\left| I_U \right| \left(1+ \mu_D(d-1)\right)}{ {M_{\max}^{(A)}}^2( 1+ s_A\mu_D)}} -e^{-2\Delta_N^2\left|N_{\rho_u}\right|}, \\
\label{alg1_prob2}
    p_v = \; & 1- ke^{-\frac{C_A\Delta^2  \sigma_A^2n}{k {M_{\max}^{(A)}}^2}}-de^{- \frac{ C_V \Delta_Y^2\sigma_B^2\left| I_V \right| \left(1+ \mu_D(d-1)\right)}{ {M_{\max}^{(B)}}^2( 1+ s_B\mu_D)}}-e^{-2\Delta_N^2\left|N_{\rho_v}\right|}.
\end{align}
\end{lemma}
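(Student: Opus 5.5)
We prove the bound for $\textbf{Y}_V$; the $\textbf{Y}_U$ case is symmetric. The approach is a Davis--Kahan (Wedin) perturbation argument applied to $\textbf{L}_V$. Write $N=N_{\rho_v}\left(\textbf{Y}_V^{(i)},\textbf{Y}_V^{(j)},Y_V\right)$ and split it, as in Lemma \ref{AtomSharing}, into the set $\tilde N$ of rows whose shared atom with both $\textbf{Y}_V^{(i)}$ and $\textbf{Y}_V^{(j)}$ is exactly $\textbf{D}_A^{(p)}$, and its complement $\tilde N^{\neg}$. Every $t\in\tilde N$ contains $p$ in its support, so we can write $\textbf{Y}_V^{(t)}=(X_B)_{tp}\textbf{D}_A^{(p)}+\textbf{r}_t$, where $\textbf{r}_t=\sum_{g\neq p}(X_B)_{tg}\textbf{D}_A^{(g)}$. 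This gives the decomposition
\begin{equation*}
\textbf{L}_V=\underbrace{\Big(\sum_{t\in\tilde N}(X_B)_{tp}^2\Big){\textbf{D}_A^{(p)}}^T\textbf{D}_A^{(p)}}_{\textbf{L}_0}+\textbf{E},
\end{equation*}
where $\textbf{E}$ collects the cross terms $(X_B)_{tp}\big({\textbf{D}_A^{(p)}}^T\textbf{r}_t+\textbf{r}_t^T\textbf{D}_A^{(p)}\big)$, the terms $\textbf{r}_t^T\textbf{r}_t$ over $\tilde N$, and all terms $\textbf{Y}_V^{(t)T}\textbf{Y}_V^{(t)}$ over $\tilde N^{\neg}$. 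The matrix $\textbf{L}_0$ is rank one with leading singular vector $\textbf{D}_A^{(p)}$. By Wedin/Davis--Kahan, $\min_z\|\textbf{D}_A^{(p)}-z\textbf{v}\|_2^2\le 2\sin^2\theta\lesssim \|\textbf{E}\|_2/\sigma_1(\textbf{L}_0)$ up to a constant factor. Because we keep $\|\textbf{E}\|_2/\sigma_1(\textbf{L}_0)$ to first order (using $\sin\theta\le 1$ to linearise), the bound comes out in the squared distance, with the $\sqrt{s_B/k}$ rate.

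For the denominator, use Eq. (\ref{bound_minB}): each $(X_B)_{tp}^2\ge m_B^2=(M^{(B)}_{\min})^2l_s^2\sigma_A^2(1-\Delta)s_A/(mk)$. Lemma \ref{AtomSharing}, applied with $\Delta_N$, gives $|\tilde N|\ge(1-s_B^3/k-\Delta_N)|N|$ with probability $1-e^{-2\Delta_N^2|N|}$. Together these yield $\sigma_1(\textbf{L}_0)\ge m_B^2(1-s_B^3/k-\Delta_N)|N|$, which produces the factors $(M^{(B)}_{\min})^2l_s^2(1-\Delta)(1-s_B^3/k-\Delta_N)$ in the stated bound.

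For the numerator, bound $\|\textbf{E}\|_2$ through spectral norms of row submatrices. The term over $\tilde N^{\neg}$ and the $\textbf{r}_t^T\textbf{r}_t$ terms are both at most $\|\textbf{Y}_V^{(N)}\|_2^2$ or $\|\textbf{R}^{(\tilde N)}\|_2^2$. The cross terms are at most $2\,\|\textbf{x}_p\|_2\,\|\textbf{R}^{(\tilde N)}\|_2$, where $\textbf{x}_p$ is the vector of entries $(X_B)_{tp}$ over $\tilde N$, with $\|\textbf{x}_p\|_2\le M_B\sqrt{|N|}$. Lemma \ref{SSN_YuYv} with $I_V=N$ gives $\|\textbf{Y}_V^{(N)}\|_2\le\gamma\sqrt{|N|s_As_B/(mk^2)}$, and the same argument applied to the residual rows (which again have at most $s_B$ atoms) bounds $\|\textbf{R}^{(\tilde N)}\|_2$ by the same quantity. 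Combining, $\|\textbf{E}\|_2\lesssim|N|\frac{s_A}{mk}\,u_s\sigma_A\big(\gamma\sigma_A M^{(B)}_{\max}\sqrt{1+\Delta}\,\sqrt{s_B/k}+\gamma^2\ldots\big)$. Dividing by $\sigma_1(\textbf{L}_0)$, the $|N|s_A/(mk)$ factors cancel, leaving $\hat\gamma(1+4\hat\gamma)\,u_s^2\sigma_B^2M^{(B)}_{\max}\sqrt{s_B/k}$ over the denominator constants; the numeric constant $12$ absorbs the Wedin constant and the factor $2$. The probability is a union bound over Lemma \ref{RowLengthF}/Corollary \ref{RowLengthFhat} (used to bound $M_B$, $m_B$, and $\mu_D$), Lemma \ref{SSN_YuYv}, and Lemma \ref{AtomSharing}, which gives $p_v$.

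The main obstacle is that $\sqrt{s_B/k}$ rate in $\|\textbf{E}\|_2$. Naively bounding $\|\textbf{R}^{(\tilde N)}\|_2$ loses it, so the approach hinges on getting it from the submatrix spectral bound of Lemma \ref{SSN_YuYv}. The delicate point is justifying that the lemma applies to the non-uniformly selected index set $N$ and to the residual rows $\textbf{r}_t$. I would first try conditioning on the support pattern: given membership in $\tilde N$, the coefficients on atoms $g\neq p$ remain independent, zero-mean, and have the stated second moments. That lets me re-run the proof of Lemma \ref{SSN_YuYv} verbatim for the residual matrix, with the same constant $\gamma$.
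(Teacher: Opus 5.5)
Your signal-plus-perturbation framework matches the paper's. The paper does the Davis--Kahan step by hand: it compares an upper and a lower bound on $\textbf{v}\textbf{L}_V\textbf{v}^T=\|\textbf{L}_V\|_2$ to get $(\textbf{D}_A^{(p)}\textbf{v}^T)^2\ge 1-T_1-T_2-T_3$, which is essentially $\sin^2\theta\le 2\|\textbf{E}\|_2/\sigma_1(\textbf{L}_0)$.

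The genuine gap is your treatment of the $\tilde N^{\neg}$ block. You bound it by $\|\textbf{Y}_V^{(N)}\|_2^2$ and then apply Lemma~\ref{SSN_YuYv} with $I_V=N$. But $\|\textbf{Y}_V^{(N)}\|_2^2=\|\textbf{L}_V\|_2$ is the top eigenvalue of the very matrix you are perturbing, and it contains the signal. Indeed, $\|\textbf{L}_V\|_2\ge\textbf{D}_A^{(p)}\textbf{L}_V{\textbf{D}_A^{(p)}}^T\ge\sum_{t\in\tilde N}(\textbf{Y}_V^{(t)}{\textbf{D}_A^{(p)}}^T)^2$. Each summand is at least $(m_B-(s_B-1)M_B\mu_D)^2\ge m_B^2/4$ once $m_B^2\ge 2s_B^2M_B^2\mu_D$, which is the condition behind Eq.~(\ref{s_AB_condition1}). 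Since $\sigma_1(\textbf{L}_0)\le|\tilde N|M_B^2$, this single term contributes at least $\frac{m_B^2}{4M_B^2}\sigma_1(\textbf{L}_0)$ to your bound on $\|\textbf{E}\|_2$. So $\|\textbf{E}\|_2/\sigma_1(\textbf{L}_0)$ stays bounded below by a constant independent of $s_B/k$, and no $\sqrt{s_B/k}$ rate can emerge. Put differently, $\|\textbf{Y}_V^{(N)}\|_2^2\le\gamma^2|N|s_As_B/(mk^2)$ can only hold when its right-hand side is already of order $|\tilde N|m_B^2$, and then the final bound is of order one. This is not the selection technicality you flagged as the main obstacle, and conditioning on supports cannot fix it: the rows of $\tilde N\subseteq N$ really do concentrate their energy on $\textbf{D}_A^{(p)}$.

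The repair, which is the paper's route, is to keep each block on its own index set:
\begin{itemize}
\item bound the contaminating block by $\|\textbf{Y}_V^{(\tilde N^{\neg})}\|_2^2$, using $I_V=\tilde N^{\neg}$;
\item bound the residual block by the analogous quantity over $\tilde N$.
\end{itemize}
The two quadratic terms then sum to at most $\gamma^2(|\tilde N|+|\tilde N^{\neg}|)s_As_B/(mk^2)=\gamma^2|N|s_As_B/(mk^2)$. Dividing by $\sigma_1(\textbf{L}_0)\ge m_B^2|\tilde N|$ and applying Lemma~\ref{AtomSharing} produces exactly the factor $(1-s_B^3/k-\Delta_N)^{-1}$. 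The cross term, with $\|\textbf{x}_p\|_2\le M_B\sqrt{|\tilde N|}$, gives the $M_B\gamma/m_B^2$ contribution.

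Note that the paper itself invokes Lemma~\ref{SSN_YuYv} for $\tilde N^{\neg}$ and for the residual rows without further justification, so your worry about correlation-selected index sets is fair there. Your conditioning argument is a sensible way to handle the residual rows over $\tilde N$, since their remaining atoms avoid those of the pair. For $\tilde N^{\neg}$ it would need separate care, because those rows share atoms with both $\textbf{Y}_V^{(i)}$ and $\textbf{Y}_V^{(j)}$.
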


\begin{proof}
We exemplify the proof for  $\textbf{D}_A$ case. 
For the singular vector $\textbf{v}$,   it has 
\begin{equation}
\label{dic_error_bound}
\min_{z\in\{+1,-1\}}\left\|\textbf{D}_A^{(p)} -z\textbf{v}\right\|_2^2  =  2-2\max_{z\in\{+1,-1\}}z\textbf{D}_A^{(p)}\textbf{v}^T \leq 2- 2\left(\textbf{D}_A^{(p)}\textbf{v}^T\right)^2,
\end{equation}
where the last inequality results from the fact $0 \leq \max_{z\in\{+1,-1\}}z\textbf{D}_A^{(p)}\textbf{v}^T \leq 1$, as $\textbf{v}$ has unit length.
To derive an upper bound for the above, we  lower bound $\left(\textbf{D}_A^{(p)}\textbf{v}^T\right)^2$.

As in Eqs. (\ref{eq:set1}) and  (\ref{eq:set2}), the neighboring set $N_{\rho_v}
\left(\textbf{Y}_V^{(i)}, \textbf{Y}_V^{(j)} \right)$ of the unique intersection pair $\textbf{Y}_V^{(i)}$ and $\textbf{Y}_V^{(j)}$ can be divided into two disjoint sets, including $\tilde{N}_{\rho_v}
\left(\textbf{Y}_V^{(i)}, \textbf{Y}_V^{(j)} \right)$ of which the elements share the same atom $\textbf{D}_A^{(p)}$ as $\textbf{Y}_V^{(i)}$ and $ \textbf{Y}_V^{(j)}$ and its complement $\tilde{N}^{\neg}_{\rho_v}
\left(\textbf{Y}_V^{(i)}, \textbf{Y}_V^{(j)} \right)$.
We simplify these set notations to $N_{\rho_v}$, $\tilde{N}_{\rho_v}$ and $\tilde{N}^{\neg}_{\rho_v}$, and  simplify $\textbf{L}_V\left(\textbf{Y}_V^{(i)}, \textbf{Y}_V^{(j)}, Y_V\right) $ to $\textbf{L}_V$.
Applying the  set division, it has
\begin{align}
\nonumber
\textbf{L}_V\left(\textbf{Y}_V^{(i)}, \textbf{Y}_V^{(j)}\right) = \; & \sum_{t\in N_{\rho_v}
} {\textbf{Y}_V^{(t)}}^T\textbf{Y}_V^{(t)}   = \underset{\textbf{L}_{V_1} }{\underbrace{\sum_{t\in \tilde{N}_{\rho_v}} {\textbf{Y}_V^{(t)}}^T\textbf{Y}_V^{(t)} }}+ \underset{\textbf{L}_{V_2} }{\underbrace{\sum_{t\in \tilde{N}^{\neg}_{\rho_v}} {\textbf{Y}_V^{(t)}}^T\textbf{Y}_V^{(t)} }}, \\
= \; & \underset{\textbf{L}_{V_1} }{\underbrace{{\textbf{Y}_V^{(\tilde{N}_{\rho_v})}}^T\textbf{Y}_V^{(\tilde{N}_{\rho_v})} }}+ \underset{\textbf{L}_{V_2} }{\underbrace{ {\textbf{Y}_V^{(\tilde{N}^{\neg}_{\rho_v})}}^T\textbf{Y}_V^{(\tilde{N}^{\neg}_{\rho_v})} }}.
\end{align}
The row vector $\textbf{Y}_V^{(t)}$ with $t\in\tilde{N}_{\rho_v}$ shares the atom $\textbf{D}_A^{(p)}$ with  $\textbf{Y}_V^{(i)}$ and $\textbf{Y}_V^{(j)}$. 
Thus, we re-express it as
\begin{equation}
    \textbf{Y}_V^{(t)} = (X_{B})_{tp}\textbf{D}_A^{(p)} + \textbf{y}_V^{(t)}.
\end{equation}
Denoting the observation matrix containing $\left\{\textbf{y}_V^{(t)}\right\}_{t\in\tilde{N}_{\rho_v}}$,   by $\tilde{\textbf{Y}}_V^{(I)}$ and applying the  definition of spectral norm, triangle inequality and Cauchy-Schwarz inequality, it has
\begin{align}
\nonumber
\left |\textbf{v}\textbf{L}_{V}\textbf{v}^T\right | \leq \; & \left|\textbf{v}\textbf{L}_{V_1}\textbf{v}^T\right | + \left|\textbf{v}\textbf{L}_{V_2}\textbf{v}^T\right | \\
\nonumber
\leq\; & \left| \sum_{t\in \tilde{N}_{\rho_v}}  (X_{B})^2_{tp} \left(\textbf{D}_A^{(p)}\textbf{v}^T\right)^2 + \sum_{t\in \tilde{N}_{\rho_v}}(X_{B})_{tp}\textbf{v}\left( \textbf{D}_A^{(p)^T}  \textbf{y}_V^{(t)}+ \textbf{y}_V^{(t)^T}\textbf{D}_A^{(p)}\right)\textbf{v}^T + \sum_{t\in \tilde{N}_{\rho_v}} \textbf{v} \textbf{y}_V^{(t)^T} \textbf{y}_V^{(t)} \textbf{v}^T \right |   \\
\label{lv1_lv2_analysis}
& + \left |  \textbf{v} \left(\sum_{t\in \tilde{N}_{\rho_v}^{\neg}} \textbf{Y}_V^{(t)^T} \textbf{Y}_V^{(t)} \right) \textbf{v}^T \right |   \\
\nonumber
\leq & \left| \sum_{t\in \tilde{N}_{\rho_v}}  (X_{B})^2_{tp} \left(\textbf{D}_A^{(p)}\textbf{v}^T\right)^2 \right | + \left| \sum_{t\in \tilde{N}_{\rho_v}}(X_{B})_{tp}\textbf{v}\left(  \textbf{D}_A^{(p)^T}   \textbf{y}_V^{(t)}+ \textbf{y}_V^{(t)^T}\textbf{D}_A^{(p)}\right)\textbf{v}^T  \right| \\ 
\nonumber
& \left\| \tilde{\textbf{Y}}_V^{(I)^T} \tilde{\textbf{Y}}_V^{(I)} \right \|_2  + \left\|   \textbf{Y}_V^{(\tilde{N}^{\neg}_{\rho_v})^T}  \textbf{Y}_V^{(\tilde{N}^{\neg}_{\rho_v})} \right \|_2   \\
\leq \; & \left(\textbf{D}_A^{(p)}\textbf{v}^T\right)^2 \sum_{t\in \tilde{N}_{\rho_v}}  (X_{B})^2_{tp} + 2  \sqrt{\left|\tilde{N}_{\rho_v}\right|}M_B \left \|  \tilde{\textbf{Y}}_V^{(I)}\right\|_2 +\left \|  \tilde{\textbf{Y}}_V^{(I)}\right\|_2^2 + \left \| \textbf{Y}_V^{(\tilde{N}^{\neg}_{\rho_v})}\right\|_2^2.
\end{align}
The above enables the following lower bound for $\left(\textbf{D}_A^{(p)}\textbf{v}^T\right)^2  $, i.e.,
\begin{equation}
\label{vlv_bound1}
\left(\textbf{D}_A^{(p)}\textbf{v}^T\right)^2  \geq\frac{ \left |\textbf{v}\textbf{L}_{V}\textbf{v}^T\right | - 2  \sqrt{\left|\tilde{N}_{\rho_v}\right|}M_B \left \|  \tilde{\textbf{Y}}_V^{I}\right\|_2 - \left \|  \tilde{\textbf{Y}}_V^{I}\right\|_2^2 - \left \| \textbf{Y}_V^{(\tilde{N}^{\neg}_{\rho_v})}\right\|_2^2  }{\sum_{t\in \tilde{N}_{\rho_v}}  (X_{B})^2_{tp}}.
\end{equation}
Since $\textbf{v} $ is the top singular vector of $\textbf{L}_{V}$, following a similar analysis  for Eq. (\ref{lv1_lv2_analysis}), it has
\begin{align}
\nonumber
\left | \textbf{v} \textbf{L}_{V} \textbf{v}^T\right | = \; &\|\textbf{L}_{V}\|_2 \geq \|\textbf{L}_{V_1}\|_2 -\|\textbf{L}_{V_2}\|_2\\
\nonumber
=\; & \left\| \sum_{t\in \tilde{N}_{\rho_v}}  (X_{B})^2_{tp}  \textbf{D}_A^{(p)^T}   \textbf{D}_A^{(p)} + \sum_{t\in \tilde{N}_{\rho_v}}(X_{B})_{tp} \left(  \textbf{D}_A^{(p)^T}   \textbf{y}_V^{(t)}+  \textbf{y}_V^{(t)^T} \textbf{D}_A^{(p)}\right) + \sum_{t\in \tilde{N}_{\rho_v}}  \textbf{y}_V^{(t)^T} \textbf{y}_V^{(t)}  \right \|_2  \\
\nonumber
&- \left \| \textbf{Y}_V^{(\tilde{N}^{\neg}_{\rho_v})}\right\|_2^2\\
\label{vlv_bound2}
 \geq \; & \sum_{t\in \tilde{N}_{\rho_v}}  (X_{B})^2_{tp}   - 2  \sqrt{\left|\tilde{N}_{\rho_v}\right|}m_B \left \|  \tilde{\textbf{Y}}_V^{I} \right\|_2 - \left \|   \tilde{\textbf{Y}}_V^{I} \right\|_2^2 - \left \| \textbf{Y}_V^{(\tilde{N}^{\neg}_{\rho_v})}\right\|_2^2.
\end{align}
Subsequently, incorporating Eq. (\ref{vlv_bound2}) into Eq. (\ref{vlv_bound1}), and considering the fact that $\sum_{t\in \tilde{N}_{\rho_v}}  (X_{B})^2_{tp} \geq \left|\tilde{N}_{\rho_v}\right| m_B^2$, it has
\begin{align}
\nonumber
\left(\textbf{D}_A^{(p)}\textbf{v}^T\right)^2 \geq\; & 1- \frac{  4 \sqrt{\left|\tilde{N}_{\rho_v}\right|}M_B\left \|  \tilde{\textbf{Y}}_V^{I}\right\|_2  +  2\left \|  \tilde{\textbf{Y}}_V^{I}\right\|_2^2  +   2\left \| \textbf{Y}_V^{(\tilde{N}^{\neg}_{\rho_v})}   \right\|_2^2   }{\left|\tilde{N}_{\rho_v}\right| m_B^2} \\
\geq \; &1- \underset{T_1}{\underbrace{\frac{4 M_B\left \|  \tilde{\textbf{Y}}_V^{I}\right\|_2  }{\sqrt{\left|\tilde{N}_{\rho_v}\right|}m^2_B}}} - \underset{T_2}{\underbrace{\frac{   2\left \|  \tilde{\textbf{Y}}_V^{I}\right\|_2^2}{\left|\tilde{N}_{\rho_v}\right|m_B^2} }}- \underset{T_3}{\underbrace{\frac{  2\left \|  \textbf{Y}_V^{(\tilde{N}^{\neg}_{\rho_v})}\right\|_2^2}{\left|\tilde{N}_{\rho_v}\right|  m_B^2}}}.
\end{align}
Combining the above with Eq. (\ref{dic_error_bound}), it has
\begin{equation}
\label{alg2_res1}
     \min_{z\in\{+1,-1\}}\left\|\textbf{D}_A^{(p)} -z\textbf{v}\right\|_2^2 \leq 2(T_1+T_2+T_3).
\end{equation}
Applying Lemma \ref{SSN_YuYv} and Lemma \ref{AtomSharing} with $0<\Delta_N < 1-\frac{s_B^3}{k}$, it has
\begin{align}
\label{alg2_res2}
    T_1 + T_2+ T_3=\; & \frac{4M_B\gamma}{m_B^2}\sqrt{\frac{s_As_B}{mk^2}}+ \frac{2\gamma^2s_As_B}{m_B^2mk^2} +  \frac{2\gamma^2s_As_B\left(\left|N_{\rho_v}\right|-\left|\tilde{N}_{\rho_v}\right| \right)}{m_B^2mk^2\left|\tilde{N}_{\rho_v}\right| } \\
    \nonumber
      = \; & \frac{4M_B\gamma}{m_B^2}\sqrt{\frac{s_As_B}{mk^2}} +  \frac{2\gamma^2s_As_B \left|N_{\rho_v}\right|}{m_B^2mk^2\left|\tilde{N}_{\rho_v}\right| } \leq \frac{4M_B\gamma}{m_B^2}\sqrt{\frac{s_As_B}{mk^2}} + \frac{2\gamma^2s_As_B  }{m_B^2mk^2\left( 1-\frac{s_B^3}{k} -\Delta_N\right) },  
\end{align}
with probability at least $ p_v-e^{-2\Delta_N^2\left|N_{\rho_v}\right|}$.
Expanding on $M_B$ and  $m_B$ using Eqs. (\ref{bound_maxB}) and (\ref{bound_minB}), it has
\begin{align}
\label{alg1_quantity1}
  & \frac{4M_B }{m_B^2}\sqrt{\frac{s_As_B}{mk^2}} =   \frac{4M^{(B)}_{\max}u_s}{ {M^{(B)}_{\min}}^2l_s^2\sigma_A(1- \Delta )} \sqrt{\frac{(1+ \Delta )s_B}{k}} < \frac{6M^{(B)}_{\max}u_s}{ {M^{(B)}_{\min}}^2l_s^2\sigma_A(1- \Delta )} \sqrt{\frac{s_B}{k}} ,  \\
\label{alg1_quantity2}
  & \frac{2s_As_B  }{m_B^2mk^2} =   \frac{2 s_B}{ {M^{(B)}_{\min}}^2l_s^2\sigma_A^2(1- \Delta )k} \leq \frac{2}{ {M^{(B)}_{\min}}^2l_s^2\sigma_A^2(1- \Delta ) } \sqrt{\frac{s_B}{k}}. 
\end{align}
Therefore, by combining Eqs. (\ref{alg2_res1}), (\ref{alg2_res2}), (\ref{alg1_quantity1}) and (\ref{alg1_quantity2}), it has
\begin{align}
\nonumber
     \min_{z\in\{+1,-1\}}\left\|\textbf{D}_A^{(p)} -z\textbf{v}\right\|_2^2 < \; &\frac{12M^{(B)}_{\max}u_s\gamma}{ {M^{(B)}_{\min}}^2l_s^2\sigma_A(1- \Delta )} \sqrt{\frac{s_B}{k}} + \frac{4\gamma^2}{ {M^{(B)}_{\min}}^2l_s^2\sigma_A^2(1- \Delta ) \left( 1-\frac{s_B^3}{k} -\Delta_N\right) } \sqrt{\frac{s_B}{k}} \\
     <\; & \frac{12M^{(B)}_{\max}u_s\sigma_A\gamma  + 4\gamma^2}{ {M^{(B)}_{\min}}^2l_s^2\sigma_A^2(1- \Delta )\left( 1-\frac{s_B^3}{k} -\Delta_N\right)  } \sqrt{\frac{s_B}{k}}.
\end{align}
Incorporating $\gamma  = (1+\Delta_Y)   u_s  \sigma_A \sigma_B    \sqrt{  ( 1 +\Delta)\left(1+ \mu_D(d-1)\right)  }$, and define a new quantity $\hat{\gamma}  = (1+\Delta_Y)    \sqrt{  ( 1 +\Delta)\left(1+ \mu_D(d-1)\right) }$, it has
\begin{equation}
    \min_{z\in\{+1,-1\}}\left\|\textbf{D}_A^{(p)} -z\textbf{v}\right\|_2^2 < \frac{12M^{(B)}_{\max}u_s^2\sigma_B^2\hat{\gamma}(1+4\hat{\gamma})}{ {M^{(B)}_{\min}}^2l_s^2(1- \Delta )\left( 1-\frac{s_B^3}{k} -\Delta_N\right)  } \sqrt{\frac{s_B}{k}},
\end{equation}
This completes the proof.
\end{proof}

\subsection{Main Proof}
\begin{proof}
We provide proof for the $\textbf{Y}_V$ case  and the same applies to $\textbf{Y}_U$.
According to  Lemmas \ref{threshold_YuYv} and \ref{UniqueAtom},   a correlation threshold $\rho_v$ that satisfies  Eq. (\ref{eq:threshold_rho_v})   and  the unique intersection threshold   $\rho_p =\frac{62}{64}$ enable the identification of a unique intersection pair. 
Then,  an initial estimation of a dictionary atom $\hat{\textbf{D}}_{A}^{(i)}$ can be computed from this unique intersection pair, for which Lemma \ref{singular_YuYv} proves that the 
  estimation error of $\hat{\textbf{D}}_{A}^{(i)}$ satisfies
\begin{equation}
\label{eq:ini_error1}
    \epsilon_D  \left(    \hat{\textbf{D}}_{A}^{(i)},  \textbf{D}_{A}^{(i)}\right) <\frac{12M^{(B)}_{\max}u_s^2\sigma_B^2\hat{\gamma}(1+4\hat{\gamma})}{ {M^{(B)}_{\min}}^2l_s^2(1- \Delta )\left( 1-\frac{s_B^3}{k} -\Delta_N\right)  } \sqrt{\frac{s_B}{k}}, 
\end{equation}
with probability at least
\begin{equation}
\label{eq:main_pv2a}
    p_v =  1- 2ke^{-\frac{C_A\Delta^2  \sigma_A^2n}{k {M_{\max}^{(A)}}^2}}-de^{- \frac{ C_V \Delta_Y^2\sigma_B^2\left| I_V \right| \left(1+ \mu_D(d-1)\right)}{ {M_{\max}^{(B)}}^2( 1+ s_B\mu_D)}}-e^{-2\Delta_N^2\left|N_{\rho_v}\right|}-2e^{- 2\left|N_{\rho_v}\right|(\left|N_{\rho_v}\right|-1)\gamma^2},
\end{equation}
for a  constant $\gamma\leq \frac{1}{64}$.
This probability is derived from the joint probability that enables Lemmas \ref{threshold_YuYv},    \ref{singular_YuYv}, and \ref{UniqueAtom}, simultaneously.   
Defining a constant $\theta_{14}^{(B)}=\frac{12M^{(B)}_{\max}u_s^2\sigma_B^2\hat{\gamma}(1+4\hat{\gamma})}{ {M^{(B)}_{\min}}^2l_s^2(1- \Delta )  }  $, it follows straightforwardly that 
\begin{equation}
     \epsilon_{A}^D = \max_{i=1}^k \epsilon_D  \left(    \hat{\textbf{D}}_{A}^{(i)},  \textbf{D}_{A}^{(i)}\right) \leq \frac{\theta_{14}^{(B)}}{   1-\frac{s_B^3}{k} -\Delta_N  } \sqrt{\frac{s_B}{k}}.
\end{equation}
 
\end{proof}

\section{Proof of Theorem \ref{main_res3}}
\label{app:model_gap}

The proof relies on the formulations of the auxiliary factor matrices:  $\textbf{X}_A = \frac{1}{\sqrt{nm}}\textbf{A}\textbf{L}_B^{-1}$, $\textbf{X}_B = \frac{1}{\sqrt{nm}}\textbf{B}\textbf{L}_A^{-1}$, and $\tilde{\textbf{S}} = nm\textbf{L}_B \textbf{S}\textbf{L}_A$, where  $\textbf{L}_A =  \textmd{diag}\left( \left\|\textbf{F}_A^{(1)} \right\|_2^{-1}, \left\|\textbf{F}_A^{(2)} \right\|_2^{-1}, \dots, \left\|\textbf{F}_A^{(k)} \right\|_2^{-1}\right) $, and $\textbf{L}_B =  \textmd{diag}\left( \left\|\textbf{F}_B^{(1)} \right\|_2^{-1}, \left\|\textbf{F}_B^{(2)} \right\|_2^{-1}, \dots, \left\|\textbf{F}_B^{(k)} \right\|_2^{-1}\right)$.
Also it relies on the upper and lower bounds for $\left\|\textbf{F}_A^{(i)} \right\|_2$ and $\left\|\textbf{F}_B^{(i)} \right\|_2$  established  in Lemma \ref{RowLengthF}.
For convenience, we denote the maximum and minimum diagonal entries of a diagonal matrix by $d_{\max}(\cdot)$ and $d_{\min}(\cdot)$, respectively.

\begin{proof}
We prove for the case $(\textbf{A}, \textbf{X}_A)$, and the same applies for $(\textbf{B}, \textbf{X}_B)$.
Since element-wise scaling   does not alter the support of a vector, it has
\begin{equation}
    supp\left(\textbf{X}_A^{(i)}\right) = supp\left(\textbf{A}^{(i)} \textbf{L}_B^{-1} \right)= supp\left(\textbf{A}^{(i)}\right).
\end{equation}
Expanding $\textbf{X}_A^{(i)} = \frac{1}{\sqrt{nm}} \textbf{A}^{(i)} \textbf{L}_B^{-1} $, it has
\begin{equation}
    \frac{1}{nm}\min_{j=1}^k \left\|\textbf{F}_B^{(j)} \right\|_2^2 \leq \frac{\left\|\textbf{X}_A^{(i)}\right\|_2^2}{\left\|\textbf{A}^{(i)}\right\|_2^2} =\frac{\sum_{j=1}^k \frac{1}{nm}A_{ij}^2\left\|\textbf{F}_B^{(j)} \right\|_2^2}{\sum_{j=1}^kA_{ij}^2}\leq \frac{1}{nm}\max_{j=1}^k \left\|\textbf{F}_B^{(j)} \right\|_2^2.
\end{equation}
Applying Lemma \ref{RowLengthF}, it has 
\begin{equation}
  l_s\sigma_B\sqrt{\frac{(1- \Delta )   s_B }{nk }}  \leq  \frac{\left\|\textbf{X}_A^{(i)}\right\|_2 }{\left\|\textbf{A}^{(i)}\right\|_2 } \leq u_s\sigma_B\sqrt{\frac{(1+ \Delta )   s_B }{nk }},
\end{equation}
which holds with probability at least    $1-ke^{-\frac{C_B\Delta^2  \sigma_B^2m}{k  {M_{\max}^{(B)}}^2}}$.
Subsequently, it has
\begin{equation}
  \frac{l_s}{u_s}\sqrt{\frac{1-\Delta}{1+\Delta}}  \leq \frac{\max_{i=1}^n \frac{\left\|\textbf{X}_A^{(i)}\right\|_2 }{\left\|\textbf{A}^{(i)}\right\|_2 } }{\min_{i=1}^n \frac{\left\|\textbf{X}_A^{(i)}\right\|_2 }{\left\|\textbf{A}^{(i)}\right\|_2 } } \leq \frac{u_s}{l_s}\sqrt{\frac{1+\Delta}{1-\Delta}}.
\end{equation}

Next, we analyse the inner product between   two auxiliary coefficient vectors $\textbf{X}_A^{(i)}$ and $\textbf{X}_A^{(j)}$. 
Defining the index sets   $I^{+}=\{t| t\in[k], A_{it}A_{jt}> 0\}  $ and  $I^{-}=\{t| t\in[k], A_{it}A_{jt} < 0\}  $, it has  $nm\textbf{X}_A^{(i)} {\textbf{X}_A^{(j)}}^T    =    \sum_{t\in I^+}  A_{it}A_{jt}\left\|\textbf{F}_B^{(t)} \right\|_2^2 - \sum_{t\in I^-}  |A_{it}A_{jt}|\left\|\textbf{F}_B^{(t)} \right\|_2^2 $.
We bound the inner product by 
\begin{align}
\nonumber
  nm  \textbf{X}_A^{(i)} {\textbf{X}_A^{(j)}}^T      \leq   \; &  \max_{t = 1}^k\left\|\textbf{F}_B^{(t)} \right\|_2^2\sum_{t \in I^+}   A_{it}A_{jt} - \min_{t=1}^k  \left\|\textbf{F}_B^{(t)} \right\|_2^2\sum_{t\in I^-}   |A_{it}A_{jt} | \\
    =\; &   \min_{t =1}^k  \left\|\textbf{F}_B^{(t)} \right\|_2^2 \textbf{A}^{(i)} {\textbf{A}^{(j)}}^T + \left(\max_{t=1}^k  \left\|\textbf{F}_B^{(t)} \right\|_2^2 -\min_{t=1}^k  \left\|\textbf{F}_B^{(t)} \right\|_2^2 \right)\sum_{t\in I^+}   A_{it}A_{jt}, 
    \label{eq:inner_upper_bound}
\end{align}
and 
\begin{align}
\nonumber
  nm  \textbf{X}_A^{(i)} {\textbf{X}_A^{(j)}}^T      \geq   \; &  \min_{t = 1}^k\left\|\textbf{F}_B^{(t)} \right\|_2^2\sum_{t \in I^+}   A_{it}A_{jt} - \max_{t=1}^k  \left\|\textbf{F}_B^{(t)} \right\|_2^2\sum_{t\in I^-}   |A_{it}A_{jt} | \\
    =\; &   \max_{t =1}^k  \left\|\textbf{F}_B^{(t)} \right\|_2^2 \textbf{A}^{(i)} {\textbf{A}^{(j)}}^T - \left(\max_{t=1}^k  \left\|\textbf{F}_B^{(t)} \right\|_2^2 -\min_{t=1}^k  \left\|\textbf{F}_B^{(t)} \right\|_2^2 \right)\sum_{t\in I^+}   A_{it}A_{jt}.
    \label{eq:inner_lower_bound}
\end{align}
Also, we bound $nm \left\|\textbf{X}_A^{(i)} \right\|_2^2$ by 
\begin{equation}
\label{eq:norm-bound}
  \min_{t=1}^k \left\|\textbf{F}_B^{(t)} \right\|_2^2 \left\|\textbf{A}^{(i)} \right\|_2^2     \leq  nm \left\|\textbf{X}_A^{(i)} \right\|_2^2  =  \sum_{t=1}^k  A_{it}^2\left\|\textbf{F}_B^{(t)} \right\|_2^2 \leq \max_{t=1}^k \left\|\textbf{F}_B^{(t)} \right\|_2^2 \left\|\textbf{A}^{(i)} \right\|_2^2 
\end{equation}
Incorporate Eqs. (\ref{eq:inner_upper_bound})-(\ref{eq:norm-bound}) into $\text{cos}\left(\textbf{X}_A^{(i)}, \textbf{X}_A^{(j)}\right) = \frac{\textbf{X}_A^{(i)} {\textbf{X}_A^{(j)}}^T}{\left\|\textbf{X}_A^{(i)} \right\|_2\left\|\textbf{X}_A^{(j)} \right\|_2}$ and apply  Lemma \ref{RowLengthF}.
Define the following quantities 
\begin{equation}
    \epsilon_l = \frac{ u_s^2 -l_s^2 +\left(u_s^2 + l_s^2\right)\Delta}{l_s^2(1-\Delta)},\; \epsilon_u = \frac{ u_s^2 -l_s^2 +\left(u_s^2 + l_s^2\right)\Delta}{u_s^2(1+\Delta)},
\end{equation}
which satisfies $\epsilon_u< \epsilon_l $.
When $ \text{cos}\left(\textbf{A}^{(i)}, \textbf{A}^{(j)}\right) \geq 0$, we have 
\begin{align}
    \nonumber
        \text{cos}\left(\textbf{X}_A^{(i)}, \textbf{X}_A^{(j)}\right)   \leq \; & \frac{\min_{t =1}^k  \left\|\textbf{F}_B^{(t)} \right\|_2^2 }{\min_{t=1}^k  \left\|\textbf{F}_B^{(t)} \right\|_2^2 }    \text{cos}\left(\textbf{A}^{(i)}, \textbf{A}^{(j)}\right)    \\
         \nonumber
       &+\frac{\left(\max_{t=1}^k  \left\|\textbf{F}_B^{(t)} \right\|_2^2 -\min_{t =1}^k  \left\|\textbf{F}_B^{(t)} \right\|_2^2\right )}{\min_{t=1}^k  \left\|\textbf{F}_B^{(t)} \right\|_2^2} \times \frac{\sum_{t\in I^+}   A_{it}A_{jt}}{\left\|\textbf{A}^{(i)} \right\|_2\left\|\textbf{A}^{(j)} \right\|_2}\\
         \label{eq:cosine_shift_bound_pa}
       \leq \; &   \text{cos}\left(\textbf{A}^{(i)}, \textbf{A}^{(j)}\right) + \frac{\left(u_s^2 -l_s^2 +(u_s^2 + l_s^2)\Delta\right){s_AM_{\max}^{(A)}}^2 }{l_s^2(1-\Delta){kM_{\min}^{(A)}}^2} = \text{cos}\left(\textbf{A}^{(i)}, \textbf{A}^{(j)}\right)+\epsilon_l \frac{s_A{M_{\max}^{(A)}}^2}{k{M_{\min}^{(A)}}^2},
\end{align}   
and
\begin{align}
    \nonumber
        \text{cos}\left(\textbf{X}_A^{(i)}, \textbf{X}_A^{(j)}\right)   \geq \; & \frac{\max_{t =1}^k  \left\|\textbf{F}_B^{(t)} \right\|_2^2 }{\max_{t=1}^k  \left\|\textbf{F}_B^{(t)} \right\|_2^2 }    \text{cos}\left(\textbf{A}^{(i)}, \textbf{A}^{(j)}\right) \\
         \nonumber
       & - \frac{\left (\max_{t=1}^k  \left\|\textbf{F}_B^{(t)} \right\|_2^2 -\min_{t =1}^k  \left\|\textbf{F}_B^{(t)} \right\|_2^2\right )}{\min_{t=1}^k  \left\|\textbf{F}_B^{(t)} \right\|_2^2} \times \frac{\sum_{t\in I^+}   A_{it}A_{jt}}{\left\|\textbf{A}^{(i)} \right\|_2\left\|\textbf{A}^{(j)} \right\|_2}\\
       \label{eq:cosine_shift_bound_pb}
       \geq \; &    \text{cos}\left(\textbf{A}^{(i)}, \textbf{A}^{(j)}\right)  - \frac{\left(u_s^2 -l_s^2 +(u_s^2 + l_s^2)\Delta\right){s_AM_{\max}^{(A)}}^2 }{l_s^2(1-\Delta){k M_{\min}^{(A)}}^2} = \text{cos}\left(\textbf{A}^{(i)}, \textbf{A}^{(j)}\right) -\epsilon_l \frac{s_A{M_{\max}^{(A)}}^2}{k {M_{\min}^{(A)}}^2},
\end{align}  
Together,  Eqs. (\ref{eq:cosine_shift_bound_pa}) and (\ref{eq:cosine_shift_bound_pb}) result in 
\begin{equation}
\label{eq:cosine_shift_boundp}
    \left| \text{cos}\left(\textbf{X}_A^{(i)}, \textbf{X}_A^{(j)}\right) -\text{cos}\left(\textbf{A}^{(i)}, \textbf{A}^{(j)}\right) \right| \leq   \frac{\epsilon_ls_A{M_{\max}^{(A)}}^2}{k{M_{\min}^{(A)}}^2}.
\end{equation}
When $ \text{cos}\left(\textbf{A}^{(i)}, \textbf{A}^{(j)}\right) <0  $, we have 
\begin{align}
    \nonumber
        \text{cos}\left(\textbf{X}_A^{(i)}, \textbf{X}_A^{(j)}\right)   \leq \; & \frac{\min_{t =1}^k  \left\|\textbf{F}_B^{(t)} \right\|_2^2 }{\max_{t=1}^k  \left\|\textbf{F}_B^{(t)} \right\|_2^2 }    \text{cos}\left(\textbf{A}^{(i)}, \textbf{A}^{(j)}\right)  + \frac{\left(u_s^2 -l_s^2 +(u_s^2 + l_s^2)\Delta\right){s_A M_{\max}^{(A)}}^2 }{l_s^2(1-\Delta){k M_{\min}^{(A)}}^2}  \\
    \nonumber
         \leq \; &  \frac{l_s^2(1-\Delta)}{u_s^2(1+\Delta)} \text{cos}\left(\textbf{A}^{(i)}, \textbf{A}^{(j)}\right)  +\epsilon_l \frac{s_A{M_{\max}^{(A)}}^2}{k{M_{\min}^{(A)}}^2},  \\
    \nonumber
         =\;& (1-\epsilon_u)\text{cos}\left(\textbf{A}^{(i)}, \textbf{A}^{(j)}\right)  + \epsilon_l \frac{s_A{M_{\max}^{(A)}}^2}{k{M_{\min}^{(A)}}^2}, \\
   \label{eq:cosine_shift_bound_na}
         \leq\; & \text{cos}\left(\textbf{A}^{(i)}, \textbf{A}^{(j)}\right) +   \epsilon_l \left(\frac{s_A{M_{\max}^{(A)}}^2}{k{M_{\min}^{(A)}}^2} - \text{cos}\left(\textbf{A}^{(i)}, \textbf{A}^{(j)}\right)  \right),  
\end{align}   
and
\begin{align}
    \nonumber
        \text{cos}\left(\textbf{X}_A^{(i)}, \textbf{X}_A^{(j)}\right)   \geq \; & \frac{\max_{t =1}^k  \left\|\textbf{F}_B^{(t)} \right\|_2^2 }{\min_{t=1}^k  \left\|\textbf{F}_B^{(t)} \right\|_2^2 }    \text{cos}\left(\textbf{A}^{(i)}, \textbf{A}^{(j)}\right)  - \frac{\left(u_s^2 -l_s^2 +(u_s^2 + l_s^2)\Delta\right){s_AM_{\max}^{(A)}}^2 }{l_s^2(1-\Delta){k M_{\min}^{(A)}}^2} \\
\nonumber
       \geq \; &   \frac{u_s^2(1+\Delta)}{l_s^2(1-\Delta)}   \text{cos}\left(\textbf{A}^{(i)}, \textbf{A}^{(j)}\right)  -  \epsilon_l \frac{s_A{M_{\max}^{(A)}}^2}{k{M_{\min}^{(A)}}^2}, \\
  \nonumber 
         =\;& (1+\epsilon_l)\text{cos}\left(\textbf{A}^{(i)}, \textbf{A}^{(j)}\right)  - \epsilon_l \frac{s_A{M_{\max}^{(A)}}^2}{k{M_{\min}^{(A)}}^2}, \\
         \label{eq:cosine_shift_bound_nb}
          =\; & \text{cos}\left(\textbf{A}^{(i)}, \textbf{A}^{(j)}\right) -  \epsilon_l  \left(\frac{s_A{M_{\max}^{(A)}}^2}{k{M_{\min}^{(A)}}^2} -  \text{cos}\left(\textbf{A}^{(i)}, \textbf{A}^{(j)}\right)  \right).
\end{align}  
Meanwhile, for the case $ \text{cos}\left(\textbf{A}^{(i)}, \textbf{A}^{(j)}\right) <0  $, it has
\begin{equation}
    -  \text{cos}\left(\textbf{A}^{(i)}, \textbf{A}^{(j)}\right)= \frac{\left|\sum_{i=1}^k  A_{ik}A_{jk}\right| }{\sqrt{\sum_{t=1}^kA_{it}}\sqrt{\sum_{t=1}^kA_{jt}}}\leq \frac{s_A{M_{\max}^{(A)}}^2}{k{M_{\min}^{(A)}}^2}.
\end{equation}
The above, together with Eqs. (\ref{eq:cosine_shift_bound_na}) and (\ref{eq:cosine_shift_bound_nb}), results in 
\begin{equation}
\label{eq:cosine_shift_boundn}
    \left| \text{cos}\left(\textbf{X}_A^{(i)}, \textbf{X}_A^{(j)}\right) -\text{cos}\left(\textbf{A}^{(i)}, \textbf{A}^{(j)}\right) \right| \leq   \frac{2\epsilon_ls_A{M_{\max}^{(A)}}^2}{k{M_{\min}^{(A)}}^2}.
\end{equation}
Together, Eqs. (\ref{eq:cosine_shift_boundp}) and (\ref{eq:cosine_shift_boundn})  result in the upper bound on $\delta_{\text{cos}}^{(A)}$ in  Eq. (\ref{eq:cosine_shift_bound}), which holds with probability at least    $1-ke^{-\frac{C_B\Delta^2  \sigma_B^2m}{k  {M_{\max}^{(B)}}^2}}$ following from the high-probability bound on $\left\|\textbf{F}_B^{(j)} \right\|_2^2$.

Now we analyse the  changes of the   largest and smallest singular values of the latent relation matrix after transforming from $\textbf{S}$ to $\tilde{\textbf{S}} =  nm\textbf{L}_B \textbf{S}\textbf{L}_A$.
By definition, it has
\begin{equation}
 \sigma_{\max}\left(\tilde{\textbf{S}}\right)=    \left\|\tilde{\textbf{S}}\right\|_2  \leq  nm\left\|\textbf{L}_B\right\|_2 \left\|\textbf{S}\right\|_2\left\|\textbf{L}_A\right\|_2 = nmd_{\max}(\textbf{L}_A)d_{\max}(\textbf{L}_B)\sigma_{\max}\left(\textbf{S}\right).
\end{equation}
Starting from  $\textbf{S} = \frac{1}{nm}\textbf{L}_B^{-1}\tilde{\textbf{S}}\textbf{L}_A^{-1}$, we have
\begin{equation}
     \left\|\textbf{S}\right\|_2 \leq \frac{1}{nm}  \left\|\textbf{L}_B^{-1}\right\|_2 \left\|\tilde{\textbf{S}}\right\|_2\left\|\textbf{L}_A^{-1}\right\|_2 = \frac{1}{nm} d_{\min}(\textbf{L}_A)^{-1}d_{\min}(\textbf{L}_B)^{-1}\left\|\tilde{\textbf{S}}\right\|_2,
\end{equation}
which results in 
\begin{equation}
       \sigma_{\max}\left(\tilde{\textbf{S}}\right) \geq nm d_{\min}(\textbf{L}_A)d_{\min}(\textbf{L}_B)\sigma_{\max}\left(\textbf{S}\right).
\end{equation}
Since $ \sigma_{\min}(\textbf{S})^{-1} = \left\| \textbf{S}^{\dagger}\right\|_2$, we have 
\begin{equation}
      \left\|\tilde{\textbf{S}}^{\dagger}\right\|_2 = \frac{1}{nm}\left\|\textbf{L}_A^{-1} \textbf{S}^{\dagger}\textbf{L}_B^{-1}\right\|_2 \leq  \frac{1}{nm}\left\|\textbf{L}_A^{-1}\right\|_2 \left\|\textbf{S}^{\dagger}\right\|_2\left\|\textbf{L}_B^{-1}\right\|_2 =  \frac{1}{nm} d_{\min}(\textbf{L}_A)^{-1}d_{\min}(\textbf{L}_B)^{-1}  \sigma_{\min}\left( \textbf{S}\right)^{-1}.
\end{equation} 
Thus
\begin{equation}
    \sigma_{\min}\left(\tilde{\textbf{S}}\right)  \geq nm d_{\min}(\textbf{L}_A)d_{\min}  (\textbf{L}_B) \sigma_{\min}\left( \textbf{S}\right).
\end{equation}
Starting from  $\textbf{S}^{\dagger} = nm \textbf{L}_A\tilde{\textbf{S}}^{\dagger}\textbf{L}_B$, we have
\begin{equation}
    \left\| \textbf{S} ^{\dagger}\right\|_2 \leq nm \left\|\textbf{L}_A \right\|_2 \left\|\tilde{\textbf{S}}^{\dagger}\right\|_2 \left\|\textbf{L}_B\right\|_2 = nm d_{\max}(\textbf{L}_A)d_{\max}  (\textbf{L}_B) \sigma_{\min}\left( \tilde{\textbf{S}}\right)^{-1},
\end{equation}
resulting in
\begin{equation}
    \sigma_{\min}\left(\tilde{\textbf{S}}\right)  \leq nm d_{\max}(\textbf{L}_A)d_{\max}  (\textbf{L}_B) \sigma_{\min}\left( \textbf{S}\right).
\end{equation}
Combining the above, it has
\begin{align}
     nm d_{\min}(\textbf{L}_A)d_{\min}(\textbf{L}_B)\sigma_{\max}\left(\textbf{S}\right) \leq \; & \sigma_{\max}\left(\tilde{\textbf{S}}\right) \leq nmd_{\max}(\textbf{L}_A)d_{\max}(\textbf{L}_B)\sigma_{\max}\left(\textbf{S}\right), \\
      nm d_{\min}(\textbf{L}_A)d_{\min}  (\textbf{L}_B) \sigma_{\min}\left( \textbf{S}\right) \leq \; &\sigma_{\min}\left(\tilde{\textbf{S}}\right) \leq  nm d_{\max}(\textbf{L}_A)d_{\max}  (\textbf{L}_B) \sigma_{\min}\left( \textbf{S}\right).
\end{align}
Subsequently, we have 
\begin{equation}
  \frac{ d_{\min}(\textbf{L}_A)d_{\min}(\textbf{L}_B)\sigma_{\max}\left(\textbf{S}\right) }{ d_{\max}(\textbf{L}_A)d_{\max}(\textbf{L}_B) \sigma_{\min}\left(\textbf{S}\right)}   \leq   \frac{\sigma_{\max}\left(\tilde{\textbf{S}}\right)}{\sigma_{\min}\left(\tilde{\textbf{S}}\right)}  \leq \frac{ d_{\max}(\textbf{L}_A)d_{\max}(\textbf{L}_B) \sigma_{\max}\left(\textbf{S}\right)  }{ d_{\min}(\textbf{L}_A)d_{\min}(\textbf{L}_B) \sigma_{\min}\left(\textbf{S}\right)}   .
\end{equation}
Incorporating $ \textbf{L}_A = \textbf{F}_A^{-1}$ and $ \textbf{L}_B = \textbf{F}_B^{-1}$ to the above, it has 
\begin{equation}
  \frac{ d_{\min}(\textbf{F}_A)d_{\min}(\textbf{F}_B) }{ d_{\max}(\textbf{F}_A)d_{\max}(\textbf{F}_B) }   \leq  \frac{\kappa\left(\tilde{\textbf{S}}\right)}{ \kappa(\textbf{S})}   \leq \frac{ d_{\max}(\textbf{F}_A)d_{\max}(\textbf{F}_B)   }{ d_{\min}(\textbf{F}_A)d_{\min}(\textbf{F}_B) } .
\end{equation}
Applying Lemma \ref{RowLengthF}, it then has 
\begin{equation}
 \frac{l_s}{u_s}\sqrt{\frac{1-\Delta}{1+\Delta}} \leq  \sqrt{\frac{\kappa\left(\tilde{\textbf{S}}\right)}{\kappa\left(\textbf{S}\right)}} \leq \frac{u_s}{l_s}\sqrt{\frac{1+\Delta}{1-\Delta}},
\end{equation}
 with probability at least    $1-ke^{-\frac{C_A\Delta^2  \sigma_A^2n}{k  {M_{\max}^{(A)}}^2}}-ke^{-\frac{C_B\Delta^2  \sigma_B^2m}{k  {M_{\max}^{(B)}}^2}}$, which follows from the high-probability bounds on both $\left\|\textbf{F}_A^{(j)} \right\|_2^2$ and $\left\|\textbf{F}_B^{(j)} \right\|_2^2$.
\end{proof}

\bibliography{ref}
\end{document}